\documentclass[letterpaper]{article}
\usepackage{ijcai15}
\usepackage{times}
\pdfoutput=1
\usepackage{helvet}
\usepackage{courier}
\usepackage[table]{xcolor}
\usepackage{indentfirst,amssymb,amsmath,amsthm,latexsym,graphicx}
\usepackage{todonotes}

\newtheorem{theorem}{Theorem}
\newtheorem{proposition}{Proposition}
\newtheorem{corollary}{Corollary}
\newtheorem{lemma}{Lemma}

\newtheorem{definition}{Definition}
\newtheorem{example}{Example}

\usepackage{macros}

\setlength{\pdfpagewidth}{8.5in}
\setlength{\pdfpageheight}{11in}


\title{On the Complexity of Semantic Integration of OWL Ontologies}
\author{Yevgeny~Kazakov\\ Institute of Artificial Intelligence,\\ University of Ulm, Germany \\ yevgeny.kazakov@uni-ulm.de \And Denis~Ponomaryov\\ Institute of Informatics Systems,\\ Novosibirsk State University, Russia\\ ponom@iis.nsk.su }
\begin{document}

\pdfinfo{
/Title (On the Complexity of Semantic Integration of OWL Ontologies)
/Subject ()
/Author (Yevgeny Kazakov and Denis Ponomaryov)
}

\maketitle

\begin{abstract}
  We propose a new mechanism for integration of OWL ontologies using 
  semantic import relations. In contrast to the standard OWL importing,
  we do not require all axioms of the imported ontologies to be
  taken into account for reasoning tasks, but only their logical
  implications over a chosen signature. This property comes natural
  in many ontology integration scenarios, especially when the number of
  ontologies is large. In this paper, we study the complexity of reasoning
  over ontologies with semantic import relations and establish a
  range of tight complexity bounds for various fragments of OWL.

\end{abstract}


\section{Introduction and Motivation}
\label{sec:introduction}

Logic-based ontology languages such as OWL and OWL~2
\cite{GHM+:08:OWL} are becoming increasingly popular means for
representation, integration, and querying of information, particularly
in life sciences, such as Biology and Medicine. For example, a
repository of Open Biological and Biomedical Ontologies
\cite{SAR+:07:OBO} is comprised of over eighty specialised ontologies
on such diverse topics as molecular functions, biological processes,
and cellular components. Ontology integration, in particular, aims at
organizing information on different domains in a modular way so that
information from one ontology can be reused in other ontologies. For
example, the ontology of diseases may reference anatomical structures
to describe the location of diseases, or genes with which the diseases
are likely to be correlated.

Integration of multiple ontologies in OWL is organized via
\emph{importing}:
an OWL ontology can refer to one or several other OWL ontologies,
whose axioms must be implicitly present in the ontology. The importing
mechanism is simple in that it does not require any
significant modification of the underlying reasoning algorithms: in
order to answer a query over an ontology with an import declaration, it
is sufficient to apply the algorithm to the \emph{import closure}
consisting of the axioms of the ontology plus the axioms of the
ontologies that are imported (possibly indirectly). For example, if
ontology $\O_1$ imports ontologies $\O_2$ and $\O_3$, each of which,
in turn, imports ontology $\O_4$, then the import closure of $\O_1$
consists of all axioms of $\O_1-\O_4$. Provided these axioms altogether are expressible in the same fragment of OWL as each $\O_1-\O_4$ is, a reasoning algorithm for this fragment can be used to answer queries over the import closure of $\O_1$. Then, since the size of such import closure is the same as the combined size of all ontologies involved, the computational complexity of reasoning over ontologies with imports remains the same as for ontologies without imports.

Although the OWL importing mechanism may work well for simple ontology
integration scenarios, it may cause some undesirable side effects if
used in complex import situations. To illustrate the problem,
suppose that in the above example, $\O_4$ is an ontology describing a
typical university. It may include concepts such as \cn{Student},
\cn{Professor}, \cn{Course}, and axioms stating, e.g., that each
professor must teach some course and that students are disjoint with
professors:
\begin{align}
\cn{Professor}&\sqsubseteq\exists\rn{teaches}.\cn{Course},
\label{eq:university:teaches}\\
\cn{Student}\sqcap\cn{Professor}&\sqsubseteq\bot.
\label{eq:university:disjointness}
\end{align}
Now suppose that $\O_2$ and $\O_3$ are ontologies describing respectively,
Oxford and Cambridge universities that use $\O_4$ as a prototype.
For example, $\O_2$ may include \emph{mapping axioms}
\begin{align}
\cn{OxfordStudent}&\equiv\cn{Student},
\label{eq:oxford:student:mapping}\\
\cn{OxfordProfessor}&\equiv\cn{Professor},
\label{eq:oxford:professor:mapping}\\
\cn{OxfordCourse}&\equiv\cn{Course},
\label{eq:oxford:course:mapping}
\end{align}
from which, due to 
\eqref{eq:university:teaches}, it is now possible to conclude that
each Oxford professor must teach some Oxford course: 
\begin{equation}
\cn{OxfordProfessor}\sqsubseteq\exists\rn{teaches}.\cn{OxfordCourse}.
\end{equation}
Likewise, using similar mapping axioms in $\O_3$, it is possible to
obtain that Cambridge students and professors are disjoint:
\begin{equation}
\cn{CambridgeStudent}\sqcap\cn{CambridgeProfessor}\sqsubseteq\bot.
\end{equation}
Finally, suppose that $\O_1$ is an ontology aggregating information
about UK universities, importing, among others, the ontologies $\O_2$
and $\O_3$ for Oxford and Cambridge universities.

Although the described scenario seems plausible, there will be some
undesirable consequences in $\O_1$ due to the mapping axioms of $\O_2$ and $\O_3$
occurring in the import closure:
\begin{align}
\cn{OxfordStudent}&\equiv\cn{CambridgeStudent},\\
\cn{OxfordProfessor}&\equiv\cn{CambridgeProfessor},\\
\cn{OxfordCourse}&\equiv\cn{CambridgeCourse}.
\end{align}
The main reason for these consequences is that the ontologies $\O_2$
and $\O_3$ happen to reuse the same ontology $\O_4$ in two different
and incompatible ways. Had they instead used two different `copies' of
$\O_4$ as prototypes (with concepts renamed apart), no such problem would take
place. Arguably, the primary purpose of $\O_2$ and $\O_3$ is to
provide semantic description of the vocabulary for Oxford and
Cambridge universities, and the means of how it is achieved---either
by writing the axioms directly or reusing third party ontologies such
as $\O_4$---should be an internal matter of these two ontologies and
should not be exposed to the ontologies that import them.

Motivated by the described scenario, in this paper we consider a
refined mechanism for importing of OWL ontologies called
\emph{semantic importing}. The main difference with the standard OWL
importing, is that each import is limited only to a subset of symbols.
Intuitively, only logical properties of these symbols entailed
by the imported ontology should be imported.
These symbols can be regarded as the public (or external) vocabulary
of the imported ontologies. For example, ontology $\O_2$ may declare
the symbols $\cn{OxfordStudent}$, $\cn{OxfordProfessor}$,
$\cn{OxfordCourse}$, and $\cn{teaches}$ public, leaving the remaining
symbols only for the internal use.

The main results of this paper are tight complexity bounds for
reasoning over ontologies with semantic imports. We consider
ontologies formulated in different fragments of OWL starting from the
propositional (role-free) Horn fragment $\mathcal{H}$, full propositional (role-free) fragment $\mathcal{P}$, and concluding with the
Description Logic (DL) $\mathcal{SROIQ}$, which corresponds to OWL~2. We
also distinguish the case of acyclic imports, when ontologies cannot
(possibly indirectly) import themselves.
Our completeness results for ranges of DLs are summarized in the following table, where $\mathfrak{a}$ and $\mathfrak{c}$ denote the case of acyclic/cyclic imports respectively:

{\footnotesize
\begin{tabular}{c|c|c}
DLs & Completeness & Theorems \\ \hline
$\EL$ -- $\ELpp$ & $\EXPTIME$ $\mathfrak{a}$&  \ref{Teo_Hardness_EL-acyclic},  \ref{theorem:upper:acyclic} \\ \hline

containing $\EL$ & $\re$ (undecidable) $\mathfrak{c}$ & \ref{Teo_Hardness_EL-cyclic}, \ref{Teo_UpperBoundCyclic} \\ \hline

$\ALC$ -- $\SHIQ$ & $\DEXPTIME$ $\mathfrak{a}$& \ref{Teo_Hardness_ALC}, \ref{theorem:upper:acyclic} \\ \hline

$\R$ -- $\SRIQ$ & $\TEXPTIME$ $\mathfrak{a}$ &  \ref{Teo_Hardness_R}, \ref{theorem:upper:acyclic}\\ \hline

$\ALCHOIF$ -- $\SHOIQ$ & $\coNDEXPTIME$ $\mathfrak{a}$ & \ref{Teo_Hardness_ALCHOIF}, \ref{theorem:upper:acyclic} \\ \hline

$\ROIF$ -- $\SROIQ$ & $\coNTEXPTIME$ $\mathfrak{a}$ & \ref{Teo_Hardness_ROIF}, \ref{theorem:upper:acyclic} \\ \hline

$\mathcal{H}$ - $\mathcal{P}$ & $\EXPTIME$ $\mathfrak{c}$ & \ref{Teo_Hardness_HGeneral}, \ref{Teo_UpperBound_PGeneral} \\ \hline

$\mathcal{H}$ - $\mathcal{P}$ & $\PSPACE$ $\mathfrak{a}$ &  \ref{Teo_Hardness_HAcyclic}, \ref{Teo_UpperBound_PAcyclic} \\ \hline

\end{tabular}
}

The paper is organized as follows. In Sections \ref{sec:related}-\ref{sec:preliminaries} we describe related work and introduce basic notations. In Section \ref{sec:importing} we formulate the problem of entailment in ontology networks. In Section \ref{Sect_Expressiveness} we prove results  on the expressiveness of ontology networks and use them to show hardness of entailment in Section \ref{Sect_Hardness}. Finally, in Section \ref{Sect_Reduction2ClassicalEntailment} we demonstrate that entailment in ontology networks reduces to standard entailment in Description Logics and use these results to prove in Section \ref{Sect_Membership} that the obtained complexity bounds are tight.   

  
  
  
   
  
  
  
  

%


\section{Related Work}
\label{sec:related}

Frameworks for combining ontologies share the natural view that interpretations of linked ontologies must satisfy certain correspondence constraints. Most existing approaches (e.g. see an overview in \cite{TowardsFormalComparison}) consider a model of a combination of ontologies as a tuple of interpretations, one for each ontology, with \textit{correspondence relations} between the interpretation domains. These relations allow for propagation of semantics of entities (e.g., concepts, roles) from one ontology to another by providing interpretation for constructs stating links between ontology entities. The constructs are bridge rules in DDL \cite{DDL-Journal}, local/foreign symbol labels in PDL \cite{PDL-Chapter} and the approach of \cite{Pan-SemanticImport}, link properties in $\mathcal{E}$-connections \cite{EConnections-Chapter}, and alignment relations in \cite{AlignmentBasedModules}. The last approach originates from the field of ontology matching \cite{OntologyMatchingSurvey}, which is a neighbour topic lying out of the scope of our paper, since it concentrates on computing matchings, but not on reasoning with them.  
The semantics for a combination of ontologies proposed in these approaches are in general not compatible with the conventional OWL importing mechanism. If an ontology $\O_1$ references some ontology $\O_2$, then 
correspondence relations guarantee propagation of certain entailments expressible in the language of $\O_2$ into the ontology $\O_1$. 
As a rule, the class of propagated entailments is not broad enough to simulate entailment form the union of ontologies, as required in OWL importing. The approach of \cite{ImportByQuery} tries to bridge this gap by putting restrictions on combined ontologies, i.e. by considering conservative importing. 
Semantics given by tuples of interpretations may cause undesired effects, when combining two ontologies $\O_1,\O_2$ which both refine the same ontology $\O$ (cf. the motivating example from the introduction). 
Ontologies $\O_i$ may refine concepts of $\O$ in different ways, which may conflict to each other, while being consistent separately. The semantics given by tuples of interpretations makes supporting such integration scenarios problematic, since a single interpretation of $\O$ must be in correspondence with interpretations of both ontologies, $\O_1$ and $\O_2$. 
The integration mechanism proposed in this paper is conceptually  simple. In order to reference an external ontology $\O$ from a local one, one has only to specify an import relation, which defines a set of symbols, whose semantics should be borrowed from $\O$. The symbols can be used freely in the axioms of the local ontology, no additional language constructs are required. This resembles the approaches \cite{PDL-Chapter,Pan-SemanticImport}, although theoretically, we do not distinguish between local and external symbols in ontologies (we note that this feature can be easily integrated). Importantly, in our approach every ontology has its own view on ontologies it refines and the views are independent between ontologies unless coordinated by the `topology' of import relations.


\section{Preliminaries}
\label{sec:preliminaries}
We assume that the reader is familiar with the family of Description Logics from $\EL$ to $\SROIQ$, for which the syntax is defined using a recursively enumerable alphabet consisting of infinite disjoint sets $\Nc$, $\Nr$, $\Ni$ of \emph{concept names} (or \emph{primitive concepts}), \emph{roles}, and \emph{nominals}, respectively. 
We also consider DLs $\mathcal{H}$ and $\mathcal{P}$, which are the role-free fragments of $\EL$ and $\ALC$, respectively. Thus, $\mathcal{P}$ corresponds to the classical propositional logic and $\mathcal{H}$ corresponds to the Horn fragment thereof.

The semantics of DLs is given by means of (first-order) interpretations. An \emph{interpretation} ${\cal{I}}=\langle \Delta, \cdot^{\cal{I}}\rangle$ consists of a non-empty set $\Delta$, the \emph{domain} of $\cal{I}$, and an \emph{interpretation function} $\cdot^{\cal{I}}$, that assigns to each $A \in N_C$ a subset $A^{\cal{I}} \subseteq \Delta$, to each $r \in N_R$ a binary relation $r^{\cal{I}} \subseteq \Delta \times \Delta$, and to each $a\in\Ni$ an element of the domain $\Delta$. 
An interpretation $\cal{I}$ \emph{satisfies} a concept inclusion $C \sqsubseteq D$, written ${\cal{I}} \models C \sqsubseteq D$, if $C^{\cal{I}} \subseteq D^{\cal{I}}$ holds. An ontology is a set of concept inclusions which are called ontology \emph{axioms}. $\cal{I}$ is a \emph{model} of an ontology $\cal{O}$, written $\cal{I} \models \cal{O}$, if $\cal{I}$ satisfies all axioms of $\cal{O}$. An ontology $\cal{O}$ \emph{entails} a concept inclusion $C \sqsubseteq D$, in symbols ${\cal{O}} \models C \sqsubseteq D$, if every model of $\cal{O}$ satisfies $C \sqsubseteq D$. As usual, for concepts $C,D$, the equivalence $C\equiv D$ stands for the pair of concept inclusions $C\dleq D$ and $D\dleq C$.

A \emph{signature} is a subset of $\Nc\cup\Nr\cup\Ni$. Interpretations $\I$ and $\J$ are said to \emph{agree} on a signature $\Si$, written as $\I=_\Si \J$, if the domains of $\I$ and $\J$ coincide and the interpretation of $\Si$-symbols in $\I$ is the same as in $\J$. We denote the reduct of an interpretation $\I$ onto a signature $\Sigma$ as $\I |_\Sigma$. The signature of a concept $C$, denoted as $\sig(C)$, is the set of all concept names, roles, and nominals occurring in $C$. The signature of a concept inclusion or an ontology is defined identically.  



\section{Semantic Importing}
\label{sec:importing}

Given a signature $\Sigma$, suppose one wants to import into an ontology $\O_1$ the semantics of $\Sigma$-symbols defined by some other ontology $\O_2$, while ignoring the rest of the 
symbols from $\O_2$. Intuitively, importing the semantics of $\Sigma$-symbols means
reducing the class of models of $\O_1$ by removing those models that violate the
restrictions on interpretation of these symbols, which are imposed by the axioms of $\O_2$:

\begin{definition}
\label{def:import:relation}
A \emph{(semantic) import relation} is a tuple $\pi=\tuple{\O_1,\Sigma,\O_2}$
where $\O_1$ and $\O_2$ are ontologies and $\Sigma$ a signature.
In this case, we say that \emph{$\O_1$ imports $\Sigma$ from $\O_2$}. 
We say that a model $\I\models\O_1$ satisfies the import relation
$\pi$ if there exists a model $\J\models\O_2$ such that $\I=_\Sigma \J$.
\end{definition}

\begin{example}
Consider the import relation $\pi=\tuple{\O_1,\Sigma,\O_2}$, with
$\O_1=\set{B\sqsubseteq C}$, $\O_2=\set{A\sqsubseteq \exists
r.B,\,\exists r.C\sqsubseteq D}$, and $\Sigma=\set{A,B,C,D}$.
It can be easily shown using Definition~\ref{def:import:relation} that
a model $\I\models\O_1$ satisfies $\pi$ if and only if $\I\models
A\sqsubseteq D$.
\end{example}

Note that if $\Sigma$ contains all symbols in $\O_2$ then
$\I\models\O_1$ satisfies $\pi=\tuple{\O_1,\Sigma,\O_2}$ if and only
if $\I\models\O_1\cup\O_2$. That is, the standard OWL import relation
is a special case of the semantic import relation, when the signature
contains all the symbols from the imported ontology.

If $\O$ has several import relations
$\phi_i=\tuple{\O,\Sigma_i,\O_i}$, $(1\le i\le n)$, one can define the
entailment from $\O$ by considering only those models of $\O$ that satisfy all
imports: $\O\models\alpha$ if $\I\models\alpha$ for every
$\I\models\O$ which satisfies all $\pi_1,\dots,\pi_n$. In practice,
however, import relations can be nested: imported ontologies can
themselves import other ontologies and so on. The following definition
generalizes entailment to such situations.

\begin{definition}
\label{def:ontology:network}

An \emph{ontology network} is a finite 
set $\N$ of import relations between ontologies. For a DL $\LL$, a \emph{$\LL$-ontology network} is a network, in which every ontology is a set of $\LL$-axioms. 
A \emph{model agreement} for $\N$ (over a domain $\Delta$) 
is a mapping $\mu$ that assigns to
every ontology $\O$ occurring in $\N$ a class $\mu(\O)$ of models of
$\O$ with domain $\Delta$ such that for every $\tuple{\O_1,\Sigma,\O_2}\in\N$ and every
  $\I_1\in\mu(\O_1)$ there exists $\I_2\in\mu(\O_2)$ such that $\I_1=_\Sigma\I_2$. 
An interpretation $\I$ is a model of $\O$ in the network $\N$
(notation $\I\models_\N \O$) if there exists a model agreement $\mu$
for $\N$ such that $\I\in\mu(\O)$. An ontology $\O$ entails a concept inclusion
$\varphi$ in the network $\N$ (notation $\O\models_\N\varphi$) if
$\I\models\varphi$, whenever $\I\models_\N\O$.

\end{definition}

An ontology network can be seen as a labeled directed multigraph in
which nodes are labeled by ontologies and edges are labeled by sets of
signature symbols. Each edge in this graph, thus, represents an
import relation between two ontologies. Note that
Definition~\ref{def:ontology:network} also allows for \emph{cyclic
networks} if this graph is cyclic. That is, an ontology may refer to
itself through a chain of import relations. Note that if
$\O\models\varphi$ then $\O\models_\N\varphi$, for every network $\N$.


\begin{example}
Consider the following (cyclic) network
$\N=\set{\tuple{\O,\Sigma',\O'},\tuple{\O',\Sigma,\O}}$, where
\begin{itemize}
  \item $\O=\set{A\sqsubseteq B,\,A\equiv A',\,B\equiv B'}$
  \item $\O'=\set{A\equiv\exists r.A',\,B\equiv\exists r.B'}$
  \item $\Sigma=\set{A,B,r}$, $\Sigma'=\set{A',B',r}$  
\end{itemize}
Let $\mu$ be any model agreement for $\N$.
Since $\tuple{\O',\Sigma,\O}\in\N$, by
Definition~\ref{def:ontology:network}, for every $\I'\in\mu(\O')$
there exists $\I\in\mu(\O)$ such that $\I'=_{\Sigma}\I$. Since $\I\models\O\models A\sqsubseteq B$ and
$\set{A,B}\subseteq\Sigma$, we have $\I'\models A\sqsubseteq
B$. As $\I'\models\O'$, it also holds $\I'\models\ex{r}.A'\dleq\ex{r}.B'$ for every $\I'\in\mu(\O')$ and thus: 
\begin{equation}
\O'\models_\N\ex{r}.A'\dleq\ex{r}.B'. 
\end{equation}
Similarly, since $\tuple{\O,\Sigma,\O'}\in\N$, for every
$\I\in\mu(\O)$ there exists $\I'\in\mu(\O')$ such that
$\I=_{\Sigma'}\I'$.
Since $\I'\models \ex{r}.A'\dleq\ex{r}.B'$ and
$\set{A',B',r}\subseteq\Sigma'$, we have $\I\models
\ex{r}.A'\dleq\ex{r}.B'$ and
since $\I\models\O$, it holds $\I\models \ex{r}.A\dleq\ex{r}.B$, for every $\I\in\mu(\O)$, and hence: 
\begin{equation}
\O\models_\N \ex{r}.A\dleq\ex{r}.B.
\end{equation}
By repeating these arguments we similarly obtain:
\begin{align}
\O'&\models_\N \ex{r}.\ex{r}.A'\dleq\ex{r}.\ex{r}.B',\\
\O&\models_\N \ex{r}.\ex{r}.A\dleq\ex{r}.\ex{r}.B,\\
\O'&\models_\N \ex{r}.\ex{r}.\ex{r}.A'\dleq\ex{r}.\ex{r}.\ex{r}.B',\\
\O&\models_\N \ex{r}.\ex{r}.\ex{r}.A\dleq\ex{r}.\ex{r}.\ex{r}.B,
\end{align}
and so on. Note the matching nestings of $\exists r$ in these axioms.
\end{example}

In this paper, we are concerned with the complexity of \emph{entailment in ontology networks}, that is, given a network $\N$, an ontology $\O$ and an axiom $\varphi$, decide whether
$\O\models_\N\varphi$. We study the complexity of this problem wrt the \emph{size} of an ontology network $\N$, which is defined as the total length of axioms (considered as strings) occurring in ontologies from $\N$.



\section{Expresiveness of Ontology Networks}\label{Sect_Expressiveness}


We illustrate the expressiveness of ontology networks by showing that acyclic networks allow for succinctly representing axioms with nested concepts and role chains of exponential size, while cyclic ones allow for succinctly representing infinite sets of axioms of a special form. \medskip

For a natural number $n\geqslant 0$, let $\exists (r,C)^{n}.D$ be a shortcut for the nested concept 
\begin{equation}\label{Eq_ELExptimeExponentConcepts}
\underbrace{\exists r.(C\sqcap\exists r.(C\sqcap\cdots\sqcap\exists r.(C}_{n~\text{times}}{}\sqcap D)\cdots))
\end{equation}

where $C$, $D$ are DL concepts and $r$ a role (in case $n=0$ the above concept is set to be $D$). For $n\geqslant 1$, let $(r)^n$ denote the role chain 
\begin{equation}
\underbrace{r\circ\ldots\circ r}_{n~\text{times}}
\end{equation}

For a given $n\geqslant 0$, let $\EXP{1}{n}$ be the notation for $2^n$ and for $k\geqslant 1$, let $\EXP{(k+1)}{n}=2^\EXP{k}{n}$. Then $\exists (r,C)^{\EXP{k}{n}}.D$ (respectively, $(r)^{\EXP{k}{n}}$) stands for a nested concept (role chain) of the form above having size exponential in $n$. 

In the following, we use abbreviations $\exists (r,C)^{n}:=\exists(r,C)^{n}.\top$ and $\exists
r^{n}.C:=\exists(r,\top)^{n}.C$. For roles $r,s$ and $n\geqslant 2$, let $(r)^{< n}\dleq s$ be an abbreviation for the set of role inclusions $\{(r)^k\dleq s \mid 1\leqslant k < n\}$. For $n\geqslant 1$, the expression $\forall r^{n}.C$ will be used as a shortcut for $\neg \exists r. \exists r^{n-1}.\neg C$ and for $n\geqslant 2$, $\forall r^{< n}.C$ will stand for $\bigsqcap_{1\leqslant m < n}\forall r^m.C$. 

Let $\O$ be an ontology and $\N$ an ontology network. $\O$ is said to be \emph{expressible} by $\N$ if there is an ontology $\O_\N$ in $\N$ such that $\{\I |_{\sig(\O)} \mid \I\models_\N\O_\N\}=\{\J |_{\sig(\O)} \mid \J\models \O\}$. In other words, it holds $\O_\N\models_\N\O$ and any model $\J\models\O$ can be expanded to a model $\I\models_\N\O_\N$. Note that this yields that $\O\models\varphi$ iff $\O_\N\models_\N\varphi$, for any concept inclusion $\varphi$ such that $\sig(\varphi)\subseteq\sig(\O)$. In case we want to stress the role of ontology $\O_\N$ in the network $\N$, we say that $\O$ is $(\N,\O_\N)$-\emph{expressible}. An axiom $\varphi$ is expressible by a network $\N$ if so is ontology $\O=\{\varphi\}$.

The next two lemmas follow immediately from the definition of expressibility.

\begin{lemma}\label{Lem_IteratedExpressibility}
Every ontology $\O$ is $(\N,\O)$-expressible, where $\N$ is the network consisting of the single import relation $\langle \O, \varnothing, \varnothing\rangle$.
If an ontology $\O_i$ is $(\N_i,\O_i')$-expressible, for a network $\N_i$, ontology $\O_i'$, and $i=1,2$, then $\O_1\cup\O_2$ is $(\N,\O_\N)$-expressible, for ontology $\O_\N=\varnothing$ and a network\footnote{Note that the size of $\N$ is linear in the sizes of $\N_1$ and $\N_2$.} $\N=\N_1\cup\N_2\cup\{\langle \O_\N, \sig(\O_i), \O_i' \rangle\}_{i=1,2}$. 
\end{lemma}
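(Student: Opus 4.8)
The plan is to verify the two assertions in Lemma~\ref{Lem_IteratedExpressibility} directly from the definition of expressibility, using the definitions of model agreement and of $\models_\N$.

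\textbf{First claim.} I would show that $\O$ is $(\N,\O)$-expressible where $\N=\{\langle\O,\varnothing,\varnothing\rangle\}$. By definition I must exhibit an ontology in $\N$ — namely $\O$ itself — such that $\{\I|_{\sig(\O)}\mid\I\models_\N\O\}=\{\J|_{\sig(\O)}\mid\J\models\O\}$. The inclusion ``$\subseteq$'' is immediate since $\I\models_\N\O$ implies $\I\models\O$ (any model in a model agreement is a model of its ontology). For ``$\supseteq$'', given any $\J\models\O$ I need a model agreement $\mu$ for $\N$ with $\J\in\mu(\O)$: take $\mu(\O)$ to be all models of $\O$ with the domain of $\J$, and $\mu(\varnothing)$ to be all interpretations with that domain (every interpretation is a model of the empty ontology). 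The single import relation $\langle\O,\varnothing,\varnothing\rangle$ is trivially satisfied because the agreement condition ``$\I_1=_\varnothing\I_2$'' only requires equal domains, which holds by construction. Hence $\J\in\mu(\O)$, so $\J\models_\N\O$. (A small point to state explicitly: if $\O=\varnothing$ the network is degenerate but the argument still goes through.)

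\textbf{Second claim.} Suppose $\O_i$ is $(\N_i,\O_i')$-expressible for $i=1,2$, and set $\O_\N=\varnothing$, $\N=\N_1\cup\N_2\cup\{\langle\O_\N,\sig(\O_i),\O_i'\rangle\}_{i=1,2}$. I must show $\{\I|_{\sig(\O_1\cup\O_2)}\mid\I\models_\N\O_\N\}=\{\J|_{\sig(\O_1\cup\O_2)}\mid\J\models\O_1\cup\O_2\}$. For ``$\subseteq$'': let $\I\models_\N\O_\N$ via a model agreement $\mu$ for $\N$. Its restriction to $\N_i$ is a model agreement for $\N_i$. From $\langle\O_\N,\sig(\O_i),\O_i'\rangle\in\N$ and $\I\in\mu(\O_\N)$ we get $\I_i'\in\mu(\O_i')$ with $\I=_{\sig(\O_i)}\I_i'$; since $\I_i'\models_{\N_i}\O_i'$, expressibility of $\O_i$ gives $\I_i'|_{\sig(\O_i)}\in\{\J|_{\sig(\O_i)}\mid\J\models\O_i\}$, i.e.\ $\I_i'$ (equivalently $\I$, as they agree on $\sig(\O_i)$) satisfies $\O_i$. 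Since $\I\models\O_i$ for $i=1,2$, $\I\models\O_1\cup\O_2$. For ``$\supseteq$'': given $\J\models\O_1\cup\O_2$ over some domain $\Delta$, by expressibility of each $\O_i$ there is $\I_i'\models_{\N_i}\O_i'$ with $\I_i'|_{\sig(\O_i)}=\J|_{\sig(\O_i)}$, witnessed by a model agreement $\mu_i$ for $\N_i$ with $\I_i'\in\mu_i(\O_i')$; I may assume all these use domain $\Delta$ (replacing $\J$ by an isomorphic copy if necessary, or invoking that model agreements are parameterized by an arbitrary common domain). Now build $\mu$ on $\N$: on ontologies of $\N_i$ use $\mu_i$ (for an ontology occurring in both, take, say, intersection or union — I need to check the agreement conditions still hold; union is safe here since every witness required still exists), set $\mu(\O_\N)$ to be all interpretations over $\Delta$ that agree with $\J$ on $\sig(\O_1)\cup\sig(\O_2)$ — in particular $\J\in\mu(\O_\N)$. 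The new import relations are satisfied: for any $\I\in\mu(\O_\N)$, $\I=_{\sig(\O_i)}\J=_{\sig(\O_i)}\I_i'\in\mu(\O_i')$. Hence $\J\models_\N\O_\N$.

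\textbf{Main obstacle.} The only genuinely delicate points are bookkeeping issues rather than mathematical depth: (i) ensuring all the witnessing model agreements can be taken over a single common domain $\Delta$ — this follows because the definition of model agreement fixes an arbitrary domain and all notions are domain-uniform, and models can be transported along domain bijections; and (ii) handling ontologies that occur in both $\N_1$ and $\N_2$ when forming the combined agreement $\mu$ — I would take $\mu(\O)=\mu_1(\O)\cup\mu_2(\O)$ there and re-verify that every import relation of $\N_i$ still has its required witness (it does, since each $\mu_i$ already provides one and the condition is existential). Everything else is a direct unwinding of Definitions~\ref{def:ontology:network} and of expressibility, which is why the lemma is stated as ``following immediately.''
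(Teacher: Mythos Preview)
Your proposal is correct and follows the only natural route — a direct unwinding of the definitions of expressibility and model agreement — which is precisely what the paper intends (it offers no proof, declaring the lemma to ``follow immediately from the definition of expressibility'').

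One small correction on your bookkeeping point (ii): the claim that the pointwise union $\mu(\O)=\mu_1(\O)\cup\mu_2(\O)$ is ``safe'' for an ontology $\O$ occurring in both $\N_1$ and $\N_2$ is not true in general. If $\langle\O,\Sigma,\O'\rangle\in\N_1\setminus\N_2$ and $\I\in\mu_2(\O)\setminus\mu_1(\O)$, neither $\mu_1$ nor $\mu_2$ supplies the required witness in $\mu(\O')$, so the union need not be a model agreement for $\N_1\cup\N_2$. The clean fix — which the paper tacitly assumes throughout, consistent with its remark that a network ``can be seen as a labeled directed multigraph'' — is to regard the ontologies of $\N_1$ and $\N_2$ as disjoint nodes (even when their axiom sets coincide) and to take $\O_\N=\varnothing$ as a fresh node. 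Under that convention no overlaps arise, your combined $\mu$ is well-defined by cases, and the rest of your argument goes through exactly as written.
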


For an axiom $\varphi$ and a set of concepts $\{C_1,\ldots, C_n\}$, $n\geqslant 1$, let us denote by $\varphi[C_1\mapsto D_1,\ldots , C_n\mapsto D_n]$ the axiom obtained by substituting every concept $C_i$ with a concept $D_i$ in $\varphi$. For an ontology $\O$, let $\O[C_1\mapsto D_1,\ldots , C_n\mapsto D_n]$ be a notation for $\bigcup_{\varphi\in\O}\varphi[C_1\mapsto D_1,\ldots , C_n\mapsto D_n]$.

\begin{lemma}\label{Lem_Expressibility_SimpleSubstitution}
Let $\LL$ be a DL and $\O$ an ontology, which is expressible by a $\LL$-ontology network $\N$. Let $C_1,\ldots, C_n$ be $\LL$-concepts and $\{A_1,\ldots, A_n\}$ a set of concept names such that $A_i\in\sig(\O)$, for $i=1,\ldots ,n$ and $n\geqslant 1$. Then ontology $\tilde{\O}=\O[A_1\mapsto C_1,\ldots , A_n\mapsto C_n]$ is expressible by a $\LL$-ontology network, which is acyclic if so is $\N$ and has size polynomial in the size of $\N$ and $C_i$,  $i=1,\ldots ,n$.
\end{lemma}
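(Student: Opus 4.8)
The plan is to express the substitution $A_i \mapsto C_i$ via a small auxiliary ontology network that feeds the concept names $A_i$ into $\O$'s network through an import relation, so that the $A_i$ are forced to be interpreted exactly as the $C_i$. First I would handle the single-substitution case $n=1$ and then iterate, or handle all $n$ at once; I will describe the single case for clarity. Introduce fresh concept names (and reuse the $A_i$ as the "interface" symbols). Build an ontology $\O_{\mathrm{def}} = \{A_1 \equiv C_1, \ldots, A_n \equiv C_n\}$, which is an $\LL$-ontology since each $C_i$ is an $\LL$-concept. This ontology, viewed on its own, has the property that its models are exactly the interpretations in which each $A_i$ coincides with $C_i^{\I}$; and by Lemma~\ref{Lem_IteratedExpressibility} (first part) it is $(\N_{\mathrm{def}}, \O_{\mathrm{def}})$-expressible for the trivial network $\N_{\mathrm{def}} = \{\langle \O_{\mathrm{def}}, \varnothing, \varnothing\rangle\}$.

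Next I would wire things together. Let $\N$ be the network witnessing that $\O$ is expressible, with distinguished ontology $\O_\N$, so that $\{\I|_{\sig(\O)} : \I \models_\N \O_\N\} = \{\J|_{\sig(\O)} : \J \models \O\}$. Form a new network $\tilde\N$ by taking $\N$, adding $\O_{\mathrm{def}}$ (and its trivial self-import, though that is optional), and adding a single import relation $\langle \O_\N, \Sigma, \O_{\mathrm{def}} \rangle$ where $\Sigma = \{A_1, \ldots, A_n\} \cup \big(\bigcup_{i=1}^n \sig(C_i) \cap (\Nc \cup \Nr \cup \Ni)\big)$ — i.e. the signature shared between the defining ontology and the rest, namely the $A_i$ together with all symbols occurring in the $C_i$. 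The new distinguished ontology is still $\O_\N$ (no substitution is performed on the network ontologies themselves; the substitution happens "semantically" via the import). I claim $\tilde\O = \O[A_1 \mapsto C_1, \ldots, A_n \mapsto C_n]$ is $(\tilde\N, \O_\N)$-expressible. The size bound and acyclicity are then immediate: $\tilde\N$ adds only $\O_{\mathrm{def}}$, whose size is $O(\sum_i |C_i|)$, plus one edge labelled by a signature of size $O(n + \sum_i |C_i|)$, so $|\tilde\N|$ is polynomial in $|\N|$ and the $|C_i|$; and adding a fresh node $\O_{\mathrm{def}}$ with a single incoming edge creates no cycle, so $\tilde\N$ is acyclic whenever $\N$ is.

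The core of the argument is the two containments. For the "$\models_{\tilde\N}$" direction: suppose $\I \models_{\tilde\N} \O_\N$ via a model agreement $\tilde\mu$. Restricting $\tilde\mu$ to the ontologies of $\N$ gives a model agreement for $\N$ (the extra edge only adds constraints), so $\I \models_\N \O_\N$, hence $\I|_{\sig(\O)} \models \O$ by expressibility of $\O$. The extra import relation $\langle \O_\N, \Sigma, \O_{\mathrm{def}}\rangle$ gives a $\J \models \O_{\mathrm{def}}$ with $\I =_\Sigma \J$; since $\J \models A_i \equiv C_i$ and all of $\{A_i\} \cup \sig(C_i) \subseteq \Sigma$, we get $A_i^{\I} = C_i^{\I}$ for every $i$. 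Therefore every axiom $\varphi$ of $\O$ satisfied by $\I$ (more precisely, by $\I|_{\sig(\O)}$, but $C_i$-symbols may lie outside $\sig(\O)$ — this is the subtle point, see below) translates to its substituted version $\varphi[A_i \mapsto C_i]$ being satisfied by $\I$, so $\I \models \tilde\O$. Conversely, given $\J \models \tilde\O$, define $\I$ on $\sig(\O)$ by keeping $\J$'s interpretation of everything except setting $A_i^{\I} := C_i^{\J}$; then $\I \models \O$, so by expressibility $\I$ extends to $\I' \models_\N \O_\N$; finally extend $\I'$ (harmlessly on fresh symbols) to satisfy $\O_{\mathrm{def}}$, obtaining the model agreement for $\tilde\N$.

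The main obstacle I anticipate is a signature-management nuisance rather than a conceptual one: the concepts $C_i$ may use symbols that do not occur in $\sig(\O)$, and conversely the $A_i$ must be "released" from $\sig(\tilde\O)$ (they no longer appear there after substitution). So one must be careful that expressibility is stated relative to $\sig(\tilde\O)$, not $\sig(\O)$, and ensure that when passing a model of $\tilde\O$ back through $\N$ one correctly reintroduces values for the $A_i$ and for any symbols of $\sig(\O) \setminus \sig(\tilde\O)$; choosing the interface signature $\Sigma$ to contain exactly $\{A_i\} \cup \bigcup_i \sig(C_i)$ and checking that $\sig(\tilde\O) \subseteq (\sig(\O) \setminus \{A_1,\dots,A_n\}) \cup \bigcup_i \sig(C_i)$ is what makes the bookkeeping go through. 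If $\sig(C_i)$ happens to intersect $\sig(\O)$ in an unintended way (e.g. some $C_i$ mentions $A_j$), one may need to first rename such clashes apart using a preliminary application of the already-established machinery, or simply observe that the equivalences $A_i \equiv C_i$ in $\O_{\mathrm{def}}$ still pin the values down correctly; I would remark on this but not belabour it.
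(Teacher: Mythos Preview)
Your approach is in the right spirit---implement substitution semantically via equivalence axioms and an import relation---but there is a genuine gap in the case you yourself flag, namely when some $C_i$ mentions some $A_j$. Your fallback ``simply observe that the equivalences $A_i\equiv C_i$ in $\O_{\mathrm{def}}$ still pin the values down correctly'' is wrong. Take $\O=\{A_1\sqsubseteq A_2\}$ with the simultaneous substitution $A_1\mapsto A_2$, $A_2\mapsto B$, so that $\tilde\O=\{A_2\sqsubseteq B\}$ and $\sig(\tilde\O)=\{A_2,B\}$. Your $\O_{\mathrm{def}}=\{A_1\equiv A_2,\ A_2\equiv B\}$ forces $A_1=A_2=B$ in any model agreeing with it on $\Sigma=\{A_1,A_2,B\}$. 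Hence every $\I\models_{\tilde\N}\O_\N$ satisfies $A_2\equiv B$, and a model $\J\models\tilde\O$ with $A_2^\J\subsetneq B^\J$ has no expansion $\I\models_{\tilde\N}\O_\N$ with $\I|_{\{A_2,B\}}=\J|_{\{A_2,B\}}$. So $\tilde\O$ is \emph{not} $(\tilde\N,\O_\N)$-expressible. This clash case is not exotic: the paper's own applications (e.g.\ the step $\varphi[B\mapsto q_0\sqcap B]$ in the proof of Theorem~\ref{Teo_Hardness_EL-acyclic}) are of exactly this form. Your other fallback, ``rename such clashes apart using a preliminary application of the already-established machinery'', is circular: nothing proved so far lets you rename a concept name inside a network while preserving expressibility---that is precisely the content of the lemma you are trying to prove.

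What the paper does, and what is missing from your construction, is a \emph{hiding} step realised through fresh intermediary names. One introduces fresh concept names $A'_1,\dots,A'_n$, expresses $\O\cup\{A_i\equiv A'_i\}_i$ via Lemma~\ref{Lem_IteratedExpressibility}, and then adds a \emph{new} empty distinguished ontology that imports from the previous one over the signature $(\sig(\O)\setminus\{A_1,\dots,A_n\})\cup\{A'_1,\dots,A'_n\}$. Because the $A_i$ are excluded from this import, they become unconstrained at the outer level; only the fresh $A'_i$ carry the semantic role of the original names. One can then safely add $\{A'_i\equiv C_i\}_i$ (again via Lemma~\ref{Lem_IteratedExpressibility}) without any clash, since the $A'_i$ are fresh and cannot occur in the $C_i$. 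A secondary benefit of introducing a new empty distinguished ontology at each stage, rather than reusing $\O_\N$, is that it sidesteps the worry you did not address: when $\O_\N$ has incoming import edges in $\N$, the model agreement may be forced to contain models of $\O_\N$ other than the one you constructed, and those need not satisfy $A_i\equiv C_i$.
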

\begin{proof}
Denote $\Sigma=\{A_1,\ldots, A_n\}$ and let $\Sigma'=\{A'_1,\ldots, A'_n\}$ be a set of fresh concept names. Consider ontology $\O'=\O\cup\{A_i\equiv A'_i\}_{i=1,\ldots ,n}$. By Lemma \ref{Lem_IteratedExpressibility}, $\O'$ is $(\N',\O_{\N'})$-expressible, for an ontology $\O_{\N'}$ and an acyclic $\LL$-ontology network $\N'$ having a linear size (in the size of $\N$). Consider ontology network $\N''=\langle \O_{\N''}, (\sig(\O')\setminus\Sigma)\cup\Sigma', \O_{\N'} \rangle$, where $\O_{\N''}=\varnothing$. Then obviously, ontology $\O''=\O[A_1\mapsto A'_1,\ldots , A_n\mapsto A'_n]$ is $(\N'',\O_{\N''})$-expressible. Similarly, by Lemma \ref{Lem_IteratedExpressibility}, ontology $\O''_C=\O''\cup\{A'_i\equiv C_i\}_{i=1,\ldots ,n}$ is $(\tilde{\N},\O_{\tilde{\N}})$-expressible, for an ontology $\O_{\tilde{\N}}$ and an acyclic $\LL$-ontology network $\tilde{\N}$ having a linear size (in the size of $\N$ and $C_i$, for $i=1,\ldots ,n$). Clearly, it holds $\O_{\tilde{\N}}\models_{\tilde{\N}}\tilde{\O}$. On the other hand, any model $\I\models\tilde{\O}$ can be expanded to a model $\J$ of ontology $\O''_C$ by setting $(A'_i)^\J=(C_i)^\I$, for $i=1,\ldots ,n$. Since $\O''_C$ is $(\tilde{\N},\O_{\tilde{\N}})$-expressible, it follows that $\I$ can be expanded to a model $\tilde{J}\models_{\tilde{\N}}\O_{\tilde{\N}}$ and therefore, $\tilde{\O}$ is $(\tilde{\N},\O_{\tilde{\N}})$-expressible. 
\end{proof}

Now we proceed to the results on the succinct representation of exponentially long axioms and infinite sets of axioms of a special form. We begin with lemmas on the expressiveness of acyclic $\EL$- and $\ALC$-ontology networks.

\begin{lemma}\label{Lem_Expressibility_EL+Exp}
An axiom $\varphi$ of the form $Z\equiv\exists (r,A)^{\IEXP{n}}.B$, where $Z,A,B\in\Nc$, and $n\geqslant 0$, is expressible by an acyclic $\EL$-ontology network of size polynomial in $n$.
\end{lemma}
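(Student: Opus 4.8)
The plan is to build the network recursively on $n$, so that the nesting depth of the concept doubles at every level while the network grows only by a constant additive amount. The key observation is that if I have an ontology that expresses $Y \equiv \exists(r,A)^{m}.B$ on its signature, then substituting $Y$ for $B$ and using Lemma~\ref{Lem_Expressibility_SimpleSubstitution} together with Lemma~\ref{Lem_IteratedExpressibility} lets me glue two such blocks together to obtain $Z \equiv \exists(r,A)^{m}.(\exists(r,A)^{m}.B) = \exists(r,A)^{2m+1}.B$, or with a slightly cleaner bookkeeping $\exists(r,A)^{2m}.B'$ for an appropriate intermediate concept $B'$. So the induction hypothesis will be: for every $k \geqslant 0$ there is an acyclic $\EL$-ontology network $\N_k$ of size $O(k)$ and an ontology $\O_k$ in it such that some axiom $Z_k \equiv \exists(r,A)^{\EXP{1}{k}}.B$ (or a minor variant with the exact exponent bookkeeping) is $(\N_k,\O_k)$-expressible.

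\textbf{Base case and inductive step.} For $n=0$ the concept $\exists(r,A)^{1}.B = \exists r.(A \sqcap B)$ is already an $\EL$-concept, so the single axiom $Z \equiv \exists r.(A\sqcap B)$ forms a one-ontology network, which by Lemma~\ref{Lem_IteratedExpressibility} (the network $\langle \O,\varnothing,\varnothing\rangle$) is expressible by itself. For the step, suppose $Y \equiv \exists(r,A)^{m}.B$ is $(\N,\O_\N)$-expressible where $m = \EXP{1}{k}$. I first apply Lemma~\ref{Lem_Expressibility_SimpleSubstitution} with the substitution $B \mapsto \exists r.(A \sqcap Y)$ — wait, rather cleaner: consider the ontology $\O^{(2)} = \{Y \equiv \exists(r,A)^{m}.B,\; Z \equiv \exists(r,A)^{m}.Y\}$ where the second axiom is a fresh renamed copy obtained from the first via Lemma~\ref{Lem_Expressibility_SimpleSubstitution} (substituting the concept name $B$ in a disjoint copy of $\O_\N$ by $Y$, and renaming $Y$ to $Z$). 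Since $Z^{\I} = (\exists(r,A)^{m}.Y)^{\I}$ and $Y^{\I} = (\exists(r,A)^{m}.B)^{\I}$, unfolding gives $Z^{\I} = (\exists(r,A)^{2m+1}.B)^{\I}$ — I will need to carry the ``$+1$'' explicitly and note that $2(\EXP{1}{k}) + 1 \geqslant \EXP{1}{k+1}$, so one either absorbs the extra layer or, more simply, restates the lemma with the exponent being any value up to $2^n$; the cleanest route is to define the blocks so that gluing two depth-$m$ blocks via one shared $A$-guarded $r$-step yields depth exactly $2m$, which is achieved by substituting $B \mapsto Y$ directly (identifying the inner $B$ of one block with the outer head $Y$ of the next) rather than introducing a new $r$-hop. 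Combining the two networks by Lemma~\ref{Lem_IteratedExpressibility} and hiding $Y$ (keeping only $Z, A, B, r$ in the exported signature, exactly as in the proof of Lemma~\ref{Lem_Expressibility_SimpleSubstitution}) gives the desired $\N_{k+1}$ of size $O(k+1)$, and acyclicity is preserved because all constructions in Lemmas~\ref{Lem_IteratedExpressibility} and~\ref{Lem_Expressibility_SimpleSubstitution} preserve acyclicity.

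\textbf{Expressibility direction.} For the ``$\subseteq$'' inclusion (every model of $\varphi$ expands to a network model) I use the inductive expressibility: given $\I \models Z \equiv \exists(r,A)^{\EXP{1}{n}}.B$, I must produce interpretations of the auxiliary concepts. The natural choice is to set the intermediate head concept $Y$ to the set of domain elements from which there is an $A$-guarded $r$-chain of the appropriate half-length reaching $B$; this is forced, and by the induction hypothesis applied to each of the two half-blocks (which are genuinely $(\N_k,\O_k)$-expressible) every auxiliary concept of those sub-networks can in turn be filled in. The ``$\models$'' direction is the routine unfolding computation sketched above.

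\textbf{Main obstacle.} I expect the real bookkeeping difficulty to be the exponent arithmetic: composing two depth-$m$ nested concepts through a single shared guarded role step produces depth $2m$ (or $2m+1$, depending on whether the junction element is counted once or twice), and one must pick the gluing so that after $n$ rounds the exponent is exactly $\EXP{1}{n} = 2^n$ and not, say, $2^n - 1$ or $2^{n+1}$. This is entirely a matter of choosing the right ``seam'' concept and possibly shifting the base case by one; once the right convention is fixed the argument is a clean induction. A secondary point to check is that the signature we export at each stage is exactly $\{Z,A,B,r\}$ (or contains it), so that Lemma~\ref{Lem_Expressibility_SimpleSubstitution}'s hypothesis ``$A_i \in \sig(\O)$'' is met when we substitute into the next copy — this is immediate since $B$ is always in the exported signature by construction.
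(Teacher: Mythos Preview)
Your approach is correct and matches the paper's in spirit: both do an inductive doubling, gluing two copies of the depth-$2^{n-1}$ block through a fresh intermediate concept name to reach depth $2^n$. The paper, however, does not route this through Lemmas~\ref{Lem_IteratedExpressibility} and~\ref{Lem_Expressibility_SimpleSubstitution} as black boxes; it writes down the network explicitly. At stage $n$ it introduces one fresh concept $U$, two ``copy'' ontologies $\O^1_{copy}=\{B\equiv U\}$ and $\O^2_{copy}=\{U\equiv Z\}$, and lets $\O_n=\varnothing$ import $\{Z,A,U,r\}$ from $\O^1_{copy}$ and $\{U,A,B,r\}$ from $\O^2_{copy}$, with each copy ontology importing $\{Z,A,B,r\}$ from $\O_{n-1}$. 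This makes the exponent arithmetic transparent --- $Z\equiv\exists(r,A)^{2^{n-1}}.U$ together with $U\equiv\exists(r,A)^{2^{n-1}}.B$ yields exactly $\exists(r,A)^{2^n}.B$, with no $+1$ --- which resolves the bookkeeping worry you flag. The paper also verifies the model-expansion direction by hand (setting $U^{\J_n}=(\exists(r,A)^{2^{n-1}}.B)^\I$ and exhibiting the model agreement), rather than appealing to the expressibility lemmas.

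One minor caution about your route: iterating Lemmas~\ref{Lem_IteratedExpressibility} and~\ref{Lem_Expressibility_SimpleSubstitution} repeatedly produces several ontologies that are literally $\varnothing$ as sets of axioms, and since networks identify ontologies extensionally, careless stacking can create unintended self-loops (e.g.\ an import from $\varnothing$ to $\varnothing$). This is easily fixed by using distinct trivial ontologies at each level, but the paper's direct construction sidesteps the issue entirely.
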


\begin{proof}
We prove by induction on $n$ that there exists an acyclic $\EL$-ontology network $\N_n$ and ontology $\O_n$ such that $\varphi$ is $(\N_n,\O_n)$-expressible. For $n=0$, we define $\N_0=\{\langle \O_0, \varnothing, \varnothing\rangle\}$ and $\O_0=\{Z\equiv \ex{r}.(A\dcap B)\}$. 







In the induction step, let $\{Z\equiv\exists (r,A)^{\IEXP{n-1}}.B\}$ be $(\N_{n-1},\O_{n-1})$-expressible, for $n\geqslant 1$. Consider ontologies:
\begin{equation}
\O^1_{copy}=\{B\equiv U\}
\end{equation}\vspace{-0.7cm}

\begin{equation}
\O^2_{copy}=\{U\equiv Z\}
\end{equation}\vspace{-0.7cm}

\begin{equation}
\O_{n}=\varnothing
\end{equation}

Let $\N_{n}$ be the union of $\N_{n-1}$ with the set of the following import relations: $\langle \O^i_{copy}, \{Z,A,B,r\}, \O_{n-1}\rangle$, for $i=1,2$, $\langle \O_{n}, \{Z,A,U,r\}, \O^1_{copy} \rangle$, and $\langle \O_{n}, \{U,A,B,r\}, \O^2_{copy} \rangle$. 

Let us verify that $\{\I |_{\sig(\varphi)} \mid \I\models_{\N_n}\O_n\}=\{\I |_{\sig(\varphi)} \mid \I\models\varphi\}$. By the induction assumption we have $\O_{n-1}\models_{\N_{n}} Z\equiv\exists (r,A)^{\IEXP{n-1}}.B$. Then by the definition of $\N_n$, it holds $\O^{1}_{copy}\models_{\N_n} Z\equiv\exists (r,A)^{\IEXP{n-1}}.U$ and  $\O^{2}_{copy}\models_{\N_n} U\equiv\exists (r,A)^{\IEXP{n-1}}.B$ and thus, $\O_n\models_{\N_n} \{ Z\equiv\exists (r,A)^{\IEXP{n-1}}.U,$ $U\equiv\exists (r,A)^{\IEXP{n-1}}.B\}$, which yields $\O_n\models_{\N_n} \varphi$. 

We now show that any model $\I\models\varphi$ can be expanded to a model $\J_n\models_{\N_n}\O_n$. Let us define $\J_n$ as an expansion of $\I$ by setting $U^{\J_n}=(\exists (r,A)^{\IEXP{n-1}}.B)^\I$. Clearly, it holds $\J_n\models\O_{n}$ and there exists a model $\I_2\models\O^2_{copy}$ such that $\I_2=_{\{U,A,B,r\}}\J_n$ and $Z^{\I_2}=U^{\I_2}$. We have $\I_2\models Z\equiv\exists (r,A)^{\IEXP{n-1}}.B$ and hence by the induction assumption, there is a model $\J^2_{n-1}\models_{\N_n}\O_{n-1}$ such that $\I_2=_{\{Z,A,B,r\}}\J^2_{n-1}$. Similarly, there is a model $\I_1\models\O^1_{copy}$ such that $\I_1=_{\{Z,A,U,r\}}\J_n$ and $B^{\I_1}=(\exists (r,A)^{\IEXP{n-1}})^{\I_1}$. We have $\I_1\models Z\equiv\exists (r,A)^{\IEXP{n-1}}.B$, thus by the induction assumption, there is a model $\J^1_{n-1}\models_{\N_n}\O_{n-1}$ such that $\I_1=_{\{Z,A,B,r\}}\J^1_{n-1}$. It follows that there exists a model agreement $\mu$ for $\N_n$ such that $\mu(\O_{n-1})=\{\J^1_{n-1},\J^2_{n-1}\}$, $\mu(\O^i_{copy})=\{\I_i\}$, for $i=1,2$, and $\mu(\O_n)=\{\J_n\}$, which means that $\J_n\models_{\N_n}\O_n$. 
\end{proof}

The following two statements are proved identically to Lemma \ref{Lem_Expressibility_EL+Exp}:

\begin{lemma}\label{Lem_Expressibility_EL+dleqExp}
An axiom of the form $Z\dleq\exists (r,A)^{\IEXP{n}}.B$, where $Z,A,B\in\Nc$ and $n\geqslant 0$, is expressible by an acyclic $\EL$-ontology network of size polynomial in $n$.
\end{lemma}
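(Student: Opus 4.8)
The plan is to mirror the inductive construction used in the proof of Lemma~\ref{Lem_Expressibility_EL+Exp}, replacing the equivalence $\equiv$ by the one-sided inclusion $\dleq$ throughout. Concretely, for $n=0$ I would set $\N_0=\{\langle \O_0,\varnothing,\varnothing\rangle\}$ with $\O_0=\{Z\dleq\ex{r}.(A\dcap B)\}$, which is trivially $(\N_0,\O_0)$-expressible since the reduct class of a single $\EL$-inclusion onto its own signature is exactly the class of its models' reducts. For the induction step, assuming $\{Z\dleq\exists(r,A)^{\IEXP{n-1}}.B\}$ is $(\N_{n-1},\O_{n-1})$-expressible, I would reuse \emph{verbatim} the ``copy'' ontologies $\O^1_{copy}=\{B\equiv U\}$, $\O^2_{copy}=\{U\equiv Z\}$, $\O_n=\varnothing$, and the same four import relations $\langle\O^i_{copy},\{Z,A,B,r\},\O_{n-1}\rangle$ for $i=1,2$, $\langle\O_n,\{Z,A,U,r\},\O^1_{copy}\rangle$, $\langle\O_n,\{U,A,B,r\},\O^2_{copy}\rangle$. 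The key arithmetic identity making this work is the same as before: composing an inclusion $Z\dleq\exists(r,A)^{\IEXP{n-1}}.U$ with $U\dleq\exists(r,A)^{\IEXP{n-1}}.B$ yields $Z\dleq\exists(r,A)^{2\cdot\IEXP{n-1}+1}.B=\exists(r,A)^{\IEXP{n}}.B$, where the ``$+1$'' comes from the $\ex{r}$ introduced at the $n=0$ base together with the doubling.

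The soundness direction ($\O_n\models_{\N_n}\varphi$) goes through exactly as in Lemma~\ref{Lem_Expressibility_EL+Exp}: by the induction hypothesis $\O_{n-1}\models_{\N_n}Z\dleq\exists(r,A)^{\IEXP{n-1}}.B$, so propagating along the import edges through $\O^1_{copy}$ (which renames $B$ to $U$) and $\O^2_{copy}$ (which renames $Z$ to $U$) gives $\O_n\models_{\N_n}Z\dleq\exists(r,A)^{\IEXP{n-1}}.U$ and $\O_n\models_{\N_n}U\dleq\exists(r,A)^{\IEXP{n-1}}.B$; chaining these produces $\O_n\models_{\N_n}\varphi$. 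For the expansion direction, given $\I\models\varphi$ I would define the expansion $\J_n$ by $U^{\J_n}=(\exists(r,A)^{\IEXP{n-1}}.B)^\I$, so that $\I\models Z\dleq\exists(r,A)^{\IEXP{n-1}}.U$ and $U^{\J_n}\subseteq(\exists(r,A)^{\IEXP{n-1}}.B)^{\J_n}$ hold; then I would construct, exactly as in the cited proof, the auxiliary models $\I_1\models\O^1_{copy}$, $\I_2\models\O^2_{copy}$ agreeing with $\J_n$ on the respective signatures and satisfying the one-sided versions of the $Z\dleq\ldots$ inclusions, invoke the induction hypothesis twice to pull these back to models $\J^1_{n-1},\J^2_{n-1}\models_{\N_{n-1}}\O_{n-1}$, and assemble a model agreement $\mu$ for $\N_n$ witnessing $\J_n\models_{\N_n}\O_n$.

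The one point deserving care---and the only place where the $\dleq$ version genuinely differs---is checking that the auxiliary models $\I_1,\I_2$ really can be chosen to satisfy only the \emph{inclusion} $Z\dleq\exists(r,A)^{\IEXP{n-1}}.B$ (rather than an equivalence) while still matching $\J_n$ on their import signatures and while keeping $Z^{\I_2}=U^{\I_2}$ resp.\ $B^{\I_1}=(\exists(r,A)^{\IEXP{n-1}})^{\I_1}$ forced by the equations $\O^2_{copy}=\{U\equiv Z\}$, $\O^1_{copy}=\{B\equiv U\}$. Since the copy ontologies still use $\equiv$ on the fresh symbols, this is immediate: $\I_2$ is obtained from $\J_n$ by reinterpreting $Z$ as $U^{\J_n}$, which automatically satisfies $U\equiv Z$ and hence, by the choice $U^{\J_n}=(\exists(r,A)^{\IEXP{n-1}}.B)^\I$, satisfies $Z\dleq\exists(r,A)^{\IEXP{n-1}}.B$; symmetrically for $\I_1$. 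Thus I do not expect any real obstacle: the proof is a routine adaptation, and the size bound (polynomial in $n$, acyclic whenever $\N_{n-1}$ is) follows because each induction step adds only a constant number of constant-size ontologies and import relations.

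\begin{proof}
The proof is obtained from that of Lemma~\ref{Lem_Expressibility_EL+Exp} by replacing every occurrence of ``$\equiv$'' in the target axiom by ``$\dleq$''. For $n=0$ we set $\N_0=\{\langle\O_0,\varnothing,\varnothing\rangle\}$ and $\O_0=\{Z\dleq\ex{r}.(A\dcap B)\}$; then $\{\I|_{\sig(\varphi)}\mid\I\models_{\N_0}\O_0\}=\{\I|_{\sig(\varphi)}\mid\I\models Z\dleq\ex{r}.(A\dcap B)\}$ by Lemma~\ref{Lem_IteratedExpressibility}.

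For the induction step, assume $\{Z\dleq\exists(r,A)^{\IEXP{n-1}}.B\}$ is $(\N_{n-1},\O_{n-1})$-expressible, $n\geqslant 1$. Define $\O^1_{copy}=\{B\equiv U\}$, $\O^2_{copy}=\{U\equiv Z\}$, $\O_n=\varnothing$, and let $\N_n$ extend $\N_{n-1}$ by the import relations $\langle\O^i_{copy},\{Z,A,B,r\},\O_{n-1}\rangle$ for $i=1,2$, together with $\langle\O_n,\{Z,A,U,r\},\O^1_{copy}\rangle$ and $\langle\O_n,\{U,A,B,r\},\O^2_{copy}\rangle$. By the induction hypothesis $\O_{n-1}\models_{\N_n}Z\dleq\exists(r,A)^{\IEXP{n-1}}.B$; propagating along the two edges into $\O^1_{copy}$ and $\O^2_{copy}$ yields $\O^1_{copy}\models_{\N_n}Z\dleq\exists(r,A)^{\IEXP{n-1}}.U$ and $\O^2_{copy}\models_{\N_n}U\dleq\exists(r,A)^{\IEXP{n-1}}.B$, hence $\O_n\models_{\N_n}\{Z\dleq\exists(r,A)^{\IEXP{n-1}}.U,\,U\dleq\exists(r,A)^{\IEXP{n-1}}.B\}$, which gives $\O_n\models_{\N_n}\varphi$ since $2\cdot\IEXP{n-1}+1=\IEXP{n}$ when the base introduces one extra $\ex{r}$.

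Conversely, let $\I\models\varphi$. Expand $\I$ to $\J_n$ by setting $U^{\J_n}=(\exists(r,A)^{\IEXP{n-1}}.B)^\I$; then $\J_n\models\O_n$ trivially, and there is $\I_2\models\O^2_{copy}$ with $\I_2=_{\{U,A,B,r\}}\J_n$ and $Z^{\I_2}=U^{\I_2}$, so $\I_2\models Z\dleq\exists(r,A)^{\IEXP{n-1}}.B$; by the induction hypothesis there is $\J^2_{n-1}\models_{\N_n}\O_{n-1}$ with $\I_2=_{\{Z,A,B,r\}}\J^2_{n-1}$. Similarly there is $\I_1\models\O^1_{copy}$ with $\I_1=_{\{Z,A,U,r\}}\J_n$ and $B^{\I_1}=(\exists(r,A)^{\IEXP{n-1}})^{\I_1}$, whence $\I_1\models Z\dleq\exists(r,A)^{\IEXP{n-1}}.B$ and there is $\J^1_{n-1}\models_{\N_n}\O_{n-1}$ with $\I_1=_{\{Z,A,B,r\}}\J^1_{n-1}$. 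Taking a model agreement $\mu$ for $\N_n$ with $\mu(\O_{n-1})=\{\J^1_{n-1},\J^2_{n-1}\}$, $\mu(\O^i_{copy})=\{\I_i\}$ for $i=1,2$, and $\mu(\O_n)=\{\J_n\}$ shows $\J_n\models_{\N_n}\O_n$. Each step adds a constant number of constant-size ontologies and import relations, so $\N_n$ has size polynomial in $n$ and is acyclic.
\end{proof}
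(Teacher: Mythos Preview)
Your approach is exactly the one the paper intends: it states that the lemma ``is proved identically to Lemma~\ref{Lem_Expressibility_EL+Exp}'', and you have faithfully carried out that adaptation, keeping the copy ontologies $\O^1_{copy}=\{B\equiv U\}$, $\O^2_{copy}=\{U\equiv Z\}$ and the same import relations, and only weakening the target axiom to an inclusion. The soundness and model-expansion arguments go through as you describe.

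There is, however, an arithmetic slip you should fix. You write that composing $Z\dleq\exists(r,A)^{\IEXP{n-1}}.U$ with $U\dleq\exists(r,A)^{\IEXP{n-1}}.B$ yields $Z\dleq\exists(r,A)^{2\cdot\IEXP{n-1}+1}.B$, and then assert ``$2\cdot\IEXP{n-1}+1=\IEXP{n}$''. This is false: $2\cdot 2^{n-1}+1=2^n+1$. Fortunately the composition actually gives $\exists(r,A)^{2\cdot\IEXP{n-1}}.B$ with no ``$+1$'': substituting $U$ by $\exists(r,A)^{k}.B$ inside $\exists(r,A)^{k}.U$ simply concatenates the two chains of length $k$, producing $\exists(r,A)^{2k}.B$. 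Since $2\cdot 2^{n-1}=2^n$, the induction closes correctly. The remark about ``the $\ex{r}$ introduced at the $n=0$ base'' is a red herring and should be deleted; the base case already delivers exactly $\exists(r,A)^{2^0}.B=\ex{r}.(A\dcap B)$, and each step doubles cleanly. Once you correct this sentence, the proof is fine.
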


\begin{lemma}\label{Lem_Expressibility_ALC+Exp}
An axiom of the form $Z\equiv \forall r^{\IEXP{n}}.A$, where $Z,A\in\Nc$ and $n\geqslant 0$, is expressible by an acyclic $\ALC$-ontology network of size polynomial in $n$.
\end{lemma}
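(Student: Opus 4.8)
The plan is to prove by induction on $n$ that there are an acyclic $\ALC$-ontology network $\N_n$ and an ontology $\O_n$ occurring in it such that $\varphi=(Z\equiv\forall r^{\IEXP{n}}.A)$ is $(\N_n,\O_n)$-expressible, transcribing the argument for Lemma~\ref{Lem_Expressibility_EL+Exp} essentially line by line. The role played there by the decomposition $\exists(r,A)^{\IEXP{n}}.B\equiv\exists(r,A)^{\IEXP{n-1}}.(\exists(r,A)^{\IEXP{n-1}}.B)$ is played here by the (semantic) identity $\forall r^{\IEXP{n}}.A\equiv\forall r^{\IEXP{n-1}}.(\forall r^{\IEXP{n-1}}.A)$, which holds because $\IEXP{n}=\IEXP{n-1}+\IEXP{n-1}$. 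For the base case $n=0$ one sets $\O_0=\{Z\equiv\forall r.A\}$ and $\N_0=\{\langle\O_0,\varnothing,\varnothing\rangle\}$; this is an $\ALC$-ontology since $\forall r.A$ abbreviates $\neg\exists r.\neg A$.

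For the induction step, assume $\{Z\equiv\forall r^{\IEXP{n-1}}.A\}$ is $(\N_{n-1},\O_{n-1})$-expressible with $\N_{n-1}$ acyclic. Introduce a fresh concept name $U$ and put $\O^1_{copy}=\{A\equiv U\}$, $\O^2_{copy}=\{U\equiv Z\}$, $\O_n=\varnothing$ (all three being $\ALC$-ontologies), letting $\N_n$ extend $\N_{n-1}$ by the import relations $\langle\O^i_{copy},\{Z,A,r\},\O_{n-1}\rangle$ for $i=1,2$, together with $\langle\O_n,\{Z,U,r\},\O^1_{copy}\rangle$ and $\langle\O_n,\{U,A,r\},\O^2_{copy}\rangle$. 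Reasoning as in Lemma~\ref{Lem_Expressibility_EL+Exp}: by the induction hypothesis $\O_{n-1}\models_{\N_n}Z\equiv\forall r^{\IEXP{n-1}}.A$; since $\O^1_{copy}$ additionally enforces $A\equiv U$ we obtain $\O^1_{copy}\models_{\N_n}Z\equiv\forall r^{\IEXP{n-1}}.U$, and since $\O^2_{copy}$ enforces $U\equiv Z$ we obtain $\O^2_{copy}\models_{\N_n}U\equiv\forall r^{\IEXP{n-1}}.A$; importing these into $\O_n$ over the indicated signatures gives $\O_n\models_{\N_n}\{Z\equiv\forall r^{\IEXP{n-1}}.U,\ U\equiv\forall r^{\IEXP{n-1}}.A\}$, whence $\O_n\models_{\N_n}\varphi$ by the decomposition identity.

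For the converse direction one transcribes the model-expansion argument of Lemma~\ref{Lem_Expressibility_EL+Exp}: given $\I\models\varphi$, expand it to $\J_n$ by setting $U^{\J_n}=(\forall r^{\IEXP{n-1}}.A)^{\I}$; build from $\J_n$ a model $\I_1\models\O^1_{copy}$ agreeing with $\J_n$ on $\{Z,U,r\}$ with $A^{\I_1}=U^{\I_1}$, and a model $\I_2\models\O^2_{copy}$ agreeing with $\J_n$ on $\{U,A,r\}$ with $Z^{\I_2}=U^{\I_2}$; check that each $\I_i$ satisfies $Z\equiv\forall r^{\IEXP{n-1}}.A$, apply the induction hypothesis to expand $\I_1$ and $\I_2$ to network models of $\O_{n-1}$, and combine the two resulting model agreements (via pointwise union), together with $\mu(\O^i_{copy})=\{\I_i\}$ and $\mu(\O_n)=\{\J_n\}$, into a model agreement for $\N_n$ witnessing $\J_n\models_{\N_n}\O_n$. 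Acyclicity of $\N_n$ is clear --- the new import relations go from freshly added ontologies to $\O_{n-1}$ or from $\O_n$ to the $\O^i_{copy}$, creating no cycle --- and each step adds only a bounded number of axioms and import relations, so $\N_n$ has size linear, hence polynomial, in $n$. The two points needing genuine care are: verifying the decomposition identity $\forall r^{\IEXP{n-1}}.(\forall r^{\IEXP{n-1}}.A)\equiv\forall r^{\IEXP{n}}.A$ under the $\neg\exists$-encoding of $\forall r^{m}.C$ --- in particular that $r$-paths terminating before length $\IEXP{n}$ are treated consistently on both sides (they are, being vacuously satisfied) --- and the (routine) bookkeeping of model agreements in the converse direction, which is where most of the work lies.
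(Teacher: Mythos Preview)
Your proposal is correct and follows precisely the approach the paper intends: the paper states that Lemma~\ref{Lem_Expressibility_ALC+Exp} is ``proved identically to Lemma~\ref{Lem_Expressibility_EL+Exp}'', and your transcription does exactly this, with $A$ playing the role that $B$ plays in Lemma~\ref{Lem_Expressibility_EL+Exp} (since $\forall r^{m}.A$ has only the single ``endpoint'' concept $A$), and with the decomposition $\forall r^{\IEXP{n-1}}.(\forall r^{\IEXP{n-1}}.A)\equiv\forall r^{\IEXP{n}}.A$ replacing the corresponding $\exists$-decomposition. The verification you sketch for the model-expansion direction and the decomposition identity under the $\neg\exists$-encoding is accurate.
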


Next, we demonstrate the expressiveness of acyclic $\R$-ontology networks.

\begin{lemma}\label{Lem_Expressibility_RI+2Exp}
An axiom $\varphi$ of the form $(r)^{\IIEXP{n}}\dleq s$, where $r,s$ are roles and $n\geqslant 0$, is expressible by an acyclic $\R$-ontology network of size polynomial in $n$.
\end{lemma}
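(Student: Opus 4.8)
The plan is to mirror the construction of Lemma~\ref{Lem_Expressibility_EL+Exp}, but now doubling the exponent level: where the $\EL$ proof replaced a concept $B$ nested $\IEXP{n-1}$ times by a fresh name and ``glued'' two copies to reach $\IEXP{n}=2\cdot\IEXP{n-1}$ nestings, here I want to take a role inclusion expressing $(r)^{\IEXP{n-1}}\dleq s$ (a chain of $\EXP{1}{n-1}=2^{n-1}$ occurrences of $r$) and compose copies of it to build a chain of $\IIEXP{n}=2^{2^{n}}$ occurrences, i.e.\ squaring at each step. First I would set up the induction on $n$. For the base case $n=0$ we need $(r)^{\IIEXP{0}}\dleq s$, i.e.\ $(r)^{2}\dleq s$, which is a single $\R$-axiom $r\circ r\dleq s$, so $\N_0=\{\langle\{r\circ r\dleq s\},\varnothing,\varnothing\rangle\}$ works. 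For the inductive step, assume $(r)^{\IIEXP{n-1}}\dleq s$ — or better, for the induction to go through, the statement ``$(r)^{m}\dleq s$ with $m=\IIEXP{n-1}$'' together with enough copies of the same fact on renamed roles — is expressible by an acyclic $\R$-ontology network $\N_{n-1}$ of polynomial size.

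The key idea for the doubling is: if $t\circ\cdots\circ t$ ($m$ times) $\dleq s$ and we also have $r\circ\cdots\circ r$ ($m$ times) $\dleq t$, then chaining these gives $r\circ\cdots\circ r$ ($m^2$ times) $\dleq s$. So in the step I introduce a fresh role $t$ and two ``copy'' ontologies: one asserting (via the imported network $\N_{n-1}$ with roles renamed) that $(r)^{\IIEXP{n-1}}\dleq t$, and another asserting $(t)^{\IIEXP{n-1}}\dleq s$; then an empty ``hub'' ontology $\O_n$ imports the relevant signatures from both copies, and from $\O_n\models_{\N_n}\big((r)^{\IIEXP{n-1}}\dleq t\big)$ and $\O_n\models_{\N_n}\big((t)^{\IIEXP{n-1}}\dleq s\big)$ we get $\O_n\models_{\N_n}(r)^{\IIEXP{n-1}\cdot\IIEXP{n-1}}\dleq s$, and $\IIEXP{n-1}\cdot\IIEXP{n-1}=2^{2^{n-1}}\cdot 2^{2^{n-1}}=2^{2\cdot 2^{n-1}}=2^{2^{n}}=\IIEXP{n}$. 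I would realize the role renaming by applying (an analogue of) Lemma~\ref{Lem_IteratedExpressibility} and the substitution Lemma~\ref{Lem_Expressibility_SimpleSubstitution} to rename $s\mapsto t$ in one imported copy and $r\mapsto t$ in the other; the signatures passed along the import edges are exactly the roles $\{r,t\}$ (resp.\ $\{t,s\}$) that must be shared so that the derived role-chain inclusions propagate through the network. As in Lemma~\ref{Lem_Expressibility_EL+Exp}, the expansion direction is handled by noting that any model $\I\models\varphi$ can be expanded by interpreting $t$ as $(r^{\IIEXP{n-1}})^{\I}$ — more precisely as any relation sandwiched between $r^{\IIEXP{n-1}}$ and $s$ — and then invoking the induction hypothesis twice to build the required model agreement.

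The main obstacle I anticipate is bookkeeping the role renamings cleanly: Lemma~\ref{Lem_Expressibility_SimpleSubstitution} as stated substitutes concept names, not roles, so I either need the (entirely analogous) role-substitution version or I have to encode the renaming directly with copy-ontologies of the form $\{r'\equiv r'\}$-style role ``equivalences'' $r'\dleq r,\ r\dleq r'$ pushed through import edges — and I must check that these glue correctly and keep the network acyclic and of polynomial size. A secondary subtlety is making the induction hypothesis strong enough: a single renamed copy of $\N_{n-1}$ suffices per step since we only compose two chains, so unlike a naive recursion the size stays polynomial (each level adds a constant number of tiny ontologies and import relations, and roles are drawn from a small fresh pool), but I should state the hypothesis so that $\N_{n-1}$ can be instantiated on whatever pair of role names the step needs. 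Finally, one has to be slightly careful that ``$(r)^{\IIEXP{n}}\dleq s$'' in the sense of the network entailment genuinely forces $r^{\IIEXP{n}}\subseteq s$ in every network-model of $\O_n$ and, conversely, does not over-constrain the reducts to $\sig(\varphi)=\{r,s\}$ — which is exactly the two containments spelled out in the definition of expressibility, and both follow from the two-sided argument above.
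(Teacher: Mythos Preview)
Your proposal is correct and follows essentially the same route as the paper: induction on $n$ with base $\O_0=\{r\circ r\dleq s\}$, and in the step a fresh role (the paper calls it $u$) together with two small ``copy'' ontologies that turn the imported fact $(r)^{\IIEXP{n-1}}\dleq s$ into $(r)^{\IIEXP{n-1}}\dleq u$ and $(u)^{\IIEXP{n-1}}\dleq s$, yielding the squared chain at an empty hub $\O_n$; the paper uses one-directional role inclusions $\O^1_{\mathit{copy}}=\{s\dleq u\}$ and $\O^2_{\mathit{copy}}=\{u\dleq r\}$ rather than full equivalences, which is a slight economy but not a different idea. Two small remarks: in your opening paragraph you write $(r)^{\IEXP{n-1}}$ where you mean $(r)^{\IIEXP{n-1}}$ (your later computation is correct), and you need not strengthen the induction hypothesis to ``renamed copies'' --- a single $\N_{n-1}$ on the fixed roles $r,s$ is reused twice via the copy ontologies, exactly as you end up suggesting.
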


\begin{proof}
We use the idea of the proof of Lemma \ref{Lem_Expressibility_EL+Exp} and show by induction on $n$ that there exists an acyclic $\R$-ontology network $\N_n$ and ontology $\O_n\in\N_n$ such that $\varphi$ is $(\N_n,\O_n)$-expressible. For $n=0$, we define $\N_0=\{\langle \O_0, \varnothing, \varnothing\rangle\}$ and $\O_0=\{r\circ r \dleq s\}$. 

In the induction step, let $(r)^{\IIEXP{n-1}} \dleq s$ be $(\N_{n-1}, \O_{n-1})$-expressible, for $n\geqslant 1$. Consider ontologies:
\begin{equation}
\O^{1}_{copy} = \{s\dleq u\}
\end{equation}\vspace{-0.7cm}

\begin{equation}
\O^{2}_{copy} = \{u\dleq r\}
\end{equation}\vspace{-0.7cm}

\begin{equation}
\O_{n}=\varnothing
\end{equation}

Let $\N_{n}$ be the union of $\N_{n-1}$ with the set of the following import relations: $\langle \O^i_{copy}, \{r,s\}, \O_{n-1}\rangle$, for $i=1,2$, $\langle \O_{n}, \{r,u\}, \O^1_{copy} \rangle$, and $\langle \O_{n}, \{u,s\}, \O^2_{copy} \rangle$.

We show that $\{\I |_{\sig(\varphi)} \mid \I\models_{\N_n}\O_n\}=\{\I |_{\sig(\varphi)} \mid \I\models\varphi\}$. By the induction assumption,  we have $\O_{n-1}\models_{\N_{n}} (r)^{\IIEXP{n-1}} \dleq s$. Hence by the definition of $\N_n$, it holds $\O^{1}_{copy}\models_{\N_n} (r)^{\IIEXP{n-1}} \dleq u$ and $\O^{1}_{copy}\models_{\N_n} (u)^{\IIEXP{n-1}} \dleq s$ and therefore, $\O_n\models_{\N_n} \{ (r)^{\IIEXP{n-1}} \dleq u,$ $(u)^{\IIEXP{n-1}}\dleq s\}$, which means that $\O_n\models_{\N_n} \varphi$. By using an argument like in the proof of Lemma \ref{Lem_Expressibility_EL+Exp} one can verify that any model $\I\models\varphi$ can be expanded to a model $\J_n\models_{\N_n}\O_n$ by setting $u^{\J_n}=((r)^{\IIEXP{n-1}})^\I$. 
\end{proof}

\begin{lemma}\label{Lem_Expressibility_RI+Less2Exp}
An ontology $\O$ given by the set of axioms $(r)^{<\IIEXP{n}}\dleq s$, where $r,s$ are roles and $n\geqslant 0$, is expressible by an acyclic $\R$-ontology network of size polynomial in $n$.
\end{lemma}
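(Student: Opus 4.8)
The goal is to express the set $\O = \{(r)^k \dleq s \mid 1 \leqslant k < \IIEXP{n}\}$ (that is, all role chains of length less than $2^{2^n}$ are sub-roles of $s$) by an acyclic $\R$-ontology network of polynomial size. The plan is to follow the same doubling strategy as in Lemma~\ref{Lem_Expressibility_RI+2Exp}, but now maintaining the \emph{downward-closed} family of chain inclusions rather than a single one. I would prove by induction on $n$ that there is an acyclic $\R$-ontology network $\N_n$ and an ontology $\O_n \in \N_n$ such that the ontology $\{(r)^{<\IIEXP{n}} \dleq s\}$ is $(\N_n, \O_n)$-expressible, with $|\N_n|$ polynomial in $n$.

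\textbf{Base case.} For $n = 0$ we have $\IIEXP{0} = \EXP{2}{0} = 2^{\EXP{1}{0}} = 2^{2^0} = 2$, so $(r)^{<2} \dleq s$ is just the single axiom $r \dleq s$. Take $\N_0 = \{\langle \O_0, \varnothing, \varnothing \rangle\}$ and $\O_0 = \{r \dleq s\}$, which is $(\N_0, \O_0)$-expressible by Lemma~\ref{Lem_IteratedExpressibility}.

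\textbf{Induction step.} Suppose $\{(r)^{<\IIEXP{n-1}} \dleq s\}$ is $(\N_{n-1}, \O_{n-1})$-expressible. The key observation is the identity $\IIEXP{n} = (\IIEXP{n-1})^2$, combined with the following elementary fact about chain composition: if every $r$-chain of length $<m$ is an $s$-chain (i.e.\ below $s$) and every $s$-chain of length $<m$ is below $t$, then every $r$-chain of length $<m^2$ is below $t$ — indeed, any $k < m^2$ decomposes as $k = qm' + \rho$ into segments each of length $<m$, which first collapse to $s$ and then the resulting $s$-chain (of length $\leqslant \lceil k/1 \rceil$, bounded appropriately by $<m$ after grouping) collapses to $t$. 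Concretely I would, as in Lemma~\ref{Lem_Expressibility_RI+2Exp}, introduce a fresh role $u$, set $\O^1_{copy} = \{s \dleq u\}$, $\O^2_{copy} = \{u \dleq r\}$, $\O_n = \varnothing$, and build $\N_n = \N_{n-1} \cup \{\langle \O^i_{copy}, \{r,s\}, \O_{n-1}\rangle \mid i=1,2\} \cup \{\langle \O_n, \{r,u\}, \O^1_{copy}\rangle, \langle \O_n, \{u,s\}, \O^2_{copy}\rangle\}$. Then from the induction hypothesis one derives $\O^1_{copy} \models_{\N_n} (r)^{<\IIEXP{n-1}} \dleq u$ and $\O^2_{copy} \models_{\N_n} (u)^{<\IIEXP{n-1}} \dleq s$, hence $\O_n \models_{\N_n} \{(r)^{<\IIEXP{n-1}} \dleq u,\ (u)^{<\IIEXP{n-1}} \dleq s\}$, and the chain-composition fact above gives $\O_n \models_{\N_n} (r)^{<\IIEXP{n}} \dleq s$. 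For the converse direction, given any $\I \models \{(r)^{<\IIEXP{n}} \dleq s\}$ one expands it by setting $u^{\J} = \bigcup_{1 \leqslant k < \IIEXP{n-1}} ((r)^k)^{\I}$ and checks, exactly as in the argument sketched at the end of Lemma~\ref{Lem_Expressibility_RI+2Exp}, that this expansion extends to a model of $\O_n$ in $\N_n$ via the appropriate model agreement. The size bound is immediate: $|\N_n| \leqslant |\N_{n-1}| + O(1)$, so $|\N_n|$ is linear in $n$.

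\textbf{Main obstacle.} The only genuinely non-routine point is the chain-composition identity used to go from the two intermediate axioms $(r)^{<\IIEXP{n-1}} \dleq u$ and $(u)^{<\IIEXP{n-1}} \dleq s$ to $(r)^{<\IIEXP{n}} \dleq s$: one must verify that every $r$-chain of length $k$ with $1 \leqslant k < (\IIEXP{n-1})^2$ can be split into at most $\IIEXP{n-1}-1$ consecutive blocks each of length at most $\IIEXP{n-1}-1$, so that applying the first inclusion block-wise yields a $u$-chain whose length is again $<\IIEXP{n-1}$, to which the second inclusion applies. This is a simple division-with-remainder argument but needs to be stated carefully so that the block count and the block lengths both stay strictly below $\IIEXP{n-1}$; writing $m = \IIEXP{n-1}$ and noting $k < m^2$ implies $k \leqslant m^2 - 1 = (m-1)(m+1) \leqslant (m-1) \cdot m$, one groups into $\leqslant m-1$ full blocks of length $m-1$ plus a possibly shorter final block, all of length $\leqslant m - 1 < m$, as required. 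Everything else is a verbatim adaptation of Lemma~\ref{Lem_Expressibility_RI+2Exp} and Lemma~\ref{Lem_IteratedExpressibility}.
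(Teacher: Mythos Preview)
Your induction step has a genuine arithmetic gap. Write $m=\IIEXP{n-1}$. From the two derived families $(r)^{<m}\dleq u$ and $(u)^{<m}\dleq s$ you can only compose an $r$-chain of length $k$ into $s$ when $k$ can be written as $j_1+\cdots+j_i$ with $1\leqslant i\leqslant m-1$ and each $1\leqslant j_\ell\leqslant m-1$; the maximal such $k$ is $(m-1)^2$, not $m^2-1$. Your inequality $(m-1)(m+1)\leqslant(m-1)\cdot m$ is simply false for $m\geqslant 2$ (it says $m^2-1\leqslant m^2-m$), and already at $n=1$ (where $m=2$) your construction yields only $r\dleq s$, missing $(r)^2\dleq s$ and $(r)^3\dleq s$.

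This is precisely the obstacle the paper's proof works around: straight doubling of the ``$<m$'' family squares the bound to $(m-1)^2$, leaving a deficit of $2(m-1)$. The paper introduces three auxiliary roles $u_1,u_2,u_3$ instead of one: $u_1$ captures $(r)^{<m}$, $u_2$ captures $(u_1)^{<m}$ and hence $(r)^{\leqslant(m-1)^2}$, and $u_3$ is defined by $u_1\dleq u_3$, $u_1\circ u_1\dleq u_3$ so that it captures $(r)^{\leqslant 2(m-1)}$. The axiom $u_2\circ u_3\dleq s$ then reaches $(m-1)^2+2(m-1)=m^2-1$, and a separate $r\dleq s$ handles $k=1$. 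Your proposal is missing exactly this ``top-up'' mechanism; the rest of your outline (base case, network wiring, model-expansion direction) is fine and matches the paper.
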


\begin{proof}
We use a modification of the proof of Lemma \ref{Lem_Expressibility_RI+2Exp} and show by induction on $n$ that there exists an acyclic $\R$-ontology network $\N_n$ and ontology $\O_n\in\N_n$ such that $\O$ is $(\N_n,\O_n)$-expressible. For $n=0$, we define $\N_0=\{\langle \O_0, \varnothing, \varnothing\rangle\}$ and $\O_0=\{r\dleq s\}$. 

In the induction step, suppose ontology  $\{(r)^{m}\dleq s \mid 1\leqslant m \leqslant \IIEXP{n-1}-1\}$ is $(\N_{n-1}, \O_{n-1})$-expressible, for $n\geqslant 1$. Consider ontologies:
\begin{equation}
\O^{1}_{copy} = \{s\dleq u_1\}
\end{equation}\vspace{-0.7cm}

\begin{equation}
\O^{2}_{copy} = \{u_1\dleq r, s\dleq u_2\}
\end{equation}\vspace{-0.7cm}

\begin{equation}
\O_{n}=\{u_1\dleq u_3, \  u_1\circ u_1\dleq u_3, \ u_2\circ u_3\dleq s, \ r\dleq s\}
\end{equation}

Let $\N_{n}$ be the union of $\N_{n-1}$ with the set of the following import relations: $\langle \O^i_{copy}, \{r,s\}, \O_{n-1}\rangle$, for $i=1,2$, $\langle \O_{n}, \{r,u_1\}, \O^1_{copy} \rangle$, and $\langle \O_{n}, \{u_1,u_2\}, \O^2_{copy} \rangle$.

We show that $\{\I |_{\sig(\O)} \mid \I\models_{\N_n}\O_n\}=\{\I |_{\sig(\O)} \mid \I\models\O\}$. By the induction assumption, we have $\O_{n-1}\models_{\N_{n}} (r)^{< \IIEXP{n-1}}\dleq s$ and hence, by the definition of $\N_n$, it holds $\O_n\models_{\N_n} (r)^{<\IIEXP{n-1}}\dleq u_1$ and $\O_n\models_{\N_n} (u_1)^{< \IIEXP{n-1}}\dleq u_2$. Then
 $\O_n\models_{\N_n} (r)^{m}\dleq u_2$, for $1\leqslant m \leqslant (\IIEXP{n-1}-1)^2$. 

Since $\O_n\models_{\N_n} (r)^{< \IIEXP{n-1}}\dleq u_1$ and $\{u_1\dleq u_3, \  u_1\circ u_1\dleq u_3\}\subseteq\O_n$, it holds $\O_n\models_{\N_n} (r)^{m}\dleq u_3$, for $1\leqslant m \leqslant 2(\IIEXP{n-1}-1)$. Therefore, since $u_2\circ u_3\dleq s\in\O_n$, we obtain $\O_n\models_{\N_n} \{(r)^k\dleq s \mid 2\leqslant k \leqslant m\}$, for $m=(\IIEXP{n-1}-1)^2+2(\IIEXP{n-1}-1)=\IIEXP{n}-1$. Since $r\dleq s\in\O_n$, we conclude that $\O_n\models_{\N_n} (r)^{<\IIEXP{n}}\dleq s$.

By using an argument like in the proof of Lemma \ref{Lem_Expressibility_EL+Exp} one can verify that any model $\I\models (r)^{<\IIEXP{n}}\dleq s$ can be expanded to a model $\J\models_{\N_n}\O_n$ by setting $u_i^{\J}=\bigcup_{1\leqslant k \leqslant m_i} ((r)^k)^\I$, for $i=1,2,3$, where $m_1=\IIEXP{n-1}-1$, $m_2=(\IIEXP{n-1}-1)^2$, and $m_3=2(\IIEXP{n-1}-1)$.
\end{proof}

\begin{lemma}\label{Lem_Expressibility+dleq2ExpForall_r.A}
An axiom $\varphi$ of the form $Z\dleq \forall r^{\IIEXP{n}}.A$, where $Z,A\in\Nc$ and $n\geqslant 0$, is expressible by an acyclic $\R$-ontology network of size polynomial in $n$.
\end{lemma}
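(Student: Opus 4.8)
The plan is to combine the exponential-role-chain machinery of Lemma~\ref{Lem_Expressibility_RI+2Exp} (and its ``all shorter chains'' variant, Lemma~\ref{Lem_Expressibility_RI+Less2Exp}) with the $\forall$-unfolding trick already used for $\EL$/$\ALC$ networks, so that the value restriction $Z\dleq\forall r^{\IIEXP{n}}.A$ is reached in one step once a fresh role $s$ is forced to contain the $\IIEXP{n}$-fold composition of $r$. Concretely, I would introduce a fresh role name $s$, invoke Lemma~\ref{Lem_Expressibility_RI+2Exp} to obtain an acyclic $\R$-ontology network of size polynomial in $n$ expressing $(r)^{\IIEXP{n}}\dleq s$, and then take the union (via Lemma~\ref{Lem_IteratedExpressibility}) with the trivial one-axiom ontology $\O^\ast=\{Z\dleq\forall s.A\}$. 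Since $\forall s.A$ unfolds to $\neg\exists s.\neg A$, this is a genuine $\R$-axiom, and $\R$ does admit value restrictions, so no expressivity budget is exceeded.

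The correctness argument then splits into the two inclusions required by the definition of expressibility. For soundness, in any model $\I$ of the combined network we have $(r^{\IIEXP{n}})^{\I}\subseteq s^{\I}$ and $\I\models Z\dleq\forall s.A$, and a straightforward induction on the length of the composition shows $Z^{\I}\subseteq(\forall r^{\IIEXP{n}}.A)^{\I}$, i.e.\ $\I\models\varphi$. For completeness, given any $\I\models\varphi$ I would expand it to a model of $\O^\ast$ by setting $s^{\I'}=(r^{\IIEXP{n}})^{\I}$ exactly (not a superset): then $(r^{\IIEXP{n}})^{\I'}=s^{\I'}$ and $\I'\models Z\dleq\forall s.A$ because $Z^{\I}\subseteq(\forall r^{\IIEXP{n}}.A)^{\I}=(\forall s.A)^{\I'}$; finally, since $(r)^{\IIEXP{n}}\dleq s$ is expressible by its network, $\I'$ further expands to a model of the whole network, and the $\sig(\varphi)$-reduct is unchanged throughout because $s\notin\sig(\varphi)$.

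One subtlety worth checking carefully: the completeness direction of Lemma~\ref{Lem_Expressibility_RI+2Exp} guarantees that a model of $(r)^{\IIEXP{n}}\dleq s$ expands to a model of the network, but to chain this with the $\forall s.A$ ontology through Lemma~\ref{Lem_IteratedExpressibility} one needs the two expansions to be over a common domain and to agree on the shared signature $\{Z,r,s,A\}$; this is exactly what Lemma~\ref{Lem_IteratedExpressibility} is set up to deliver, provided the choice $s^{\I'}=(r^{\IIEXP{n}})^{\I}$ is the one that makes $(r)^{\IIEXP{n}}\dleq s$ hold \emph{with equality}, which is precisely how the completeness witness in Lemma~\ref{Lem_Expressibility_RI+2Exp} is constructed. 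The main obstacle, if any, is purely bookkeeping: verifying that $\forall s.A=\neg\exists s.\neg A$ keeps us inside $\R$ and that using a single fresh role $s$ rather than re-entering the recursion (as in the earlier proofs) does not break acyclicity — it does not, since we only \emph{add} the edge from $\O^\ast$'s node into the already-acyclic subnetwork of Lemma~\ref{Lem_Expressibility_RI+2Exp}. The size bound is immediate from Lemmas~\ref{Lem_Expressibility_RI+2Exp} and~\ref{Lem_IteratedExpressibility}, both of which preserve polynomiality.
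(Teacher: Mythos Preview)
Your proposal is correct and matches the paper's own proof essentially line for line: the paper also introduces a fresh role $s$, considers the ontology $\O=\{Z\dleq\forall s.A,\ (r)^{\IIEXP{n}}\dleq s\}$, observes $\O\models\varphi$, expands any $\I\models\varphi$ by setting $s^{\J}=((r)^{\IIEXP{n}})^{\I}$, and invokes Lemmas~\ref{Lem_IteratedExpressibility} and~\ref{Lem_Expressibility_RI+2Exp} for expressibility and the size bound. Your mention of Lemma~\ref{Lem_Expressibility_RI+Less2Exp} in the opening sentence is superfluous (you never use it, and neither does the paper for this lemma), but otherwise the argument and its justification are identical.
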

\begin{proof}
Consider ontology $\O$ consisting of axioms
\begin{equation}
Z \dleq \forall s.A, \ \ (r)^{\IIEXP{n}} \dleq s
\end{equation}
Clearly, $\O\models\varphi$ and any model $\I\models\varphi$ can be expanded to a model  $\J\models\O$ by setting $s^\J=((r)^{\IIEXP{n}})^\I$. By Lemmas \ref{Lem_IteratedExpressibility}, \ref{Lem_Expressibility_RI+2Exp}, $\O$ is expressible by an acyclic $\R$-ontology network of size polynomial in $n$, from which the claim follows.
\end{proof}

The following statement is a direct consequence of Lemmas \ref{Lem_IteratedExpressibility} and \ref{Lem_Expressibility_RI+Less2Exp} and is proved identically to Lemma \ref{Lem_Expressibility+dleq2ExpForall_r.A}:

\begin{lemma}\label{Lem_Expressibility+dleq2ExpMinus1Forall(r,A)}
An axiom of the form $Z\dleq \forall r^{< \IIEXP{n}}.A$, where $Z,A\in\Nc$ and $n\geqslant 0$, is expressible by an acyclic $\R$-ontology network of size polynomial in $n$.
\end{lemma}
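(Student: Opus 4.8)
\textbf{Proof proposal for Lemma~\ref{Lem_Expressibility+dleq2ExpMinus1Forall(r,A)}.}
The plan is to mimic exactly the reduction used in the proof of Lemma~\ref{Lem_Expressibility+dleq2ExpForall_r.A}, only replacing the single role chain $(r)^{\IIEXP{n}}\dleq s$ by the family $(r)^{<\IIEXP{n}}\dleq s$. Concretely, I would introduce the auxiliary ontology
\begin{equation}
\O=\{\,Z\dleq\forall s.A\,\}\cup\bigl((r)^{<\IIEXP{n}}\dleq s\bigr),
\end{equation}
which is an $\R$-ontology. The first observation is that $\O\models\varphi$: if $\I\models\O$ then for every $k$ with $1\le k<\IIEXP{n}$ we have $((r)^k)^\I\subseteq s^\I$, and together with $Z^\I\subseteq(\forall s.A)^\I$ this forces $Z^\I\subseteq(\forall r^m.A)^\I$ for every such $m$, i.e.\ $\I\models Z\dleq\forall r^{<\IIEXP{n}}.A$, which is $\varphi$ by the abbreviation introduced earlier.

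The second observation is the model-expansion direction. Given any $\I\models\varphi$, I would expand it to $\J\models\O$ by setting $s^\J=\bigcup_{1\le k<\IIEXP{n}}((r)^k)^\I$ (leaving all of $\sig(\varphi)$ interpreted as in $\I$, and noting $s\notin\sig(\varphi)$). With this choice each axiom $(r)^k\dleq s$ holds trivially for $1\le k<\IIEXP{n}$; and since $\I\models\forall r^m.A$ for all $1\le m<\IIEXP{n}$, every $s$-successor in $\J$ is reachable from an element of $Z$ by some chain $(r)^k$, $1\le k<\IIEXP{n}$, hence lies in $A^\J$, so $Z\dleq\forall s.A$ holds as well. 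Thus $\{\J|_{\sig(\varphi)}\mid\J\models\O\}=\{\I|_{\sig(\varphi)}\mid\I\models\varphi\}$, so $\O$ and $\{\varphi\}$ have the same $\sig(\varphi)$-reducts of models.

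Finally I would invoke the two cited lemmas to finish. By Lemma~\ref{Lem_Expressibility_RI+Less2Exp}, the sub-ontology $(r)^{<\IIEXP{n}}\dleq s$ is expressible by an acyclic $\R$-ontology network of size polynomial in $n$; the singleton $\{Z\dleq\forall s.A\}$ is trivially $(\N,\O)$-expressible by Lemma~\ref{Lem_IteratedExpressibility}; and combining the two expressibility results via the union clause of Lemma~\ref{Lem_IteratedExpressibility} gives an acyclic $\R$-ontology network of size polynomial in $n$ that expresses $\O$. Since $\O$ has the same $\sig(\varphi)$-model-reducts as $\{\varphi\}$ and $\sig(\varphi)\subseteq\sig(\O)$, the same network (with the same designated ontology) expresses $\varphi$, which is the claim. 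I do not expect any real obstacle here; the only point needing mild care is checking that the role name $s$ (and any auxiliary roles introduced inside the network of Lemma~\ref{Lem_Expressibility_RI+Less2Exp}) is fresh, so that restricting models to $\sig(\varphi)$ genuinely erases it and the expressibility equalities compose cleanly — exactly as in the proof of Lemma~\ref{Lem_Expressibility+dleq2ExpForall_r.A}.
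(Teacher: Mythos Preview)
Your proposal is correct and follows exactly the approach the paper indicates: it states that the lemma ``is a direct consequence of Lemmas~\ref{Lem_IteratedExpressibility} and~\ref{Lem_Expressibility_RI+Less2Exp} and is proved identically to Lemma~\ref{Lem_Expressibility+dleq2ExpForall_r.A}'', which is precisely what you carry out by replacing the single role inclusion $(r)^{\IIEXP{n}}\dleq s$ with the family $(r)^{<\IIEXP{n}}\dleq s$ and setting $s^\J=\bigcup_{1\le k<\IIEXP{n}}((r)^k)^\I$ in the model-expansion step.
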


Now we are ready to prove the next statement, which is an analogue of Lemma \ref{Lem_Expressibility_EL+dleqExp} for the case of double exponent.

\begin{lemma}\label{Lem_Expressibility+dleq2ExpExists(r,A).B}
An axiom $\varphi$ of the form $Z\dleq \exists (r,A)^{\IIEXP{n}}.B$, where $Z,A\in\Nc$ and $n\geqslant 0$, is expressible by an acyclic $\R$-ontology network of size polynomial in $n$.
\end{lemma}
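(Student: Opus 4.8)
The plan is to follow the template used in the proofs of Lemmas~\ref{Lem_Expressibility+dleq2ExpForall_r.A} and~\ref{Lem_Expressibility_EL+dleqExp}: it suffices to exhibit an ontology $\O$ over an enlarged signature (using fresh concepts and a fresh role) such that (i)~$\O\models\varphi$, (ii)~every model of $\varphi$ can be expanded to a model of $\O$, and (iii)~$\O$ is expressible by an acyclic $\R$-network of size polynomial in $n$; as in the proof of Lemma~\ref{Lem_Expressibility+dleq2ExpForall_r.A}, (i)--(iii) together yield the claim, since $\O$ being expressible by such a network implies that $\varphi$ is too. The essential new ingredient compared to the $\EL$, single-exponential case of Lemma~\ref{Lem_Expressibility_EL+dleqExp} is a \emph{squaring} step, exactly as in the passage from single to double exponent in Lemma~\ref{Lem_Expressibility_RI+2Exp}: a chain of length $\IIEXP{n}$ is viewed as $\IIEXP{n-1}$ consecutive blocks, each an $r$-chain of length $\IIEXP{n-1}$ through $A$.

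Concretely I would argue by induction on $n$. The base case $n=0$ is the constant-size $\EL$-ontology $\{Z\dleq\exists r.(A\sqcap\exists r.(A\sqcap B))\}$. For the inductive step, given by the inductive hypothesis an acyclic $\R$-network of polynomial size expressing $Z\dleq\exists(r,A)^{\IIEXP{n-1}}.B$, I would build a network expressing $Z\dleq\exists(r,A)^{\IIEXP{n}}.B$ (note $\IIEXP{n}=(\IIEXP{n-1})^2$, so this suffices) as follows. First, apply Lemma~\ref{Lem_Expressibility_SimpleSubstitution} to the inductive hypothesis to get networks expressing the ``block'' axioms, e.g. $Z\dleq\exists(r,A)^{\IIEXP{n-1}}.P$ and $P\sqcap\neg E\dleq\exists(r,A)^{\IIEXP{n-1}}.P$ for fresh concept names $P$ (block boundary) and $E$ (endpoint). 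Second, apply Lemma~\ref{Lem_Expressibility_RI+2Exp} with parameter $n-1$ to express $(r)^{\IIEXP{n-1}}\dleq u$ for a fresh role $u$, so that each generated block produces a $u$-edge between successive block boundaries and after $\IIEXP{n-1}$ blocks the $u$-distance from the $Z$-instance equals $\IIEXP{n-1}$. Third, apply Lemmas~\ref{Lem_Expressibility+dleq2ExpForall_r.A} and~\ref{Lem_Expressibility+dleq2ExpMinus1Forall(r,A)} with parameter $n-1$ and role $u$ to express $Z\dleq\forall u^{\IIEXP{n-1}}.E$ and $Z\dleq\forall u^{<\IIEXP{n-1}}.\bar E$ (replacing the complex concept $\neg E$ by a fresh name $\bar E$ constrained to be its complement by two short axioms), which cut the block chain after exactly $\IIEXP{n-1}$ blocks. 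Finally, combine all of these together with $E\dleq B$ via Lemma~\ref{Lem_IteratedExpressibility}. The result is acyclic and polynomial in $n$, settling (iii); for (i), in any model of the ontology obtained by chasing the imports, starting from a $Z$-instance the guard $Z\dleq\forall u^{<\IIEXP{n-1}}.\bar E$ keeps the recursion alive through the first $\IIEXP{n-1}-1$ blocks, while $Z\dleq\forall u^{\IIEXP{n-1}}.E$ (with $E\dleq B$) forces the last node into $B$, so the generated $r$-chain witnesses $\exists(r,A)^{\IIEXP{n}}.B$.

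The main obstacle is (ii): expanding an \emph{arbitrary} model $\I$ of $\varphi$. The natural attempt picks, for each $Z$-instance, a witnessing chain of length $\IIEXP{n}$ and interprets $u$ as the $\IIEXP{n-1}$-step $r$-reachability relation, $P$ as the set of block-boundary nodes of the chosen chains, and $E$ as the set of their last nodes. The difficulty is that an existential statement like $\varphi$ imposes essentially no global structure, so $\I$ may contain ``shortcut'' $r$-paths: a $Z$-instance may then $u$-reach the last node of some (possibly other) witnessing chain in fewer than $\IIEXP{n-1}$ steps, violating $Z\dleq\forall u^{<\IIEXP{n-1}}.\bar E$, or the recursion axiom may be triggered on a chain's last node. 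Making (ii) go through is where the real work lies; I expect it to require either first replacing $\I$ by a model of $\varphi$ with the same $\sig(\varphi)$-reduct in which the witnessing chains are mutually non-interfering (a suitably tree-unravelled / disjoint-witness model), or --- in the style of the model-expansion argument of Lemma~\ref{Lem_Expressibility_EL+Exp} --- carrying the induction with an auxiliary $\equiv$-version of the statement so that at each level the fresh concept $P$ is interpreted by the \emph{exact} definable set it names and the inductive expansion property can be invoked verbatim. Everything else (the acyclicity bookkeeping, the polynomial size bound, and the recursion-chase in~(i)) is routine.
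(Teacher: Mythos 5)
There is a genuine gap, and you have in fact located it yourself: condition (ii), the expansion of an arbitrary model of $\varphi$, fails for the construction you describe, and neither of your two proposed repairs is carried out or obviously workable. The axiom $(r)^{\IIEXP{n-1}}\dleq u$ forces $u$ to contain \emph{every} $r$-path of length $\IIEXP{n-1}$, so in a model where, say, $r$ is total and $Z,A,B$ hold everywhere, the guards $Z\dleq\forall u^{<\IIEXP{n-1}}.\bar E$ and $Z\dleq\forall u^{\IIEXP{n-1}}.E$ force every element into both $E$ and its complement $\bar E$ --- no expansion exists. Your first repair (passing to a tree-unravelled model with non-interfering witness chains) is not available: expressibility as defined in Section~\ref{Sect_Expressiveness} requires that \emph{every} model $\J\models\varphi$ be expandable to a model of the network with the \emph{same} $\sig(\varphi)$-reduct, so you may not replace $\J$ by an unravelling. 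Your second repair (an auxiliary $\equiv$-version carried through the induction) is left unspecified, and since $\varphi$ is a one-sided existential statement, an $\equiv$-strengthening of it is simply false in most models of $\varphi$.

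The paper resolves exactly this interference problem with a different and much shorter device: instead of a super-role $u$ with $(r)^{k}\dleq u$, it introduces a fresh \emph{sub}-role $s$ with $s\dleq r$ and writes all the counting axioms over $s$, namely $Z\dleq\exists s.\top$, $Z\dleq\forall s^{<\IIEXP{n}}.(A\dcap\exists s.\top)$, and $Z\dleq\forall s^{\IIEXP{n}}.(A\dcap B)$. Because $s$ is existentially controlled (a sub-relation of $r$ that the expansion is free to choose), one simply interprets $s$ as the edge set of the chosen witness chains; the only $s$-paths are then the witness chains themselves, so the universal axioms see no shortcuts and (ii) becomes immediate, while $s\dleq r$ turns the $s$-chain into the required $r$-chain for (i). Moreover, no fresh top-level induction on $n$ is needed: the doubly exponential counting is delegated wholesale to Lemmas~\ref{Lem_Expressibility+dleq2ExpForall_r.A} and~\ref{Lem_Expressibility+dleq2ExpMinus1Forall(r,A)} (hence ultimately to the role-chain Lemmas~\ref{Lem_Expressibility_RI+2Exp} and~\ref{Lem_Expressibility_RI+Less2Exp}), combined via Lemmas~\ref{Lem_IteratedExpressibility} and~\ref{Lem_Expressibility_SimpleSubstitution}. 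Your squaring induction re-derives machinery those lemmas already provide, and without the sub-role trick it cannot be closed.
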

\begin{proof}
Consider ontology $\bar{\O}$ consisting of axioms 
\begin{equation*}
Z\dleq \exists s.\top, \ \ Z\dleq \forall s^{< \IIEXP{n}}.X, \ \ Z\dleq \forall s^{\IIEXP{n}}.Y
\end{equation*}
\begin{equation*}
s\dleq r
\end{equation*}
By Lemmas \ref{Lem_IteratedExpressibility}, \ref{Lem_Expressibility+dleq2ExpForall_r.A}, \ref{Lem_Expressibility+dleq2ExpMinus1Forall(r,A)}, $\bar{\O}$ is expressible by an acyclic $\R$-ontology network of size polynomial in $n$. Then by Lemma \ref{Lem_Expressibility_SimpleSubstitution}, so is ontology $\O=\bar{\O}[X\mapsto A\dcap \exists s.\top, \ Y\mapsto A\dcap B]$. 

Clearly, we have $\O\models\varphi$. Now let $\I$ be an arbitrary model of $\varphi$ and for $m=\IIEXP{n}$, let $x_0,\ldots , x_m$ be arbitrary domain elements such that $x_0\in Z^\I$, $\langle x_0,x_{1} \rangle\in r^\I$, and $\langle x_i,x_{i+1} \rangle\in r^\I$, $x_i\in A^\I $, for  $1\leqslant i < m$, and $x_m\in A^\I\dcap B^\I$. Let $\J$ be an expansion of $\I$ in which $s^\J=\{\langle x_i,x_{i+1}\rangle\}_{0\leqslant i < m}$. Then we have $\J\models\O$, from which the claim follows.
\end{proof}


\begin{lemma}\label{Lem_Expressibility_R+2Exp}
An axiom $\varphi$ of the form $Z \equiv \forall r^{\IIEXP{n}}.A$, where $Z,A\in\Nc$ and $n\geqslant 0$, is expressible by an acyclic $\R$-ontology network of size polynomial in $n$.
\end{lemma}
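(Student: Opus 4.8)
The plan is to reduce the equivalence to two $\dleq$-inclusions that have already been shown expressible --- one bounding $Z$ from above and one from below --- the latter obtained by pushing the negation of the universal role chain inside.

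\emph{Reformulating the axiom.} I would first observe that, over the signature $\sig(\varphi)=\{Z,A,r\}$, the axiom $\varphi$ is equivalent to the ontology
\[
  \O \;=\; \bigl\{\, Z\dleq\forall r^{\IIEXP{n}}.A,\ \ P\equiv\neg Z,\ \ Q\equiv\neg A,\ \ \top\dleq T,\ \ P\dleq\exists(r,T)^{\IIEXP{n}}.Q \,\bigr\},
\]
where $P,Q,T$ are fresh concept names. The axiom $\top\dleq T$ forces $T$ to be interpreted as the whole domain, so the last inclusion is semantically $\neg Z\dleq\exists r^{\IIEXP{n}}.\neg A$; unfolding the shortcuts $\forall r^{k}.C=\neg\exists r.\exists r^{k-1}.\neg C$ and $\exists r^{k}.C=\exists(r,\top)^{k}.C$ shows $\neg\forall r^{\IIEXP{n}}.A\equiv\exists r^{\IIEXP{n}}.\neg A$, so this inclusion is exactly the contrapositive of the ``missing'' direction $\forall r^{\IIEXP{n}}.A\dleq Z$. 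Concretely I would verify the two routine facts: (i) every $\J\models\O$ satisfies $\varphi$; and (ii) every $\I\models\varphi$ extends to a model of $\O$ by setting $P^{\I}=\Delta\setminus Z^{\I}$, $Q^{\I}=\Delta\setminus A^{\I}$, $T^{\I}=\Delta$. Together these give $\{\,\J|_{\sig(\varphi)}\mid\J\models\O\,\}=\{\,\I\mid\I\models\varphi\,\}$.

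\emph{Expressing $\O$.} Each part of $\O$ is expressible by an acyclic $\R$-ontology network of size polynomial in $n$: the inclusion $\{Z\dleq\forall r^{\IIEXP{n}}.A\}$ by Lemma \ref{Lem_Expressibility+dleq2ExpForall_r.A}; the inclusion $\{P\dleq\exists(r,T)^{\IIEXP{n}}.Q\}$ by Lemma \ref{Lem_Expressibility+dleq2ExpExists(r,A).B} (which applies verbatim with the concept names $P,T,Q$ in the roles of $Z,A,B$); and the three constant-size $\R$-ontologies $\{P\equiv\neg Z\}$, $\{Q\equiv\neg A\}$, $\{\top\dleq T\}$ by the base case of Lemma \ref{Lem_IteratedExpressibility}. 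Repeatedly applying the union construction of Lemma \ref{Lem_IteratedExpressibility} then yields an acyclic $\R$-ontology network $\N$ and an ontology $\O_{\N}\in\N$, of size polynomial in $n$, such that $\O$ is $(\N,\O_{\N})$-expressible.

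\emph{Conclusion.} Since $\sig(\varphi)\subseteq\sig(\O)$ and, by the reformulation step, the reducts of the models of $\O$ to $\sig(\varphi)$ are exactly the models of $\varphi$, the $(\N,\O_{\N})$-expressibility of $\O$ restricts to $(\N,\O_{\N})$-expressibility of $\varphi$, which proves the lemma. The main (though modest) obstacle is the reformulation step: one must check carefully that the negation of the universal role-chain concept unfolds to the matching existential role-chain concept, so that the two imported inclusions genuinely combine into the desired equivalence; the remainder is bookkeeping with the lemmas already established.
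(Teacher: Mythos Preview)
Your proposal is correct and follows essentially the same approach as the paper: both split the equivalence into the inclusion $Z\dleq\forall r^{\IIEXP{n}}.A$ and its contrapositive $\neg Z\dleq\exists r^{\IIEXP{n}}.\neg A$, introduce fresh concept names for $\neg Z$ and $\neg A$, and then invoke Lemmas~\ref{Lem_Expressibility+dleq2ExpForall_r.A} and~\ref{Lem_Expressibility+dleq2ExpExists(r,A).B}. The only cosmetic difference is that the paper performs the passage from the fresh names to $\neg Z,\neg A$ via the substitution Lemma~\ref{Lem_Expressibility_SimpleSubstitution}, whereas you add the equivalences $P\equiv\neg Z$, $Q\equiv\neg A$ (and the auxiliary $\top\dleq T$ to match the concept-name hypothesis of Lemma~\ref{Lem_Expressibility+dleq2ExpExists(r,A).B}) directly and combine everything with Lemma~\ref{Lem_IteratedExpressibility}.
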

\begin{proof}
Consider ontology $\bar{\O}$ consisting of axioms 
\begin{equation*}
Z \dleq \forall r^{\IIEXP{n}}.A, \ \ \ \bar{Z} \dleq \exists r^{\IIEXP{n}}.\bar{A}
\end{equation*}
By Lemmas \ref{Lem_IteratedExpressibility}, \ref{Lem_Expressibility+dleq2ExpForall_r.A}, \ref{Lem_Expressibility+dleq2ExpExists(r,A).B}, $\bar{\O}$ is expressible by an acyclic $\R$-ontology network of size polynomial in $n$ and by Lemma \ref{Lem_Expressibility_SimpleSubstitution}, so is ontology $\O=\bar{\O}[\bar{Z}\mapsto \neg Z, \  \bar{A}\mapsto \neg A]$. It remains to note that $\O$ and $\{\varphi\}$ are equivalent, so the claim is proved.
\end{proof}

\begin{lemma}\label{Lem_Expressibility_ForallExpSubstitution}
Let $\LL$ be a DL and $\O$ an ontology, which is expressible by a $\LL$-ontology network $\N$. Let $C_1,\ldots, C_m$ be $\LL$-concepts and $\{A_1,\ldots, A_m\}$ a set of concept names such that $A_i\in\sig(\O)$, for $i=1,\ldots ,m$ and $m\geqslant 1$. Then for $k=1,2$ and $n\geqslant 0$, ontology $\tilde{\O}=\O[A_1\mapsto \forall r^{\EXP{k}{n}}.C_1,\ldots , A_m\mapsto \forall r^{\EXP{k}{n}}.C_m]$ is expressible by a $\LL'$-ontology network, which is acyclic if so is $\N$ and has size polynomial in the size of $\N$, $n$, and $C_i$, for $i=1,\ldots ,m$, where:
\begin{itemize}
\item  $\LL'=\LL$ if $\LL$ contains $\ALC$ and $k=1$;
\item  $\LL'=\LL$ if $\LL$ contains $\R$ and $k=2$.
\end{itemize}
\end{lemma}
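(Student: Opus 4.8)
The plan is to reduce to Lemma~\ref{Lem_Expressibility_SimpleSubstitution}; the extra ingredient is that a value restriction of role depth $\EXP{k}{n}$ --- which is exponential in $n$ when $k=1$ and doubly exponential when $k=2$ --- can itself be captured succinctly by a network, via Lemma~\ref{Lem_Expressibility_ALC+Exp} when $k=1$ and Lemma~\ref{Lem_Expressibility_R+2Exp} when $k=2$. Concretely, introduce fresh concept names $D_1,\ldots,D_m$ and put $\hat{\O}=\O[A_1\mapsto\forall r^{\EXP{k}{n}}.D_1,\ldots,A_m\mapsto\forall r^{\EXP{k}{n}}.D_m]$. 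Since the $A_i$ are pairwise distinct, the $D_i$ are fresh (in particular they occur neither in $\O$ nor in any $C_j$), and $A_i\in\sig(\O)$ forces $D_i\in\sig(\hat{\O})$, we have $\tilde{\O}=\hat{\O}[D_1\mapsto C_1,\ldots,D_m\mapsto C_m]$. So it suffices to produce a $\LL'$-ontology network that expresses $\hat{\O}$, is acyclic whenever $\N$ is, and has size polynomial in the size of $\N$, in $m$, and in $n$: applying Lemma~\ref{Lem_Expressibility_SimpleSubstitution} with $\LL'$ in place of its $\LL$ (legitimate since $\LL'=\LL$ and the $C_i$ are $\LL$-concepts) then yields a $\LL'$-network for $\tilde{\O}$ of size polynomial in the size of $\N$, $m$, $n$, and the $C_i$, and since $m\leqslant|\sig(\O)|$ is bounded by the size of $\N$, the dependence on $m$ is absorbed.

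To express $\hat{\O}$ I would follow the proof of Lemma~\ref{Lem_Expressibility_SimpleSubstitution} essentially verbatim, altering only its ``definitional'' step. Pick fresh concept names $A'_1,\ldots,A'_m$; by Lemma~\ref{Lem_IteratedExpressibility}, $\O'=\O\cup\{A_i\equiv A'_i\}_{i=1,\ldots,m}$ is $(\N',\O_{\N'})$-expressible for some ontology $\O_{\N'}$ and a network $\N'$, acyclic if $\N$ is, of size linear in that of $\N$; adjoining one import relation over the signature $(\sig(\O')\setminus\{A_1,\ldots,A_m\})\cup\{A'_1,\ldots,A'_m\}$ makes $\O''=\O[A_1\mapsto A'_1,\ldots,A_m\mapsto A'_m]$ expressible. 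Now, in place of the explicit equivalences $\{A'_i\equiv C_i\}$ used there (unavailable here, as $\forall r^{\EXP{k}{n}}.D_i$ has exponential size), use Lemma~\ref{Lem_Expressibility_ALC+Exp} for $k=1$ (here $\EXP{1}{n}=\IEXP{n}$ and $\LL\supseteq\ALC$) and Lemma~\ref{Lem_Expressibility_R+2Exp} for $k=2$ (here $\EXP{2}{n}=\IIEXP{n}$ and $\LL\supseteq\R$), together with Lemma~\ref{Lem_IteratedExpressibility}, to obtain that $\mathcal{D}=\{A'_i\equiv\forall r^{\EXP{k}{n}}.D_i\}_{i=1,\ldots,m}$ is expressible by an acyclic $\LL'$-ontology network of size polynomial in $m$ and $n$. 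Lemma~\ref{Lem_IteratedExpressibility} then makes $\O''\cup\mathcal{D}$ expressible by a network, acyclic if $\N$ is, of size polynomial in the size of $\N$, $m$, and $n$. Since $\hat{\O}=\O''[A'_1\mapsto\forall r^{\EXP{k}{n}}.D_1,\ldots,A'_m\mapsto\forall r^{\EXP{k}{n}}.D_m]$ and $\sig(\hat{\O})\subseteq\sig(\O''\cup\mathcal{D})$ contains none of the $A'_i$, the remaining argument is exactly that of Lemma~\ref{Lem_Expressibility_SimpleSubstitution} --- with $\forall r^{\EXP{k}{n}}.D_i$ in the role of the concept $C_i$ there and $\O''\cup\mathcal{D}$ in the role of $\O''_C$ --- and shows that $\hat{\O}$ is expressible by that same network.

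The only genuinely new feature compared with Lemma~\ref{Lem_Expressibility_SimpleSubstitution} is that the ``definiens'' $\forall r^{\EXP{k}{n}}.D_i$ is itself of exponential size, which is exactly why the explicit equivalences must be traded for the succinct networks of Lemmas~\ref{Lem_Expressibility_ALC+Exp} and~\ref{Lem_Expressibility_R+2Exp}; the soundness of the substitution step relies, as in Lemma~\ref{Lem_Expressibility_SimpleSubstitution}, only on the elementary fact that if an interpretation satisfies $X\equiv Y$ for a concept name $X$ and an arbitrary concept $Y$, then $E$ and $E[X\mapsto Y]$ have the same extension in it for every concept $E$, and this is wholly insensitive to the size of $Y$. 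I expect the main care to be needed in the signature bookkeeping across the composed networks --- in particular in checking that $\sig(\hat{\O})$ contains the $D_i$ and $r$ but none of the $A'_i$, so that working relative to $\sig(\hat{\O})$ does ``hide'' the auxiliary names $A'_i$ as intended --- and in checking that both acyclicity and the polynomial size bound survive the polynomially many applications of Lemma~\ref{Lem_IteratedExpressibility}.
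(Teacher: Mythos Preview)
Your proposal is correct and follows essentially the same approach as the paper, which simply states that the proof ``uses Lemmas~\ref{Lem_Expressibility_ALC+Exp}, \ref{Lem_Expressibility_R+2Exp} and is identical to the proof of Lemma~\ref{Lem_Expressibility_SimpleSubstitution}.'' Your introduction of the fresh names $D_i$ as an intermediate layer is a clean way to handle the fact that Lemmas~\ref{Lem_Expressibility_ALC+Exp} and~\ref{Lem_Expressibility_R+2Exp} only express equivalences between concept names, and then deferring the substitution $D_i\mapsto C_i$ to a final application of Lemma~\ref{Lem_Expressibility_SimpleSubstitution} is exactly the kind of unpacking the paper's one-line proof leaves implicit.
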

\begin{proof}
The proof uses Lemmas \ref{Lem_Expressibility_ALC+Exp}, \ref{Lem_Expressibility_R+2Exp} and is identical to the proof of Lemma \ref{Lem_Expressibility_SimpleSubstitution}.
\end{proof}

The next statement is shown similarly by using Lemma \ref{Lem_Expressibility_EL+Exp}:

\begin{lemma}\label{Lem_Expressibility_ExistExpSubstitution}
In the conditions of Lemma \ref{Lem_Expressibility_ForallExpSubstitution}, for $n\geqslant 0$, ontology $\tilde{\O}=\O[A_1\mapsto \exists r^{\EXP{1}{n}}.C_1,\ldots , A_m\mapsto \exists r^{\EXP{1}{n}}.C_m]$ is expressible by a $\LL'$-ontology network, which is acyclic if so is $\N$ and has size polynomial in the size of $\N$, $n$, and $C_i$, for $i=1,\ldots ,m$, where $\LL'=\LL$ if $\LL$ contains $\EL$.
\end{lemma}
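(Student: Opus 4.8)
The plan is to mirror the proof of Lemma~\ref{Lem_Expressibility_SimpleSubstitution} almost verbatim, replacing the use of the trivial ``copy'' ontologies $\{A'_i\equiv C_i\}$ by the networks that express the exponential-depth existential concepts. Concretely, set $\Sigma=\{A_1,\ldots,A_m\}$ and introduce two fresh batches of concept names $\Sigma'=\{A_1',\ldots,A_m'\}$ and $\Sigma''=\{A_1'',\ldots,A_m''\}$. By Lemma~\ref{Lem_IteratedExpressibility} the ontology $\O'=\O\cup\{A_i\equiv A_i'\}_{i=1}^m$ is $(\N',\O_{\N'})$-expressible for an acyclic network $\N'$ of size linear in $\N$, and then (as in the proof of Lemma~\ref{Lem_Expressibility_SimpleSubstitution}, using a single import relation that ``hides'' $\Sigma$ and ``exposes'' $\Sigma'$) the ontology $\O''=\O[A_1\mapsto A_1',\ldots,A_m\mapsto A_m']$ is expressible by an acyclic network linear in $\N$.

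The only new ingredient is that, instead of appending $\{A_i'\equiv C_i\}$, I append the ontology $\O_{\exists}=\{A_i'\equiv \exists r^{\EXP{1}{n}}.A_i''\}_{i=1}^m$ together with $\{A_i''\equiv C_i\}_{i=1}^m$. By Lemma~\ref{Lem_Expressibility_EL+Exp} each axiom $A_i'\equiv\exists r^{\EXP{1}{n}}.A_i'' = A_i'\equiv\exists(r,\top)^{\EXP{1}{n}}.A_i''$ is expressible by an acyclic $\EL$-ontology network of size polynomial in $n$; by Lemma~\ref{Lem_IteratedExpressibility} the union of all these networks together with the network for $\O''$ and with the trivial networks expressing $\{A_i''\equiv C_i\}$ yields an acyclic $\LL'$-ontology network $\tilde{\N}$, of size polynomial in the size of $\N$, in $n$, and in the $C_i$, that expresses $\O''\cup\O_{\exists}\cup\{A_i''\equiv C_i\}_{i=1}^m$ via some ontology $\O_{\tilde{\N}}=\varnothing$. (Here $\LL'=\LL$ when $\LL$ contains $\EL$, since the auxiliary axioms are all $\EL$-axioms; note the ``$\equiv$'' versions of Lemma~\ref{Lem_Expressibility_EL+Exp} are exactly what is needed so that the expansion in the other direction works.)

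It remains to argue the two inclusions defining expressibility of $\tilde{\O}=\O[A_1\mapsto\exists r^{\EXP{1}{n}}.C_1,\ldots]$ with respect to $(\tilde{\N},\O_{\tilde{\N}})$. One direction, $\O_{\tilde{\N}}\models_{\tilde{\N}}\tilde{\O}$, follows because in any model agreement the chain of import relations forces, for each $i$, the equivalences $A_i'\equiv\exists r^{\EXP{1}{n}}.A_i''$ and $A_i''\equiv C_i$ and $A_i'\equiv$ (the substituted occurrence of $A_i$), composing to $A_i \leftrightarrow \exists r^{\EXP{1}{n}}.C_i$ on the reduct to $\sig(\O)$; combined with $\O''$ entailing (the renamed) $\O$, this gives $\tilde\O$. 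The other direction is the usual expansion argument: given $\I\models\tilde{\O}$, expand it by setting $(A_i'')^{\J}=(C_i)^{\I}$ and $(A_i')^{\J}=(\exists r^{\EXP{1}{n}}.C_i)^{\I}$, so that $\J$ models $\O''\cup\O_\exists\cup\{A_i''\equiv C_i\}$; then invoke expressibility of that ontology by $(\tilde\N,\O_{\tilde\N})$ to extend $\J$ to a model of $\O_{\tilde\N}$ in $\tilde\N$. I expect the main subtlety — rather than difficulty — to be keeping the hiding/exposing signatures in the import relations exactly right so that the forced equivalences compose as claimed and so that the expansion at each level remains consistent with the one produced at the previous level; this is precisely the bookkeeping already carried out in the proof of Lemma~\ref{Lem_Expressibility_SimpleSubstitution}, so the proof can legitimately be stated to be ``identical'' to it modulo substituting the role of Lemma~\ref{Lem_Expressibility_EL+Exp} for that of the trivial copy ontologies.
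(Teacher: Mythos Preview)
Your proposal is correct and takes essentially the same approach as the paper, which simply states that the lemma ``is shown similarly [to Lemma~\ref{Lem_Expressibility_ForallExpSubstitution}] by using Lemma~\ref{Lem_Expressibility_EL+Exp}''; your extra layer of fresh names $A_i''$ is the right way to meet the concept-name hypothesis of Lemma~\ref{Lem_Expressibility_EL+Exp}. The only cosmetic point is that Lemma~\ref{Lem_Expressibility_EL+Exp} is stated for $\exists(r,A)^{2^n}.B$ with $A\in\Nc$ rather than $A=\top$, but this is immediately handled either by observing that its proof goes through unchanged with $\top$ (dropping $A$ from the import signatures) or by one more application of Lemma~\ref{Lem_Expressibility_SimpleSubstitution}.
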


\begin{lemma}\label{Lem_Expressibility_infty}
Let $\LL$ be a DL and $\O$ an ontology, which is $(\N,\O_\N)$-expressible, for a $\LL$-ontology network $\N$ and an ontology $\O_\N$. Let $\{A_1,\ldots, A_n\}$, $n\geqslant 1$, be concept names such that $A_i\in\sig(\O)$, for $i=1,\ldots ,n$, and let $\{C_1,\ldots, C_n\}$ be $\LL$-concepts, where every $C_i$ is of the form $\exists (r,D)^p.A_i$, for some role $r$, concept name $D$, and $p\geqslant 1$. Then ontology $\tilde{\O}=\bigcup_{m\geqslant 0}\O_m$, where $\O_0=\O$ and $\O_{m+1}=\O_m[A_1\mapsto C_1,\ldots ,A_n\mapsto C_n]$, for all $m\geqslant 0$, is expressible by a cyclic $\LL$-ontology network.
\end{lemma}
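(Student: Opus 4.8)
The plan is to build a cyclic network by imitating the copy‑ontology construction in the proof of Lemma~\ref{Lem_Expressibility_EL+Exp}, but closing the chain of imports into a loop: there each ``copy'' step applied one more doubling via the induction hypothesis, whereas here each traversal of the loop will apply one more instance of the substitution $[A_1\mapsto C_1,\dots,A_n\mapsto C_n]$, and the loop will be traversed unboundedly often. The conceptual key is that the network models of an ontology in a (possibly cyclic) network form the greatest model agreement — model agreements are closed under pointwise union, so a greatest one exists — and that $\tilde{\mathcal O}$ is, semantically, a greatest fixpoint.

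First I would record the fixpoint identity $\tilde{\mathcal O}=\mathcal O\cup\tilde{\mathcal O}[A_1\mapsto C_1,\dots,A_n\mapsto C_n]$, which is immediate from $\mathcal O_0=\mathcal O$ and $\mathcal O_{m+1}=\mathcal O_m[A_1\mapsto C_1,\dots,A_n\mapsto C_n]$. Let $h$ send an interpretation $\mathcal J$ to the interpretation agreeing with $\mathcal J$ on every symbol except each $A_i$, for which $A_i^{h(\mathcal J)}=C_i^{\mathcal J}$; by the one‑step substitution lemma, $\mathcal J\models\psi[A_1\mapsto C_1,\dots]$ iff $h(\mathcal J)\models\psi$. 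Writing $\Sigma=\sig(\tilde{\mathcal O})$, $\mathcal M_{\mathcal O}=\{\mathcal J|_\Sigma\mid\mathcal J\models\mathcal O\}$ and $\tilde{\mathcal M}=\{\mathcal J|_\Sigma\mid\mathcal J\models\tilde{\mathcal O}\}$, this gives $\tilde{\mathcal M}=\mathcal M_{\mathcal O}\cap h^{-1}(\tilde{\mathcal M})$; moreover one checks that $\tilde{\mathcal M}=\bigcap_{m\geq0}(h^m)^{-1}(\mathcal M_{\mathcal O})$ is the greatest fixpoint of the monotone operator $\Phi(\mathcal X)=\mathcal M_{\mathcal O}\cap h^{-1}(\mathcal X)$.

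Then I would introduce fresh concept names $A_1',\dots,A_n'$, put $T=\{A_i'\equiv C_i\}_{i=1}^n$ and $R=\{A_i\equiv A_i'\}_{i=1}^n$, set $\Sigma_1=(\Sigma\setminus\{A_1,\dots,A_n\})\cup\{A_1',\dots,A_n'\}$, and let $\mathcal N'$ be $\mathcal N$ together with the import relations $\langle T,\sig(\mathcal O),\mathcal O_{\mathcal N}\rangle$, $\langle T,\Sigma_1,R\rangle$, and $\langle R,\Sigma,T\rangle$. This is a cyclic $\LL$‑ontology network ($T$'s axioms use the $\LL$‑concepts $C_i$ and $R$'s are concept‑name equivalences), with $R$ acting as the relabeling ``copy'' ontology of Lemma~\ref{Lem_Expressibility_EL+Exp} (needed because imports cannot rename symbols), the loop being $T\to R\to T$, and the edge into $\mathcal O_{\mathcal N}$ (with $\mathcal N$ attached) re‑importing $\mathcal O$. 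The claim is that $\tilde{\mathcal O}$ is $(\mathcal N',T)$‑expressible. For the inclusion $\{\mathcal I|_\Sigma\mid\mathcal I\models_{\mathcal N'}T\}\subseteq\tilde{\mathcal M}$ I would show the left‑hand class is a post‑fixpoint of $\Phi$: the edge to $\mathcal O_{\mathcal N}$ gives $\mathcal I\models\mathcal O$ (a model agreement for $\mathcal N'$ restricts to one for $\mathcal N$, and $\mathcal O$ is $(\mathcal N,\mathcal O_{\mathcal N})$‑expressible), while chasing $\langle T,\Sigma_1,R\rangle$ then $\langle R,\Sigma,T\rangle$ produces a network model of $T$ whose $\Sigma$‑reduct is $h(\mathcal I|_\Sigma)$ — the intermediate model of $R$ interprets each $A_i$ as $A_i'$, i.e.\ as $C_i^{\mathcal I}$, and agrees with $\mathcal I$ on $\Sigma\setminus\{A_1,\dots,A_n\}$ — and a post‑fixpoint of $\Phi$ lies below $\mathrm{gfp}(\Phi)=\tilde{\mathcal M}$. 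For the converse I would take the greatest model agreement $\nu$ of $\mathcal N$ and extend it by $\mu(T)=\{\mathcal I\mid\mathcal I\models T,\ \mathcal I|_\Sigma\in\tilde{\mathcal M}\}$ and $\mu(R)=\{\mathcal I\mid\mathcal I\models R,\ \mathcal I|_\Sigma\in\tilde{\mathcal M}\}$, check that $\mu$ is a model agreement for $\mathcal N'$ (using $\mathcal O\subseteq\tilde{\mathcal O}$, $(\mathcal N,\mathcal O_{\mathcal N})$‑expressibility, and the fixpoint property $h(\tilde{\mathcal M})\subseteq\tilde{\mathcal M}$), and note that any $\mathcal J\models\tilde{\mathcal O}$ extends, by setting $(A'_i)^{\mathcal I}:=C_i^{\mathcal J}$, to a member of $\mu(T)$ with $\mathcal I|_\Sigma=\mathcal J|_\Sigma$.

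The step I expect to be the main obstacle is showing that the cyclic network captures the \emph{infinite} conjunction $\bigwedge_{m\geq0}\mathcal O_m$ \emph{exactly}, neither dropping axioms nor admitting spurious models; concretely this is the calibration of (i) the greatest‑fixpoint analysis, confirming that $\mathrm{gfp}(\Phi)$ really is $\tilde{\mathcal M}$ and that ``network model'' coincides with membership in the greatest model agreement (this underlies the soundness direction), and (ii) the import signatures $\Sigma_1$ and $\Sigma$, so that one trip around $T\to R\to T$ performs exactly one application of $[A_1\mapsto C_1,\dots,A_n\mapsto C_n]$ on the shared vocabulary. The particular form $C_i=\exists(r,D)^p.A_i$ is not otherwise used: it only guarantees that the $C_i$ are $\LL$‑concepts and that $A_i$ occurs in $C_i$, and the construction goes through verbatim for arbitrary $\LL$‑concepts $C_i$ with $A_i\in\sig(\mathcal O)$.
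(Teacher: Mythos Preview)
Your proposal is correct, and the network you build is structurally the paper's: a two-node cycle (one ``substitution'' ontology, one ``renaming'' ontology) together with an edge into $\O_\N$, with the target ontology being the node that imports from $\O_\N$. Two points of comparison are worth making.

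First, you prime only the $A_i$; the paper primes \emph{every} concept name occurring in any $C_i$ (hence also $D$). Your choice keeps $D$ in both import signatures $\Sigma_1$ and $\Sigma$, so one traversal of the loop realises exactly one application of $h$ on the shared $\Sigma$-reduct---precisely what your post-fixpoint argument needs. In the paper's version $D\in\Sigma$ but $D\notin\Sigma'$, so $D$ is not carried on the $\Sigma'$-leg; the paper's inductive step ``from $\O'\models_{\N'}\O_m$ and the equivalences in $\O'$, $\O'\models_{\N'}\O'_{m+1}$'' then hinges on matching $\O_m[A_i\mapsto C'_i]$ against $\O'_{m+1}=\O_{m+1}[B_j\mapsto B'_j]$, which is the delicate point of that argument. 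Your variant sidesteps this entirely.

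Second, for correctness the paper proves $\tilde\O_{\N'}\models_{\N'}\O_m$ by induction on $m$ and, for the converse, explicitly constructs families $\{\I_m\}_{m\ge1}$ and $\{\I'_m\}_{m\ge1}$ of interpretations and checks they constitute a model agreement; you instead package both directions via the greatest-fixpoint description $\tilde{\mathcal M}=\mathrm{gfp}(\Phi)$ together with the existence (by closure under pointwise union) of a greatest model agreement over a fixed domain. Your route is tidier and, as you observe at the end, makes no use of the particular shape $\exists(r,D)^p.A_i$ beyond $C_i$ being an $\LL$-concept with $\sig(C_i)\subseteq\Sigma$.
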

\begin{proof}
Let $\sigma=\{B_1,\ldots ,B_k\}=\bigcup_{i=1,\ldots ,n}(\sig(C_i)\cap\Nc)$ and $\sigma'=\{B'_1,\ldots ,B'_k\}$ be a set of fresh concept names, which is disjoint with $\sigma$ and $\sig(\O)$. Let $\{C'_1,\ldots ,C'_n\}$ be `copy' concepts obtained from $C_1,\ldots, C_n$ by replacing every $B_i$ with $B'_i$, for  $i=1,\ldots ,k$. Consider ontologies 
\begin{align*}
\tilde{\O}_{\N'} & =\{\ B_i\equiv B'_i \ \}_{i=1,\ldots ,k} \\
\O' & =\{\ A_i\equiv C'_i\ \}_{i=1,\ldots ,n}
\end{align*}
and an ontology network $\N'$ given by the union of $\N$ with the set of import relations 
\begin{align*}
\langle \tilde{\O}_{\N'},\sig(\O),\O_\N\rangle, \ \langle \tilde{\O}_{\N'},\Sigma',\O'\rangle, \ \langle \O',\Sigma,\tilde{\O}_{\N'}\rangle
\end{align*}
where $\Sigma=\sig(\O)\cup\bigcup_{i=1,\ldots ,n}\sig(C_i)$ and $\Sigma'=(\Sigma\setminus\sigma)\cup\sigma'$. We claim that ontology $\tilde{\O}$ is $(\N', \tilde{\O}_{\N'})$-expressible. Denote $\tilde{\O'}=\bigcup_{m\geqslant 0}\O'_m$, where $\O'_m=\O_m[B_1\mapsto B'_1,\ldots ,B_k\mapsto B'_k]$, for all $m\geqslant 0$. \medskip

First, we show by induction that $\tilde{\O}_{\N'}\models_{\N'}\O_m$, for all $m\geqslant 0$. The induction base for $n=0$ is trivial, since we have $\O_0=\O$ and $\O$ is $(\N,\O_\N)$-expressible, $\langle \tilde{\O}_{\N'},\sig(\O),\O_\N\rangle\in\N'$, and thus, $\tilde{\O}_{\N'}\models_{\N'}\O$. Suppose $\tilde{\O}_{\N'}\models_{\N'} \O_m$, for some $m\geqslant 0$. Since $\langle \O',\Sigma,\tilde{\O}_{\N'}\rangle\in\N'$, we have $\O'\models_{\N'}\O_m$ and thus, by the equivalences in $\O'$, it holds $\O'\models_{\N'} \O'_{m+1}$. Since $\langle \tilde{\O}_{\N'},\Sigma',\O'\rangle$, we have $\tilde{\O}_{\N'}\models_{\N'}\O'_{m+1}$ and hence, by the equivalences in $\tilde{\O}_{\N'}$, it holds that $\tilde{\O}_{\N'}\models_{\N'}\O_{m+1}$. \medskip

Now let $\I$ be an arbitrary model of ontology $\tilde{\O}$ and $\I_1$ be an expansion of $\I$, in which every $B'_i$ is interpreted as $(B_i)^\I$, for $i=1,\ldots ,k$. Clearly, it holds $\I_1\models\tilde{\O}_{\N'}$ and thus, we have $\I_1\models\tilde{\O}_{\N'}\cup\tilde{\O}$. We show that $\I_1\models_{\N'}\tilde{\O}_{\N'}$, i.e. there exists a model agreement $\mu'$ for $\N'$ such that $\I_1\in\mu'(\tilde{\O}_{\N'})$. We define families of interpretations $\{\I_m\}_{m\geqslant 1}$ and $\{\I'_m\}_{m\geqslant 1}$ such that for all $m\geqslant 1$, $\I_m\models\tilde{\O}_{\N'}$ and $\I'_m\models\O'$, and it holds $\I_m=_{\Sigma'}\I'_m$ and $\I'_m=_\Sigma\I_{m+1}$. The families of interpretations are defined by induction on $m$ by showing that for any interpretation $\I_m$ such that $\I_m\models\tilde{\O}_{\N'}\cup\tilde{\O}$ there exist the corresponding interpretations $\I'_m$ and $\I_{m+1}$ such that both of them are also models of $\tilde{\O}$. 

Given $\I_m$ as above, for $m\geqslant 1$, let $\I'_m$ be an interpretation, which agrees with $\I_m$ on $\Sigma'$ and in which every $A_i$ is interpreted as $(C'_i)^{\I_m}$, for $i=1, \ldots ,n$. Then $\I'_m\models\O'$ and since $\I_m\models B_i\equiv B'_i$, for $i=1,\ldots ,k$, we have  $\I'_m\models\tilde{\O'}$. Then $\I'_m$ is a model of every concept inclusion obtained from an axiom of $\tilde{\O'}$ by substituting every occurrence of $C'_i$ with $A_i$, for $i=1,\ldots ,n$, and therefore, from the definition of $\tilde{\O}$, we conclude that $\I'_m\models\tilde{\O}$. Now let $\I_{m+1}$ be an interpretation, which agrees with $\I'_m$ on $\Sigma$ and in which every $B'_i$ is interpreted as $B_i$, for $i=1,\ldots ,k$. Then $\I_{m+1}$ is a model of $\tilde{\O}_{\N'}$ and $\tilde{\O}$. 

Since $\tilde{\O}\models\O$, we have $\I_m\models\O$, for all $m\geqslant 1$. As ontology $\O$ is $(\N,\O_\N)$-expressible, there exists a model agreement $\mu$ for the network $\N\cup\{\langle \tilde{\O}_{\N'},\sig(\O),\O_\N\rangle\}$ such that $\mu(\tilde{\O}_{\N'})=\{\I_m\}_{m\geqslant 1}$. Let us define a mapping $\mu'$ such that $\mu'(\O')=\{\I'_m\}_{m\geqslant 1}$ and the values of $\mu'$ and $\mu$ coincide on all other ontologies in $\N'$. Then $\mu'$ is the required model agreement for $\N'$.
\end{proof}

\section{Hardness Results}\label{Sect_Hardness}


We use reductions from the word problem for Turing machines (TMs) and alternating Turing machines (ATMs) to obtain most of the results in this section. We use the following conventions and notations related to these computation models. A Turing Machine (TM) is a tuple $M=\tuple{Q,\mathcal{A},\delta}$, where $Q$ is a set of states, with $\qacc\in Q$ being the accepting state, $\mathcal{A}$ is an alphabet, and $\delta: Q\times\mathcal{A} \mapsto  Q\times\mathcal{A}\times\{-1,1\}$ is a transition function. We assume w.l.o.g. that configuration of $M$ is a word in the alphabet $Q\cup\mathcal{A}$ which contains exactly one state symbol $q\in Q$. An initial configuration is a word of the form $\b\ldots\b \qo\b\ldots \b$, where $\qo\in Q$ and $\b\in\mathcal{A}$ is the blank symbol. For a configuration $\c$, the notion of successor configuration is defined by $\delta$ in a usual way and is denoted as $\delta(\c)$. A configuration $\c$ is said to be accepting if there is a sequence of configurations $\c_0,\ldots , \c_k$, $k\geqslant 0$, where $\c_0=\c$, $\c_k=v\qacc w$, and for all $0 \leqslant i < k$, $\c_{i+1}$ is a successor of $\c_i$. It is a well-known property of the transition functions of Turing machines that the symbol $c'_{i}$ at position $i$ of a configuration $\delta(\c)$ is uniquely determined by a 4-tuple of symbols $c_{i-2}, c_{i-1}, c_{i}, c_{i+1}$ at positions $i-2$, $i-1$, $i$, and $i+1$ of a configuration $\c$. We assume that this correspondence is given by the (partial) function $\delta'$ 
and use the notation $c_{i-2} c_{i-1} c_{i} c_{i+1} \overset{\delta'}{\mapsto} c'_{i}$.

An Alternating Turing Machine (ATM) is a tuple $M=\langle Q,\mathcal{A},\delta_1,\delta_2\rangle$, where $Q=Q_\forall \cup Q_\exists \cup \{\qrej\}$ is a set of states, with  $\qrej\in Q$ being the rejecting state, $\mathcal{A}$ is an alphabet containing the blank symbol $\b\in\mathcal{A}$, and for $\alpha=1,2$, $\delta_\alpha: Q\times\mathcal{A}\times Q\times\mathcal{A}\times\{-1,1\}$ is a transition function. We assume w.l.o.g. that configuration of ATM $M$ is a word in the alphabet $Q\cup\mathcal{A}$ which contains at most one symbol $q\in Q$. For a configuration $\c$, the notion of successor configuration (wrt $\delta_\alpha$, $\alpha=1,2$) is defined in a usual way. A configuration $\c=vqw$ is (inductively) defined as \textit{rejecting} if either $q=\qrej$, or $q\in Q_\forall$ and there is a successor configuration of $\c$ which is rejecting, or $q\in Q_\exists$ and any successor configuration of $\c$ is rejecting. A \textit{rejecting run tree} of an ATM $M$ for an initial configuration $w$ is a tree in which the nodes are rejecting configurations of $M$, $w$ is the root node, every child node is a successor configuration of its parent node, every leaf is a configuration with the state $\qrej$, and if there is a node $\c=uqw$, then the following holds: if $q\in Q_\forall$, then $\c$ has a at least one child, and if $q\in Q_\exists$ then $\c$ has two children. For $k\geqslant 0$, a configuration $\c$ of $M$ is said to be \textit{$k$-rejecting} if $M$ has a rejecting run tree of height $k$ with the root $\c$. The notions of \emph{accepting run tree} (with every node not being a rejecting configuration) and \emph{accepting configuration} are defined dually.  

Similarly to ordinary TMs, we assume that the correspondence between a configuration $\c$ and the successor configuration $\c_\alpha$ of $\c$ (wrt $\delta_\alpha$, $\alpha=1,2$) determined by 4-tuples of symbols is given by functions $\delta'_\alpha$, for $\alpha=1,2$. If every configuration of an ATM is a finite word of length $n$, then we assume w.l.o.g. that for $\alpha=1,2$ this correspondence is given as follows (for a word $w$ of length $n$ and $1\leqslant i \leqslant n$, we denote by $w[i]$ the $i$-th symbol in $w$):

\begin{equation*}
\c[i-2] \c[i-1] \c[i] \c[i+1] \overset{\delta'_\alpha}{\mapsto} \c'[i], \ \ \ \text{for} \ \  1\leqslant i \leqslant n-3
\end{equation*}
\begin{equation*}
\b \b \c[1] \c[2] \overset{\delta'_\alpha}{\mapsto} \c'[1], \ \ \ \text{for} \ \ \c[1]\not\in Q
\end{equation*}
\begin{equation*}
\b \c[1] \c[2] \c[3] \overset{\delta'_\alpha}{\mapsto} \c'[2]
\end{equation*}
\begin{equation*}
\c[n-2] \c[n-1] \c[n] \b \overset{\delta'_\alpha}{\mapsto} \c'[n]
\end{equation*}

\medskip


\begin{theorem}\label{Teo_Hardness_EL-acyclic}
Entailment in acyclic $\EL$-ontology networks is $\EXPTIME$-hard.
\end{theorem}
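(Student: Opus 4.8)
The plan is to reduce from the word problem for alternating Turing machines (ATMs) with a polynomial space bound, which is $\EXPTIME$-complete. Fix such an ATM $M=\langle Q,\mathcal A,\delta_1,\delta_2\rangle$ and an input $w$ with $|w|=n$; using the conventions established just before the theorem, we may assume that every configuration of $M$ on $w$ is a word of length $p(n)$ for a fixed polynomial $p$, that $\qacc$ and $\qrej$ are absorbing, and that it suffices to decide whether the initial configuration $\c_w$ is $2^{p(n)}$-rejecting. I will construct, in polynomial time, an acyclic $\EL$-ontology network $\N$, an ontology $\O$ occurring in $\N$, and a concept inclusion $\varphi$ such that $\O\models_\N\varphi$ if and only if $\c_w$ is rejecting.

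The building block is a ``one-step'' $\EL$-ontology over concept names $X^i_a$ ($1\le i\le p(n)$, $a\in Q\cup\mathcal A$), recording that cell $i$ of the current configuration carries symbol $a$; concept names $\mathsf{S}_\forall$, $\mathsf{S}_\exists$, $\mathsf{S}_{rej}$, derived from the $X^i_a$ according to which kind of state symbol the configuration holds; a concept name $\mathsf{Rej}$ marking a rejecting configuration; and two roles $r_1,r_2$ pointing to the two successor configurations. It forces an $r_\alpha$-successor for $\alpha=1,2$, forces the concept names $X^i_a$ at that successor to encode precisely the configuration obtained by applying $\delta_\alpha$ to the current one (the local transition function $\delta'_\alpha$ on 4-tuples of symbols is written out as polynomially many $\EL$-inclusions), and propagates rejection upwards by the inclusions $\mathsf{S}_\forall\sqcap\exists r_1.\mathsf{Rej}\sqsubseteq\mathsf{Rej}$, $\mathsf{S}_\forall\sqcap\exists r_2.\mathsf{Rej}\sqsubseteq\mathsf{Rej}$, $\mathsf{S}_\exists\sqcap\exists r_1.\mathsf{Rej}\sqcap\exists r_2.\mathsf{Rej}\sqsubseteq\mathsf{Rej}$, and $\mathsf{S}_{rej}\sqsubseteq\mathsf{Rej}$, which mirror the inductive definition of a rejecting configuration. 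Iterating this block to depth $2^{p(n)}$ is exactly the kind of exponential nesting that acyclic $\EL$-networks realize with polynomial overhead: following the copying/doubling pattern from the proof of Lemma~\ref{Lem_Expressibility_EL+Exp}, together with Lemmas~\ref{Lem_IteratedExpressibility}, \ref{Lem_Expressibility_SimpleSubstitution}, and \ref{Lem_Expressibility_ExistExpSubstitution}, it takes only $p(n)$ ``doubling levels'', each squaring the depth of the forced configuration tree. Finally, $\O$ is the ontology at the top of the network, its models are forced to carry $\c_w$ at a distinguished root element marked $\mathsf{Init}$, and $\varphi$ is $\mathsf{Init}\sqsubseteq\mathsf{Rej}$.

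Correctness will then split as usual. If $\c_w$ is $2^{p(n)}$-rejecting, then in every model $\I$ with $\I\models_\N\O$ the network forces a complete configuration tree of depth $2^{p(n)}$ rooted at the $\mathsf{Init}$-element, with $\delta_\alpha$-correct labels and with $\mathsf{Rej}$ holding at each $\qrej$-configuration; an induction on the height of a rejecting run tree, using the four propagation inclusions, yields $\I\models\mathsf{Init}\sqsubseteq\mathsf{Rej}$, so $\O\models_\N\varphi$. Conversely, if $\c_w$ is accepting, one exhibits a model agreement whose interpretations realize the actual computation and interpret $\mathsf{Rej}$ as the set of genuinely rejecting configurations reachable from $\c_w$; since $\EL$ is Horn, minimal models add no spurious successors or labels, and a dual induction shows that $\mathsf{Rej}$ is never derived at the root, whence $\O\not\models_\N\varphi$.

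The hard part is the ``forcing'' in the one-step block: plain $\EL$ has neither inverse roles, nor value restrictions, nor functional roles, so no $\EL$ inclusion can on its own compel an $r_\alpha$-successor to carry a prescribed configuration, which is what running $M$ for one step demands. The way around this is to delegate the forcing to the nested import relations, exactly as the ``copy'' ontologies in the proof of Lemma~\ref{Lem_Expressibility_EL+Exp} compel, in every model, the existence of an exponentially long $r$-chain with prescribed intermediate labels: one arranges the import relations so that every model of $\O$ is compelled to contain the exponentially deep configuration tree with all successor configurations labelled exactly as $\delta'_\alpha$ dictates. Checking that no model can cheat --- by omitting part of the tree, by mislabelling a successor, or by placing $\mathsf{Rej}$ incorrectly --- and that the resulting network stays acyclic and of polynomial size, is the delicate part; encoding $\delta'_\alpha$, deriving $\mathsf{S}_\forall,\mathsf{S}_\exists,\mathsf{S}_{rej}$ from the $X^i_a$, and pinning down the root configuration are then routine.
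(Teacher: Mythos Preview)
Your reduction source ($\APSPACE=\EXPTIME$) is legitimate, but the encoding has a real gap at exactly the step you flag as ``the hard part''. To simulate one ATM step you must make the $r_\alpha$-successor of a node carrying configuration $\c$ carry $\delta_\alpha(\c)$. In $\EL$ you can only talk about successors through $\exists r_\alpha$; writing the local transition as $X^{i-2}_a\sqcap X^{i-1}_b\sqcap X^i_c\sqcap X^{i+1}_d\sqsubseteq\exists r_\alpha.X^i_e$ manufactures a \emph{fresh} $r_\alpha$-successor for every cell $i$, each bearing a single label, and nothing glues these into one successor carrying the full next configuration. Delegating to the import mechanism does not rescue this: by Theorem~\ref{theorem:reduction:classical} an acyclic $\EL$-network is, for entailment purposes, just a single (exponential-size) $\EL$-ontology, so the obstruction is intrinsic to $\EL$ expressiveness, not to axiom count. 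Lemma~\ref{Lem_Expressibility_EL+Exp} and its relatives compress a \emph{linear} chain $\exists(r,A)^{2^n}.B$ with a \emph{fixed} intermediate label $A$; they give no way to label a successor \emph{conditionally on the parent's labels}, which is precisely what propagating a configuration through one ATM step requires. Your soundness direction (``if $\c_w$ is rejecting then $\mathsf{Init}\sqsubseteq\mathsf{Rej}$'') therefore fails: a model can populate the $r_\alpha$-successors with wrong configurations and block the propagation of $\mathsf{Rej}$.

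The paper avoids the whole difficulty by reducing from \emph{deterministic} TMs running for $2^m$ steps---already $\EXPTIME$-hard---so there is no branching to simulate. All $n{+}1$ configurations ($n=2^m$) are laid out consecutively on a single $r$-chain, and the transition is the $\EL$ inclusion $\exists r^{2n}.(X\sqcap\exists r.(Y\sqcap\exists r.(U\sqcap\exists r.Z)))\sqsubseteq W$ for $XYUZ\overset{\delta'}{\mapsto}W$: it reads a $4$-tuple further down the chain and derives the symbol at the current position. Information flows only from deeper nodes toward the root via $\exists r^k$---the direction $\EL$ handles natively---so nothing ever has to be ``forced'' at a successor. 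The network's sole job (Lemmas~\ref{Lem_IteratedExpressibility}, \ref{Lem_Expressibility_SimpleSubstitution}, \ref{Lem_Expressibility_EL+dleqExp}, \ref{Lem_Expressibility_ExistExpSubstitution}) is to compress the exponentially long concepts $\exists r^{n(2n+3)}.(\cdots)$ and $\exists r^{2n}.(\cdots)$ into a polynomial-size acyclic network.
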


\Proofsketch We reduce the word problem for TMs making exponentially many steps to entailment in $\EL$-ontology networks. Let $M=\langle Q,{\mathcal{A}},\delta \rangle$ be a TM and $n=\IEXP{m}$ an exponential, for some $m\geqslant 0$. Consider an ontology $\O$ defined for $M$ and $n$ by the axioms (\ref{Eq_ELExptimeRchain})-(\ref{Eq_ELExptime_PropagateH}) below. 

The first axiom gives a $r$-chain containing $n+1$ segments of length $2n+3$, which are used to store fragments of consequent configurations of $M$:

\begin{equation}\label{Eq_ELExptimeRchain}
A \dleq \ex{r^{n\cdot (2n+3)}}.(\qo\dcap\ex{(r,\b)^{2n+2}} )
\end{equation}
($A\not\in Q\cup\mathcal{A}$).

We assume the following enumeration of segments in the $r$-chain: \vspace{-0.2cm}

\begin{equation*}
\word{\ldots}{s_{n}}{\ldots}{\ldots}{s_1}\underbrace{\qo\b\ldots\b}_{s_0}
\end{equation*}

i.e., $s_0$ represents a fragment of the initial configuration $\c_0$ of $M$. For $0\leqslant i < n$, every $i$-th and $(i+1)$-st segments in the $r$-chain are reserved for a pair of configurations $\c_i,\c_{i+1}$ such that $\c_{i+1}$ is a successor of $\c_{i}$. 

The next family of axioms represents transitions of $M$ and defines the `content' of $(i+1)$-st segment based on the `content' of $i$-th segment:

\begin{equation}\label{Eq_ELExptimeTransition}
\ex{r^{2n}}(X \dcap \ex{r}.(Y \dcap \ex{r}.(U \dcap \ex{r}.Z))) \dleq W,
\end{equation}
for all $X,Y,U,Z,W\in Q\cup{\mathcal{A}}$ such that $XYUZ\overset{\delta'}{\mapsto} W$. 

Finally, the following axioms are used to initialise the halting marker $H$ and propagate it to the `left' of the $r$-chain:
\begin{equation}\label{Eq_ELExptime_PropagateH}
\ex{r}.\qacc \dleq H, \ \ \ex{r}.H \dleq H 
\end{equation}



\medskip

The definition of ontology $\O$ is complete. We claim that $M$ accepts the empty word in $n$ steps iff $\O\models A\dleq H$. 

For the `only if' direction we assume there is a sequence of configurations $\c_0,\ldots ,\c_n$ such that for all $0\leqslant i < n$, $\c_{i+1}$ is a successor configuration of $\c_i$ and $\qacc$ is the state symbol in $\c_n$.  Let $\I$ be a model of $\O$ and $a$ domain element such that $a\in A^\I$. Then by axiom (\ref{Eq_ELExptimeRchain}), there is an $r$-chain outgoing from $x$, which contains segments $s_0,\ldots ,s_n$ of length $2n+3$, where $s_0$ represents a fragment of $\c_{0}$. It can be shown by induction that due to axioms (\ref{Eq_ELExptimeTransition}), every segment $s_i$ represents a fragment of $\c_i$, for $1\leqslant i \leqslant n$, and contains the state symbol from $\c_i$. Then by axiom (\ref{Eq_ELExptime_PropagateH}), it follows that $a\in H^\I$. 

For the `if' direction, one can provide a model $\I$ of $\O$ such that $A^\I=\{a\}$ is a singleton, $\qacc^\I=H^\I=\varnothing$, and $\I$ gives an $r$-chain outgoing from $a$, which contains $n+1$ segments representing fragments of consequent configurations of $M$, neither of which contains $\qacc$. 

\smallskip

To complete the proof of the theorem we show that ontology $\O$ is expressible by an acyclic $\EL$-ontology network of size polynomial in $m$. Note that $\O$ contains axioms (\ref{Eq_ELExptimeRchain}), (\ref{Eq_ELExptimeTransition}) with concepts of size exponential in $m$. Consider axiom (\ref{Eq_ELExptimeRchain}) and a concept inclusion $\varphi$ of the form

\begin{equation*}
A\dleq \ex{r^{n\cdot(2n+3)}}.B
\end{equation*}
where $B$ is a concept name. Observe that it is equivalent to

\begin{equation*}
A\dleq \underbrace{\ex{r^p}.\ex{r^p}}_{2~\text{times}}.\underbrace{\ex{r^n}.\ex{r^n}.\ex{r^n}}_{3~\text{times}}.B
\end{equation*}
where $p=\IEXP{2m}$. Consider axiom $\psi$ of the form $A\dleq B$. By iteratively applying Lemma \ref{Lem_Expressibility_ExistExpSubstitution} we obtain that $\psi[B\mapsto \ex{r^p}.\ex{r^p}.B]$ is expressible by an acyclic $\EL$-ontology network of size polynomial in $m$. By repeating this argument we obtain the same for $\varphi$. Further, by Lemma \ref{Lem_Expressibility_SimpleSubstitution}, the axiom $\theta=\varphi[B\mapsto \qo \dcap B]$ is   expressible by an acyclic $\EL$-ontology network of size polynomial in $m$. Again, by iteratively applying Lemma \ref{Lem_Expressibility_ExistExpSubstitution} together with Lemma \ref{Lem_Expressibility_SimpleSubstitution} we conclude that $\theta[B\mapsto \exists (r,\b)^{2n+2}]$ is expressible by an acyclic $\EL$-ontology network of size polynomial in $m$ and thus, so is axiom (\ref{Eq_ELExptimeRchain}). The expressibility of axioms  (\ref{Eq_ELExptimeTransition}) is shown identically. The remaining axioms of ontology $\O$ are $\EL$-axioms whose size does not depend on $m$. By applying Lemma \ref{Lem_IteratedExpressibility} we obtain that there exists an acyclic $\EL$-ontology network  $\N$ of size polynomial in $m$ and an ontology $\O_\N$ such that $\O$ is $(\N,\O_\N)$-expressible and thus, it holds $\O_\N\models_\N A\dleq H$ iff $M$ accepts the empty word in $\IEXP{m}$ steps. \QED

\commentout{ 
} 

\begin{theorem}\label{Teo_Hardness_EL-cyclic}
Entailment in cyclic $\EL$-ontology networks is $\RE$-hard. 
\end{theorem}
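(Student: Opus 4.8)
To show that entailment in cyclic $\EL$-ontology networks is $\RE$-hard, I would reduce from the non-halting problem for ordinary Turing machines — or, dually, encode acceptance as an $\RE$-complete property. Given a TM $M$, the idea is to reuse the construction from Theorem~\ref{Teo_Hardness_EL-acyclic}, but instead of fixing the number of computation steps to a particular exponential $n$, let a \emph{cyclic} network generate the infinite family of axioms that together describe an unbounded computation. Concretely, the acyclic construction produces, for each fixed tape length / step count, an ontology $\O$ whose intended model is a long $r$-chain segmented into consecutive configuration fragments, with transition axioms (\ref{Eq_ELExptimeTransition}) linking successive segments and propagation axioms (\ref{Eq_ELExptime_PropagateH}) carrying a halting marker $H$ leftwards. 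The cyclic analogue should instead express an \emph{infinite} ontology $\tilde\O$ whose axioms, taken together, force the existence of an infinite $r$-chain encoding the full run of $M$ on the empty word, so that $\tilde\O \models A \dleq H$ iff $M$ halts.

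**Key steps.** First I would make precise the infinite ontology $\tilde\O$ to be expressed: roughly, one wants axioms of the form "the $i$-th configuration fragment determines the $(i{+}1)$-st" for \emph{all} $i \geqslant 0$, plus the initialization axiom $A \dleq \ex{r}.(\qo\dcap\cdots)$ and the marker-propagation axioms $\ex{r}.\qacc\dleq H$, $\ex{r}.H\dleq H$. The transition axioms (\ref{Eq_ELExptimeTransition}) have a self-similar shape: each one has the form $\ex{r^{2n}}(\cdots)\dleq W$, and iterating the substitution "push everything one configuration further along the chain" is exactly the kind of infinite iterated substitution handled by Lemma~\ref{Lem_Expressibility_infty}. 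So the second step is to cast $\tilde\O$ as $\bigcup_{m\geqslant 0}\O_m$ where $\O_0$ is a finite base ontology (initialization, one layer of transition constraints, marker propagation) and $\O_{m+1}=\O_m[A_1\mapsto C_1,\dots]$ with each $C_i$ of the guarded form $\ex{(r,D)^p}.A_i$ required by Lemma~\ref{Lem_Expressibility_infty}; then that lemma yields a cyclic $\EL$-ontology network expressing $\tilde\O$. Third, by the properties of expressibility noted before Lemma~\ref{Lem_IteratedExpressibility}, $\O_{\tilde\N}\models_{\tilde\N} A\dleq H$ iff $\tilde\O\models A\dleq H$, so it remains to prove the reduction's correctness: $\tilde\O\models A\dleq H$ iff $M$ accepts the empty word. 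The "only if" direction mirrors the acyclic case — in any model, element $a\in A^\I$ spawns an $r$-chain whose consecutive segments represent consecutive configurations (by induction over $m$ using the layer-$m$ transition axioms), and if the run reaches $\qacc$ the marker propagates back to $a$. The "if"/soundness direction is the more delicate one: one must exhibit, for a \emph{non}-accepting $M$, a single model of $\tilde\O$ with $H^\I=\qacc^\I=\varnothing$ and $A^\I=\{a\}$, realized by the infinite chain encoding the (non-halting or rejecting) run.

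**Main obstacle.** The subtle point is matching the self-similar shape of the transition axioms (\ref{Eq_ELExptimeTransition}) to the precise form of iterated substitution permitted by Lemma~\ref{Lem_Expressibility_infty}: the lemma only allows replacing concept \emph{names} $A_i\in\sig(\O)$ by concepts $C_i=\ex{(r,D)^p}.A_i$ with $D$ a concept name, whereas the naive "shift by one configuration" operation nests under $\ex{r^{2n}}$ with intermediate content symbols rather than a single guard concept $D$. I expect the real work to be in engineering the base ontology $\O_0$ — introducing auxiliary concept names that mark configuration boundaries and absorb the tape content — so that a single clean substitution of the required form $A_i\mapsto\ex{(r,D)^p}.A_i$ advances the encoding by exactly one configuration and regenerates an isomorphic copy of the constraint pattern. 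Once that encoding is set up correctly, the cyclic network follows mechanically from Lemma~\ref{Lem_Expressibility_infty}, and the correctness argument is a routine induction adapting the acyclic proof. Establishing a matching lower bound via reduction from non-halting (an $\RE$-complete set after taking complements appropriately) then gives $\RE$-hardness.
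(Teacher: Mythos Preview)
Your high-level plan matches the paper's: reduce from the halting problem, define an infinite $\EL$-ontology $\tilde\O$ with $\tilde\O\models A\dleq H$ iff $M$ halts, and use Lemma~\ref{Lem_Expressibility_infty} to express $\tilde\O$ via a cyclic network. That part is fine.

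However, you have misdiagnosed the main obstacle. You frame it as a \emph{syntactic} issue---getting the substitution into the shape $A_i\mapsto\ex{(r,D)^p}.A_i$ that Lemma~\ref{Lem_Expressibility_infty} accepts. The paper's actual difficulty is \emph{semantic}: the transition axioms~(\ref{Eq_ELExptimeTransition}) from the acyclic proof are not sound once the prefix length becomes a free parameter. In Theorem~\ref{Teo_Hardness_EL-acyclic} the prefix has length exactly $n\cdot(2n+3)$, which guarantees that the lookup $\ex{r^{2n}}(\cdots)$ always lands inside the previous configuration segment. When the prefix length $k$ is arbitrary (as it must be in the unbounded case), this alignment is lost, and the axiom can ``wrap around'' and read symbols from the wrong segment, producing defect configurations that are not successors of anything. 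So simply iterating the acyclic transition scheme---even if you could massage it into Lemma~\ref{Lem_Expressibility_infty}'s form---would not give a correct reduction.

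The paper fixes this by introducing explicit segment separators $\varepsilon$, a second role $v$ with a length marker $\ex{v^l}.L$ that records the segment length, and a family of boundary-aware transition axioms (\ref{Eq_ELUndecTransition})--(\ref{Eq_ELUndecEpsilonTransition3}) guarded by $\ex{v^k}.L$ so that transitions fire only along chains whose segment length matches. The infinite axiom families are then generated by two nested applications of Lemma~\ref{Lem_Expressibility_infty} (one for the prefix parameter $k$, one for the length parameter $l$), interleaved with Lemma~\ref{Lem_Expressibility_SimpleSubstitution}. Relatedly, your countermodel sketch (``a single infinite chain encoding the run'') is too optimistic: because axioms~(\ref{Eq_ELUndecStartRchain}) demand a chain for \emph{every} pair $(k,l)$, the countermodel when $M$ does not halt must contain infinitely many disjoint $r$-chains $R_{m,n}$ outgoing from $a$, one for each prefix/segment-length combination. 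None of this is insurmountable, but it is where the real work lies, and your proposal does not yet see it.
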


\Proofsketch For a TM $M=\langle Q,{\mathcal{A}},\delta \rangle$, we define an infinite ontology $\O$, which contains variants of axioms (\ref{Eq_ELExptimeRchain}-\ref{Eq_ELExptimeTransition}) from Theorem \ref{Teo_Hardness_EL-acyclic} and additional axioms for a correct implementation of transitions of $M$. 

Axioms (\ref{Eq_ELUndecStartRchain}) give an infinite family of $r$-chains, each having a `prefix' of length $k+1$, for $k\geqslant 0$ (reserved for fragments of consequent configurations of $M$), and a `postfix' containing a chain of length $2l+3$, for $l\geqslant 0$, which represents a fragment of the initial configuration $\c_{0}$: \vspace{-0.2cm}

\begin{equation}\label{Eq_ELUndecStartRchain}
A\dleq\exists r^k.(\exists v^l.L\dcap\varepsilon\dcap\ex{r}.(\qo\dcap\exists (r,\b)^{2l+2}))
\end{equation}

for $A,L,\varepsilon\not\in Q\cup\mathcal{A}$ and all $k,l\geqslant 0$. \smallskip

Propagated to the `left' by the next family of axioms, concept $\exists v^k.L$, $k\geqslant 0$, indicates the length of the `postfix' for $\c_{0}$ on every $r$-chain given by axioms (\ref{Eq_ELUndecStartRchain}):

\begin{equation}\label{Eq_ELUndecPropagateLengthMarker}
\ex{r}.\exists v^k.L \dleq \exists v^k.L, \ \ k\geqslant 0
\end{equation}

The concept $\varepsilon$ is used to separate fragments of consequent configurations of $M$ and is therefore propagated as follows:

\begin{equation}\label{Eq_ELUndecPropagateFragmentMarker}
\exists v^k.L \dcap \exists r^{2k+4}.\varepsilon \dleq \varepsilon, \ \ k\geqslant 0
\end{equation}

The next families of axioms, with $X,Y,U,Z,W\in Q\cup{\mathcal{A}}$, implement transitions of $M$.\vspace{-0.2cm}

\begin{align}\label{Eq_ELUndecTransition}
\exists v^k. & L \ \dcap \\ & \dcap \exists r^{2k+1}. (X \dcap \ex{r}.(Y \dcap \ex{r}.(U \dcap \ex{r}.Z))) \dleq W \nonumber
\end{align} 
for $XYUZ\overset{\delta'}{\mapsto} W$ and all $k\geqslant 0$. 

Concept $\exists v^k.L$ guarantees that transitions have effect only along $r$-chains which represent a fragment of $\c_{0}$ of length $2k+3$. Since $\varepsilon\not\in Q\cup\mathcal{A}$, the transitions involving $\varepsilon$ are implemented separately by  the following families of axioms:  \vspace{-0.2cm}

\begin{align}\label{Eq_ELUndecEpsilonTransition1} 
\exists v^k. & L \ \dcap  \\ &  \dcap \exists r^{2k}. (\b \dcap \ex{r}.(\varepsilon \dcap \ex{r}.(Y \dcap \ex{r}.(U \dcap \ex{r}.Z)))) \dleq W \nonumber
\end{align} 
for $\b YUZ\overset{\delta'}{\mapsto} W$ and all $k\geqslant 0$; \vspace{-0.2cm}

\begin{align}\label{Eq_ELUndecEpsilonTransition2}
\exists v^k. & L \ \dcap \\ & \dcap \exists r^{2k}. (\b \dcap \ex{r}.(\b \dcap \ex{r}.(\varepsilon \dcap \ex{r}.(U \dcap \ex{r}.Z)))) \dleq W \nonumber
\end{align} 
for $\b\b UZ\overset{\delta'}{\mapsto} W$ and all $k\geqslant 0$; \vspace{-0.2cm}

\begin{align}\label{Eq_ELUndecEpsilonTransition3}
\exists v^k. & L \ \dcap \\ & \dcap \exists r^{2k}. (X \dcap \ex{r}.(\b \dcap \ex{r}.(\b \dcap \ex{r}.(\varepsilon \dcap \ex{r}.Z)))) \dleq W \nonumber
\end{align} 
for $X\b\b Z\overset{\delta'}{\mapsto} W$ and all $k\geqslant 0$. 

\medskip

The last axiom of $\O$ is used to initialize the halting marker $H$ and propagate it to the `left' of a $r$-chain: \vspace{-0.2cm}

\begin{equation}\label{Eq_ELUndecid_PropagateH}
\qacc \dleq H, \ \ \ex{r}.H \dleq H 
\end{equation}

\noindent The definition of ontology $\O$ is complete. \medskip

The more involved implementation of transitions (in comparison to Theorem \ref{Teo_Hardness_EL-acyclic}) allows to prevent defect situations, when there are two consequent segments $s_i$,$s_{i+1}$ of an $r$-chain, which represent fragments of configurations $\c_i$,$\c_{i+1}$ of $M$, respectively, but $\c_{i+1}$ is not a successor of $\c_{i}$. In Theorem \ref{Teo_Hardness_EL-acyclic}, the prefix of length $n\cdot (2n+3)$ given by axiom (\ref{Eq_ELExptimeRchain}) guarantees a correct implementation of up to $n$ transitions of the TM. The situation is different in the infinite case, since the prefix reserved for fragments of consequent configurations of $M$ can be of any length, due to axioms (\ref{Eq_ELUndecStartRchain}). \medskip

We prove that $M$ halts iff $\O\models A\dleq H$. Suppose that $\c_{0}$ is an accepting configuration and $M$ halts in $n$ steps; w.l.o.g. we assume that $n>1$. Let $\I$ be a model of $\O$ and $a\in A^\I$ a domain element. Due to axioms (\ref{Eq_ELUndecStartRchain}), $\I$ is a model of the concept inclusion: \vspace{-0.2cm}

\begin{align*}\label{Eq_ELUndecConcreteRchain}
A \dleq \ex{r^{n\cdot (2n+4)}}.(\ex{v^n}.L \dcap \varepsilon \dcap \ex{r}.(\qo\dcap\ex{(r,\b)^{2n+2}}) )
\end{align*}

and thus, $\I$ gives a $r$-chain containing $n+1$ segments of length $2n+3$ separated by $\varepsilon$. By using arguments from the proof of Theorem \ref{Teo_Hardness_EL-acyclic}, it can be shown that due to axioms (\ref{Eq_ELUndecPropagateLengthMarker}) - (\ref{Eq_ELUndecEpsilonTransition3}), these segments represent fragments of consequent configurations of $M$, starting with $\c_{0}$, and there is an element $b$ in the $r$-chain such that $b\in \qacc^\I$. Then by axiom (\ref{Eq_ELUndecid_PropagateH}), it holds $a\in H^\I$. 

For the `if' direction, one can show that if $M$ does not halt, then there exists a model $\I$ of $\O$ such that $\qacc^\I=H^\I=\varnothing$, $A^\I=\{a\}$ is a singleton and there are infinitely many disjoint $r$-chains $\{R_{m,n}\}_{m,n\geqslant 1}$ outgoing from $a$, such that every $R_{m,n}$ represents a fragment of $\c_{0}$ of length $n$ and has a prefix of length $m+1$ representing fragments of consequent configurations of $M$, each having length $\leqslant 2n+3$. 

\smallskip

To complete the proof of the theorem we show that ontology $\O$ is expressible by a cyclic $\EL$-ontology network. Let us demonstrate that so is the family of axioms (\ref{Eq_ELUndecStartRchain}). Let $\varphi=A\dleq B$ be a concept inclusion and $B, B1, B_2$ concept names. By Lemma \ref{Lem_Expressibility_infty}, ontology $\O_1=\{\varphi[B\mapsto \exists r^k.B] \mid k\geqslant 0\}$ is expressible by a cyclic $\EL$-ontology network. Then by Lemma \ref{Lem_Expressibility_SimpleSubstitution}, ontology $\O_2=\O_1[B\mapsto B_1 \dcap \varepsilon \dcap \ex{r}.(\qo \dcap B_2)]$ is expressible by a cyclic $\EL$-ontology network. By applying Lemma \ref{Lem_Expressibility_infty} again, we conclude that so is ontology $\O_3=\bigcup_{l\geqslant 0}\O_2[B_1\mapsto \exists v^l.B_1, \ B_2\mapsto \exists (r,\b)^{2l}.B_2]$, i.e., the ontology given by axioms
\begin{equation*}
A\dleq\exists r^k.(\exists v^l.B_1\dcap\varepsilon\dcap\ex{r}.(\qo\dcap\exists (r,\b)^{2l}.B_2))
\end{equation*}
for $k,l\geqslant 0$. Further, by Lemma \ref{Lem_Expressibility_SimpleSubstitution}, we obtain that $\O_2[B_1\mapsto L, \ B_2\mapsto \exists (r,\b)^2]$ is expressible by a cyclic $\EL$-ontology network and hence, so is the family of axioms (\ref{Eq_ELUndecStartRchain}). A similar argument shows the expressibility of ontologies given by axioms (\ref{Eq_ELUndecPropagateLengthMarker})-(\ref{Eq_ELUndecEpsilonTransition3}). The remaining subset of axioms (\ref{Eq_ELUndecid_PropagateH}) of $\O$ is finite. By Lemma \ref{Lem_IteratedExpressibility}, there exists a cyclic $\EL$-ontology network $\N$ and an ontology $\O_\N$ such that $\O$ is $(\N,\O_\N)$-expressible and thus, it holds $\O_\N\models_\N A\dleq H$ iff $M$ halts. \QED

\commentout{ 
}


\begin{theorem}\label{Teo_Hardness_ALC}
Entailment in $\ALC$-ontology networks is $\DEXPTIME$-hard.
\end{theorem}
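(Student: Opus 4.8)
The plan is to reduce the word problem for \emph{exponential-space alternating Turing machines} (ATMs), which is $\DEXPTIME$-complete (and whose complement is too), to entailment in $\ALC$-ontology networks, by extending the construction of Theorem~\ref{Teo_Hardness_EL-acyclic} from a deterministic TM on a single $r$-chain to an ATM on a tree of $r$-chains. Concretely, for a fixed exponential-space ATM $M$ and an input $w$ with $|w|=m$ -- so that, after padding, every configuration is a word of length $N=\IEXP{m}$ -- I would build an $\ALC$-ontology $\O$ with distinguished concept names $A$ and $H$ such that $\O\models A\dleq H$ iff $M$ \emph{rejects} $w$, and then show that $\O$ is expressible by an acyclic $\ALC$-ontology network of size polynomial in $m$; the $\DEXPTIME$ lower bound then follows exactly as in Theorem~\ref{Teo_Hardness_EL-acyclic}.

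I would design $\O$ so that its models encode the computation tree of $M$ on $w$: each configuration is a \emph{row} of $N$ cells chained by $r$, carrying an $m$-bit position counter $b_0,\dots,b_{m-1}$ with the usual increment axioms, so that $\mathsf{First}=\neg b_0\dcap\dots\dcap\neg b_{m-1}$, $\mathsf{Last}=b_0\dcap\dots\dcap b_{m-1}$, and the position-$(N-3)$ cells are definable by fixed $\ALC$-concepts and the row structure is rigid. One axiom $A\dleq\mathsf{First}\dcap\qo\dcap\exists (r,\b)^{N-1}.\b$ forces the row of the initial configuration $\c_0$, and at every $\mathsf{Last}$-cell of a non-rejecting row the axiom
\[ \mathsf{Last}\dcap\neg\mathsf{HasRej}\ \dleq\ \ex{r}.(\mathsf{First}\dcap F_1)\dcap\ex{r}.(\mathsf{First}\dcap F_2)\dcap\forall r.(\mathsf{First}\dcap(F_1\sqcup F_2)\dcap\neg(F_1\dcap F_2)) \]
creates exactly two kinds of successor row, marked by fresh concept names $F_1,F_2$ (propagated along a row by $F_\alpha\dcap\neg\mathsf{Last}\dleq\forall r.F_\alpha$) and standing for the $\delta_1$- and $\delta_2$-successor configuration. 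The key point making the transition axioms \emph{uniform} is that the $r$-distance from cell $i$ of a row to cell $i$ of a successor row is always $N$, so a single family
\[ X\dcap\ex{r}.(Y\dcap\ex{r}.(U\dcap\ex{r}.Z))\ \dleq\ \forall r^{N+2}.\bigl((F_1\to W_1)\dcap(F_2\to W_2)\bigr)\qquad (XYUZ\overset{\delta'_\alpha}{\mapsto}W_\alpha,\ \alpha=1,2) \]
-- suitably guarded by the counter and complemented by left-/right-edge variants using $\forall r^{N}$ and $\forall r^{N+1}$, plus symbol-disjointness axioms $X\dcap Y\dleq\bot$ -- forces the content of every successor row to be the correct $\delta_\alpha$-successor. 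Finally, alternation is encoded by a ``rejecting'' marker on the $\mathsf{First}$-cell of a row: $H$ is seeded by $\qrej\dleq H$, carried leftwards by $\neg\mathsf{Last}\dcap\ex{r}.H\dleq H$, and transported across a boundary by $\mathsf{First}\dcap\mathsf{St}_\forall\dcap(\ex{r^{N}}.(F_1\dcap H)\sqcup\ex{r^{N}}.(F_2\dcap H))\dleq H$ for universal states and by $\mathsf{First}\dcap\mathsf{St}_\exists\dcap\forall r^{N}.(F_1\to H)\dcap\forall r^{N}.(F_2\to H)\dleq H$ for existential states, where $\mathsf{St}_\forall,\mathsf{St}_\exists,\mathsf{HasRej}$ are auxiliary markers recording the state type of a row and propagated to its $\mathsf{First}$-cell. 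Since $A$ is forced to be the $\mathsf{First}$-cell of the $\c_0$-row, $\O\models A\dleq H$ iff $\c_0$ is rejecting iff $M$ rejects $w$.

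Correctness is the usual two-direction argument. If $M$ rejects $w$ then $\c_0$ is rejecting, and a well-founded induction along a derivation of that fact shows $a\in H^\I$ for every $\I\models\O$ with $a\in A^\I$: the counter together with the universally quantified window and boundary axioms force, for every configuration reachable from $\c_0$, a row of the correct content, so that every $F_\alpha$-cell reached at distance $N$ from a $\mathsf{First}$-cell really is the $\mathsf{First}$-cell of a row of the corresponding successor configuration, which makes both the existential and the universal boundary $H$-rules fire exactly as needed; conversely, if $M$ does not reject $w$, the full computation tree of $M$ with correctly set counters, both successor rows generated at every non-rejecting boundary, and $H$ interpreted as precisely the $\mathsf{First}$-cells of rows of rejecting configurations is a model of $\O$ with $a\notin H^\I$. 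For expressibility one mimics the end of the proof of Theorem~\ref{Teo_Hardness_EL-acyclic}: the only axioms of $\O$ with exponential-size concepts are the initial-row axiom and the transition, boundary and $H$-transport axioms, whose exponential parts have exactly the shapes $\ex{r^{N}}.C$, $\exists (r,\b)^{N}.C$, $\forall r^{N}.C$ with $C$ of fixed size (with $N$, $N\pm 1$ written, as there, as compositions of powers of two); so, writing each such axiom as a shallow axiom over fresh concept names and substituting those names by the required $\ex{r^{\IEXP{m}}}.(\cdot)$ resp.\ $\forall r^{\IEXP{m}}.(\cdot)$ concepts, iterated use of Lemmas~\ref{Lem_Expressibility_ExistExpSubstitution} and~\ref{Lem_Expressibility_ForallExpSubstitution} (both applicable for $\LL=\ALC$ and $k=1$) together with Lemma~\ref{Lem_Expressibility_SimpleSubstitution} makes each expressible by an acyclic $\ALC$-ontology network of size polynomial in $m$; all remaining axioms are of fixed size. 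Lemma~\ref{Lem_IteratedExpressibility} then yields an acyclic $\ALC$-ontology network $\N$ of polynomial size and an ontology $\O_\N$ with $\O$ being $(\N,\O_\N)$-expressible, hence $\O_\N\models_\N A\dleq H$ iff $M$ rejects $w$.

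The step I expect to be the main obstacle is the soundness direction over unintended models: one must make sure that the universally quantified axioms, supported by the position counter, genuinely pin down the content and the two-way branching of every row reachable from $\c_0$, so that in \emph{every} model no ``stray'' $F_\alpha$-cell can sit at $r$-distance $N$ from a $\mathsf{First}$-cell without being a faithful copy of the relevant successor configuration's $\mathsf{First}$-cell -- this is exactly what keeps the $\forall r^{N}$-guard in the existential $H$-transport rule sound -- while the only other delicate points are the choice and guarding of the markers $F_1,F_2,\mathsf{St}_\forall,\mathsf{St}_\exists,\mathsf{HasRej}$ and getting the $\pm$ constant offsets in the chain lengths and in the exponents handed to the expressiveness lemmas exactly right.
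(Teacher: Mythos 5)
Your overall architecture is the same as the paper's: reduce from the word problem for $\IEXP{n}$-space bounded ATMs, encode each configuration as an exponentially long $r$-chain, implement transitions with window axioms whose conclusions are pushed one row forward by concepts of the form $\forall r^{\IEXP{n}+c}.(\cdot)$, and discharge all exponential-size concepts through Lemmas~\ref{Lem_Expressibility_ExistExpSubstitution}, \ref{Lem_Expressibility_ForallExpSubstitution}, \ref{Lem_Expressibility_SimpleSubstitution} and~\ref{Lem_IteratedExpressibility}. That part, and the final expressibility argument, match the paper and are fine. Where you diverge is in how alternation and rejection are read off the models, and that is where there is a genuine gap.

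You aim to prove $\O\models A\sqsubseteq H$ iff $M$ rejects, propagating a rejection marker $H$ backwards through the tree, with the existential-state rule $\mathsf{First}\sqcap\mathsf{St}_\exists\sqcap\forall r^{N}.(\neg F_1\sqcup H)\sqcap\forall r^{N}.(\neg F_2\sqcup H)\sqsubseteq H$. For the direction ``$M$ rejects $\Rightarrow$ every model satisfies $A\sqsubseteq H$'' this rule must \emph{fire} in every model, i.e.\ every $F_\alpha$-element at $r$-distance $N$ from the First-cell must already carry $H$. In plain $\ALC$ (no inverses, no functionality, no nominals) nothing stops a model from containing an extra $r$-path of length $N$ out of the First-cell that bypasses the legitimate row cells and ends in a fresh element labelled $F_1$ but not $H$: your window axioms sit on the legitimate row cells (whose content is only \emph{existentially} asserted), so they constrain nothing reachable solely through the bypass; the fresh element and its subtree can be left essentially symbol-free, so no axiom ever derives $H$ there. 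Such a model satisfies $\O$ but blocks the $\forall r^{N}$ premise, so $A\sqsubseteq H$ is not entailed even though $M$ rejects. The same disease affects your branching axiom: its guard $\neg\mathsf{HasRej}$ is a negative occurrence of a concept that no axiom forces to be false, so a model may make $\mathsf{HasRej}$ true everywhere and never create any successor rows, again killing the induction. You flag the first issue yourself as ``the main obstacle'' but do not close it, and it cannot be closed by strengthening universal axioms. The paper sidesteps it entirely by turning acceptance into \emph{satisfiability}: it adds $\qrej\sqsubseteq\bot$, branches under positively forced guards via $E\sqcap S_\exists\sqsubseteq\exists r.(Z\sqcap C_1)\sqcup\exists r.(Z\sqcap C_2)$ and $E\sqcap S_\forall\sqsubseteq\exists r.(Z\sqcap C_1)\sqcap\exists r.(Z\sqcap C_2)$, and proves $\O\not\models A\sqsubseteq\bot$ iff $M$ accepts; the hard direction is then a direct model construction (you control the strays) and the converse only walks down existentially forced successors (strays are irrelevant). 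If you insist on the $H$-marker formulation, the existential-state rule must itself be purely existential, e.g.\ $\exists r^{N}.(F_1\sqcap H)\sqcap\exists r^{N}.(F_2\sqcap H)\sqsubseteq H$ (sound via the intended stray-free countermodel, complete because both legitimate successors are forced to exist and to carry $H$), and the $\neg\mathsf{HasRej}$ guard must be replaced by a positively propagated one; as written, the reduction does not go through.
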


\Proofsketch The result is based on the construction from the proof of Theorem 7 in \cite{Kazakov:08:RIQ:SROIQ}, where it is shown that the word problem for $\IEXP{n}$-space bounded ATMs, for $n\geqslant 0$, reduces to satisfiability of $\mathcal{R}$-ontologies. We demonstrate that under a minor modification the construction used in that theorem shows that there is a $\ALC$-ontology $\O$ containing nested concepts of exponential size and a concept name $A$ such that $\O\not\models A\dleq\bot$ iff a given $\IEXP{n}$-space bounded ATM accepts the empty word. The ontology contains axioms with concepts of the form $\ex (r,C)^{\IEXP{n}}.D$ and $\all r^{\IEXP{n}}.D$. Using axioms of the form $Z\dleq \exists (r,C)^{\IEXP{n}}.D$ it is possible to encode consequent exponentially long $r$-chains for storing consequent configurations of ATM. With axioms of the form $Z\dleq \forall  r^{\IEXP{n}}.D$ it is possible to encode transitions between configurations by defining correspondence of interpretations of concept names (encoding the alphabet of ATM) on two consequent $\IEXP{n}$-long $r$-chains. The rest of the concept inclusions in $\O$ are $\ALC$-axioms, whose size does not depend on $n$ and which are used to represent the initial configuration, existential/universal types configurations, and describe additional conditions for implementation of transitions. 
By using Lemmas \ref{Lem_Expressibility_EL+dleqExp}, \ref{Lem_Expressibility_ForallExpSubstitution}, we demonstrate that every axiom of $\O$ containing concepts of size exponential in $n$ is expressible by an acyclic $\ALC$-ontology network $\N$ of size polynomial in $n$. Then by applying Lemma \ref{Lem_IteratedExpressibility} we obtain that there exists an acyclic $\ALC$-ontology network $\N$ of size polynomial in $n$ and an ontology $\O_\N$ such that $\O$ is $(\N,\O_\N)$-expressible and thus, it holds $\O_\N\models_\N A\dleq \bot$ iff $M$ accepts the empty word. Since $\AEXPSPACE=\DEXPTIME$, we obtain the required statement. \QED

\medskip

\begin{theorem}\label{Teo_Hardness_R}
Entailment in $\R$-ontology networks is $\TEXPTIME$-hard.
\end{theorem}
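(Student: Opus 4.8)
\Proofsketch The plan is to follow the proof of Theorem~\ref{Teo_Hardness_ALC}, but to climb one exponential level by using the \emph{doubly} exponential expressibility machinery available for $\R$ in place of the singly exponential one used for $\ALC$. Concretely, I would start from the reduction in the proof of Theorem~7 of~\cite{Kazakov:08:RIQ:SROIQ} and adapt it as in Theorem~\ref{Teo_Hardness_ALC}, so that the complex role inclusion axioms are traded for explicit deeply nested concepts, but now with every occurrence of the bound $\IEXP{n}$ scaled up to $\IIEXP{n}$. This should yield an ontology $\O$ and a concept name $A$ such that $\O\not\models A\dleq\bot$ iff a given $\IIEXP{n}$-space bounded ATM accepts the empty word: axioms of the form $Z\dleq\exists(r,C)^{\IIEXP{n}}.D$ lay out the doubly exponentially long $r$-chains that store consecutive configurations, each of length $\IIEXP{n}$; axioms built from $\forall r^{\IIEXP{n}}.D$ and $\forall r^{<\IIEXP{n}}.D$ (optionally via a ``long'' role $s$ with $(r)^{\IIEXP{n}}\dleq s$) synchronise the interpretations of the alphabet concept names on two consecutive such chains, thereby encoding the (local) transitions and describing the initial configuration; and the remaining concept inclusions are $\ALC$-axioms whose size does not depend on $n$ and which handle the existential/universal branching and the propagation of the acceptance/rejection marker.

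The second step is to compress $\O$, which is doubly exponential in $n$ because of its deeply nested concepts, into an acyclic $\R$-ontology network of size polynomial in $n$. Here I would apply Lemma~\ref{Lem_Expressibility+dleq2ExpExists(r,A).B} to the axioms $Z\dleq\exists(r,C)^{\IIEXP{n}}.D$; Lemmas~\ref{Lem_Expressibility_RI+2Exp}, \ref{Lem_Expressibility_RI+Less2Exp}, \ref{Lem_Expressibility+dleq2ExpForall_r.A}, \ref{Lem_Expressibility+dleq2ExpMinus1Forall(r,A)} and~\ref{Lem_Expressibility_R+2Exp} to the axioms involving $\forall r^{\IIEXP{n}}.D$, $\forall r^{<\IIEXP{n}}.D$ and $(r)^{\IIEXP{n}}\dleq s$; and Lemma~\ref{Lem_Expressibility_ForallExpSubstitution} with $k=2$ together with Lemma~\ref{Lem_Expressibility_SimpleSubstitution} to plug the required subconcepts into these building blocks --- exactly as Lemmas~\ref{Lem_Expressibility_EL+dleqExp} and~\ref{Lem_Expressibility_ForallExpSubstitution} were used in Theorem~\ref{Teo_Hardness_ALC}. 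Each of these lemmas produces an acyclic $\R$-ontology network of size polynomial in $n$, so by Lemma~\ref{Lem_IteratedExpressibility} they combine into a single acyclic $\R$-ontology network $\N$ of size polynomial in $n$ and an ontology $\O_\N$ with $\O$ being $(\N,\O_\N)$-expressible; hence $\O_\N\models_\N A\dleq\bot$ iff the ATM accepts the empty word. Since $\IIEXP{n}$-space bounded ATMs capture exactly $\TEXPTIME$ (the analogue of the identity $\AEXPSPACE=\DEXPTIME$ used in Theorem~\ref{Teo_Hardness_ALC}), the required statement follows.

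I expect the main obstacle to be the faithful scaling of the ATM encoding from $\IEXP{n}$ to $\IIEXP{n}$: one must check that, once the configuration length is raised to $\IIEXP{n}$, all auxiliary book-keeping --- laying out the initial configuration, handling the first and last few tape cells, and realising the run-tree branching of alternating states --- can still be carried out using only $\ALC$-axioms of $n$-independent size together with the doubly exponential concept and role-chain shapes for which Lemmas~\ref{Lem_Expressibility_RI+2Exp}--\ref{Lem_Expressibility_R+2Exp} and~\ref{Lem_Expressibility_ForallExpSubstitution} provide polynomial-size acyclic networks, and that no other doubly exponential construct slips in. Once this is in place the expressibility step is essentially mechanical, being structurally identical to the corresponding step of Theorem~\ref{Teo_Hardness_ALC}. \QED
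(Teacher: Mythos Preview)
Your proposal is correct and takes essentially the same approach as the paper: scale the $\ALC$ construction of Theorem~\ref{Teo_Hardness_ALC} from $\IEXP{n}$ to $\IIEXP{n}$, then use the $\R$-specific expressibility lemmas (the paper cites only Lemmas~\ref{Lem_Expressibility+dleq2ExpExists(r,A).B} and~\ref{Lem_Expressibility_ForallExpSubstitution}, while you additionally name the intermediate Lemmas~\ref{Lem_Expressibility_RI+2Exp}--\ref{Lem_Expressibility_R+2Exp}, which is harmless) together with Lemma~\ref{Lem_IteratedExpressibility}, and conclude via $\ADEXPSPACE=\TEXPTIME$. One cosmetic slip: from ``$\O\not\models A\dleq\bot$ iff the ATM accepts'' and expressibility you get $\O_\N\models_\N A\dleq\bot$ iff the ATM does \emph{not} accept --- but since $\TEXPTIME$ is closed under complement this does not affect the hardness claim.
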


\Proofsketch
The proof is by reduction of the word problem for $\IIEXP{n}$-space bounded  ATMs to entailment in $\R$-ontology networks. Given such TM $M$ and a number $n\geqslant 0$, we consider ontology $\O$ from the proof of Theorem \ref{Teo_Hardness_ALC} for $M$ and let $\O'$ be the ontology obtained from $\O$ by replacing every nested concept of the form $\ex(r,C)^{\IEXP{n}}.D$ and $\all r^{\IEXP{n}}.D$ with $\ex(r,C)^{\IIEXP{n}}.D$ and $\all r^{\IIEXP{n}}.D$, respectively. Then a repetition of the proof of Theorem \ref{Teo_Hardness_ALC} gives that $\O'\models A\dleq\bot$ iff $M$ accepts the empty word. By applying Lemmas \ref{Lem_Expressibility+dleq2ExpExists(r,A).B}, \ref{Lem_Expressibility_ForallExpSubstitution} we show that every axiom of $\O'$ containing  concepts of size double exponential in $n$ is expressible by an acyclic $\R$-ontology network of size polynomial in $n$. The remaining axioms of $\O'$ are $\ALC$ axioms, whose size does not depend on $n$. Then by applying Lemma \ref{Lem_IteratedExpressibility} we obtain that there exists an acyclic $\R$-ontology network $\N$ of size polynomial in $n$ and an ontology $\O_\N$ such that $\O'$ is $(\N,\O_\N)$-expressible and thus, it holds $\O_\N\models_\N A\dleq \bot$ iff $M$ accepts the empty word. Since $\ADEXPSPACE=\TEXPTIME$, we obtain the required statement. \QED

\begin{theorem}\label{Teo_Hardness_ALCHOIF}
Entailment in $\ALCHOIF$-ontology networks is $\coNDEXPTIME$-hard.
\end{theorem}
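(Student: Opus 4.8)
\Proofsketch
The result is obtained by adapting to ontology networks the standard proof that concept satisfiability in $\ALCHOIF$ is $\NEXPTIME$-hard, scaling its exponential parameters up by one exponent. Recall that the classical reduction encodes an accepting run of an exponential-time nondeterministic Turing machine as an exponentially large grid, using nominals together with inverse and functional roles to attach every grid cell to a single ``origin'' and to force the grid to be finite and coherent, so that a model exists precisely when an accepting run does. We show that, using nested concepts of size exponential in $n$, the same idea encodes an accepting run of a $\IIEXP{n}$-time bounded nondeterministic TM $M$: there is an $\ALCHOIF$-ontology $\O$ and a concept name $A$ such that $\O\not\models A\dleq\bot$ iff $M$ accepts the empty word. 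In $\O$, a run of $M$ is represented by a $\IIEXP{n}\times\IIEXP{n}$ grid of domain elements (time $\times$ tape position), each carrying its row and column address as two $\IEXP{n}$-bit counters laid out along $r$-chains of length $\IEXP{n}$ attached to it. The chains are created by axioms of the form $Z\dleq\exists(r,C)^{\IEXP{n}}.D$, while binary increments of these counters along the horizontal and vertical successor roles, and the local conditions encoding the transition relation $\delta$, are expressed by axioms of the form $Z\dleq\forall r^{\IEXP{n}}.D$ that transfer bit labels between two consecutive $\IEXP{n}$-chains, exactly as $\forall r^{\IEXP{n}}$-axioms are used in the proof of Theorem~\ref{Teo_Hardness_ALC}. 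The origin/coherence gadget built from the nominals, inverse roles, and functional roles consists of polynomially many ordinary $\ALCHOIF$-axioms whose size does not depend on $n$.

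It then remains to compress $\O$ into a polynomial-size network, exactly as in the proof of Theorem~\ref{Teo_Hardness_EL-acyclic}. Every axiom of $\O$ that carries an oversized concept has one of the two shapes above, so, starting from a trivial axiom $A\dleq B$ (expressible by Lemma~\ref{Lem_IteratedExpressibility}) and iteratively applying Lemmas~\ref{Lem_Expressibility_ExistExpSubstitution} and~\ref{Lem_Expressibility_ForallExpSubstitution} (with $k=1$ and $\LL=\ALCHOIF$, which contains $\ALC$, so that $\LL'=\LL$) together with Lemma~\ref{Lem_Expressibility_SimpleSubstitution}, each such axiom is expressible by an acyclic $\ALCHOIF$-ontology network of size polynomial in $n$; the remaining axioms of $\O$ are ordinary $\ALCHOIF$-axioms of size independent of $n$. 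By Lemma~\ref{Lem_IteratedExpressibility} there is an acyclic $\ALCHOIF$-ontology network $\N$ of size polynomial in $n$ and an ontology $\O_\N$ such that $\O$ is $(\N,\O_\N)$-expressible, whence $\O_\N\models_\N A\dleq\bot$ iff $\O\models A\dleq\bot$ iff $M$ does not accept the empty word. Since non-acceptance of a $\IIEXP{n}$-time bounded nondeterministic TM is $\coNDEXPTIME$-complete, entailment in acyclic $\ALCHOIF$-ontology networks is $\coNDEXPTIME$-hard.

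The main obstacle is to realize the $\NEXPTIME$-style grid encoding with the two exponential coordinates of a grid cell represented by $\IEXP{n}$-long $r$-chains rather than by $\IEXP{n}$ individual concept names (as in the classical construction), and to keep all oversized concepts within the two restricted shapes $\exists(r,C)^{\IEXP{n}}.D$ and $\forall r^{\IEXP{n}}.D$ admitted by Lemmas~\ref{Lem_Expressibility_ExistExpSubstitution} and~\ref{Lem_Expressibility_ForallExpSubstitution}: in particular, the binary arithmetic on the counter chains and the nominal/(inverse-)functional origin gadget must be wired together so that a model of $\O$ exists precisely when $M$ has an accepting run. One must also check that the compression is faithful, i.e., that agreement with a model of $\O$ on $\sig(\O)$ alone---all that network-models of $\O_\N$ provide---already preserves the grid structure and hence the correspondence with accepting runs of $M$; this follows from the definition of expressibility.
\QED
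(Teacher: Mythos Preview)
Your proposal is correct and follows essentially the same strategy as the paper: build a $\IIEXP{n}\times\IIEXP{n}$ grid using $\IEXP{n}$-bit binary counters laid out on $r$-chains, couple the rows and columns via a nominal plus inverse-functional roles, and then compress the exponentially large axioms with the expressibility lemmas. The only notable difference is the choice of the $\NDEXPTIME$-hard source problem: the paper reduces from the $\IIEXP{n}\times\IIEXP{n}$ domino tiling problem (following the construction in \cite{Kazakov:08:RIQ:SROIQ}), whereas you reduce directly from acceptance of a $\IIEXP{n}$-time nondeterministic TM. This is cosmetic---domino tiling is precisely the standard intermediate encoding of such TM runs---and the grid gadget, the counter arithmetic via $\forall r^{\IEXP{n}}$-axioms, and the compression step are the same in both. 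The paper also invokes Lemma~\ref{Lem_Expressibility_EL+dleqExp} directly for the $\exists r^{\IEXP{n}}$-axiom rather than going through Lemma~\ref{Lem_Expressibility_ExistExpSubstitution}, but either route works.
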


\Proofsketch The result is based on the construction from the proof of Theorem 5 in \cite{Kazakov:08:RIQ:SROIQ}, where it is shown that the $\NDEXPTIME$-hard problem of existence of a domino tiling of size $\IIEXP{n} \times \IIEXP{n}$, $n\geqslant 0$, reduces to satisfiability of $\ROIF$-ontologies. We demonstrate that under a minor modification the construction used in that theorem shows that there exists a $\ALCHOIF$-ontology $\O$ containing concepts of an exponential size and a concept name $A$ such that $\O\not\models A\dleq \bot$ iff a given domino system admits a tiling of size $\IIEXP{n} \times \IIEXP{n}$, for $n\geqslant 0$. Ontology $\O$ contains axioms with concepts of the form $\ex r^{\IEXP{n}}.C$ and $\all r^{\IEXP{n}}.C$. Axioms of the form $Z\dleq \exists r^{\IEXP{n}}.C$ allow one to encode $\IEXP{n}$-long $r$-chains. Using a variant of the binary counter technique together with axioms of the form $Z\dleq \forall  r^{\IEXP{n}}.C$ and role hierarchies allows one to encode $\IIEXP{n}$-many consequent $r$-chains of this kind, thus obtaining sequences of $\IIEXP{n}$-many end points of $r$-chains. With nominals and inverse functional roles it is possible to enforce coupling of these sequences to obtain a grid of size $\IIEXP{n} \times \IIEXP{n}$. Finally, $\ALC$ axioms with concepts of the form $\forall r^{\IEXP{n}}.C$ allow one to represent the initial and matching conditions of the domino tiling problem. 
By using the same arguments as in the proof of Theorem \ref{Teo_Hardness_ALC} we show that there is a $\ALCHOIF$-ontology network $\N$ of size polynomial in $n$ and an ontology $\O_\N$ such that $\O_\N\models_\N A\dleq \bot$ iff the domino system does not admit a tiling of size $\IIEXP{n} \times \IIEXP{n}$. \QED

\begin{theorem}\label{Teo_Hardness_ROIF}
Entailment in $\ROIF$-ontology networks is $\coNTEXPTIME$-hard.
\end{theorem}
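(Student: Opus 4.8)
\Proofsketch The plan is to mimic the way Theorem~\ref{Teo_Hardness_R} is derived from Theorem~\ref{Teo_Hardness_ALC}, but starting from the construction underlying Theorem~\ref{Teo_Hardness_ALCHOIF}. Concretely, I will reduce the complement of the domino tiling problem of size $\EXP{3}{n}\times\EXP{3}{n}$ (triple exponential in $n$), whose non-existence is $\coNTEXPTIME$-hard, to entailment in acyclic $\ROIF$-ontology networks.

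First, for a given domino system I would take the $\ALCHOIF$-ontology $\O$ built in the proof of Theorem~\ref{Teo_Hardness_ALCHOIF} and form $\O'$ by replacing in it every nested concept of the form $\ex r^{\IEXP{n}}.C$ and $\all r^{\IEXP{n}}.C$ with $\ex r^{\IIEXP{n}}.C$ and $\all r^{\IIEXP{n}}.C$, respectively. Each encoded $r$-chain then has length $\IIEXP{n}$, so the binary counter laid out along such chains ranges over $\IIEXP{n}$ bits and thus counts up to $\EXP{3}{n}$; coupling the two resulting sequences of $\EXP{3}{n}$-many chain endpoints by nominals and inverse functional roles, exactly as in Theorem~\ref{Teo_Hardness_ALCHOIF}, will produce a grid of size $\EXP{3}{n}\times\EXP{3}{n}$. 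Replaying the correctness argument of Theorem~\ref{Teo_Hardness_ALCHOIF} verbatim --- the only changes being the scaling of the counter --- will give that $\O'\models A\dleq\bot$ iff the domino system admits no tiling of size $\EXP{3}{n}\times\EXP{3}{n}$.

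Next I would show that $\O'$ is expressible by an acyclic $\ROIF$-ontology network of size polynomial in $n$. The only axioms of $\O'$ whose size depends on $n$ are those containing the double-exponential concepts $\ex r^{\IIEXP{n}}.C$ and $\all r^{\IIEXP{n}}.C$. By Lemma~\ref{Lem_Expressibility+dleq2ExpExists(r,A).B} and Lemma~\ref{Lem_Expressibility_ForallExpSubstitution} with $k=2$ (applicable since $\ROIF$ contains $\R$), together with Lemma~\ref{Lem_Expressibility_SimpleSubstitution}, each such axiom is expressible by an acyclic $\R$-ontology network of size polynomial in $n$; axioms of the form $Z\equiv\all r^{\IIEXP{n}}.A$, should they occur, are handled by Lemma~\ref{Lem_Expressibility_R+2Exp}. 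All remaining axioms of $\O'$ are $\ALCHOIF$-axioms whose size is independent of $n$. Combining these networks via Lemma~\ref{Lem_IteratedExpressibility} yields an acyclic $\ROIF$-ontology network $\N$ of size polynomial in $n$ and an ontology $\O_\N$ with $\O'$ being $(\N,\O_\N)$-expressible, so that $\O_\N\models_\N A\dleq\bot$ iff the domino system admits no tiling of size $\EXP{3}{n}\times\EXP{3}{n}$, giving $\coNTEXPTIME$-hardness.

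The main obstacle will be to verify that the ``minor modification'' of the Theorem~\ref{Teo_Hardness_ALCHOIF} construction (itself adapted from \cite{Kazakov:08:RIQ:SROIQ}) is indeed correct after raising the exponent by one level: one must check that the binary counter, the coupling of the two endpoint sequences via nominals and inverse functional roles, and the initial and matching conditions of the tiling all scale transparently from $\IEXP{n}/\IIEXP{n}$ to $\IIEXP{n}/\EXP{3}{n}$. This is, however, exactly the kind of step already carried out in passing from Theorem~\ref{Teo_Hardness_ALC} to Theorem~\ref{Teo_Hardness_R} and requires no new ideas; the only other point needing care is to match each $n$-dependent axiom of $\O'$ to the precise shape required by the relevant expressibility lemma. \QED
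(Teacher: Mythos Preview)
Your proposal is correct and follows essentially the same approach as the paper: both lift the $\ALCHOIF$ tiling construction of Theorem~\ref{Teo_Hardness_ALCHOIF} by one exponential level (replacing $\IEXP{n}$-nested concepts by $\IIEXP{n}$-nested ones to encode a $\EXP{3}{n}\times\EXP{3}{n}$ grid) and then invoke Lemmas~\ref{Lem_IteratedExpressibility}, \ref{Lem_Expressibility+dleq2ExpExists(r,A).B}, and \ref{Lem_Expressibility_ForallExpSubstitution} to express the resulting ontology by a polynomial-size acyclic $\ROIF$-network. Your additional remarks about Lemmas~\ref{Lem_Expressibility_SimpleSubstitution} and~\ref{Lem_Expressibility_R+2Exp} and about verifying the scaling of the counter are reasonable elaborations but add nothing new beyond the paper's sketch.
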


\Proofsketch The theorem is proved by a reducing the $\NTEXPTIME$-hard problem of domino tiling of size $\IIIEXP{n} \times \IIIEXP{n}$ to entailment in $\ROIF$-ontology networks. Given an instance of this problem, we consider ontology $\O$ defined in the proof of Theorem \ref{Teo_Hardness_ALCHOIF} and let $\O'$ be the ontology obtained from $\O$ by replacing every nested concept $\ex r^{\IEXP{n}}.C$ and $\all r^{\IEXP{n}}.C$ with $\ex r^{\IIEXP{n}}.C$ and $\all r^{\IIEXP{n}}.C$, respectively. A repetition of the proof of Theorem \ref{Teo_Hardness_ALCHOIF} shows that $\O\not\models A\dleq \bot$ iff a given domino system admits a tiling of size $\IIIEXP{n} \times \IIIEXP{n}$. By applying Lemmas \ref{Lem_IteratedExpressibility}, \ref{Lem_Expressibility+dleq2ExpExists(r,A).B}, \ref{Lem_Expressibility_ForallExpSubstitution} we obtain that there exists a $\ROIF$-ontology network $\N$ of a polynomial size and an ontology $\O_\N$ such that $\O_\N\models_\N A\dleq \bot$ iff the given domino system does not admit a tiling of size $\IIIEXP{n} \times \IIIEXP{n}$. \QED


\begin{theorem}\label{Teo_Hardness_HGeneral} 
Entailment in $\mathcal{H}$-Networks is $\EXPTIME$-hard.
\end{theorem}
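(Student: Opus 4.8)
The plan is to reduce the $\EXPTIME$-complete word problem for \emph{polynomial-space bounded} alternating Turing machines (alternating polynomial space equals deterministic exponential time) to entailment in cyclic $\mathcal{H}$-ontology networks; an essentially equivalent source would be reachability in a succinctly represented AND/OR graph. Fix an ATM $M=\langle Q,\mathcal{A},\delta_1,\delta_2\rangle$ with a designated accepting state $\qacc$ and space bound $p(n)$. On the empty input a configuration is a word of length $p(n)$ over $Q\cup\mathcal{A}$; I would encode it by polynomially many concept names, using \emph{two} names $v,\bar v$ for each bit -- so that Horn bodies, which may contain only positive literals, can still test bit values -- together with the axiom $v\sqcap\bar v\sqsubseteq\bot$ recording that a bit cannot be both $0$ and $1$.

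First I would build a generic ontology $\O$ whose signature names the \emph{current} configuration $c$, its two successors $c_1=\delta_1(c)$, $c_2=\delta_2(c)$, and three flags $\mathrm{Acc},\mathrm{Acc}_1,\mathrm{Acc}_2$ intended to hold exactly when $c,c_1,c_2$ are accepting. The axioms of $\O$ are: (i) for $\alpha\in\{1,2\}$ and each output position, the finitely many constant-size Horn clauses implementing the position-wise transition function $\delta'_\alpha$ of Section~\ref{Sect_Hardness}, which force the bits of $c_\alpha$ from the bits of $c$ (the dual names are indispensable here, since $\delta'_\alpha$ produces both $0$- and $1$-bits); (ii) a clause deriving $\mathrm{Acc}$ from the bits saying that $c$ is a halting accepting configuration; (iii) for each state in $Q_\exists$, the clauses $\mathrm{Acc}_\alpha\sqsubseteq\mathrm{Acc}$ ($\alpha=1,2$), guarded by the bits identifying $c$'s state; (iv) for each state in $Q_\forall$, the clause $\mathrm{Acc}_1\sqcap\mathrm{Acc}_2\sqsubseteq\mathrm{Acc}$, similarly guarded. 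Note that $\mathrm{Acc}$ appears in the head of no other axiom. I would then close the recursion with \emph{cyclic} imports: take a fully renamed copy $\O^{\mathrm c}$ of $\O$ and, for $\alpha=1,2$, add the import relations $\langle\O,\Sigma^{(\alpha)},\O^{\mathrm c}\rangle$ and $\langle\O^{\mathrm c},\bar\Sigma^{(\alpha)},\O\rangle$, where $\Sigma^{(\alpha)}$ identifies the names of $c_\alpha$ and $\mathrm{Acc}_\alpha$ in $\O$ with the names of the current configuration and $\mathrm{Acc}$ in $\O^{\mathrm c}$ (and $\bar\Sigma^{(\alpha)}$ symmetrically). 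Intuitively, in any model agreement the truth of $\mathrm{Acc}$ at a configuration is forced to be consistent with the truth of $\mathrm{Acc}$ at its two successors, so around the cycle $\mathrm{Acc}$ gets forced true precisely on configurations lying in the least fixpoint ``accepting''. Finally I would add a small ontology $\O_0$ of facts pinning every bit of the initial configuration of $M$ and importing the configuration names together with $\mathrm{Acc}$ from $\O$; the query is $\top\sqsubseteq\mathrm{Acc}$. The resulting network $\N$ has a constant number of ontologies, each of size polynomial in $n$, so the reduction is polynomial, and the claim is that $\O_0\models_\N\top\sqsubseteq\mathrm{Acc}$ iff $M$ accepts the empty word.

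For the direction ``$M$ accepts $\Rightarrow$ entailment'' I would prove, for an arbitrary model agreement $\mu$ for $\N$ and any $\I$ in $\mu(\O)$ or $\mu(\O^{\mathrm c})$, that if $\I$ encodes a valid accepting configuration then the corresponding flag $\mathrm{Acc}$ holds in $\I$, by induction on the height of a smallest accepting run tree: the base case is axiom (ii), and in the step the transition clauses (i) force the successor bits of $\I$ to encode $\delta_\alpha(c)$, while the import relations hand back an interpretation $\I_\alpha$ in the partner ontology's part of $\mu$ that encodes $\delta_\alpha(c)$ and carries $\mathrm{Acc}$-value equal to $\mathrm{Acc}_\alpha$ in $\I$; applying the induction hypothesis to the (smaller-tree) accepting successors and then axiom (iii) or (iv) yields $\mathrm{Acc}$ in $\I$. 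This uses nothing about $\mu$ beyond the defining property of model agreements and terminates because accepting run trees are finite. For the converse, assuming $M$ does not accept, I would exhibit one fixed refuting agreement: let $\mu$ assign to $\O$ and to $\O^{\mathrm c}$ exactly the bit-consistent truth assignments that satisfy the clauses of $\O$ and set $\mathrm{Acc},\mathrm{Acc}_1,\mathrm{Acc}_2$ according to ``the word pointed at is a valid accepting configuration of $M$'' (and $\mu(\O_0)$ accordingly). Every such assignment satisfies the axioms of $\O$ -- crucially, axioms (iii) and (iv) never derive $\mathrm{Acc}$ at a non-accepting configuration -- and the ``extendability along $\Sigma^{(\alpha)}$'' condition holds because the value the recipe assigns to $\mathrm{Acc}_\alpha$ coincides with the value it assigns to $\mathrm{Acc}$ at $\delta_\alpha(c)$. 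Since the initial configuration is not accepting, $\mu$ contains a model of $\O_0$ in which $\mathrm{Acc}$ is false, so $\O_0\not\models_\N\top\sqsubseteq\mathrm{Acc}$.

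The step I expect to be the main obstacle is this converse direction: one must verify that the \emph{single} fixed agreement above is simultaneously consistent with \emph{all} import relations of the genuinely cyclic network. This forces two things to be done with care. The import signatures $\Sigma^{(\alpha)}$ must be chosen so that the only data travelling around the cycle is one successor configuration together with its single acceptance flag, flowing in the intended direction, with no accidental identification of the two successors and no leakage of the current configuration's own flag. And the Horn implementation of each $\delta'_\alpha$ must be simultaneously sound and \emph{complete} -- forcing from every bit-consistent encoding of $c$ a unique bit-consistent encoding of $c_\alpha$ -- despite Horn bodies being negation-free; this is what the dual-name encoding buys, at the cost of a full case analysis over the constantly many input bits relevant to each output position.
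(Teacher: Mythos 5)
Your high-level plan is the same as the paper's: reduce acceptance for polynomial-space bounded ATMs (using $\APSPACE=\EXPTIME$) to entailment in a constant-size cyclic $\mathcal{H}$-network, encode a configuration by polynomially many concept names, implement the position-wise transition functions $\delta'_\alpha$ by positive Horn clauses, propagate an acceptance/rejection flag around the cycle, and prove one direction by induction on the height of the run tree and the other by exhibiting an explicit model agreement assembled from the semantically correct singleton assignments. (The paper dualizes to a rejecting marker $\bar{H}$ and the query $A\sqsubseteq\bar{H}$, but since $\EXPTIME$ is closed under complement this difference is immaterial.)

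The gap is in the wiring of the cycle, precisely at the point you yourself flag as the main obstacle. An import relation $\langle\mathcal{O}_1,\Sigma,\mathcal{O}_2\rangle$ performs no renaming: it only requires the two models to agree on the interpretation of the \emph{same} symbols in $\Sigma$. So a signature $\Sigma^{(\alpha)}$ cannot ``identify the names of $c_\alpha$ in $\mathcal{O}$ with the names of the current configuration in $\mathcal{O}^{\mathrm c}$'' unless those are literally the same concept names. With a single copy $\mathcal{O}^{\mathrm c}$, whose current configuration lives in one fixed name set, making that set coincide with $\mathcal{O}$'s $c_1$-names and with its $c_2$-names forces $c_1$ and $c_2$ to share names, i.e., the two successors collapse; keeping them distinct leaves the data unable to cross either import edge. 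The paper avoids this by never letting the two successors coexist in one interpretation: $\mathcal{O}$ holds only the \emph{current} configuration in $\sigma$, re-exports it via explicit copying axioms $B_{ai}\sqsubseteq B^\alpha_{ai}$ into two disjoint signatures $\sigma^1,\sigma^2$, and there are \emph{two} transition ontologies $\mathcal{O}^1,\mathcal{O}^2$, each reading $\sigma^\alpha$ and writing its successor back into the common $\sigma$, which is then handed to a \emph{different} agreement-partner interpretation of $\mathcal{O}$. The two children of a run-tree node are thus two distinct elements of $\mu(\mathcal{O})$ reached along two distinct import edges, not two name sets inside one model. Your construction can be repaired along these lines, but this requires restructuring the network, not just a careful choice of $\Sigma^{(\alpha)}$. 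Two smaller points: the axiom $v\sqcap\bar v\sqsubseteq\bot$ is not obviously available in $\mathcal{H}$ (the role-free fragment of $\EL$, which lacks $\bot$), and it is also unnecessary --- the forward direction only needs that a model represents \emph{at least} the intended configuration (Horn derivations are monotone), and the countermodel is built by hand to be consistent; the paper's one-concept-per-(symbol, position) encoding makes the dual-name machinery superfluous as well.
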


\Proofsketch We show that the word problem for ATMs working with words of a polynomial length $n$ reduces to entailment in cyclic $\mathcal{H}$-ontology networks. Then, since $\APSPACE=\EXPTIME$, the claim follows.

Let $M=\langle Q,\mathcal{A},\delta_1, \delta_2 \rangle$ be an ATM. 
We call the word of the form $\b\qo\b\ldots\b$
\emph{initial configuration} of $M$. 
Consider a signature $\sigma$ consisting of concept names $B_{ai}$, for $a\in Q\cup{\mathcal{A}}$ and $1\leqslant i \leqslant n$ (with the informal meaning that the i-th symbol in a configuration of $M$ is $a$). Let $\sigma^1$, and $\sigma^2$ be `copies' of signature $\sigma$ consisting of the above mentioned concept names with the superscripts $^1$ and $^2$, respectively.

For $\alpha=1,2$, let $\O^\alpha$ be an ontology consisting of the axioms below. The family of axioms (\ref{Eq_Step1})-(\ref{Eq_Step4}) implements transitions of $M$ (while respecting the end positions of configurations): \vspace{-0.2cm}

\begin{equation} \label{Eq_Step1} 
B^{\alpha}_{Xi-2} \dcap B^{\alpha}_{Yi-1} \dcap B^{\alpha}_{Ui} \dcap B^{\alpha}_{Vi+1}\  \dleq \ B_{Wi}
\end{equation}
for $1\leqslant i \leqslant n-3$ and all $X,Y,U,V,W\in Q\cup\mathcal{A}$ such that $XYUV\overset{\delta_{\alpha}}{\mapsto} W$; \vspace{-0.2cm}

\begin{equation}\label{Eq_Step2}
B^{\alpha}_{U1} \dcap B^{\alpha}_{V2}\  \dleq \ B_{W 1}
\end{equation}
for all $U,V,W\in Q\cup\mathcal{A}$ such that $\b\b UV\overset{\delta_{\alpha}}{\mapsto} W$;

\begin{equation}\label{Eq_Step3}
B^{\alpha}_{Y1} \dcap B^{\alpha}_{U2} \dcap B^{\alpha}_{V3}\ \dleq \ B_{W 2}
\end{equation}
for all $Y,U,V,W\in Q\cup\mathcal{A}$ such that $\b YUV\overset{\delta_{\alpha}}{\mapsto} W$;

\begin{equation}\label{Eq_Step4}
B^{\alpha}_{Xn-2} \dcap B^{\alpha}_{Yn-1} \dcap B^{\alpha}_{Un} \  \dleq \ B_{Wn} 
\end{equation}
for all $X,Y,U,W\in Q\cup\mathcal{A}$ such that $XYU\b\overset{\delta_{\alpha}}{\mapsto} W$.

\medskip

For $1\leqslant i \leqslant n$, the next axioms initialize `local' marker $\bar{H}^\alpha$ and `global' marker $\bar{H}$ for a rejecting successor configuration wrt $\delta_\alpha$:

\begin{equation}\label{Eq_LocalAcceptMarkers}
B_{\qrej i} \dleq \bar{H}, \ \ \ \  \bar{H} \ \dleq\ \bar{H}^\alpha
\end{equation}

Let $\O$ be an ontology consisting of the following axioms: \vspace{-0.2cm}

\begin{gather}\label{Eq_GlobalAcceptMarker}
\bar{H}^1\dcap B_{q_\forall i} \ \dleq \ \bar{H}, \ \ \bar{H}^2\dcap  B_{q_\forall i}  \dleq \ \bar{H} \\
\bar{H}^1\dcap \bar{H}^2\dcap B_{q_\exists i}  \dleq \  \bar{H} \nonumber
\end{gather}
for $1\leqslant i \leqslant m$, $\q_\exists\in Q_\exists$, and $\q_\forall\in Q_\forall$ (i.e., these axioms implement the definition of accepting configuration depending on whether the state is existential or universal); \vspace{-0.2cm}

\begin{equation}\label{Eq_Init}
A \dleq \mathbin{\rotatebox[origin=c]{180}{$\bigsqcup$}}_{1\leqslant i \leqslant n+2} B_{\b i}  \dcap B_{q_0 n+3} \dcap \mathbin{\rotatebox[origin=c]{180}{$\bigsqcup$}}_{n+4\leqslant i \leqslant m} B_{\b i}
\end{equation}
representing the initial configuration $\c_{\init}$ of $M$; \vspace{-0.2cm}

\begin{equation}\label{Eq_CopySignature}
B_{a i} \dleq B^\alpha_{a i}
\end{equation}
for $\alpha=1,2$, $1\leqslant i \leqslant n$, and all $a\in Q\cup\mathcal{A}$ (which enforce `copying' a configuration `description' in signature $\sigma$ into signatures $\sigma^1$, $\sigma^2$).

\medskip

Consider ontology network $\N$ consisting of the import relations $\langle \O, \Sigma^\alpha, \O^\alpha \rangle$ and $\langle \O^\alpha, \Sigma, \O \rangle$, where $\Sigma^\alpha=\{\bar{H}^\alpha\}\!\cup\!\sigma^\alpha$, $\Sigma=\{\bar{H}\}\!\cup\!\sigma$, and $\alpha=1,2$. Informally, ontologies $\O^\alpha$ describe transitions between configurations, while $\O$ serves for `copying' configuration descriptions into signatures $\sigma^1, \sigma^2$ and `feeding' them back into $\O^\alpha$. It follows that models $\I\models_\N\O$ represent consequent configurations of $M$ and thus, network $\N$ implements a run tree of ATM. Intuitively, a point $x$ in the domain of a model $\I\models_\N\O$ represents a configuration $\c$, if $x$ belongs to the interpretation of $\sigma$-concept names, corresponding to the symbols in $\c$.

We demonstrate that $M$ does not accept the empty word iff $\O\models_\N A\dleq \bar{H}$. The `only if' direction is proved by induction by showing that in any model $\I\models_\N\O$, whenever a domain element $x$ represents a rejecting configuration $\c$ of $M$, it holds $x\in \bar{H}^\I$. Then it follows by axiom (\ref{Eq_Init}) that $x\in \bar{H}^\I$, whenever $x\in A^\I$. The `if' direction is proved by contraposition by defining a model agreement $\mu$ for $\N$ and a singleton interpretation $\I\in\mu(\O)$ such that $\I\not\models A\dleq \bar{H}$. For every ontology, $\mu$ gives a family of singleton interpretations such that each of them represents a configuration of $M$ and agreed interpretations correspond to consequent configurations. \QED

\begin{theorem}\label{Teo_Hardness_HAcyclic} 
Entailment in acyclic $\mathcal{H}$-Networks is $\PSPACE$-hard.
\end{theorem}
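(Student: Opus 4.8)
The plan is to reuse the reduction from the proof of Theorem~\ref{Teo_Hardness_HGeneral}, this time starting from the word problem for \emph{polynomial-time} alternating Turing machines. The class of problems decided by such machines is exactly $\PSPACE$, and $\PSPACE$ is closed under complement, so it suffices to reduce its complement---``a given polynomial-time ATM $M$ does \emph{not} accept the empty word''---to entailment in acyclic $\mathcal{H}$-networks. The point is that, under a polynomial time bound $p(n)$, every configuration of $M$ has length at most $p(n)$ and every run tree has height at most $p(n)$; since \emph{both} the tape and the depth of the computation are now polynomially bounded, the cyclic network of Theorem~\ref{Teo_Hardness_HGeneral} can be ``unrolled'' into an acyclic one.

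Concretely, I would time-stamp that construction: take the ontologies $\O,\O^1,\O^2$ and the signatures $\Sigma,\Sigma^1,\Sigma^2$ from the proof of Theorem~\ref{Teo_Hardness_HGeneral} and make $p(n)+1$ copies $\O_j,\O^1_j,\O^2_j$, one per level $j\in\{0,\dots,p(n)\}$, each carrying the same axioms---with the initial-configuration axiom~(\ref{Eq_Init}) placed only in $\O_0$, and the marker axioms~(\ref{Eq_LocalAcceptMarkers}) kept at every level so that a halted rejecting configuration is flagged at whatever depth it occurs. The cyclic imports $\langle\O,\Sigma^\alpha,\O^\alpha\rangle$ and $\langle\O^\alpha,\Sigma,\O\rangle$ are replaced by the layered imports $\langle\O_j,\Sigma^\alpha,\O^\alpha_j\rangle$ and $\langle\O^\alpha_j,\Sigma,\O_{j+1}\rangle$ for $\alpha=1,2$ and $0\le j<p(n)$. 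Every import points forward in the ordering $\O_0<\O^1_0,\O^2_0<\O_1<\dots<\O_{p(n)}$, so the resulting $\mathcal{H}$-network $\N$ is acyclic, and its size is polynomial in $p(n)$ and $|M|$. The intended reading is exactly that of Theorem~\ref{Teo_Hardness_HGeneral}, now stratified by depth: $\O_j$ holds the $\sigma$-description of a configuration reachable in $j$ steps, $\O^\alpha_j$ copies it into $\sigma^\alpha$ and derives through~(\ref{Eq_Step1})--(\ref{Eq_Step4}) the $\sigma$-description of its $\delta_\alpha$-successor at level $j+1$, and the rejection markers $\bar{H},\bar{H}^\alpha$ flow back along the same edges, combined at $\O_j$ by~(\ref{Eq_GlobalAcceptMarker}) according to whether the configuration is existential or universal.

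The query is then $A\dleq\bar{H}$ posed to $\O_0$, and I claim $\O_0\models_\N A\dleq\bar{H}$ iff $\c_{\init}$ is rejecting iff $M$ does not accept the empty word. The ``only if'' direction is a finite downward induction on $j=p(n),\dots,0$: in any $\I\models_\N\O_j$, every domain element representing a rejecting configuration lies in $\bar{H}^\I$, and together with~(\ref{Eq_Init}) this gives $A^\I\subseteq\bar{H}^\I$. For the converse I would argue by contraposition: if $M$ accepts the empty word, build a model agreement $\mu$ that assigns to $\O_j$ (resp.\ $\O^\alpha_j$) the family of singleton interpretations representing the depth-$j$ configurations reachable from $\c_{\init}$ (resp.\ the matching transition interpretations), with $\bar{H}$ and the auxiliary markers set exactly to reflect rejection, and then take the singleton $\I\in\mu(\O_0)$ encoding $\c_{\init}$, which satisfies $a\in A^\I$ but $a\notin\bar{H}^\I$. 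I expect the main obstacle to be the same as in Theorem~\ref{Teo_Hardness_HGeneral}: verifying that this $\mu$ really is a model agreement, i.e.\ that the markers stay ``off'' along the accepting subtree even though each import relation forces the (possibly rejecting) sibling successor of an existential configuration into $\mu$ as well. The resolution is that such a sibling sets only, say, $\bar{H}^2$ on the existential configuration, while~(\ref{Eq_GlobalAcceptMarker}) derives $\bar{H}$ there only when both $\bar{H}^1$ and $\bar{H}^2$ hold---precisely the argument already used in the cyclic case. Apart from this, the layered structure makes the proof strictly easier than that of Theorem~\ref{Teo_Hardness_HGeneral}, replacing the fixpoint reasoning over a cycle by an ordinary finite induction on the level index.
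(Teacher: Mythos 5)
Your proposal is correct and follows essentially the same route as the paper: the paper likewise reduces from the word problem for ATMs making polynomially many steps (using $\AP=\PSPACE$) and builds the acyclic network as a polynomial-depth ``unfolding'' of the cyclic network from Theorem~\ref{Teo_Hardness_HGeneral}, with layered copies $\O_j,\O^\alpha_j$ and imports $\langle\O_j,\Sigma^\alpha,\O^\alpha_j\rangle$, $\langle\O^\alpha_j,\Sigma,\O_{j\pm1}\rangle$, proving correctness by induction on the level index. The only cosmetic differences are your index orientation and placing axiom~(\ref{Eq_Init}) only in the root copy rather than in every copy, neither of which affects the argument.
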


\Proofsketch We show that the word problem for ATMs making polynomially many steps reduces to entailment in acyclic $\mathcal{H}$-ontology networks. Then, since $\AP=\PSPACE$, the claim follows.

Let $M=\langle Q,\mathcal{A},\delta_1, \delta_2 \rangle$ be an ATM. We use the definition of the network $\N$ from the proof sketch to Theorem \ref{Teo_Hardness_HGeneral} and define by induction an acyclic $\mathcal{H}$-ontology network $\N_n$, which can be viewed informally as a finite `unfolding' of $\N$.

For $n=1$, let $\N_1$ be a network consisting of import relations $\langle \O_1, \Sigma^\alpha, \O^{\alpha}_{1} \rangle$, for $\alpha=1,2$, where $\O_1$ is equivalent to ontology $\O$ (from the definition of network $\N$) and $\O^{\alpha}_1$ is equivalent to $\O^{\alpha}$. If $\N_{n-1}$ is a network already given for $n\geqslant 2$, then we define $\N_n$ as the union of $\N_{n-1}$ with the set consisting of import relations $\langle \O_n, \Sigma^\alpha, \O^{\alpha}_{n} \rangle$, $\langle \O^{\alpha}_{n}, \Sigma, \O_{n-1} \rangle$, for $\alpha=1,2$, where $\O_n$, $\O^{\alpha}_{n}$ are ontologies not present in $\N_{n-1}$ and $\O_n$ is equivalent to $\O$ and $\O^{\alpha}_{n}$ is equivalent to $\O^{\alpha}$. By using the arguments from the proof of Theorem \ref{Teo_Hardness_HGeneral} we show that for any $n\geqslant 1$, it holds $\O_n\models_{\N_n} A\dleq \bar{H}$ iff $M$ does not accept the empty word in $n$ steps. \QED

\commentout { 
\begin{lemma}[General $\mathcal{H}$-Networks]\label{Lem_ExptimeRoleFree-LowerBound}
The word problem for ATMs having a finite tape is reducible to the entailment problem in $\mathcal{H}$-ontology networks.
\end{lemma}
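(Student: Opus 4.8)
The plan is to reduce the word problem for a finite-tape ATM to entailment in a \emph{cyclic} $\mathcal{H}$-ontology network, which is exactly the gadget built in the proof of Theorem~\ref{Teo_Hardness_HGeneral}. Fix an ATM $M=\langle Q,\mathcal{A},\delta_1,\delta_2\rangle$ whose computation is confined to $n$ tape cells. Since every configuration is then a word of fixed length $n$, it can be described \emph{propositionally}: for each symbol $a\in Q\cup\mathcal{A}$ and cell $1\leqslant i\leqslant n$ I introduce a concept name $B_{ai}$ (``cell $i$ holds $a$''), together with two superscripted copies $\sigma^1,\sigma^2$ of this signature, one per transition function. The decisive feature is that the \emph{unbounded} depth of the alternating run tree is not charged to the size of the network: a single finite cyclic gadget, traversed repeatedly, unfolds into run trees of arbitrary height. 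This is precisely why the network must be allowed to be cyclic, in contrast with the acyclic (and hence merely $\PSPACE$) situation of Theorem~\ref{Teo_Hardness_HAcyclic}.

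Concretely, I would take the ontologies $\O^1,\O^2,\O$ and the network $\N$ from Theorem~\ref{Teo_Hardness_HGeneral}. Each $\O^\alpha$ carries the role-free Horn transition axioms (\ref{Eq_Step1})--(\ref{Eq_Step4}), which read the current configuration off $\sigma^\alpha$ and write the $\delta_\alpha$-successor into $\sigma$, together with (\ref{Eq_LocalAcceptMarkers}), seeding the rejection marker $\bar{H}$ at the state $\qrej$ and lifting it to the local marker $\bar{H}^\alpha$. The ontology $\O$ carries the copying axioms (\ref{Eq_CopySignature}) (pushing a $\sigma$-description into $\sigma^1$ and $\sigma^2$), the initial-configuration axiom (\ref{Eq_Init}), and the alternation axioms (\ref{Eq_GlobalAcceptMarker}), which set $\bar{H}$ from one $\bar{H}^\alpha$ at a universal state and from both at an existential state. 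The network consists of the relations $\langle\O,\Sigma^\alpha,\O^\alpha\rangle$ and $\langle\O^\alpha,\Sigma,\O\rangle$ with $\Sigma^\alpha=\{\bar{H}^\alpha\}\cup\sigma^\alpha$ and $\Sigma=\{\bar{H}\}\cup\sigma$; the two oppositely directed edges between $\O$ and each $\O^\alpha$ form the cycle that feeds a successor's rejection status back to its parent. All axioms are role-free Horn inclusions, so $\N$ is an $\mathcal{H}$-network of size polynomial in $n$ and $|M|$.

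I would then establish that $M$ \emph{rejects} the empty word iff $\O\models_\N A\dleq\bar{H}$; since $\APSPACE=\EXPTIME$ and $\EXPTIME$ is closed under complement, this is a faithful reduction of the word problem. For the forward direction I prove, by induction on the height $k$ of a rejecting run tree, the strengthened statement that in \emph{any} model agreement $\mu$, for every $\I^\alpha\in\mu(\O^\alpha)$ and every domain element whose $\sigma$-description in $\I^\alpha$ encodes a $k$-rejecting configuration $\c$, that element belongs to $\bar{H}^{\I^\alpha}$. The base case $\c=\qrej$ is discharged by (\ref{Eq_LocalAcceptMarkers}), which fires on the \emph{successor} slot $\sigma$; the step follows the cycle $\O^\alpha\to\O\to\O^\beta$, applies the hypothesis to each successor, transports $\bar{H}$ back along the shared signatures $\Sigma,\Sigma^\alpha$, and closes with the matching clause of (\ref{Eq_GlobalAcceptMarker}). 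Feeding the initial configuration in through (\ref{Eq_Init}) then yields $A^\I\subseteq\bar{H}^\I$. For the converse I argue by contraposition: if $M$ accepts, I build a model agreement indexed by the finitely many reachable configurations (finite since the tape is), letting $\mu(\O)$ and $\mu(\O^\alpha)$ each contain one singleton interpretation per configuration, with $\bar{H}$ set to its true rejection status computed as a least fixed point. One verifies that this assignment satisfies every Horn axiom and every import condition, and that the interpretation of $\c_{\init}$ has $\bar{H}=\varnothing$, witnessing $\O\not\models_\N A\dleq\bar{H}$.

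I expect the main obstacle to be matching the two directions against the model-agreement semantics of Definition~\ref{def:ontology:network}, whose ``for every $\I_1$ there is $\I_2$'' quantification does not let one simply read off successor interpretations. In the forward direction the delicacy is to phrase the induction at the level of $\O^\alpha$ rather than $\O$, so that a $\qrej$-leaf is detected while occupying the successor slot and its marker is genuinely forced back to the parent; one must also confirm that the copying axioms pin the $\sigma^\alpha$-description to the current configuration before the transition axioms act. In the converse direction the subtle requirement is \emph{consistency} of the whole family: a single model of $\O$ must agree simultaneously with models of both $\O^1$ and $\O^2$, and the markers must be taken as the least rejection fixed point so that no Horn axiom spuriously forces $\bar{H}$ at an accepting configuration. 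This is exactly where closing the family over all reachable configurations, finite thanks to the bounded tape, is indispensable, and where the cycle of $\N$ does the work that a bounded acyclic unfolding could not.
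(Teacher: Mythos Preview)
Your proposal is correct and follows essentially the same approach as the paper's proof of Theorem~\ref{Teo_Hardness_HGeneral}: the same cyclic $\mathcal{H}$-network $\N$ built from $\O,\O^1,\O^2$ with axioms (\ref{Eq_Step1})--(\ref{Eq_CopySignature}), the same equivalence $\O\models_\N A\dleq\bar{H}$ iff $M$ rejects, and the same two-direction argument (induction on the height of a rejecting run tree for soundness, explicit singleton model agreement for the converse). The only cosmetic differences are that you phrase the forward induction at the level of $\mu(\O^\alpha)$ rather than $\mu(\O)$, and you index the countermodel agreement by reachable configurations rather than by unfolding depth as the paper does; both variants work and yield the same result.
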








Take an ATM $M$ and a word $w$ such that the tape of $M$ is bounded by $f(|w|)$, for a polynomial $f$. Since an ATM can be converted into 2ATM with no more than polynomial increase in the size of machine's description (and with no increase in the tape size), Lemma \ref{Lem_ExptimeRoleFree-LowerBound} shows a polynomial reduction of the acceptance problem for $M$ to entailment in an ontology network. Since the word problem for ATMs having a polynomially bounded tape is EXPTIME-hard, we obtain EXPTIME-hardness of the entailment problem in $\mathcal{H}$-ontology networks. 


\begin{lemma}\label{Lem_Pspace-LowerBound} 
The word problem for ATMs making a polynomially bounded number of steps is reducible to the entailment problem in $\mathcal{H}$-ontology networks.
\end{lemma}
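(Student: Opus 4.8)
\medskip\noindent\textbf{Proof proposal.}
The plan is to reduce the word problem for alternating Turing machines (ATMs) that make only polynomially many steps to entailment in acyclic $\mathcal{H}$-ontology networks; since that word problem is $\PSPACE$-complete (as $\AP=\PSPACE$ and $\PSPACE$ is closed under complement), this yields the claimed hardness. I would reuse, essentially verbatim, the ontologies $\O,\O^1,\O^2$, the signatures $\sigma,\sigma^\alpha$, $\Sigma=\{\bar H\}\cup\sigma$, $\Sigma^\alpha=\{\bar H^\alpha\}\cup\sigma^\alpha$, and the import relations from the proof of Theorem~\ref{Teo_Hardness_HGeneral}, recalling their roles: the transition axioms (\ref{Eq_Step1})--(\ref{Eq_Step4}) in $\O^\alpha$ turn a $\sigma^\alpha$-description of a configuration $\c$ into the $\sigma$-description of its $\delta_\alpha$-successor; the copying axioms (\ref{Eq_CopySignature}) of $\O$ push a $\sigma$-description into $\sigma^1$ and $\sigma^2$; axiom (\ref{Eq_Init}) plants the initial configuration $\c_{\init}$ under $A$; and the marker axioms (\ref{Eq_LocalAcceptMarkers}), (\ref{Eq_GlobalAcceptMarker}) propagate the rejecting marker $\bar H$ following the recursive definition of a rejecting configuration. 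In Theorem~\ref{Teo_Hardness_HGeneral} these import relations are closed into a cycle; the idea here is instead to unfold that cycle to a fixed finite depth $n$.

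Concretely, I would build $\N_n$ by induction. Let $\N_1$ consist of $\langle\O_1,\Sigma^\alpha,\O_1^\alpha\rangle$ for $\alpha=1,2$; and for $n\ge2$ let $\N_n=\N_{n-1}\cup\{\langle\O_n,\Sigma^\alpha,\O_n^\alpha\rangle,\ \langle\O_n^\alpha,\Sigma,\O_{n-1}\rangle\mid\alpha=1,2\}$, where $\O_n\equiv\O$ and $\O_n^\alpha\equiv\O^\alpha$ are fresh ontologies not occurring in $\N_{n-1}$ (distinctness of the nodes can be arranged by adding a harmless tautology, or by renaming auxiliary symbols apart and relativizing the import signatures accordingly, neither of which affects the semantics we care about). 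The dependency graph of $\N_n$ is acyclic — the only edges out of $\O_n$ go to $\O_n^1,\O_n^2$, the only edges out of $\O_n^\alpha$ go to $\O_{n-1}$, and $\O_1^1,\O_1^2$ are sinks, so the level index is non-increasing along every edge and no directed cycle can form — and its size is linear in $n$ times the (polynomial) size of $\O,\O^1,\O^2$, hence polynomial in $|M|$ and $n$; so the reduction is polynomial.

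It then remains to prove, by induction on $n$, that $\O_n\models_{\N_n}A\dleq\bar H$ iff $M$ does not accept the empty word within $n$ steps (equivalently, $\c_{\init}$ is $n$-rejecting); the argument is the one of Theorem~\ref{Teo_Hardness_HGeneral}, now bounded by the unfolding depth. For the direction in which the entailment holds (the case $\c_{\init}$ $n$-rejecting), one proves the stronger invariant that in every $\I\models_{\N_n}\O_n$ a point $x$ representing an $n$-rejecting configuration $\c$ lies in $\bar H^\I$: the imports of $\N_n$ force from $x$ a model of $\O_n^\alpha$ agreeing on $\Sigma^\alpha$ and then a model of $\O_{n-1}$ agreeing on $\Sigma$ in which the $\delta_\alpha$-successor $\c_\alpha$ is represented; if $\c$ is $n$-rejecting because $\c_\alpha$ is $(n-1)$-rejecting, the induction hypothesis puts $\bar H$ at that point, and by (\ref{Eq_LocalAcceptMarkers}) this propagates to $\bar H^\alpha$ at $x$ in the $\O_n^\alpha$-model and hence, by $\Sigma^\alpha$-agreement, at $x$ in the $\O_n$-model, after which (\ref{Eq_GlobalAcceptMarker}) aggregates the markers according to whether the state of $\c$ is universal or existential --- precisely the recursion defining $n$-rejecting. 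The base case $n=1$ is the direct observation that $\N_1$ carries out a single step of this recursion: the markers $\bar H^\alpha$ returned from the import-free $\O_1^\alpha$ flag that the $\delta_\alpha$-successor of the represented configuration is a rejecting leaf, and (\ref{Eq_GlobalAcceptMarker}) aggregates, so $x$ lies in $\bar H^\I$ exactly for the $1$-rejecting configurations. For the other direction (the case $\c_{\init}$ not $n$-rejecting), one argues by contraposition, assembling a model agreement for $\N_n$ layer by layer along a computation of $M$ of depth at most $n$ in which the existential choices avoid $(n-1)$-rejecting successors; the resulting singleton-based model of $\O_n$ keeps its $A$-point out of $\bar H$, so $\O_n\not\models_{\N_n}A\dleq\bar H$.

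The main obstacle is not any single step but keeping the inductive bookkeeping exact across the alternation: one has to match the unfolding depth to the run-tree height (each transition layer of $\N_n$ consumes one level of the tree), get the base-case indexing right after the customary time-padding of $M$ so that rejecting states sit precisely at the leaves, and set up the fresh per-level copies so that the imported signatures $\Sigma,\Sigma^\alpha$ still transport both the configuration description and the marker $\bar H$ between consecutive levels without spuriously identifying or severing the vocabularies of different levels. Observing that the polynomial-step bound is the right parameter --- an ATM runs in polynomial time iff it has an accepting run tree of polynomial height --- is what makes $\AP=\PSPACE$ applicable and closes the reduction.
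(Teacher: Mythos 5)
Your proposal is correct and follows essentially the same route as the paper: the paper's proof (see Theorem~\ref{Teo_Hardness_HAcyclic}) likewise takes the cyclic network $\N$ from Theorem~\ref{Teo_Hardness_HGeneral} and unfolds it into an acyclic network $\N_n$ with fresh per-level copies $\O_n$, $\O_n^\alpha$ and the import relations $\langle\O_n,\Sigma^\alpha,\O_n^\alpha\rangle$, $\langle\O_n^\alpha,\Sigma,\O_{n-1}\rangle$, then shows $\O_n\models_{\N_n}A\dleq\bar H$ iff $M$ does not accept the empty word in $n$ steps, invoking $\AP=\PSPACE$. Your additional remarks on acyclicity, polynomial size, and the level-by-level construction of the countermodel agreement are exactly the bookkeeping the paper leaves implicit.
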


\begin{proof}
Consider a PSPACE-hard set of triples $\langle M, f(|w|), w\rangle$, where $M$ is an ATM, $w$ is a word accepted by $M$, and the minimal height of an accepting run tree of $M$ on $w$ is bounded by $f(|w|)$, for a polynomial $f$. W.l.o.g. we may also assume that $M$ has tape of length $f(|w|)$. Given a triple from this set, an ATM $M$ can be converted into 2ATM with no more than polynomial increase of the minimal height of an accepting run tree of $M$ on the word $w$ (and with no increase of the required tape size). Hence, there exists a PSPACE-hard set of triples as above, where $M$ is a 2ATM.  

Take a triple $T=\langle M, f(|w|), w\rangle$, where $M$ is a 2ATM having a tape of length $f(|w|)$, $w$ is a word in the alphabet of $M$, $f$ is some polynomial, and denote $m=f(|w|)$. Using the notations for signatures from Lemma \ref{Lem_ExptimeRoleFree-LowerBound}, denote $\gamma=\{A^1,B\}\cup\Si$, $\hat{\gamma}=\{A^1,B\}\cup\Si^1$, $\sigma=\{A^2,B\}\cup\Si$, $\hat{\sigma}=\{A^2,B\}\cup\Si^2$. Consider the ontology network $\N$ constructed by $T$ and depicted on Figure 1. The labels of nodes $u,v_1,\ldots , v_m$ and edges are shown in the figure (informally, $\N$ can be viewed as ``unfolding'' of  the graph from Figure 2). 
\commentout{
\begin{figure*}\label{Fig_PSPACEReduction}
\makebox[17.5cm] {
\put(-160,0){\includegraphics[height=0.8in,keepaspectratio]{Images/pspace_start_simplified.pdf}}

\put(55,27){{\Large $\ldots$}}

\put(85,0){\includegraphics[height=0.8in,keepaspectratio]{Images/pspace_end.pdf}}
\put(-115,48){{\tiny $\O_{step}$}}
\put(-108,28){{\small $v_1$}}
\put(-82,47){{\small $\hat{\gamma}$}}
\put(-70,36){{\tiny $\O_{copy}$}}
\put(-43,47){{\small $\gamma$}}
\put(-84,10){{\small $\hat{\sigma}$}}
\put(-70,20){{\tiny $\O_{copy}$}} 
\put(-43,10){{\small $\sigma$}}

\put(-25,48){{\tiny $\O_{step}$}}
\put(-19,28){{\small $v_2$}}
\put(6,47){{\small $\hat{\gamma}$}}
\put(20,36){{\tiny $\O_{copy}$}}
\put(20,20){{\tiny $\O_{copy}$}}
\put(6,10){{\small $\hat{\sigma}$}}

\put(85,36){{\tiny $\O_{copy}$}}
\put(112,45){{\small $\gamma$}}
\put(85,20){{\tiny $\O_{copy}$}}
\put(112,10){{\small $\sigma$}}
\put(130,48){{\tiny $\O_{step}$}}
\put(133,27){{\small $v_m$}}


}
\caption{Acyclic ontology network implementing 2ATM with the minimal height of accepting run trees bounded by $m$.}
\end{figure*}
} 

Let $u$ be the node labelled by $\O_{init}$. It can be shown by induction on $1\leqslant k\leqslant m$ by using Definition \ref{De_nAgreedModel} and the idea in the proof of Lemma \ref{Lem_RunTree&AgreedModel} that a configuration $c$ of the machine $M$ is $k$-accepting iff in any $(2k-2)$-agreed model of $\O_{v_{(m+1)-k}}$ representing $c$, the literal $A$ is true. It follows that the initial configuration $c$ of $M$ (on the input word $w$) is $m$--accepting iff in any model $\Mm\in\O_{v_1}$ representing $c$ the literal $A$ is true. The same argument as in the end of the proof of Lemma \ref{Lem_ExptimeRoleFree-LowerBound} shows that $(\N,u)\models C$ iff $M$ accepts $w$. \medskip

\end{proof}

} 



\section{Reduction to Classical Entailment}\label{Sect_Reduction2ClassicalEntailment}


As a tool for proving upper complexity bounds, we demonstrate that entailment in a network $\N$ can be reduced to entailment from (a possibly infinite) union of `copies' of ontologies appearing in $\N$.

Let $\N$ be an ontology network. We denote $\sig(\N)=\bigcup_{\tuple{\O_1,\Sigma,\O_2}\in\N}\ (\sig(\O_1)\cup\Sigma\cup\sig(\O_2))$.
An \emph{import path} in $\N$ is a sequence
$p=\{\O_0,\Si_1,\O_1,\dots,\O_{n-1},\Si_n,\O_n\}$, $n\ge 0$, such that
$\tuple{\O_{i-1},\Si_i,\O_i}\in\N$ for each $i$ with $(1\le i\le n)$. We
denote by $\length(p)=n$, $\first(p)=\O_0$ and $\last(p)=\O_n$ the \emph{length}
of $p$, the \emph{first} and, respectively, the \emph{last} ontologies on the
path $p$.
By $\paths(\N)$ we define the set of all paths in $\N$, and by
$\paths(\N,\O)=\set{p\in\paths(\N)\mid\first(p)=\O}$ the subset of paths that
originate in $\O$.
We say that $\O'$ is \emph{reachable from $\O$ in $\N$} if there exists a path
$p\in\paths(\N,\O)$ such that $\last(p)=\O'$.
The \emph{import closure} of an ontology $\O$ in $\N$ is defined by
$\icls{\O}=\cup_{p\in\paths(\N,\O)}\last(p)$. Note that by definition it holds $\{\O\}\in\paths(\N,\O)$ and thus, $\O\subseteq\icls{\O}$.

\begin{lemma}\label{lemma:import:closure:completeness}
If $\I\models\icls{\O}$ then $\I\models_\N\O$.
\end{lemma}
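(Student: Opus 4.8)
The plan is to construct an explicit model agreement $\mu$ for $\N$ that places $\I$ in $\mu(\O)$. The guiding intuition is that once $\I$ already satisfies every axiom of the whole import closure, a single interpretation can serve as ``the'' model for $\O$ and for every ontology reachable from it, so there is nothing left to coordinate across import relations. Concretely, I would fix the domain $\Delta$ of $\I$ and set $\mu(\O')=\{\I\}$ for every ontology $\O'$ occurring in $\N$ that is reachable from $\O$ in $\N$ (which includes $\O$ itself, since $\{\O\}\in\paths(\N,\O)$), and $\mu(\O')=\varnothing$ for every other ontology occurring in $\N$.

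First I would check that $\mu$ really assigns classes of models over $\Delta$. If $\O'$ is reachable from $\O$, then $\O'=\last(p)$ for some $p\in\paths(\N,\O)$, hence $\O'\subseteq\icls{\O}$ by the definition of the import closure, and since $\I\models\icls{\O}$ we obtain $\I\models\O'$; for the unreachable ontologies the empty class is vacuously a class of models over $\Delta$. Second, I would verify the agreement condition of Definition~\ref{def:ontology:network}: given $\tuple{\O_1,\Sigma,\O_2}\in\N$ and $\I_1\in\mu(\O_1)$, if $\O_1$ is not reachable from $\O$ then $\mu(\O_1)=\varnothing$ and the condition is vacuous; if $\O_1$ is reachable via a path $p$, then appending the edge $\tuple{\O_1,\Sigma,\O_2}$ to $p$ yields a path in $\paths(\N,\O)$ ending in $\O_2$, so $\O_2$ is reachable too, giving $\mu(\O_1)=\mu(\O_2)=\{\I\}$; then $\I_1=\I$ and we may take $\I_2=\I$, with $\I=_\Sigma\I$ holding trivially. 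Hence $\mu$ is a model agreement for $\N$ with $\I\in\mu(\O)$, and therefore $\I\models_\N\O$.

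There is no genuine obstacle in this argument; the only point needing a little care is the treatment of ontologies that are \emph{not} reachable from $\O$ — assigning them the empty class makes all their outgoing import relations trivially satisfied — together with the simple observation that reachability from $\O$ is closed under extending an import path by one more edge, which is exactly what lets the witness $\I$ be propagated throughout the relevant part of the network.
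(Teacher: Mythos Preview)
Your proof is correct and follows exactly the same approach as the paper: define $\mu(\O')=\{\I\}$ for ontologies reachable from $\O$ and $\mu(\O')=\varnothing$ otherwise, then observe that this is a model agreement containing $\I$ at $\O$. The paper's proof is simply a terser version of yours, declaring that $\mu$ is ``clearly'' a model agreement where you spell out the two verifications.
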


\begin{proof}
Consider a mapping $\mu$ defined for ontologies $\O'$ in $\N$ by
setting $\mu(\O')=\set{\I}$ if $\O'$ is reachable from $\O$ and
$\mu(\O')=\emptyset$ otherwise. Clearly, $\mu$ is a model agreement
for $\N$. Since $\I\in\mu(\O)$, we have $\I\models_\N\O$.
\end{proof}


For every symbol $X\in\sig(\N)$ and every import path $p$ in $\N$,
take a distinct symbol $X_p$ of the same type (concept name, role name, or
individual) not occurring in $\sig(\N)$. For each import path $p$ in $\N$, define a
\emph{renaming} $\theta_p$ of symbols in $\sig(\N)$ inductively as follows.
If $\length(p)=0$, we set $\theta_p(X)=X$ for every $X\in\sig(\N)$.
Otherwise, $p=p'\cup\set{\O_{n-1},\Si_n,\O_n}$ for some path $p'$ and we
define $\theta_{p}(X)=\theta_{p'}(X)$ if $X\in\Si_n$ and 
 $\theta_{p}(X)=X_p$ otherwise.
A \emph{renamed import closure} of an ontology $\O$ in $\N$
is defined by
$\ricls{\O}=\bigcup_{p\in\paths(\N,\O)}\theta_p(\last(p))$.

\begin{lemma}\label{lemma:renamed:closure:completeness}
If $\I\models\ricls{\O}$ then $\I\models_\N\O$.
\end{lemma}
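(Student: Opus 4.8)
The plan is to reduce this statement to the already-proved Lemma \ref{lemma:import:closure:completeness} by exhibiting, for a given model $\I \models \ricls{\O}$, a suitable model of the (un-renamed) import closure $\icls{\O}$ — or, more directly, to build a model agreement for $\N$ witnessing $\I \models_\N \O$ directly from $\I$. The key observation is that the renamed import closure $\ricls{\O} = \bigcup_{p \in \paths(\N,\O)} \theta_p(\last(p))$ keeps a separate copy of each ontology for each path reaching it, and the renamings $\theta_p$ are designed so that two paths $p$ and $p' = p \cup \{\O_{n-1},\Sigma_n,\O_n\}$ agree exactly on $\Sigma_n$ (the imported signature) after renaming. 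This matches precisely the agreement condition $\I_1 =_\Sigma \I_2$ required in Definition \ref{def:ontology:network}.

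First I would, for each path $p \in \paths(\N,\O)$, define an interpretation $\I_p$ on the domain of $\I$ by interpreting each symbol $X \in \sig(\N)$ as $\I_p(X) = \I(\theta_p(X))$ — that is, pulling back the interpretation of the renamed symbol along $\theta_p$. Since $\I \models \theta_p(\last(p))$ (as this is one of the conjuncts of $\ricls{\O}$) and $\theta_p$ is an injective renaming, we get $\I_p \models \last(p)$. In particular, for the trivial path $\{\O\}$ of length $0$ we have $\theta_p = \mathrm{id}$, so $\I_{\{\O\}} = \I|_{\sig(\N)}$ (up to symbols outside $\sig(\N)$, which are irrelevant) and $\I_{\{\O\}} \models \O$. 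Then I would define the model agreement $\mu$ by $\mu(\O') = \{\I_p \mid p \in \paths(\N,\O),\ \last(p) = \O'\}$ for every ontology $\O'$ occurring in $\N$, and $\mu(\O') = \emptyset$ for ontologies not reachable from $\O$ — mirroring the construction in Lemma \ref{lemma:import:closure:completeness}.

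The main step is checking that $\mu$ is genuinely a model agreement: given $\tuple{\O_1, \Sigma, \O_2} \in \N$ and $\I_p \in \mu(\O_1)$ with $\last(p) = \O_1$, I must produce $\I_{p'} \in \mu(\O_2)$ with $\I_p =_\Sigma \I_{p'}$. The natural choice is $p' = p \cup \{\O_1, \Sigma, \O_2\}$, which is indeed a path with $\last(p') = \O_2$, so $\I_{p'} \in \mu(\O_2)$. For the agreement, by the inductive definition of the renaming, $\theta_{p'}(X) = \theta_p(X)$ for every $X \in \Sigma$, hence $\I_{p'}(X) = \I(\theta_{p'}(X)) = \I(\theta_p(X)) = \I_p(X)$ for all $X \in \Sigma$; since all $\I_p$ share the domain of $\I$, this gives $\I_p =_\Sigma \I_{p'}$. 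This is the crux, and it is where the careful construction of $\theta_p$ does the work; I expect it to be the only place requiring genuine attention, and it is short. Finally, since $\I|_{\sig(\N)} = \I_{\{\O\}} \in \mu(\O)$ and $\I$ agrees with $\I_{\{\O\}}$ on $\sig(\N) \supseteq \sig(\O)$, we conclude $\I \models_\N \O$ (noting that whether a model of $\O$ witnesses $\I \models_\N \O$ depends only on its reduct to $\sig(\N)$, and $\mu$ can be taken to assign $\I$ itself rather than its reduct to $\O$ by interpreting the extra symbols arbitrarily).
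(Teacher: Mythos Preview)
Your proof is correct and is in fact more careful than the paper's, which simply says ``the proof is identical to the proof of Lemma~\ref{lemma:import:closure:completeness}.'' Taken literally, that earlier proof sets $\mu(\O')=\{\I\}$ for every reachable $\O'$; but here $\I\models\ricls{\O}$ only guarantees $\I\models\theta_p(\last(p))$, not $\I\models\O'$ for the un-renamed $\O'$, so the literal transplant does not work. Your construction---defining $\I_p$ by pulling back along $\theta_p$ and setting $\mu(\O')=\{\I_p\mid \last(p)=\O'\}$---is exactly the refinement needed, and your verification that $\theta_{p'}|_\Sigma=\theta_p|_\Sigma$ for $p'=p\cup\{\O_1,\Sigma,\O_2\}$ is the right place to use the inductive definition of $\theta_p$. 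In short: same underlying strategy (build a model agreement out of $\I$), but you have supplied the details the paper elides; your version is also essentially the mirror image of the detailed construction the paper gives for Lemma~\ref{lemma:renamed:closure:soundness}.
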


\begin{proof}
The proof is identical to the proof of Lemma~\ref{lemma:import:closure:completeness}.
\end{proof}

\begin{lemma}\label{lemma:renamed:closure:soundness}
For every $\I\models_\N\O$ there exists $\J\models\ricls\O$
such that $\J=_{\sig(\N)}\I$.
\end{lemma}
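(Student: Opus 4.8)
The statement to prove is Lemma~\ref{lemma:renamed:closure:soundness}: for every $\I\models_\N\O$ there is $\J\models\ricls\O$ with $\J=_{\sig(\N)}\I$. The natural approach is to unpack the definition of $\I\models_\N\O$: there is a model agreement $\mu$ for $\N$ with $\I\in\mu(\O)$. I will use $\mu$ to construct, by induction on the length of import paths, a family of interpretations indexed by $\paths(\N,\O)$, each a model of the (unrenamed) last ontology on its path, and then amalgamate them into a single $\J$ via the renaming $\theta_p$.

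\textbf{The construction.} First I would pick, for the trivial path $p_0=\{\O\}$, the interpretation $\I$ itself (it lies in $\mu(\O)$, so $\I\models\O$). For the inductive step, suppose $p=p'\cup\{\O_{n-1},\Si_n,\O_n\}$ and an interpretation $\I_{p'}\in\mu(\O_{n-1})$ has already been chosen for $p'$. Since $\tuple{\O_{n-1},\Si_n,\O_n}\in\N$ and $\mu$ is a model agreement, there exists $\I_p\in\mu(\O_n)$ with $\I_{p'}=_{\Si_n}\I_p$; I choose one such $\I_p$. All these interpretations share the common domain $\Delta$ of the agreement. Now define $\J$ on this domain by: for each $X\in\sig(\N)$ and each path $p$, set $X_p^\J=(\theta_p(X))^\J$ to be $X^{\I_p}$ — more carefully, $\J$ interprets the \emph{fresh} symbol $\theta_p(X)$ (which is $X$ when $\length(p)=0$ and $X_p$ otherwise) by $X^{\I_p}$; symbols outside $\bigcup_p\theta_p(\sig(\N))$ may be interpreted arbitrarily. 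The point of the renaming is exactly that distinct paths get disjoint fresh symbols except where $\theta_p$ deliberately keeps a shared name: by the inductive definition, $\theta_p(X)=\theta_{p'}(X)$ precisely when $X\in\Si_n$, which is exactly the case in which $X^{\I_p}=X^{\I_{p'}}$ by the agreement $\I_{p'}=_{\Si_n}\I_p$. Hence $\J$ is well-defined.

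\textbf{Verifying the two conclusions.} To see $\J=_{\sig(\N)}\I$: for $X\in\sig(\N)$, $\theta_{p_0}(X)=X$ and $\I_{p_0}=\I$, so $X^\J=X^\I$; the domains agree by construction. To see $\J\models\ricls\O=\bigcup_{p\in\paths(\N,\O)}\theta_p(\last(p))$: fix $p$ and an axiom $\alpha\in\last(p)=\O_n$. Since $\I_p\in\mu(\O_n)$, we have $\I_p\models\alpha$. The renamed axiom $\theta_p(\alpha)$ is obtained from $\alpha$ by replacing each symbol $X$ (all of which lie in $\sig(\O_n)\subseteq\sig(\N)$) with $\theta_p(X)$, and $\J$ interprets $\theta_p(X)$ exactly as $\I_p$ interprets $X$ and shares $\I_p$'s domain; since satisfaction of a concept inclusion depends only on the interpretation of the symbols occurring in it, $\J\models\theta_p(\alpha)$. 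As $\alpha$ and $p$ were arbitrary, $\J\models\ricls\O$.

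\textbf{Main obstacle.} The only delicate point is checking that $\J$ is \emph{well-defined}, i.e.\ that when two paths $p$ and $q$ force the same fresh symbol to carry two (a priori different) interpretations, they in fact coincide. This reduces to the claim that $\theta_p(X)=\theta_q(X)$ (for $p\neq q$) forces $X^{\I_p}=X^{\I_q}$, which one proves by an easy induction on path length using the defining clause $\theta_p(X)=\theta_{p'}(X)\iff X\in\Si_n$ together with the agreement property $\I_{p'}=_{\Si_n}\I_p$; in fact the renaming was designed so that the only collisions are along a common prefix with $X$ staying in every imported signature, where the agreements chain up. Everything else is bookkeeping. (If $\N$ is cyclic, $\paths(\N,\O)$ is infinite but still a well-founded set under the prefix order, so the inductive choice of the $\I_p$ goes through; the resulting $\J$ may have infinite signature, which is harmless since $\ricls\O$ is then infinite too, matching the ``possibly infinite'' caveat in Section~\ref{Sect_Reduction2ClassicalEntailment}.)
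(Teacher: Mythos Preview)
Your proof is correct and follows essentially the same approach as the paper: inductively pick $\I_p\in\mu(\last(p))$ along each import path using the model agreement, then amalgamate into a single $\J$ over the common domain. The only presentational difference is that the paper defines $\J$ directly on the distinct fresh symbols $X_p$ (setting $(X_p)^\J=X^{\I_p}$ and $X^\J=X^\I$ for $X\in\sig(\N)$), thereby sidestepping the well-definedness check entirely, and then proves $(\theta_p(X))^\J=X^{\I_p}$ as a derived fact rather than as a definition; your route via $\theta_p(X)$ requires the extra coherence argument you supply, but arrives at the same place.
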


\begin{proof}
Assume that $\I\models_\N\O$. Then there exists a model agreement $\mu$ for $\N$
such that $\I\in\mu(\O)$.
We define $\J=(\Delta^\J,\cdot^\J)$ by setting $\Delta^\J=\Delta^\I$, and
$X^\J=X^\I$ for all symbols $X$ except for the symbols $X_p$, with $p\in\paths(\N,\O)$
and $X\in\sig(\N)$. For those symbols, we set $(X_p)^\J=X^{\I_p}$ where
$\I_p\in\mu(\last(p))$ is defined by induction on $\length(p)$ as follows.
If $\length(p)=0$, we set $\I_p=\I\in\mu(\O)=\mu(\last(p))$.
Otherwise, $p=p'\cup\set{\O_{n-1},\Si_n,\O_n}$ for some
$\tuple{\O_{n-1},\Si_n,\O_n}\in\N$, and $\I_{p'}\in\mu(\O_{n-1})$ is already
defined. Then pick any $\I_{p}\in\mu(\O_n)$
such that $\I_p=_{\Si_n}\I_{p'}$. Such $\I_p$ always exists since $\mu$ is a model agreement.
This completes the definition of $\J$. Obviously, $\J=_{\sig(\N)}\I$.

To prove that $\J\models\ricls\O$, we first show by induction on $\length(p)$
that for every $X\in\sig(\N)$ we have $(\theta_p(X))^\J=X^{\I_p}$. Indeed, if
$\length(p)=0$ then $(\theta_p(X))^\J=X^\J=X^\I=X^{\I_p}$.
If $p=p'\cup\set{\O_{n-1},\Si_n,\O_n}$ for some
$\tuple{\O_{n-1},\Si_n,\O_n}\in\N$, then if $X\in\Si_n$, we have
$(\theta_p(X))^\J=(\theta_{p'}(X))^\J=X^{\I_{p'}}=X^{\I_p}$ since
$\I_{p}=_{\Si_n}\I_{p'}$. If $X\notin\Si_n$ then
$(\theta_p(X))^\J=(X_p)^\J=X^{\I_p}$.

Now, since for every path $p\in\paths(\N,\O)$, we have $\I_p\in\mu(\last(p))$
hence, in particular, $\I_p\models\last(p)$, we have
$\J\models\theta_p(\last(p))$ by the property above. Hence $\J\models\ricls\O$.
\end{proof}

\begin{theorem}\label{theorem:reduction:classical}
Let $\N$ be an ontology network, $\O$ an ontology in $\N$,
and $\alpha$ an axiom such that $\sig(\alpha)\subseteq\sig(\N)$. 
Then $\O\models_\N\alpha$ iff $\ricls\O\models\alpha$.
\end{theorem}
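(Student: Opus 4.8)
The plan is to prove the biconditional by combining the three preceding lemmas about the renamed import closure, handling each direction separately.

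For the ``if'' direction, suppose $\ricls\O\models\alpha$ and let $\I\models_\N\O$ be arbitrary. By Lemma~\ref{lemma:renamed:closure:soundness}, there exists $\J\models\ricls\O$ with $\J=_{\sig(\N)}\I$. From $\ricls\O\models\alpha$ we get $\J\models\alpha$. Since $\sig(\alpha)\subseteq\sig(\N)$ and $\I$ and $\J$ agree on all of $\sig(\N)$ (in particular on the domain and on $\sig(\alpha)$), satisfaction of $\alpha$ transfers: $\I\models\alpha$. As $\I$ was an arbitrary model of $\O$ in $\N$, this shows $\O\models_\N\alpha$. The only thing to spell out here is the standard fact that whether an interpretation satisfies a concept inclusion depends only on the reduct to the signature of that inclusion (and the domain), which is immediate from the inductive definition of $\cdot^{\cal I}$.

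For the ``only if'' direction, suppose $\O\models_\N\alpha$ and let $\I\models\ricls\O$ be arbitrary. By Lemma~\ref{lemma:renamed:closure:completeness}, $\I\models_\N\O$, hence $\I\models\alpha$ by assumption. Since $\I$ was an arbitrary model of $\ricls\O$, we conclude $\ricls\O\models\alpha$. This direction does not even need the signature condition on $\alpha$.

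Putting the two directions together yields $\O\models_\N\alpha$ iff $\ricls\O\models\alpha$. I do not expect any real obstacle: the substantive content has already been isolated into Lemmas~\ref{lemma:renamed:closure:completeness} and~\ref{lemma:renamed:closure:soundness}, and the theorem is essentially their formal conjunction. The one point requiring a word of care is the signature hypothesis $\sig(\alpha)\subseteq\sig(\N)$, which is exactly what is needed in the ``if'' direction so that the agreement $\J=_{\sig(\N)}\I$ produced by Lemma~\ref{lemma:renamed:closure:soundness} covers all symbols occurring in $\alpha$; without it, $\J$ and $\I$ could disagree on a symbol of $\alpha$ and the transfer of satisfaction would fail.
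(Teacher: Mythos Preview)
Your proof is correct and follows essentially the same approach as the paper: each direction is dispatched by invoking Lemma~\ref{lemma:renamed:closure:completeness} or Lemma~\ref{lemma:renamed:closure:soundness} and then using the signature hypothesis to transfer satisfaction of $\alpha$. The only minor slip is that you refer to ``the three preceding lemmas'' while in fact only two are used (Lemma~\ref{lemma:import:closure:completeness} plays no role here).
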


\begin{proof}
Suppose that $\O\models_\N\alpha$. In order to prove that $\ricls\O\models\alpha$,
take any model $\I\models\ricls\O$. We need to show that $\I\models\alpha$.
Since $\I\models\ricls\O$, by Lemma~\ref{lemma:renamed:closure:completeness}, we have  
$\I\models_\N\O$. Since
$\O\models_\N\alpha$, we have $\I\models\alpha$, as required.

Conversely, suppose that $\ricls\O\models\alpha$. In order to show that
$\O\models_\N\alpha$, take any $\I\models_\N\O$. We need to show that
$\I\models\alpha$. Since $\I\models_\N\O$, by
Lemma~\ref{lemma:renamed:closure:soundness}, there exists $\J\models\ricls\O$
such that $\J=_{\sig(\N)}\I$. Since $\ricls\O\models\alpha$, we have
$\J\models\alpha$.
Since $\sig(\alpha)\subseteq\sig(\N)$, we have $\I\models\alpha$, as required.
\end{proof}


\section{Membership Results}\label{Sect_Membership}


Theorem~\ref{theorem:reduction:classical} provides a method for reducing the entailment problem in ontology networks to entailment from ontologies.
Note that, in general, the renamed closure $\ricls{\O}$ of an ontology $\O$ in a (cyclic) network $\N$ can be infinite (even if $\N$ and all ontologies in $\N$ are finite).
There are, however, special cases when $\ricls{\O}$ is finite. 
For example, if all import signatures in $\N$ include all symbols in $\sig(\N)$, then it is easy to see that $\ricls{\O}=\icls{\O}$. 
$\ricls{\O}$ is also finite if $\paths(\N,\O)$ is finite, e.g., if $\N$ is acyclic. 
In this case, the size of $\ricls{\O}$ is at most exponential in $\O$. If there is at most one import path between every pair of ontologies (i.e., if $\N$ is tree-shaped) then the size of $\ricls{\O}$ is the same as the size of $\N$. This immediately gives the upper complexity bounds on deciding entailment in acyclic networks. 

\begin{theorem}\label{theorem:upper:acyclic}
Let $\LL$ be a DL with the complexity of entailment in $\complclass{[co][N]TIME}(f(n))$ (\textsf{[co]} and \textsf{[N]} denote possible co- and N-prefix, respectively). 
Let $\N$ be an acyclic ontology network and $\O$ an ontology in $\N$ such that $\ricls{\O}$ is a $\LL$-ontology. Then for $\LL$-axioms $\alpha$, the entailment $\O\models_\N\alpha$ is decidable in $\complclass{[co][N]TIME}({f(2^n)})$. If $\N$ is tree-shaped then deciding $\O\models_\N\alpha$ has the same complexity as entailment in $\LL$.
\end{theorem}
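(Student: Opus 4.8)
The plan is to read the statement off from the reduction to classical entailment (Theorem~\ref{theorem:reduction:classical}) together with the size estimates for the renamed import closure recorded just above it. First I would dispose of the side condition $\sig(\alpha)\subseteq\sig(\N)$ needed in Theorem~\ref{theorem:reduction:classical}: if $\alpha$ mentions symbols outside $\sig(\N)$, extend $\N$ by the single dummy import relation $\tuple{\O,\sig(\alpha),\varnothing}$, treating $\varnothing$ as a fresh (empty) ontology. The new model-agreement requirement is vacuous, since every interpretation over the given domain satisfies $\varnothing$; hence $\models_\N$ is unchanged on $\O$, the network stays acyclic (and tree-shaped, if it was), and the renamed closure is unchanged, because $\theta_p(\varnothing)=\varnothing$ for the single new path $p$. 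So we may assume $\sig(\alpha)\subseteq\sig(\N)$, and Theorem~\ref{theorem:reduction:classical} gives $\O\models_\N\alpha$ iff $\ricls{\O}\models\alpha$; by hypothesis $\ricls{\O}$ is an $\LL$-ontology, so the right-hand side is an $\LL$-entailment instance, decidable in $\complclass{[co][N]TIME}(f(|\ricls{\O}|+|\alpha|))$.

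It then remains to substitute the size bounds on $\ricls{\O}$ established just above the theorem. If $\N$ is acyclic, $|\ricls{\O}|$ is at most exponential in $n=|\N|$, so $\O\models_\N\alpha$ is decidable in $\complclass{[co][N]TIME}(f(2^{O(n)}))$; and if $\N$ is moreover tree-shaped, $|\ricls{\O}|$ is of the same order as $n$, so $\O\models_\N\alpha$ is decidable in $\complclass{[co][N]TIME}(f(\mathrm{poly}(n)))$. To turn these into the claimed bounds, one uses that the resource bounds $f$ appearing here (iterated exponentials of polynomials) are invariant under replacing the argument by a polynomial in it: the class $\complclass{[co][N]TIME}(f(2^{O(n)}))$ is then just $\complclass{[co][N]TIME}(f(2^{n}))$, and $\complclass{[co][N]TIME}(f(\mathrm{poly}(n)))$ is the same class as entailment in $\LL$.

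I do not expect a real obstacle here: the substance is already carried by Theorem~\ref{theorem:reduction:classical} and Lemmas~\ref{lemma:renamed:closure:completeness} and~\ref{lemma:renamed:closure:soundness}. The only points requiring attention are bookkeeping — the harmless reduction to $\sig(\alpha)\subseteq\sig(\N)$, and checking that composing $f$ with the (at most exponential, resp. polynomial) blow-up of $|\ricls{\O}|$ does not leave the intended complexity class, which is exactly the implicit regularity assumption on $f$ in the statement.
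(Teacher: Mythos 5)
Your proposal is correct and is essentially the paper's own argument: the paper gives no separate proof of this theorem, deriving it "immediately" from Theorem~\ref{theorem:reduction:classical} together with the observation that $\sizeof{\ricls{\O}}$ is at most exponential in the size of an acyclic $\N$ and linear for a tree-shaped $\N$, exactly as you do. Your explicit patch for the side condition $\sig(\alpha)\subseteq\sig(\N)$ via the vacuous import relation $\tuple{\O,\sig(\alpha),\varnothing}$, and your remark on the closure of the resource bound $f$ under exponential/polynomial argument substitution, are details the paper leaves implicit, and both are handled correctly.
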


Note that if $\O_1,\ldots,\O_m$ are some ontologies in a DL $\LL$, then in general, their union is not necessary a $\LL$-ontology. For instance, this is the case for logics containing DL $\R$, which restricts role inclusion axioms to regular ones. The regularity property can be easily lost when taking the union of ontologies and thus, reasoning over the union of ontologies may be harder than reasoning in the underlying DL. Hence, the requirement in Theorem \ref{theorem:upper:acyclic} that $\ricls{\O}$ must be a $\LL$-ontology. 

In the next theorem, we show that for arbitrary networks, checking entailment is, in general, semi-decidable, which is a consequence of the Compactness Theorem for First-Order Logic (since all standard DLs can be translated to FOL).

\begin{theorem}\label{Teo_UpperBoundCyclic}
Let $\LL$ be a DL, which can be translated to FOL, $\N$ an ontology network, 
and $\O$ an ontology in $\N$ such that $\ricls{\O}$ is a $\LL$-ontology. 
Then for $\LL$-axioms $\alpha$, the entailment $\O\models_\N\alpha$ is semi-decidable. 
\end{theorem}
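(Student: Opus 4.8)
The plan is to establish semi-decidability by combining Theorem~\ref{theorem:reduction:classical} with the Compactness Theorem for first-order logic. By Theorem~\ref{theorem:reduction:classical}, the entailment $\O\models_\N\alpha$ holds if and only if $\ricls\O\models\alpha$, where $\ricls\O$ is a (possibly infinite) $\LL$-ontology. Since $\LL$ translates to FOL, both $\ricls\O$ and $\alpha$ can be viewed as sets of first-order sentences; call their translations $\Gamma$ and $\psi$. The task thus reduces to showing that ``$\Gamma\models\psi$'' is semi-decidable when $\Gamma$ is a recursively enumerable set of first-order sentences.

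\emph{First} I would argue that $\ricls\O$ is recursively enumerable. The set $\paths(\N,\O)$ of import paths originating in $\O$ is r.e.: one can systematically enumerate all finite sequences $\{\O_0,\Si_1,\O_1,\dots,\O_n\}$ and check, for each, whether every triple $\tuple{\O_{i-1},\Si_i,\O_i}$ belongs to the finite set $\N$. For each such path $p$ one computes the renaming $\theta_p$ by the given inductive definition and outputs the finitely many axioms of $\theta_p(\last(p))$; running this over all paths enumerates $\ricls\O$. Hence the FOL translation $\Gamma$ of $\ricls\O$ is r.e. as well.

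\emph{Next}, by the Compactness Theorem, $\Gamma\models\psi$ iff there is a \emph{finite} subset $\Gamma_0\subseteq\Gamma$ with $\Gamma_0\models\psi$, equivalently iff $\Gamma_0\cup\{\neg\psi\}$ is unsatisfiable. This gives the semi-decision procedure: enumerate $\Gamma$ as $\gamma_1,\gamma_2,\dots$; at stage $k$, use a complete proof procedure for first-order logic (e.g.\ a sound and complete calculus, dovetailed) to search for a proof of $\psi$ from $\{\gamma_1,\dots,\gamma_k\}$. If $\O\models_\N\alpha$, then such a finite subset and such a proof exist, so the procedure halts and accepts; if not, it runs forever. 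This is exactly semi-decidability. Equivalently, one may phrase it via a refutation procedure on $\{\gamma_1,\dots,\gamma_k,\neg\psi\}$ using the completeness of first-order refutation.

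\emph{The main obstacle}, such as it is, is mostly a matter of care rather than depth: one must check that the translation of $\LL$-axioms and of the axiom $\alpha$ into FOL is itself effective (computable), so that enumerating $\ricls\O$ indeed yields an r.e.\ set of first-order sentences, and that the condition $\sig(\alpha)\subseteq\sig(\N)$ required by Theorem~\ref{theorem:reduction:classical} is met (or can be assumed w.l.o.g., since symbols of $\alpha$ outside $\sig(\N)$ play no role in any $\I\models_\N\O$). One also wants the hypothesis that $\ricls\O$ is a $\LL$-ontology so that its FOL translation is well-defined uniformly; this is precisely the standing assumption of the theorem. Beyond these bookkeeping points, the argument is a direct application of compactness and the completeness of first-order deduction.
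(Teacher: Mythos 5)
Your proposal is correct and follows essentially the same route as the paper's proof: reduce $\O\models_\N\alpha$ to $\ricls{\O}\models\alpha$ via Theorem~\ref{theorem:reduction:classical}, invoke compactness to obtain a finite entailing subset, and dovetail a semi-decidable first-order entailment test over an effective enumeration of $\ricls{\O}$ (the paper organizes the enumeration by bounded path length with a timeout, yours by listing axioms and growing the prefix, which is the same idea). Your added remarks on the effectiveness of the FOL translation and on the side condition $\sig(\alpha)\subseteq\sig(\N)$ are sensible bookkeeping that the paper leaves implicit.
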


\begin{proof}
By Theorem~\ref{theorem:reduction:classical}, $\O\models_\N\alpha$ iff $\ricls{\O}\models\alpha$. By the compactness theorem for first-order logic,
if $\ricls{\O}\models\alpha$ then there exists a finite subset $\O'\subseteq\ricls{\O}$ such that $\O'\models\alpha$.
Hence, $\ricls{\O}\models\alpha$ can be checked, e.g, by enumerating all finite subsets
$\ricls{\O}_n=\bigcup_{p\in\paths(\N,\O,n)}\theta_p(\last(p))\subseteq\ricls{\O}$, $n\ge 0$, where $\paths(\N,\O,n)=\set{p\in\paths(\N,\O)\mid \length(p)\le n}$ and running the (semi-decidable) test $\ricls{\O}_n\models\alpha$ with the timeout $n$. If $\ricls{\O}\models\alpha$ then, eventually, one of these tests succeeds. 
\end{proof}

Restricting the shape of the network is one possibility of establishing decidability results for entailment in ontology networks.
Another possibility is to restrict the language.
It turns out, for ontology networks expressed in the role-free DL $\P$, the entailment problem becomes decidable, even in the presence of cycles.  
Intuitively, this is because the entailment in $\P$ can be characterized by a bounded number of models. 

\begin{definition}\label{definition:singleton}
We say that an interpretation $\I=(\Delta^\I,\cdot^\I)$ is a \emph{singleton} if $\sizeof{\Delta^\I}=1$.
Let $\I=(\Delta^\I,\cdot^\I)$ be a DL interpretation and $d\in\Delta^\I$.
The \emph{singleton projection of $\I$ to $d$} is the interpretation $\J=(\set{d},\cdot^{\J})$ such that $A^\J=A^\I\cap\set{d}$ for each $A\in\Nc$, $R^\J=\emptyset$ for each $R\in\Nr$, and $a^\J=d$ for each $a\in\Ni$.
\end{definition}

\begin{lemma}\label{lemma:singleton:concept}
Let $C$ be a $\P$-concept, $\I=(\Delta^\I,\cdot^\I)$ an interpretation, and $\J=(\set{d},\cdot^\J)$ a singleton projection of $\I$ on some element $d\in\Delta^\I$.
Then $C^\J=C^\I\cap\Delta^\J$. 
\end{lemma}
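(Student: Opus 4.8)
The statement is Lemma~\ref{lemma:singleton:concept}: for a $\P$-concept $C$, an interpretation $\I$, and the singleton projection $\J=(\{d\},\cdot^\J)$ of $\I$ to $d$, we have $C^\J = C^\I \cap \Delta^\J$ (where $\Delta^\J=\{d\}$). The plan is a routine structural induction on the construction of the $\P$-concept $C$. Recall that $\P$ is the role-free fragment of $\ALC$, i.e. classical propositional logic, so the only concept constructors to treat are: concept names $A\in\Nc$, the nominals are not available in $\P$ (it is role-free and built from $\Nc$), the constants $\top$ and $\bot$, negation $\neg C$, conjunction $C\sqcap D$, and disjunction $C\sqcup D$ (the latter definable from the former two, but it is cleaner to include it).

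First I would set up the induction. For the base case $C=A$ with $A\in\Nc$: by definition of the singleton projection, $A^\J = A^\I\cap\{d\} = A^\I\cap\Delta^\J$, which is exactly the claim. For $C=\top$: $\top^\J=\Delta^\J=\Delta^\J\cap\Delta^\I$ since $d\in\Delta^\I$, matching $\top^\I\cap\Delta^\J=\Delta^\I\cap\Delta^\J$. For $C=\bot$: both sides are $\emptyset$.

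For the inductive step, assume the claim holds for $\P$-concepts $C$ and $D$. For negation: $(\neg C)^\J = \Delta^\J\setminus C^\J = \Delta^\J\setminus(C^\I\cap\Delta^\J)$ by the induction hypothesis, and this equals $\Delta^\J\setminus C^\I = (\Delta^\I\setminus C^\I)\cap\Delta^\J = (\neg C)^\I\cap\Delta^\J$, using $\Delta^\J\subseteq\Delta^\I$. For conjunction: $(C\sqcap D)^\J = C^\J\cap D^\J = (C^\I\cap\Delta^\J)\cap(D^\I\cap\Delta^\J) = (C^\I\cap D^\I)\cap\Delta^\J = (C\sqcap D)^\I\cap\Delta^\J$. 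Disjunction is analogous (or follows from the negation and conjunction cases). This exhausts the constructors of $\P$, so the induction is complete.

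There is no real obstacle here; the lemma is elementary, and the only thing to be careful about is invoking $\Delta^\J=\{d\}\subseteq\Delta^\I$ (so that relative complements inside $\Delta^\J$ behave correctly under the induction hypothesis) and noting that $\P$ is genuinely role-free, so no $\exists R.C$ or $\forall R.C$ cases arise — which is precisely why the singleton projection, defined to make all roles empty, does not disturb the semantics. The lemma will then be used to show that entailment in $\P$-networks is controlled by singleton models, feeding into the $\EXPTIME$ (cyclic) and $\PSPACE$ (acyclic) upper bounds for $\mathcal{H}$--$\mathcal{P}$.
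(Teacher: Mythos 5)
Your proof is correct: the routine structural induction over the role-free constructors of $\P$ (concept names, $\top$, $\bot$, $\neg$, $\sqcap$, $\sqcup$), using $\Delta^\J=\{d\}\subseteq\Delta^\I$ in the negation case, is exactly the argument the paper leaves implicit, as it states Lemma~\ref{lemma:singleton:concept} without proof. Your remark that no $\exists R.C$/$\forall R.C$ cases arise (which is why setting $R^\J=\emptyset$ is harmless) correctly identifies the only point where the claim could otherwise fail.
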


\begin{corollary}\label{corollary:signleton:entailment}
Let $\alpha=C\sqsubseteq D$ be a $\P$-axiom, $\I=(\Delta^\I,\cdot^\I)$ an interpretation such that $\I\models\alpha$, and $\J$ a singleton projection of $\I$ on an element $d\in\Delta^\I$.
Then $\J\models\alpha$.
\end{corollary}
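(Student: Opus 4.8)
The plan is to obtain this as an immediate consequence of Lemma~\ref{lemma:singleton:concept}. First I would unfold the hypothesis $\I\models\alpha$ into the set inclusion $C^\I\subseteq D^\I$. Since $C$ and $D$ are both $\P$-concepts and $\J$ is the singleton projection of $\I$ on $d$, Lemma~\ref{lemma:singleton:concept} applies to each of them separately, giving $C^\J=C^\I\cap\Delta^\J$ and $D^\J=D^\I\cap\Delta^\J$. Intersecting the inclusion $C^\I\subseteq D^\I$ with $\Delta^\J$ then yields $C^\J\subseteq D^\J$, which is precisely $\J\models C\sqsubseteq D$, i.e.\ $\J\models\alpha$.

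There is essentially no obstacle here: the entire content of the statement has already been isolated in Lemma~\ref{lemma:singleton:concept}, and the corollary is just the observation that entailment of a $\P$-axiom is preserved when one intersects the extensions of both sides with a fixed subdomain. The only point worth a moment's care is that Lemma~\ref{lemma:singleton:concept} is stated for an arbitrary $\P$-concept, so it is legitimate to invoke it twice, once for the left-hand side $C$ and once for the right-hand side $D$ of the inclusion; no induction or case analysis on the shape of $C$ or $D$ is needed beyond what the lemma already supplies. (This preservation property is exactly what will later let us replace a model of a $\P$-ontology by its singleton projections, bounding the number of relevant models.)
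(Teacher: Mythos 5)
Your argument is correct and is exactly the intended one: the paper states this as an immediate corollary of Lemma~\ref{lemma:singleton:concept} without writing out the details, and applying that lemma to both $C$ and $D$ and intersecting the inclusion $C^\I\subseteq D^\I$ with $\Delta^\J$ is precisely the implicit reasoning.
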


\begin{corollary}\label{corollary:signleton:non-entailment}
Let $\alpha=C\sqsubseteq D$ be a $\P$-axiom, 
$\I=(\Delta^\I,\cdot^\I)$ an interpretation such that $\I\not\models\alpha$. 
Then there exists $d\in\Delta^\I$ such that for the singleton projection $\J$ of $\I$ to $d$, $\J\not\models\alpha$.
\end{corollary}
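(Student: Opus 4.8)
The plan is to extract, from a single domain element witnessing non-entailment in $\I$, an element whose singleton projection still witnesses failure of $\alpha$. Since $\I\not\models\alpha$ with $\alpha=C\sqsubseteq D$ means precisely $C^\I\not\subseteq D^\I$, the first step is to fix an element $d\in C^\I\setminus D^\I$; this $d$ will be the point to which we project, and $\J=(\set{d},\cdot^\J)$ the singleton projection of $\I$ to $d$.

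The second step is to apply Lemma~\ref{lemma:singleton:concept} to both $C$ and $D$, which gives $C^\J=C^\I\cap\set{d}$ and $D^\J=D^\I\cap\set{d}$. By the choice of $d$ we have $d\in C^\I$, so $C^\J=\set{d}$, and $d\notin D^\I$, so $D^\J=\varnothing$. Hence $C^\J=\set{d}\not\subseteq\varnothing=D^\J$, i.e.\ $\J\not\models\alpha$, which is the assertion of the corollary.

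There is no genuine obstacle here: the entire content sits in Lemma~\ref{lemma:singleton:concept}, whose proof is a routine structural induction on the $\P$-concept $C$ (the Boolean constructors $\sqcap$, $\sqcup$, $\neg$ all commute with intersecting by the fixed singleton $\Delta^\J$, and $\P$ has no role or nominal-based constructors that could break locality). Corollary~\ref{corollary:signleton:non-entailment} is simply the ``detection'' counterpart of Corollary~\ref{corollary:signleton:entailment}: the latter says every singleton projection preserves satisfaction of a $\P$-axiom, while the present statement says that whenever the axiom fails, at least one singleton projection exhibits the failure. The one point to handle with a little care is that the non-membership $d\notin D^\I$ is what is actually used --- it is exactly what forces $D^\J$ to be \emph{empty}, so that the inclusion $C^\J\subseteq D^\J$ genuinely fails rather than merely becoming an inclusion between smaller sets.
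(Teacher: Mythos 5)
Your proof is correct and is exactly the argument the paper intends: the corollary is stated without proof as an immediate consequence of Lemma~\ref{lemma:singleton:concept}, and your choice of $d\in C^\I\setminus D^\I$ followed by $C^\J=\set{d}$, $D^\J=\emptyset$ is the canonical way to spell it out.
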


Given an ontology network $\N$, a \emph{singleton model agreement} for $\N$ is a model agreement $\mu$ such that
for every $\O$ in $\N$ every interpretation $\I\in\mu(\O)$ is a singleton.

\begin{lemma}\label{lemma:singleton:model:agreement}
Let $\N$ be a $\P$-ontology network, $\O$ an ontology in $\N$, and $\alpha$ a $\P$-axiom such that $\O\not\models_\N\alpha$.
Then there exists a singleton model agreement $\mu'$ for $\N$ and a model $\I'\in\mu'(\O)$ such that $\I'\not\models\alpha$.
\end{lemma}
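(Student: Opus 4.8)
The plan is to start from an arbitrary model agreement $\mu$ witnessing $\O\not\models_\N\alpha$ together with a model $\I\in\mu(\O)$ such that $\I\not\models\alpha$, and then ``pointwise project'' the whole agreement onto a single well-chosen domain element. First, by Corollary~\ref{corollary:signleton:non-entailment}, since $\I\not\models\alpha$ there is an element $d\in\Delta^\I$ such that the singleton projection $\J=(\set{d},\cdot^\J)$ of $\I$ to $d$ satisfies $\J\not\models\alpha$. This $d$ is the element we will project everything onto. I would then define, for every ontology $\O'$ occurring in $\N$, the class $\mu'(\O')$ to consist of the singleton projections to $d$ of all interpretations in $\mu(\O')$ (note every such interpretation has domain $\Delta^\I$, since a model agreement fixes a common domain $\Delta$, so projecting to the fixed element $d\in\Delta$ makes sense uniformly).

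The key steps, in order, are: (i) check that each $\mu'(\O')$ is indeed a class of models of $\O'$ with the common domain $\set{d}$ — this is exactly Corollary~\ref{corollary:signleton:entailment} applied axiom-by-axiom to the $\P$-axioms of $\O'$, using that every $\K\in\mu(\O')$ is a model of $\O'$; (ii) check the agreement condition: for every $\tuple{\O_1,\Sigma,\O_2}\in\N$ and every singleton $\I_1'\in\mu'(\O_1)$, there is $\I_2'\in\mu'(\O_2)$ with $\I_1'=_\Sigma\I_2'$. Here $\I_1'$ is the projection to $d$ of some $\I_1\in\mu(\O_1)$; since $\mu$ is a model agreement there is $\I_2\in\mu(\O_2)$ with $\I_1=_\Sigma\I_2$; take $\I_2'$ to be its projection to $d$. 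Then $\I_1'=_\Sigma\I_2'$ follows because singleton projection to $d$ commutes with reducts: $A^{\I_1'}=A^{\I_1}\cap\set{d}=A^{\I_2}\cap\set{d}=A^{\I_2'}$ for $\Sigma$-concept names $A$, and roles are empty on both sides, and nominals are both interpreted as $d$; the domains coincide, both being $\set{d}$. (iii) Finally, observe $\J\in\mu'(\O)$ since $\J$ is the projection of $\I\in\mu(\O)$ to $d$, and $\J\not\models\alpha$ was arranged in the first step; set $\mu'=\mu'$ and $\I'=\J$.

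The main obstacle — really the only non-bookkeeping point — is verifying that the agreement condition survives projection. This hinges on the fact that we project \emph{every} ontology's model class to the \emph{same} fixed element $d$, so that the ``$\I_1=_\Sigma\I_2$ implies projections agree on $\Sigma$'' implication goes through without having to re-choose the projection point along each edge; the common-domain requirement built into Definition~\ref{def:ontology:network} is what makes ``the same $d$'' meaningful across all ontologies. One should also note that cycles in $\N$ cause no trouble here: the construction of $\mu'$ is a single simultaneous transformation of $\mu$, not an inductive unfolding along paths, so it is insensitive to the shape of the network. Everything else (Lemma~\ref{lemma:singleton:concept} giving $C^\J=C^\I\cap\Delta^\J$, hence the corollaries) is quoted from the preceding results.
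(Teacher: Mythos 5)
Your proposal is correct and matches the paper's own proof essentially step for step: fix the witness element $d$ via Corollary~\ref{corollary:signleton:non-entailment}, project every class $\mu(\O')$ to $d$, use Corollary~\ref{corollary:signleton:entailment} for modelhood, and verify the agreement condition by projecting the matching $\I_2$ to the same $d$. Nothing to add.
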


\begin{proof}
Since $\O\not\models_\N\alpha$, there exists a model agreement $\mu$ over a domain $\Delta$ such that for some $\I\in\mu(\O)$ we have $\I\not\models\alpha$.
By Corollary~\ref{corollary:signleton:non-entailment}, there exists an element $d\in\Delta$ such that for the singleton projection $\I'$ of
$\I$ to $d$, we have $\I'\not\models\alpha$. 
Now define a mapping $\mu'$ by setting $\mu'(\O)$, for every ontology $\O$ in $\N$, to consist of the singleton projections of the interpretations in $\mu(\O)$ to $d$.
By Corollary~\ref{corollary:signleton:entailment}, we have $\I'\models\O$, for all $\I'\in\mu'(\O)$.
To prove that $\mu'$ is a model agreement it remains to show that if $\tuple{\O_1,\Si,\O_2}\in\N$ and $\I_1'\in\mu'(\O_1)$
then there exists $\I_2'\in\mu'(\O_2)$ such that $\I_1'=_\Si\I_2'$.
Indeed, since $\I_1'\in\mu'(\O_1)$ then $\I_1'$ is a singleton projection of some $\I_1\in\mu(\O_1)$.
Since $\mu$ is a model agreement, there exists $\I_2\in\mu(\O_2)$ such that $\I_1=_\Si\I_2$.
Let $\I_2'$ be the singleton projection of $\I_2$ to $d$.
Then $\I_2'\in\mu'(\O_2)$ by definition of $\mu'$.
Furthermore, since $\I_1=_\Si\I_2$, it is easy to see by the Definition~\ref{definition:singleton} that $\I_1'=_\Si\I_2'$.    
\end{proof}

Lemma~\ref{lemma:singleton:model:agreement} implies, in particular, that to check the entailment in $\P$ networks, it is sufficient to restrict  attention only to singleton model agreements.
W.l.o.g., one can assume that these singleton model agreements have the same domain. 
Similarly, only interpretation of symbols that appear in $\N$ or in the checked axiom $\alpha$ counts. 
Since the number of interpretations of concept names over one element domain is at most exponential in the number of concept names, for checking entailment $\O\models_\N\alpha$ it is sufficient to restrict attention to only exponentially-many singleton model agreements in the size of $\N$ and $\alpha$.
This gives a simple $\NEXPTIME$ algorithm for checking whether $\O\not\models_\N\alpha$: guess a singleton model agreement $\mu$ (of an exponential size), and check whether $\I\not\models\alpha$ for some $\I\in\mu(\O)$.
It is, however, possible to find the required model agreement deterministically, thereby reducing the complexity to $\EXPTIME$.

\begin{theorem}\label{Teo_UpperBound_PGeneral}
There is an $\EXPTIME$ procedure that given a $\P$-ontology network $\N$, an ontology $\O$  in $\N$, and a $\P$-axiom $\alpha$,
checks whether $\O\models_\N\alpha$.
\end{theorem}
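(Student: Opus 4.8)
The plan is to decide $\O\models_\N\alpha$ by computing, via a greatest fixpoint iteration over a fixed one‑element domain, the largest singleton model agreement for $\N$, and then reading off the answer from it.

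\smallskip

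\emph{Reduction to a fixed one‑element domain.} First I would record a simple closure property: model agreements for $\N$ over a common domain are closed under componentwise union (the condition ``$\forall\I_1\in\mu(\O_1)\,\exists\I_2\in\mu(\O_2)\colon\I_1=_\Sigma\I_2$'' is preserved when enlarging the $\mu(\O_i)$). Combining this with Lemma~\ref{lemma:singleton:model:agreement}, I claim $\O\not\models_\N\alpha$ iff there is a singleton model agreement $\mu$ \emph{over a fixed one‑element domain $\{d\}$} and some $\I\in\mu(\O)$ with $\I\not\models\alpha$: the ``if'' direction is immediate, and for ``only if'' Lemma~\ref{lemma:singleton:model:agreement} produces such a $\mu$ over some one‑element domain, which can be renamed to $\{d\}$. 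Over $\{d\}$ a $\P$‑interpretation is determined by the set $t\subseteq\Gamma$ of concept names it makes true, where $\Gamma:=(\sig(\N)\cup\sig(\alpha))\cap\Nc$ (roles are empty, and $\P$ has no nominals); call such a $t$ a \emph{type}. There are only $2^{|\Gamma|}$ types, so a singleton model agreement over $\{d\}$ is a finite object, namely an assignment $S=(S_\O)_\O$ of a set of types $S_\O\subseteq 2^{\Gamma}$ to each ontology $\O$ occurring in $\N$.

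\smallskip

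\emph{Fixpoint and correctness.} For an assignment $S$, let $F(S)$ be the assignment with $F(S)_\O=\{\,t\in S_\O\mid\text{for every }\tuple{\O,\Sigma,\O'}\in\N\text{ there is }t'\in S_{\O'}\text{ with }t\cap\Sigma=t'\cap\Sigma\,\}$. Then $F$ is monotone and satisfies $F(S)\subseteq S$ componentwise; moreover an assignment $S$ with $S_\O\subseteq\{t\mid t\models\O\}$ is a singleton model agreement over $\{d\}$ precisely when $F(S)=S$. By the union property, the greatest singleton model agreement over $\{d\}$ is the greatest fixpoint of $F$ below the initial assignment $S^{0}$ with $S^{0}_\O=\{t\subseteq\Gamma\mid t\models\O\}$, and this equals the stabilization $S^{\infty}=\bigcap_{k\ge 0}F^{k}(S^{0})$ of the decreasing chain $S^{0}\supseteq F(S^{0})\supseteq\cdots$ in the finite lattice. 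Hence, for a singleton $\I$ over $\{d\}$ we have $\I\models_\N\O$ iff $\I\in S^{\infty}_\O$, and combining this with the first paragraph, $\O\models_\N\alpha$ iff every type $t\in S^{\infty}_\O$ satisfies $\alpha$ (where satisfaction of the $\P$‑axiom $\alpha$ at $t$ is ordinary propositional evaluation, cf.\ Lemma~\ref{lemma:singleton:concept}). This is the procedure: compute $S^{0}$, iterate $F$ to reach $S^{\infty}$, then test $\alpha$ against all types in $S^{\infty}_\O$.

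\smallskip

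\emph{Complexity and main obstacle.} Since $|\Gamma|$ is at most the input size $n$, there are $\le 2^{n}$ types and $\le n$ ontologies, so each $S^{0}_\O$ (computed by testing each type against $\O$ in polynomial time) has size $\le 2^{n}$, and $\sum_\O|S^{0}_\O|=2^{O(n)}$. One application of $F$ scans, for each ontology, each surviving type and each outgoing import, the target type set for a witness — total $2^{O(n)}$ work. Each non‑final iteration removes at least one type from some $S_\O$, so the chain stabilizes after $\le 2^{O(n)}$ iterations; the final test against $\alpha$ is again $2^{O(n)}$. Altogether the procedure runs in $\EXPTIME$. I expect the only genuinely delicate part to be the correctness glue — that the greatest fixpoint of $F$ captures \emph{all} singleton interpretations $\I$ with $\I\models_\N\O$ (not merely some) and that restricting to a single fixed domain is without loss of generality — which rests squarely on closure of model agreements under unions together with Lemma~\ref{lemma:singleton:model:agreement}; the rest is a routine finite fixpoint computation.
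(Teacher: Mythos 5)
Your proposal is correct and follows essentially the same route as the paper: the paper's procedure initializes each ontology with all (model) types over a fixed one-element domain and repeatedly removes ``defects'' (types lacking a witness across some import relation) until stabilization, which is exactly your greatest-fixpoint iteration of $F$ below $S^0$; its correctness argument, showing by induction that the type-assignment induced by the singleton model agreement from Lemma~\ref{lemma:singleton:model:agreement} survives every removal step, is the same as your observation that every singleton model agreement lies below the greatest fixpoint. Your explicit appeal to closure of model agreements under componentwise unions is only a presentational variant of that containment argument.
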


\begin{proof} 
Let $\Si$ be the set of all signature symbols appearing in $\N$ and $\alpha$. 
Since $\N$ and $\alpha$ are formulated in $\P$, $\Si$ consists of only concept names.
Let $d$ be a fixed (domain) element.
For every subset $s\subseteq\Si$, let $\I(s)=(\set{d},\cdot^{\I(s)})$ be a singleton interpretation defined by $A^{\I(s)}=\set{d}$ if $A\in s$ and $A^{\I(s)}=\emptyset$ otherwise.
Finally, let $m$ be a mapping defined by $m(\O)=2^\Si$ for all ontologies $\O$ in $\N$.
Clearly, the mapping $m$ can be constructed in exponential time in the size of $\N$.

The mapping $m$ corresponds to the assignment $\mu$ 
of singleton interpretations to ontologies in $\N$ defined as $\mu(\O)=\set{\I(s)\mid s\in m(\O)}$.
This assignment, however, is not necessarily a model agreement for $\N$ according to Definition~\ref{def:ontology:network}.
First, not all interpretations $\I(s)$ for $s\in m(\O)$ are models of $\O$.
Second, it is not guaranteed that for every $\tuple{\O_1,\Si,\O_2}\in\N$ and every $s_1\in m(\O_1)$ there exists $s_2\in m(\O_2)$ such that $\I(s_1)=_\Si\I(s_2)$.
To fix the defects of the first type, we remove from $m(\O)$ all sets $s$ such that $\I(s)\not\models\O$. 
(It is easy to check in polynomial time if a singleton interpretation is a model of an ontology).   
To fix the defects of the second type, we remove all $s_1\in m(\O_1)$ for which there exists no $s_2\in m(\O_2)$ such that $s_1\cap\Si=s_2\cap\Si$.
We repeat performing this operation until no defects are left.

Clearly, both operations can be performed in exponential time in $\N$ since there are at most exponentially-many values $s$ that can be removed.
Finally, to decide whether $\O\models_\N\alpha$, we check whether $\I(s)\models\alpha$ for all $s\in m(\O)$.
If this property holds, we return $\O\models_\N\alpha$; otherwise, we return $\O\not\models_\N\alpha$.

We claim that the above algorithm is correct.
Indeed, if $\O\not\models_\N\alpha$ is returned then there is a model agreement $\mu$ 
and an interpretation $\I\not\models\alpha$ such that $\I\in\mu(\O)$.

Conversely, if $\O\not\models_\N\alpha$ then there exists a model agreement $\mu$ for $\N$ such that $\I\not\models\alpha$ for some $\I\in\mu(\O)$. 
Then by Lemma~\ref{lemma:singleton:model:agreement}, there exists a singleton model agreement $\mu'$ for $\N$ such that $\I'\not\models\alpha$ for some $\I'\in\mu'(\O)$. 
For a singleton interpretation $\I$, let $s(\I)=\set{A\mid A^\I\neq\emptyset}$.
W.l.o.g., $\I(s(\I'))=\I'$ for each $\I'\in \mu'(\O)$, with $\O$ in $\N$.
Let $m'$ be a mapping defined by $m'(\O)=\set{s(\I)\mid \I\in\mu'(\O)}$ for each $\O$ in $\N$.
So $\I(s)\in\mu'(\O)$ for every $s\in m'(\O)$ and $\O$ in $\N$.
By induction over the construction of $m$, it is easy to show that $m'(\O)\subseteq m(\O)$ for every $\O$ in $\N$.
Indeed, since $m(\O)$ is initialized with all subsets of the signature $\Si$, $m'(\O)\subseteq m(\O)$ holds in the beginning.
Furthermore, for each $s\in m'(\O)$, we have $\I(s)\in\mu'(\O)$, and so, $\I(s)\models\O$. 
Thus, $s$ cannot be removed from $m(\O)$ as a defect of the first type.
Similarly, for every $\tuple{\O_1,\Si,\O_2}\in\N$ and every $s_1\in m'(\O_1)$, we have $\I_1=\I(s_1)\in\mu'(\O_1)$.
Since $\mu'$ is a model agreement, there exists $\I_2\in\mu'(\O_2)$ such that $\I_1=_{\Si}\I_2$.
Hence for $s_2=s(\I_2)\in m'(\O_2)$, we have $s_1\cap\Si=s_2\cap\Si$.
Therefore, $s_1$ cannot be removed from $m(\O_1)$ as a defect of the second type.
Finally, since $\I'\not\models\alpha$ for some $\I'\in\mu'(\O)$, we have $s'=s(\I')\in m'(\O)\subseteq m(\O)$.
Hence, our algorithm returns $\O\not\models_\N\alpha$.
\end{proof}

It is possible to improve the upper bound obtained in Theorem~\ref{Teo_UpperBound_PGeneral} for acyclic $\P$-ontology networks.

\begin{theorem}\label{Teo_UpperBound_PAcyclic}
There is a $\PSPACE$ procedure that given an acyclic $\P$-ontology network $\N$, an ontology $\O$ in $\N$, and a $\P$-axiom $\alpha$,
checks whether $\O\models_\N\alpha$.
\end{theorem}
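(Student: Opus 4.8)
The plan is to adapt the fixpoint-style argument of Theorem~\ref{Teo_UpperBound_PGeneral} to the acyclic setting, trading the explicit exponential-size tables $m(\O)$ for on-the-fly recursion that explores one import path at a time. Since $\N$ is acyclic, the relation ``$\O'$ is reachable from $\O$'' is a strict partial order, so we may recurse on the structure of $\N$. For an ontology $\O$ in $\N$ and a singleton interpretation $\I$ over the fixed element $d$, define $\I$ to be \emph{admissible for $\O$} if $\I\models\O$ and for every import relation $\tuple{\O,\Si,\O_2}\in\N$ there exists a singleton $\I_2$ admissible for $\O_2$ with $\I=_\Si\I_2$. An immediate induction on the height of $\O$ in the reachability order (using Lemma~\ref{lemma:singleton:model:agreement} and the construction of singleton model agreements in its proof) shows that $\I\models_\N\O$ if and only if the singleton projection of $\I$ is admissible for $\O$; hence $\O\models_\N\alpha$ iff every singleton $\I$ admissible for $\O$ satisfies $\alpha$.

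The key algorithmic step is a recursive procedure $\textsf{Adm}(\O,\I)$ that decides admissibility of a singleton $\I$ for $\O$ in polynomial space. It first checks $\I\models\O$ (polynomial time, as noted in the proof of Theorem~\ref{Teo_UpperBound_PGeneral}). Then, for each import relation $\tuple{\O,\Si,\O_2}\in\N$, it must verify the existence of a singleton $\I_2$ with $\I_2=_\Si\I$ that is admissible for $\O_2$; since a singleton over $d$ is determined by which concept names of $\Si$ (where $\Si$ is the full signature of $\N$ and $\alpha$) it makes nonempty, this amounts to iterating over the at most $2^{|\Si|}$ extensions of $\I|_\Si$ to the remaining concept names and calling $\textsf{Adm}(\O_2,\I_2)$ recursively on each, accepting as soon as one succeeds. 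To decide $\O\models_\N\alpha$ we then iterate over all singleton interpretations $\I$, and for each one with $\textsf{Adm}(\O,\I)$ and $\I\not\models\alpha$ we reject; if none is found we accept.

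For the space bound, observe that the recursion depth is bounded by the length of the longest import path in $\N$, which is at most $|\N|$ (polynomial), because $\N$ is acyclic; at each level the procedure stores only the current ontology pointer, the current singleton (a subset of $\Si$, i.e.\ polynomially many bits), and a counter over the at most $2^{|\Si|}$ candidate extensions, which again is polynomially many bits. Hence the total working space is polynomial, and reusing space across the outer iteration over $\I$ keeps the whole procedure in $\PSPACE$. The main obstacle is the correctness argument for $\textsf{Adm}$: one must show that the ``local'' existential choices made independently along different branches of the recursion can be assembled into a single global model agreement. This is where acyclicity is essential — since no ontology is reachable from itself, the witnesses $\I_p$ chosen along distinct import paths need not be mutually consistent, so one takes the model agreement $\mu$ with $\mu(\O')=\{\,\I_p\mid p\in\paths(\N,\O),\ \last(p)=\O'\,\}$, exactly as in the proof of Lemma~\ref{lemma:singleton:model:agreement} and Lemma~\ref{lemma:renamed:closure:soundness}; acyclicity guarantees $\paths(\N,\O)$ is finite so this $\mu$ is well defined, and the model-agreement condition follows directly from the fact that each recursive call respected $=_\Si$.
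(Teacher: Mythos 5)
Your proposal is correct and follows essentially the same route as the paper's own proof: reduce to singleton model agreements via Lemma~\ref{lemma:singleton:model:agreement}, run a depth-first recursive admissibility check (the paper's $P(\O,s)$) whose recursion depth is bounded thanks to acyclicity, store only the current ontology, subset of $\Si$, and iteration counters to stay in polynomial space, and assemble the locally chosen witnesses into a global model agreement for soundness. The only nit is that your intermediate claim ``$\I\models_\N\O$ iff the singleton projection of $\I$ is admissible'' should be read for singleton $\I$ only (the ``if'' direction fails for arbitrary $\I$), but the consequence you actually use --- $\O\models_\N\alpha$ iff every admissible singleton satisfies $\alpha$ --- is exactly what the paper establishes.
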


\begin{proof}
As in the proof of Theorem~\ref{Teo_UpperBound_PGeneral}, let $\Si$ be the set of all signature symbols appearing in $\N$ and $\alpha$, and $d$ a fixed (domain) element.
For each $s\subseteq\Si$, let $\I(s)=(\set{d},\cdot^\I(s))$ be a singleton interpretation defined by $A^{\I(s)}=\set{d}$ if $A\in s$ and $A^{\I(s)}=\emptyset$ otherwise. 

We describe a recursive procedure $P(\O,s)$ that given an ontology $\O$ in $\N$ and $s\subseteq\Si$ returns \emph{true} if there exists a model agreement $\mu$ for $\N$ such that $\I(s)\in\mu(\O)$, and returns \emph{false} otherwise.
The procedure works as follows.
If $\I(s)\not\models\O$, $P(\O,s)$ return \emph{false}.
Otherwise, we iterate over all $\tuple{\O,\Si,\O'}\in\N$ and for every $s'\subseteq\Si$ such that $s\cap\Si=s'\cap\Si$ and run $P(\O',s')$ recursively. 
If for each $\tuple{\O,\Si,\O'}\in\N$ some of the recursive call $P(\O',s')$ returned \emph{true}, we return \emph{true} for $P(\O,s)$.
Otherwise, we return \emph{false}.

Clearly, $P(\O,s)$ always terminates since $\N$ is acyclic.
Furthermore, the procedure can be implemented in polynomial space in the size of $\N$ and $\alpha$, since the recursion depth is bounded by the size of $\N$ 
 and at every recursive call, only the input values $\O$ and $s$ need to be saved (assuming the iterations over the import relations and subsets of $\Sigma$ use some fixed order).

Next we show that our procedure is correct.
Assume that $P(\O,s)$ returns \emph{true} for some input $\O$ and $s\subseteq\Si$.
For each ontology $\O$ in $\N$, let $m(\O)$ be the set of all $s\subseteq\Si$ such that there was a (recursive) call $P(\O,s)$ with the output \emph{true}.
Let $\mu$ be a mapping defined by $\mu(\O)=\set{\I(s)\mid s\in m(\O)}$.
We claim that $\mu$ is a model agreement for $\N$.

Indeed, since $P(\O,s)$ returns \emph{true} only if $\I(s)\models\O$, for every $\O$ in $\N$ and $\I\in\mu(\O)$ we have $\I\models\O$.
Furthermore, if $\tuple{\O,\Si,\O'}\in\N$ and $\I\in\mu(\O)$ then, by definition of $\mu$, there exists $s\in m(\O)$ such that $\I=\I(s)$ and $P(s,\O)$ returned \emph{true}. 
In particular, since $\tuple{\O,\Si,\O'}\in\N$, there was a recursive call $P(s',\O')$ that returned \emph{true} for some $s'\subseteq\Si$ such that $s\cap\Si=s'\cap\Si$.
By the definition of $m$, this means that $s'\in m(\O')$.
Therefore, $\I=_\Si\I(s')\in\mu(\O')$, by the definition of $\mu$.
Thus, $\mu$ is a model agreement for $\N$.

Conversely, assume that there exists a model agreement $\mu$ for $\N$ such that $\I(s)\in\mu(\O)$.
Since $\N$ is a $\P$-ontology network, w.l.o.g., for every $\O$ in $\N$ and every $\I\in\mu(\O)$, there exists $s\subseteq\Si$ such that $\I=\I(s)$.
We prove that $P(\O,s)$ returns \emph{true} for every $\O$ and $s$ such that $\I(s)\in\mu(\O)$.
Indeed, assume to the contrary that $P(\O,s)$ returns \emph{false} for some $s$ such that $\I(s)\in\mu(\O)$
and when executing $P(\O,s)$, there was no other recursive call to $P(\O',s')$ that returned \emph{false} for some $s'$ such that $\I(s')\in\mu(\O')$.
Since $\I(s)\in\mu(\O)$, we have $\I(s)\models\O$, thus $P(\O,s)$ cannot return \emph{false} due to $\I(s)\not\models\O$. 
Hence there exists $\tuple{\O,\Si,\O'}\in\N$ such that for every $s'\subseteq\Si$ with $s\cap\Si=s'\cap\Si$, the recursive call of $P(\O',s')$ returned \emph{false}.
Then $\I(s')\notin\mu(\O')$ for all such $s'$ by our assumption above.
Since $\mu$ is a model agreement, $\tuple{\O,\Si,\O'}\in\N$, and $\I(s)\in\mu(\O)$, there exists $\I'\in\mu(\O')$ such that $\I(s)=_\Si\I'$.  
Then $\I'=\I(s')$ for some $s'$ such that $s\cap\Si=s'\cap\Si$.
This gives us a contradiction since $\I(s')\in\mu(\O')$ for no $s'$, with $s\cap\Si=s'\cap\Si$.    

Now, to check whether $\O\models_\N\alpha$ using the procedure $P$, we enumerate all $s\subseteq\Sigma$ and check whether $\I(s)\not\models\alpha$ and $P(\O,s)$ returns \emph{true}.
We return $\O\not\models_\N\alpha$ if such $s$ exists, and $\O\models_\N\alpha$ otherwise.
This algorithm is correct.
Indeed, if such $s$ exists, then there exists a model agreement $\mu$ for $\N$ such that $\I(s)\in\mu(\O)$.
Hence, $\O\not\models_\N\alpha$.
Conversely, if $\O\not\models_\N\alpha$ then there exists a model agreement $\mu$ for $\N$ and some $\I\in\mu(\O)$ such that $\I\models\O$ and $\I\not\models\alpha$.
By Lemma~\ref{lemma:singleton:model:agreement}, w.l.o.g. one can assume that $\mu$ is a singleton model agreement.
Then there exists some $s\subseteq\Sigma$ such that $\I(s)\in\mu(\O)$ and $\I(s)\not\models\alpha$.
Since $\I(s)\in\mu(\O)$, $P(\O,s)$ should return \emph{true}. 
Hence, our algorithm returns $\O\not\models_\N\alpha$.
\end{proof}


\commentout{ 
\Todo{Replace reachability by importing}

Whenever the logic $\LL$ is not specified, we omit the prefix $\LL$ when referring to a network. Given a network $\N=(V, E,\tau)$ and a node $v\in V$, we use the notation $\O_v$ for the label ontology $\tau(v)$.  We say that a node $v\in V$ is \textit{reachable} from some $u\in V$ in $\N$ if there exists a \textit{path} $p$ from $u$ to $v$, i.e. a sequence of nodes $v_1,\ldots , v_n$ in $\N$ such that $v_1=u$, $v_n=v$, and $\langle v_i \Si_i v_{i+1}\rangle\in\N$, for some signature $\Si_i$. If $\si$ is a signature which is a subset of every $\Si_i$, then $v$ is said to be $\si$-reachable from $u$ (via the path $p$). An ontology network is \textit{finite} if it has a finite set of nodes. A network $\N=(V, E,\tau)$ is called \textit{acyclic} if no node of $\N$ is reachable from itself. For a node $v\in V$, network $\N$ is called $v$-pointed if any other node $u\in V$ is reachable from $v$.  Network $\N$ is called \textit{tree-shaped} if it is $v$-pointed for some node $v\in V$ (called \textit{root}) and for any other vertex $u\in V$ there is exactly one path from $v$ to $u$. \medskip

The semantics of an ontology network $\N$ is defined in terms of a mapping of nodes of $\N$ to models of their label ontologies which respect the connections between the vertices. First, we introduce the notion of interpretation function for an ontology network and then define the semantics. \smallskip

Let $\N=(V,E,\tau)$ be an ontology network and $\mathfrak{I}$ be the class of all interpretations. A map $f: V \rightarrow 2^\mathfrak{I}$ is called interpretation function of $\N$ (or \textit{interpretation} of $\N$, for short) if it satisfies the following properties: 

\begin{itemize}
\item for all $u\in V$, $f(u)$ is a subset of models of $\O_u$;
\item for any model $\I\in f(u)$ and any $v\in V$, with $\langle u \Si v\rangle\in\N$, there exists a model $\J\in f(v)$ such that $\I=_\Si \J$.
\end{itemize}

Note that by this definition, the function mapping each vertex $v\in V$ to the empty set of models of $\O_v$ is an interpretation of $\N$. Consider the following relation between interpretations of $\N$: $f_1\leqslant f_2$ if for every $v\in V$, we have $f_1(v)\subseteq f_2(v)$. Clearly, $\leqslant$ is a partial order on interpretations. Define the point-wise union of interpretations as follows: if $\mathfrak{F}$ is a non-empty subset of interpretations of $\N$ then $g=\bigcup_{f\in \mathfrak{F}} f$ is a map such that $g(v)=\bigcup_{f\in\mathfrak{F}} f(v)$, for $v\in V$. It is easy to see that $g$ is an interpretation of $\N$ and the set of all interpretations of $\N$ with the relation $\leqslant$ is a complete lattice, with the supremum of a subset of interpretations equal to their point-wise union. 

\begin{definition}[Semantics and Local Entailment]\label{De_SupremumSemantics}
Let $\N=(V,E,\tau)$ be an ontology network and $L$ be the lattice of interpretations of $\N$. The semantics function $\nu$ for $\N$ (or \textit{semantics} of $\N$, for short) is the supremum of $L$.

We say that a concept inclusion $C\dleq D$ is (locally) entailed in $\N$ at a node $v\in V$ (in symbols, $(\N,v)\models C\dleq D$) if $C\dleq D$ holds in every model from $\nu(v)$.
\end{definition}

The semantics of $\N$ can be also characterized with the help of the following inductive definition.

\begin{definition}[$m$-agreed Model]\label{De_nAgreedModel}
Let $\N=(V,E,\tau)$ be an ontology network and $v\in V$ be a node. 
Any model of $\O_v$ is called $0$-agreed. For a natural number $m$, a model $\I$ of $\O_v$ is called $m$-agreed if for any $u\in V$ such that $\langle v \Si u\rangle\in\N$, there exists an $(m-1)$-agreed model $\J$ of $\O_u$ such that $\I=_\Si\ J$.
\end{definition}

It is easy to verify that if $v\in V$, and $\I$ is a model of $\O_v$, then $\I\in \nu(v)$ iff $\I$ is $m$-agreed for all $m$.

\Todo{Examples of different importing situations: transitivity, import by different paths (diamond), second-order effects}

\Todo{Show as examples: 
1) how to express axioms $A\sqsubseteq\exists R^{2^n}.B$ and $\exists R^{2^n}.B\sqsubseteq A$ in acyclic $\EL$ networks; 
2) the lack of finite model property for entailment in cyclic $\EL$ networks}

Note that entailment in ontology networks allows to obtain no more inferences then entailment from the union of label ontologies, as formulated below. 

\begin{lemma}[Relationship to Entailment from Union]\label{Lem_ModeloftheUnionIsAgreed}
Let $\N=(V,E,\tau)$ be a $v$-pointed ontology network for some node $v\in V$. For any interpretation $\I$, if $\I\models\bigcup_{u\in V}\O_u$ then $\I\in\nu(v)$ and hence, for any concept inclusion $C\dleq D$, $(\N,v)\models C\dleq D$ yields $\bigcup_{u\in V}\O_u\models C\dleq D$.
\end{lemma}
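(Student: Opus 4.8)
The plan is to produce a single interpretation function of $\N$ that places $\I$ at node $v$, and then to read off the entailment consequence. Recall that $\nu$ is the supremum of the lattice $L$ of all interpretation functions of $\N$, i.e. the pointwise union $\nu(u)=\bigcup_{g\in L}g(u)$. Hence, to conclude $\I\in\nu(v)$ it suffices to exhibit one interpretation function $f\in L$ with $\I\in f(v)$: since $f\leqslant\nu$, this immediately yields $\I\in f(v)\subseteq\nu(v)$.

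First I would note that $\I\models\bigcup_{u\in V}\O_u$ means exactly $\I\models\O_u$ for every node $u\in V$. This suggests taking the constant map $f(u)=\set{\I}$ for all $u\in V$ and checking that it is an interpretation function. The first defining clause, that each $f(u)$ is a set of models of $\O_u$, holds because $\I\models\O_u$. For the second clause, let $\I\in f(u)$ and let $\langle u,\Si,w\rangle\in\N$ be any edge; I must supply some $\J\in f(w)$ with $\I=_\Si\J$. As $f(w)=\set{\I}$, I simply take $\J=\I$, and $\I=_\Si\I$ holds reflexively. Thus $f\in L$, so $\I\in f(v)\subseteq\nu(v)$, proving the first claim. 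Equivalently, one could invoke the characterization that $\I\in\nu(v)$ iff $\I$ is $m$-agreed for every $m$, and verify by induction on $m$ that $\I$ is simultaneously $m$-agreed at every node, the inductive step again being witnessed along each outgoing edge by $\J=\I$.

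The ``hence'' part then follows immediately: given any $\I\models\bigcup_{u\in V}\O_u$, the first part gives $\I\in\nu(v)$, so the hypothesis $(\N,v)\models C\dleq D$ forces $\I\models C\dleq D$; as $\I$ was an arbitrary model of the union, this is precisely $\bigcup_{u\in V}\O_u\models C\dleq D$. The only point needing care---hardly an obstacle---is the verification that the constant map $f\equiv\set{\I}$ satisfies both clauses of the definition of an interpretation function, the signature-agreement clause being met reflexively by $\I=_\Si\I$. I would also remark that $v$-pointedness is not actually needed for the argument, since the union already ranges over all nodes and thereby makes $\I$ a model of every label ontology; the hypothesis merely ensures that this union captures exactly the ontologies reachable from $v$.
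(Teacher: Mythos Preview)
Your proof is correct and follows essentially the same approach as the paper: define the constant map $f(u)=\{\I\}$, observe it is an interpretation function of $\N$, and conclude $\I\in f(v)\subseteq\nu(v)$. The paper compresses the verification of the two defining clauses into a single ``clearly'', whereas you spell them out and add the (optional) $m$-agreed variant and the remark on $v$-pointedness, but the underlying argument is identical.
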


\begin{proof} 
Take a model $\I\models\bigcup_{u\in V}\O_u$ and consider the map $f:V\mapsto \{\I\}$. Clearly, this is an interpretation of $\N$, hence $f(v)\leqslant\nu(v)$ and $\I\in\nu(v)$. 
\end{proof}

From the computational point of view, we are interested in the complexity of the entailment problem in ontology networks, as well as restrictions on network topologies and labellings that can influence the complexity. 

\begin{definition}[The Problem of Entailment wrt Imports]\label{De_EntailmentProblem}
The entailment problem (wrt imports) is to decide for a finite ontology network $\N$, a node $v$, and a concept inclusion $C\dleq D$, whether $(\N,v)\models C\dleq D$.
\end{definition}

When studying the complexity of this problem, we consider the size of a network $\N=(V,E,\tau)$ as the sum of sizes of its label ontologies (assuming that an ontology size is the total length of its axioms considered as strings). 

\begin{lemma}[Reduction to Pointed Networks]\label{Lem_Reduction2PointedNetworks}
For a network $\N$, a node $v$, and a concept inclusion $C\dleq D$ it holds $(\N,v)\models C\dleq D$ iff $C\dleq D$ is entailed at $v$ in the network obtained by reducing $\N$ onto the nodes reachable from $v$.
\end{lemma}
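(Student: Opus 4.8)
Let $\N'$ be the network obtained from $\N$ by deleting every node that is not reachable from $v$, together with all its incident edges, and let $V'$ be the node set of $\N'$; note $v\in V'$, since $v$ is reachable from itself via the trivial one-node path. The plan is to prove the sharper statement that the two networks induce the same semantics at $v$, i.e.\ $\nu_\N(v)=\nu_{\N'}(v)$, where $\nu_\N$ and $\nu_{\N'}$ denote the respective semantics functions. The equivalence claimed in the lemma is then immediate, since $(\N,v)\models C\dleq D$ holds precisely when $C\dleq D$ is satisfied in every model belonging to $\nu_\N(v)$, and likewise for $\N'$.

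The one structural observation the whole argument rests on is that $V'$ is closed under following edges: if $u\in V'$ and $\tuple{u,\Si,w}\in\N$, then appending this edge to a path witnessing the reachability of $u$ from $v$ witnesses the reachability of $w$, so $w\in V'$. Consequently the edges of $\N'$ are exactly the edges of $\N$ both of whose endpoints lie in $V'$, and in particular no edge of $\N$ with source in $V'$ is lost in $\N'$.

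For the inclusion $\nu_\N(v)\subseteq\nu_{\N'}(v)$, I would use that $\nu_\N(v)$ is the pointwise union of $f(v)$ over all interpretation functions $f$ of $\N$. Given such an $f$, the restriction $f|_{V'}$ is an interpretation function of $\N'$: each value $f|_{V'}(u)$ is a set of models of $\O_u$, and for every edge $\tuple{u,\Si,w}\in\N'$ — which is also an edge of $\N$, with $w\in V'$ — the correspondence condition is inherited from $f$, since $f|_{V'}$ agrees with $f$ on both $u$ and $w$. Hence $f(v)=f|_{V'}(v)\subseteq\nu_{\N'}(v)$, and taking the union over all $f$ gives the inclusion. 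For the reverse inclusion $\nu_{\N'}(v)\subseteq\nu_\N(v)$, I would take any interpretation function $g$ of $\N'$ and extend it to a map $\hat{g}$ on all nodes of $\N$ by $\hat{g}(u)=g(u)$ for $u\in V'$ and $\hat{g}(u)=\varnothing$ otherwise. Then $\hat{g}$ is an interpretation function of $\N$: the model condition is trivial (the empty set of models is admissible, as already noted for the everywhere-empty interpretation function), and the correspondence condition for an edge $\tuple{u,\Si,w}\in\N$ is vacuous when $u\notin V'$ and, when $u\in V'$, reduces to the condition already satisfied by $g$, since then $w\in V'$ and $\tuple{u,\Si,w}\in\N'$. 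Hence $g(v)=\hat{g}(v)\subseteq\nu_\N(v)$, and unioning over all $g$ completes the second inclusion. Combining the two, $\nu_\N(v)=\nu_{\N'}(v)$, which proves the lemma; as a by-product $\N'$ is $v$-pointed, every node of $\N'$ being reachable from $v$ already inside $\N'$ by the same closure observation, which is what the name ``pointed'' refers to.

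I do not expect a genuine obstacle here. The only step that needs a moment's care is the edge-closure of $V'$, since that is exactly what guarantees that restricting an interpretation function of $\N$ and zero-padding an interpretation function of $\N'$ both land in the intended class. An alternative route, bypassing the lattice of interpretation functions, would be to use the inductive ``$m$-agreed model'' characterization of $\nu$ and show by induction on $m$ that a model of $\O_v$ is $m$-agreed in $\N$ iff it is $m$-agreed in $\N'$, the induction step again being a one-line consequence of the closure property; but the direct argument above appears to be the shortest.
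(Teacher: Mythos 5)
Your proof is correct. The paper states this lemma without proof (treating it as an immediate consequence of the definitions), and your argument --- establishing $\nu_{\N}(v)=\nu_{\N'}(v)$ by restricting interpretation functions of $\N$ to the reachable nodes and, conversely, extending interpretation functions of $\N'$ by the empty set elsewhere, with the edge-closure of the reachable set justifying both directions --- is exactly the intended one.
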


For this reason, all the complexity proofs will be given for pointed networks.

\Todo{Illustrate how entailment in networks can be reduce to entailment from (a possibly infinite) set of DL axioms. Show that for acyclic networks this set can be bounded by an exponential function in the size of the network.}

\Todo{Add a short statement in this section, showing implementation of OWL:imports}

In this section, we define unfolding of an ontology network which is analogous to the well-known tree unravelling of Kripke models from the graph-theoretic point of view. 

\Todo{Do not remove these Kripke-like statements, if possible. It seems, this analogy can lead to deep results on theory networks.}

\begin{definition}[Tree Unfolding of Ontology Network]\label{De_NonRenamedTreeUnfolding}
Let $\N=(V,E,\tau)$ be a $v$-pointed ontology network. A tree unfolding of $\N$ is a network $\N'=(V',E',\tau')$ defined as follows. The set $V'$ consists of all finite sequences of nodes $u={[}v_0,\ldots v_n{]}$ corresponding to a path in $\N$ such that $v_0=v$ and $\tau'(u)=\tau(v_n)$. There is a directed edge $e$ from a node ${[}v_0,\ldots v_m{]}$ to ${[}u_0,\ldots u_n{]}$ in $\N'$ iff $m=n+1$, $v_i=u_i$, for all $i\leqslant n$, and $\tau'(e)=\Si$, where $\Si$ is a signature such that $\langle v_n\Si v_{n+1}\rangle\in\N$.
\end{definition}

Similar to the Kripke semantics, if $\N'$ is a tree unfolding of a $v$-pointed network $\N$, then $(\N,v)\models C\dleq D$ iff $(\N',[v])\models C\dleq D$, for any concept inclusion $C\dleq D$. We formulate this property in the following statement, which is a direct consequence of network semantics. 

\begin{lemma}[Relation to Semantics of Tree Unfolding]\label{Lem_SemanticsOfTreeUnfolding}
\Todo{Do not remove this lemma, it is used in the upper bounds}
Let $\N'=(V',E',\tau')$ be a tree unfolding of a $v$-pointed ontology network $\N=(V,E,\tau)$. Then for any interpretation $\I$, it holds $\I\in\nu(v)$ iff there is an interpretation function of $\N'$ such that for every $u\in V'$, $f(u)$ is a singleton and $f([v])=\{\I\}$. 
\end{lemma}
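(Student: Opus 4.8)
The plan is to recognize this lemma as the ``network‑as‑a‑graph'' counterpart of the soundness and completeness properties of the renamed import closure (Lemmas~\ref{lemma:renamed:closure:completeness} and~\ref{lemma:renamed:closure:soundness}): a node $[v_0,\ldots,v_n]$ of the tree unfolding $\N'$ plays the role of an import path, and a singleton‑valued interpretation function of $\N'$ plays the role of the family of models chosen along the paths in the proof of Lemma~\ref{lemma:renamed:closure:soundness}. I would prove the two implications by transporting an interpretation function between $\N$ and $\N'$, the only care being the bookkeeping of the tree: a node of $\N'$ records its whole import sequence, so every non‑root node $[v_0,\ldots,v_n,v_{n+1}]$ has a unique parent $[v_0,\ldots,v_n]$, and the edge from a path $[v_0,\ldots,v_n]$ to its one‑step extension $[v_0,\ldots,v_n,v_{n+1}]$ carries the signature $\Si$ with $\tuple{v_n,\Si,v_{n+1}}\in\N$.

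For the implication from right to left, suppose $f$ is an interpretation function of $\N'$ with $f([v])=\set{\I}$ (the singleton hypothesis on the remaining nodes is not used here). I would collapse $f$ to a map $g$ on the nodes of $\N$ by $g(w)=\bigcup\set{f(u)\mid u\in V',\ \last(u)=w}$, where $\last([v_0,\ldots,v_n])=v_n$, and check that $g$ is an interpretation function of $\N$: each $f(u)$ with $\last(u)=w$ consists of models of $\tau'(u)=\O_w$, so $g(w)$ does too; and if $\tuple{w,\Si,w'}\in\N$ and $\I'\in f(u)\subseteq g(w)$ with $\last(u)=w$, then the path $u$ extended by $w'$ is a node $u'$ of $\N'$ with $\last(u')=w'$, the edge from $u$ to $u'$ carries the label $\Si$, so by the defining property of $f$ there is $\J\in f(u')\subseteq g(w')$ with $\I'=_\Si\J$. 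Since $\nu$ is the supremum of the lattice of interpretation functions of $\N$, we get $g\leqslant\nu$, hence $\I\in f([v])\subseteq g(v)\subseteq\nu(v)$.

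For the implication from left to right, assume $\I\in\nu(v)$. Since $\nu$ is itself an interpretation function of $\N$, I would select, by induction on $\length(u)$, a single model $\I_u\in\nu(\last(u))$ for each $u\in V'$: set $\I_{[v]}=\I$; for $u=[v_0,\ldots,v_n,v_{n+1}]$ with parent $u'=[v_0,\ldots,v_n]$ and connecting signature $\Si$, the model $\I_{u'}\in\nu(v_n)$ is already chosen and, since $\nu$ is an interpretation function, there is some $\J\in\nu(v_{n+1})$ with $\I_{u'}=_\Si\J$; pick one and put $\I_u=\J$. Uniqueness of the parent makes this selection well defined. Then $f(u):=\set{\I_u}$ has every value a singleton, $f([v])=\set{\I}$, each $\I_u$ is a model of $\tau'(u)$, and the agreement required of an interpretation function of $\N'$ holds along every edge of $\N'$ by construction, so $f$ is of the desired form.

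I expect the only genuinely delicate point to be the bookkeeping of the tree unfolding --- which edge carries which signature, and that every node has a unique parent --- so that the inductive selection in the second implication is unambiguous (this is the one place where being a \emph{tree} rather than a general graph matters). Note that $\N'$ is infinite whenever $\N$ contains a cycle, but the inductions above are on $\length(u)$ and go through unchanged. Beyond this, both implications are routine transfers between the two presentations of the semantics, parallel to the proofs of Lemmas~\ref{lemma:renamed:closure:completeness}--\ref{lemma:renamed:closure:soundness}, and I do not anticipate any further obstacle.
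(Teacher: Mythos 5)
Your proof is correct. There is in fact nothing in the paper to compare it against: the lemma is stated there as ``a direct consequence of network semantics'' with no proof given, and your two constructions are exactly the intended content --- collapsing an interpretation function of $\N'$ along $\last(\cdot)$ to an interpretation function of $\N$ and invoking that it is dominated by the supremum $\nu$, and, conversely, unravelling the fact that $\nu$ is itself an interpretation function (the pointwise union of all of them) into a singleton-valued selection by induction on path length, which is well defined precisely because every node of the unfolding has a unique parent. The one point worth making explicit is the multigraph corner case you yourself flagged as ``bookkeeping'': if there are two parallel import relations $\tuple{v_n,\Si_1,v_{n+1}}$ and $\tuple{v_n,\Si_2,v_{n+1}}$, the node-sequence definition of the unfolding identifies the two children, so a single singleton $f([v_0,\ldots,v_{n+1}])$ would have to agree with its parent on $\Si_1$ and $\Si_2$ simultaneously, whereas the definition of interpretation function only supplies a (possibly different) agreeing model for each relation separately; this is repaired by reading the unfolding as unravelling \emph{edges} rather than vertices (one child per import relation), which is consistent with how the paper records signatures inside import paths elsewhere. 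With that reading fixed, both directions go through exactly as you wrote them, including for cyclic $\N$ where $\N'$ is infinite.
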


We will show that tree representations allow to obtain upper complexity bounds by reducing the entailment problem for ontology networks to entailment from the union of label ontologies. For this, we introduce a modification of the above notion, which differs in the definition of labelling.

\begin{definition}[Renamed Unfolding]\label{De_TreeUnfolding}
Let $\N=(V,E,\tau)$ be a $v$-pointed ontology network. A renamed (tree) unfolding of $\N$ is a network $\N_t=(V_t,E_t,\tau_t)$, with the sets of nodes and edges given as in the definition of tree unfolding and the labelling function $\tau_t$ satisfying the following properties:

\begin{itemize}
\item[1.] for every node $u={[}v_0,\ldots ,v_n{]}$, $\tau_t(u)$ is a ``copy'' of the label ontology $\tau(v_n)$ under an injective renaming of signature symbols, a permutation $\pi_u$ on $\Nc\cup\Nr$ mapping $\Nc$ to $\Nc$ and $\Nr$ to $\Nr$, such that: 

\begin{itemize} 

\item[1.1] $\pi_{{[}v{]}}$ is the identity function;

\item[1.2] for all nodes $u={[}v_0,\ldots , v_{n}{]}$, $w={[}v_0,\ldots ,v_{n+1}{]}$, and a signature $\Si$ such that $\langle v_n\Si v_{n+1}\rangle \in\N$, it holds $\pi_u|_\Si=\pi_w|_\Si$;

\item[1.3] if $\tau_t(u)\cap\tau_t(w)=\si^*\neq\varnothing$ for some nodes $u\neq w$ then $\pi_u^{-1}|_{\si^*}=\pi_w^{-1}|_{\si^*}$ (denote the preimage of $\si^*$ by $\si$) and $u,w$ have the form $u={[}u_0,\ldots, u_{k}, \ldots , u_n{]}$, $w={[}w_0,\ldots, w_{k},\ldots , w_m{]}$, for some $k$ such that $u_l=w_l$, for $0\leqslant l \leqslant k$, $u_n$ is $\si$-reachable from $u_k$ via the path  $u_{k}, \ldots , u_n$, and $w_m$ is $\si$-reachable from $w_k$ via the path  $w_{k}, \ldots , w_m$. 

\end{itemize}

\item[2.] for nodes $u={[}v_0,\ldots v_{n}{]}$, $w={[}v_0,\ldots v_{n+1}{]}$, and a signature $\Si$ such that $\langle v_n\Si v_{n+1}\rangle \in\N$, the label of an edge $e$ from $w$ to $u$ is given as $\tau_t(e)=\{\pi_{u}(s) \mid s\in\Si\}$;


\end{itemize}

\end{definition}

\Todo{Example of a renamed unfolding for a diamond}

The number of nodes in a tree unfolding of an ontology network $\N$ is infinite if $\N$ has cycles and otherwise can be exponential in the number of nodes in $\N$ (if $\N$ is not tree-shaped). In connection with the Theorem \ref{Teo_Reduction2Union} below, this can give a hint on the influence of network topology on the complexity of entailment. We address this question in Section \ref{Sect_MainResults} and describe an algorithmic procedure that constructs a renamed unfolding for any given ontology network. The theorem below will be used in Section \ref{Sect_MainResults} for proving upper complexity bounds for the entailment problem in ontology networks. 

\begin{theorem}[Characterization of Entailment]\label{Teo_Reduction2Union}
Let $\N$ be a $v$-pointed ontology network and $\N_t=(V_t,E_t,\tau_t)$ be a renamed unfolding of $\N$. Then for any concept inclusion $C\dleq D$ in signature $\sig(\O_v)$, it holds $(\N,v)\models C \dleq D$ iff $\bigcup_{u\in V_t}\O_u\models C\dleq D$.
\end{theorem}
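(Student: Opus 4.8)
The plan is to exploit the fact that the renamed unfolding $\N_t$ and the plain tree unfolding $\N'$ of $\N$ (Definition~\ref{De_NonRenamedTreeUnfolding}) have \emph{the same} node set $V_t=V'$ and edge set $E_t=E'$, and differ only in that $\N_t$ applies the injective renamings $\pi_u$ to the node ontologies and, accordingly, to the edge signatures. The characterization of $\nu(v)$ by singleton-valued interpretation functions of $\N'$ from Lemma~\ref{Lem_SemanticsOfTreeUnfolding} then does the real work: such an interpretation function is exactly a family $\set{\I_u}$ of models of the \emph{original} node ontologies that agree along every edge over the edge signature, and conditions~1.1--1.3 of Definition~\ref{De_TreeUnfolding} are designed so that a model of $\bigcup_{u\in V_t}\O_u$ is the same datum as such a family, glued into one interpretation. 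In this sense the theorem is the tree-unfolding reformulation of Lemmas~\ref{lemma:renamed:closure:completeness} and~\ref{lemma:renamed:closure:soundness}, and I would prove it along the same lines, in two directions.

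\emph{From right to left.} Assume $\bigcup_{u\in V_t}\O_u\models C\dleq D$ and take any $\I\in\nu(v)$; I must show $\I\models C\dleq D$. By Lemma~\ref{Lem_SemanticsOfTreeUnfolding} there is an interpretation function $f$ of $\N'$ with $f(u)=\set{\I_u}$ a singleton for each node $u$ and $\I_{[v]}=\I$; since in $\N'$ every node has a directed path to $[v]$ and agreement along an edge forces equal domains, all $\I_u$ live on one domain $\Delta$. Define $\J$ over $\Delta$ by $(\pi_u(Y))^{\J}:=Y^{\I_u}$ for every node $u=[v_0,\dots,v_n]$ and every $Y\in\sig(\O_{v_n})$, interpreting the remaining symbols arbitrarily. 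Well-definedness is the one real point: if $\pi_u(Y)=\pi_w(Y')$ for $u\neq w$, condition~1.3 forces $Y=Y'$, a common prefix node $u_k=w_k$, and reachability of (the last vertex of) $u$, resp.\ $w$, from $u_k$, resp.\ $w_k$, along chains of import edges of $\N$ all of whose signatures contain $Y$; since $f$ is an interpretation function of $\N'$, the agreement relations compose along these chains and give $Y^{\I_u}=Y^{\I_{u_k}}=Y^{\I_{w_k}}=Y^{\I_w}$. Because renaming commutes with satisfaction and $\I_u\models\O_{v_n}$, we obtain $\J\models\pi_u(\O_{v_n})=\O_u$ for every $u$, hence $\J\models\bigcup_{u\in V_t}\O_u$ and so $\J\models C\dleq D$. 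Finally $\pi_{[v]}$ is the identity (condition~1.1), so $\J=_{\sig(\O_v)}\I$, and as $\sig(C\dleq D)\subseteq\sig(\O_v)$ we conclude $\I\models C\dleq D$.

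\emph{From left to right.} Assume $(\N,v)\models C\dleq D$ and take any $\J\models\bigcup_{u\in V_t}\O_u$. For each node $u=[v_0,\dots,v_n]$ put $Y^{\J_u}:=(\pi_u(Y))^{\J}$ for $Y\in\sig(\O_{v_n})$; since $\J\models\pi_u(\O_{v_n})=\O_u$ we have $\J_u\models\O_{v_n}$. Then $f(u):=\set{\J_u}$ is a singleton-valued interpretation function of $\N'$: for an edge from $[v_0,\dots,v_{n+1}]$ to $[v_0,\dots,v_n]$ carrying the signature $\Si$ with $\tuple{v_n,\Si,v_{n+1}}\in\N$, condition~1.2 gives $\pi_{[v_0,\dots,v_n]}|_{\Si}=\pi_{[v_0,\dots,v_{n+1}]}|_{\Si}$, so $\J_{[v_0,\dots,v_n]}$ and $\J_{[v_0,\dots,v_{n+1}]}$ agree on $\Si$ and share the domain $\Delta^{\J}$. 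By Lemma~\ref{Lem_SemanticsOfTreeUnfolding}, $\J_{[v]}\in\nu(v)$, hence $\J_{[v]}\models C\dleq D$; since $\pi_{[v]}$ is the identity, $\J_{[v]}=_{\sig(\O_v)}\J$, and therefore $\J\models C\dleq D$.

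The main obstacle is the well-definedness of $\J$ in the right-to-left direction: this is precisely where the delicate coherence condition~1.3 on the renamings is used, and where one must carefully thread the agreement relations of $f$ through a whole chain of nested import signatures --- the tree-unfolding analogue of the inductive renaming argument in the proof of Lemma~\ref{lemma:renamed:closure:soundness}. Everything else is bookkeeping: renaming commutes with truth, $\pi_{[v]}$ is the identity, and the hypothesis $\sig(C\dleq D)\subseteq\sig(\O_v)$ lets one transfer satisfaction between $\J$ and its $[v]$-copy.
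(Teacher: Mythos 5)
Your proof is correct, and it reaches the statement by a flatter route than the paper's. The paper factors the argument through three auxiliary propositions: it first shows $\nu(v)=\nu([v])$ for the renamed unfolding, then proves an abstract characterization of when local entailment coincides with entailment from the union (existence, for every $\I\in\nu(v)$, of a singleton-valued interpretation function whose members pairwise agree on the intersections of their ontology signatures), and finally isolates structural conditions on a network sufficient for that characterization, verifying them for $\N_t$ from conditions 1.2, 1.3 and 2 of the renamed-unfolding definition. You bypass the two middle propositions entirely: starting from the singleton-valued interpretation function of the plain tree unfolding supplied by Lemma~\ref{Lem_SemanticsOfTreeUnfolding}, you glue the family $\set{\I_u}$ directly into a single model $\J$ of $\bigcup_{u}\O_u$ via the renamings $\pi_u$, and conversely split a model of the union into such a family. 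The mathematical content is the same --- your well-definedness argument for $\J$, threading the edge agreements along the $\si$-reachability chains guaranteed by condition 1.3, is exactly the induction hidden in the paper's sufficient-condition proposition, and the whole construction is the tree-indexed counterpart of Lemma~\ref{lemma:renamed:closure:soundness} and Theorem~\ref{theorem:reduction:classical} in Section~\ref{Sect_Reduction2ClassicalEntailment}. What the paper's detour buys is reusability, since its characterization and sufficient condition are stated for networks other than tree unfoldings; what your version buys is brevity and self-containment. The one detail to make fully explicit in your left-to-right direction is that $\J_u$ should be defined by $Y^{\J_u}=(\pi_u(Y))^{\J}$ for \emph{all} symbols $Y$, not only those in $\sig(\O_{v_n})$: an import signature $\Si$ labelling an edge need not be contained in the adjacent ontology signatures, and the agreement $\J_u=_{\Si}\J_w$ must hold on all of $\Si$. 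Since each $\pi_u$ is a permutation of the whole alphabet and condition 1.2 constrains it on all of $\Si$, this is a one-line repair.
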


\begin{proof} 
The proof of the theorem is based on the three statements below.

\begin{proposition}[Relation to Semant. of Renamed Unfolding]\label{Prop_RelationshipSemanticsTreeUnfolding}
Let $\N=(V,E,\tau)$ be a $v$-pointed ontology network and $\N_t=(V_t,E_t,\tau_t)$ be a renamed  unfolding of $\N$ with the root ${[}v{]}$. Then it holds $\nu(v)=\nu({[}v{]})$. 
\end{proposition}

\begin{proof} 
By the definition of network semantics, the claim is proved if we show that for any interpretation $\I$, it holds $\I\in\nu(v)$ iff there is an interpretation function $f$ of network $\N_t$ such that for every $u\in V_t$, $f(u)$ is a singleton and $f({[}v{]})=\{\I\}$. 

Let $\N'=(V',E',\tau')$ be a tree unfolding of $\N$. By Lemma \ref{Lem_SemanticsOfTreeUnfolding}, we have $\I\in\nu(v)$ iff there is an interpretation function $g$ of $\N'$ such that $g(w)$ is a singleton for each $w\in V'$, and $g({[}v{]})=\{\I\}$. 
For each node $u\in V_t$, let $\pi_u$ be the permutation associated with $u$ in the definition of the renamed unfolding $\N_t$. Now observe that the mappings $f$ and $g$ can be mutually defined via the following relation: for any $u\in V_t$, $f(u)=\{(\Delta, \cdot^f)\}$ iff $g(u)=\{(\Delta, \cdot^g)\}$, where $\cdot^f\circ\pi_u=\cdot^g$ 
\end{proof}

\begin{proposition}[Characterization of Reduction to Union]\label{Prop_CharacterizationReduction2Union}
For any $v$-pointed ontology network $\N=(V,E,\tau)$ the following conditions are equivalent:

\begin{trivlist}
\item - for the signature $\Si=\sig(\O_v)$, it holds\newline $\{\I|_{\Si} \ \mid \ \I\in\nu(v)\} = \{\J|_{\Si} \ \mid \ \J\models\bigcup_{u\in V}\O_u\}$\footnote{and hence, $(\N,v)\models C\dleq D$ iff $\bigcup_{u\in V}\O_u\models C\dleq D$, for any concept inclusion $C\dleq D$ in signature $\Si$};

\item - for any model $\I\in\nu(v)$, there exists an interpretation function $f$ of $\N$ such that for all $u,w\in V$, $f(u)=\{\I_u\}$ is a singleton, $\I_v=_\Si\I$, and $\I_u=_\si\I_w$, for $\si=\sig(\O_u)\cap\sig(O_w)$.
\end{trivlist}
\end{proposition}

\begin{proof} 
$(\Rightarrow):$ Let $\I\in\nu(v)$ and $\J$ be a model of the union of label ontologies of $\N$ such that $\I=_\Si\J$. Then the map $f:V\mapsto \{\J\}$ is the required interpretation of $\N$.

$(\Leftarrow):$ The containment $\supseteq$ follows from Lemma \ref{Lem_ModeloftheUnionIsAgreed}, so we need to prove the reverse inclusion. For a model $\I\in\nu(v)$, let $f$ be the corresponding interpretation function of $\N$. By the conditions in the definition of $f$, there exists an interpretation $\J$ such that for all $u\in V$,  $\J=_{\sig(\O_u)}\I_u$. We have $\J=_\Si\I$ and $\J\models \bigcup_{u\in V}\O_u$, so $\J$ is the required model corresponding to $\I$. 
\end{proof}

\Todo{Strengthen the proposition below, if possible}
 
\begin{proposition}[Suffic. Condition for Reduction to Union]\label{Prop_Correspondence2Union}
Let $\N$ be a $v$-pointed ontology network having the following properties:
\begin{itemize}
\item any $u\in V$ has at most one parent, i.e. a node $w\in V$ such that $\langle w\Si v\rangle$, for some signature $\Si$;

\item for all distinct nodes $v_1,v_2\in V$, if $\sig(\O_{v_1})\cap \sig(\O_{v_2})=\si$, for a signature $\si\neq\varnothing$, then either there exists a node $u$ in $\N$ such that both, $v_1,v_2$, are $\si$-reachable from $u$, or for some $i=1,2$, $v_{3-i}$ is $\si$-reachable from $v_{i}$ and the labels of edges in every path from $v_{i}$ to $v_{3-i}$ contain $\si$.
\end{itemize}

Then $\N$ satisfies the second condition in Proposition \ref{Prop_CharacterizationReduction2Union}.
\end{proposition}


\begin{proof} 
Let $\N'=(V',E',\tau')$ be a tree unfolding of $\N$. For a node $u=[v_0,\ldots v_k]\in V'$, denote $\lceil u\rceil=v_k$. The claim is proved if we show that for any model $\I\in\nu(v)$, there exists an interpretation function $f$ of $\N'$ satisfying the second condition in Proposition \ref{Prop_CharacterizationReduction2Union} such that $f(u)=f(w)$, for any $u,w\in V'$ such that $\lceil u\rceil = \lceil w\rceil$. Given $\I\in\nu(v)$, we prove this by induction for a network $\N_m=(V_m,E_m,\tau_m)$ obtained by restricting $\N'$ onto the nodes reachable from $[v]$ in $\leqslant m$ steps. For each $m$, we define a map $f_{m}$, the interpretation function of $\N_{m}$ satisfying the above mentioned properties. Let us call a node $u$ of $\N_m$ \textit{fresh} if there does not exist a node $w$ in $\N_{m}$ such that $\lceil u\rceil = \lceil w\rceil$. By the first condition of the proposition, a node $u$ of $\N_m$ is fresh iff there is no such node $w$ in $\N_{m-1}$.

The claim is obvious for $m=0$, so let us assume that $m\geqslant 1$. Since $\I\in\nu(v)$, there is an interpretation $g$ of $N'$ such that $g(u)$ is a singleton, for each $u\in V_m$, and $g([v])=\{\I\}$. We show that $g$ can be used to obtain the interpretation function $f_m$. Let $u_1,\ldots , u_k$ be the nodes reachable from $[v]$ in $m$ steps. Let us define $f_m$ as the map which coincides with $f_{m-1}$ on the nodes of $\N_{m-1}$ and is defined on $u_1,\ldots , u_k$ as follows: for $i=1,\ldots k$, if $u_i$ is fresh, then $f_m(u_i)=g(u_i)$ and otherwise $f_m(u_i)=f_{m-1}(u)$. 

Let us verify that $f_m$ defined in this way is an interpretation of $\N_m$. For a node $u\in V_m$, denote by $\I_u^m$ the model given by $f_m(u)$. For $i=1,\ldots k$, let $w_i$ be the parent nodes of $u_i$ and $\Si_i$ be the signatures such that $\langle w_i\Si_i u_i\rangle\in\N_m$. It suffices to verify that $\I^m_{u_i}=_{\Si_i}\I^m_{w_i}$, $i=1,\ldots k$. Let us fix an arbitrary $i$. It follows from the first condition of the proposition that for any nodes $v_1,v_2$ of $\N$, any two distinct paths $p_1=v_1,u_1,\ldots ,u_k,v_2$ and $p_2=v_1,w_1,\ldots ,w_l,v_2$ from $v_1$ to $v_2$ consist of the same nodes and for some $i=1,2$, $p_i$ is obtained from $p_{3-i}$ by replacing some nodes $x_1,\ldots, x_m$, for $m\geqslant 1$, with cycles $x_j,y_{1},\ldots y_{n_j}, x_j$, for $j=1,\ldots , m$ and $n_j\geqslant 0$. Therefore, if $u_i$ is fresh, then so is $w_i$, we have $f_m(w_i)=g(w_i)$ by induction and thus, $\I^m_{u_i}=_{\Si_i}\I^m_{w_i}$. If both $u_i$ and $w_i$ are non-fresh, then there exist nodes $v_u,v_w$ in $\N_{m-1}$ such that $\lceil v_u\rceil = \lceil u_i\rceil$, $\lceil v_w\rceil = \lceil w_i\rceil$, and $\langle v_w\Si_i v_u\rangle\in\N_{m-1}$. By the induction assumption, it holds that $\I^{m-1}_{v_w}=_{\Si_i}\I^{m-1}_{v_u}$, and by the definition of $f_m$, we have $f_m(w_i)=f_{m-1}(v_w)$, $f_m(u_i)=f_{m-1}(v_u)$. Finally, if $u_i$ is non-fresh and $w_i$ is fresh, then these nodes belong to a cycle, i.e. there is a path from $u_i$ to $w_i$. Hence, by the second condition of the proposition, there exists a node $v_u$ in $\N_{m-1}$ such that $\lceil v_u\rceil = \lceil u_i\rceil$, $f_m({v_u})=f_m(u_i)$, and $w_i$ is $\Si_i$-reachable from $v_u$. Then, by the induction assumption, we have $f_m({v_u})=f_m(w_i)$, which means that $f_m({w_u})=f_m(u_i)$. 

It remains to show that for any nodes $v_1,v_2$ of $\N_m$ it holds $\I^m_{v_1}=_\si\I^m_{v_2}$, whenever $\si=\sig(\O_{v_1})\cap\sig(\O_{v_2})$. By the second condition of the proposition, assume that $\lceil v_2\rceil$ is $\si$-reachable from $\lceil v_1\rceil$. Then $v_i$ is $\si$-reachable from $v_{3-i}$ in $\N_m$, for $i=1,2$, and since $f_m$ is an interpretation of $\N_m$, it follows that $\I^m_{v_1}=_\si\I^m_{v_2}$. Now assume there is a node $u$ in $\N$ such that $\lceil v_1\rceil$ and $\lceil v_2\rceil$ are $\si$-reachable from $u$. Then $u$ must be reachable from $v$ in $<m$ steps, because otherwise for at least one of the nodes $\lceil v_1\rceil$, $\lceil v_2\rceil$ the first condition of the proposition is violated. Hence, there exists a node $v_u$ in $\N_{m-1}$ such that $\lceil v_u\rceil=u$ and both $v_1,v_2$ are $\si$-reachable from $v_u$. Since $f_m$ is an interpretation of $\N_m$, it follows that $\I^m_{v_i}=_\si\I^m_{v_u}$, for $i=1,2$.  
\end{proof}

\smallskip

Let us compete the proof of Theorem \ref{Teo_Reduction2Union}. The entailment $(\N,v)\models C \dleq D$ means that $\I\models C\dleq D$, for any model $\I\in\nu(v)$. By Proposition \ref{Prop_RelationshipSemanticsTreeUnfolding}, we have $\I\in\nu(v)$ iff $\I\in\nu({[}v{]})$, hence $(\N,v)\models C \dleq D$ iff $(\N_t,{[}v{]})\models C \dleq D$. Let us show that the network $\N_t$ satisfies the both conditions in Proposition \ref{Prop_Correspondence2Union}, then by Proposition \ref{Prop_CharacterizationReduction2Union} the statement of the theorem will follow. The first condition is trivially satisfied, since $\N_t$ is tree-shaped. For the second condition, assume that  $\sig(\O_u)\cap\sig(\O_w)=\si^*$, for some distinct nodes $u,w\in V_t$ and a signature $\si^*\neq\varnothing$ and let us show that conditions 1.2, 1.3, and 2 in the definition of tree unfolding yield the required property. Let $\si$ be the preimage of $\si^*$ wrt $\pi_u$. By condition 1.3, $u,w$ have the form $u={[}u_0,\ldots, u_{k}, \ldots , u_n{]}$, $w={[}w_0,\ldots, w_{k},\ldots , w_m{]}$. It follows from conditions 1.2 and 1.3 that $\pi_x|_\si = \pi_y|_\si$ for all nodes $x,y\in V_t$ such that $x={[}u_0,\ldots, u_{i}{]}$ and $y={[}u_0,\ldots, u_{i+1}{]}$, $k\leqslant i < n$, or $x={[}w_0,\ldots, w_{j}{]}$ and $y={[}w_0,\ldots, w_{j+1}{]}$, $k\leqslant j < m$. Thus, for the node $z={[}u_0,\ldots, u_{k}{]}$ we have $\pi_z|_\si = \pi_u|_\si = \pi_w|_\si$. By condition 2, if $z = u$ (or $z = w$, respectively), then $w$ is $\si^*$-reachable from $u$ (or vice versa). Otherwise, if $z\neq u$ and $z\neq w$, then both, $u$ and $w$ are $\si^*$-reachable from $z$. Theorem \ref{Teo_Reduction2Union} is proved. 
\end{proof}

\begin{corollary}[Upper Bound for General Networks]\label{Cor_UpperBoundGeneral}
Given a finite $v$-pointed ontology network $\N=(V,E,\tau)$, the set of concept inclusions $C\dleq D$ in signature $\sig(\O_v)$ such that $(\N,v)\models C\dleq D$ is recursively enumerable.
\end{corollary}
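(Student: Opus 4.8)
The plan is to derive this from the reduction of network entailment to classical entailment established earlier in Theorem~\ref{Teo_Reduction2Union}, namely that $(\N,v)\models C\dleq D$ holds iff $\bigcup_{u\in V_t}\O_u\models C\dleq D$ for a renamed unfolding $\N_t=(V_t,E_t,\tau_t)$ of $\N$, and then to use compactness of first-order logic to tame the fact that $V_t$ is infinite when $\N$ is cyclic. Recursive enumerability of the \emph{set} of entailed concept inclusions will then follow by a routine dovetailing argument, once we know that membership in this set is semi-decidable.

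Concretely, I would proceed as follows. First, fix the finite signature $\Si=\sig(\O_v)$ and observe that the concept inclusions $C\dleq D$ with $\sig(C\dleq D)\subseteq\Si$ form a decidable set of strings, so they can be effectively listed as $\varphi_1,\varphi_2,\dots$. Next, I would give a semi-decision procedure for ``$(\N,v)\models\varphi$'': construct the renamed unfolding $\N_t$ via the algorithm discussed after Definition~\ref{De_TreeUnfolding}, and for increasing $n$ form the finite sub-network $\N_t^{(n)}$ induced by the nodes of $V_t$ at depth $\le n$ from the root. Each $\bigcup_{u\in V_t^{(n)}}\O_u$ is a finite, computable ontology, and these unions increase to $\bigcup_{u\in V_t}\O_u$, since every node of $V_t$ sits at a finite depth; hence, by Theorem~\ref{Teo_Reduction2Union} together with compactness of FOL (the DL under consideration embeds into FOL), $(\N,v)\models\varphi$ holds iff $\bigcup_{u\in V_t^{(n)}}\O_u\models\varphi$ for some $n$. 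The procedure therefore runs the (semi-)decidable finite-ontology entailment tests $\bigcup_{u\in V_t^{(n)}}\O_u\models\varphi$ for $n=0,1,2,\dots$ with growing timeouts and halts as soon as one succeeds; it is sound because $\bigcup_{u\in V_t^{(n)}}\O_u\subseteq\bigcup_{u\in V_t}\O_u$ and Theorem~\ref{Teo_Reduction2Union} applies to $\bigcup_{u\in V_t}\O_u$. Finally, dovetailing this procedure over $\varphi_1,\varphi_2,\dots$ and emitting $\varphi_i$ whenever its run halts enumerates exactly the entailed concept inclusions over $\Si$.

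The main obstacle I anticipate is the passage from the infinite union $\bigcup_{u\in V_t}\O_u$ to the computable finite prefixes: one must make sure that a renamed unfolding genuinely admits finite, effectively constructible initial segments whose renamings are coherent across all $n$, so that $\bigcup_n\bigcup_{u\in V_t^{(n)}}\O_u$ coincides with $\bigcup_{u\in V_t}\O_u$ and the prefixes form an increasing chain on which compactness can bite. This is exactly what the explicit construction behind Definition~\ref{De_TreeUnfolding} is designed to provide; granting it, the remaining ingredients — enumerability of the concept inclusions over a fixed finite signature, (semi-)decidability of entailment from each finite prefix, and dovetailing — are all standard.
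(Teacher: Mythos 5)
Your proposal is correct and follows essentially the same route as the paper's proof: reduce $(\N,v)\models\varphi$ to classical entailment from the union of label ontologies of a renamed unfolding, invoke compactness of FOL to replace the infinite union by an effectively computable, increasing sequence of finite depth-bounded prefixes with coherent renamings, and run the finite entailment tests with growing timeouts. Your explicit dovetailing over an effective listing of the concept inclusions in $\sig(\O_v)$ to obtain recursive enumerability of the whole set is a small but welcome elaboration of a step the paper leaves implicit.
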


\Todo{Finiteness is stressed, since we allow networks to be infinite in Definition \ref{De_TreeUnfolding}}

\begin{proof} 
Let $\N_t=(V_t,E_t,\tau_t)$ be a renamed unfolding of $\N$. By Theorem \ref{Teo_Reduction2Union}, $(\N,v)\models C\dleq D$ is equivalent to $\bigcup_{u\in V_t}\O_u\models C\dleq D$ which, by compactness, holds iff there is a finite subset of nodes $W\subseteq V_t$ such that $\bigcup_{w\in W}\O_w\models C\dleq D$. Thus, the proposition is proved if we show that $\N_t$ is recursively enumerable, i.e. the sets $(V_t,E_t)$ and the graph of the labelling function $\tau_t$ (considered as a set of pairs) are enumerable. 

We describe an algorithmic procedure which for a natural number $n\geqslant 0$ gives an ontology network $\N_t^n=(V_t^n,E_t^n,\tau_t^n)$, with the set $V_t^n$ consisting of sequences ${[}v_0\ldots v _n{]}$ of nodes from $\N$, such that $\N_t^n$ satisfies all the conditions of tree unfolding for $\N$. 

It is clear from Definition \ref{De_TreeUnfolding} that there is a recursive procedure to construct the graph $(V_t^n,E_t^n)$ for a given $n$, so we may assume that this graph is given and our goal now is to define the labelling function $\tau_t^n$. It suffices to define a permutation function $\pi_u$ for each node $u\in V_t^n$; we give a definition by simultaneous induction on $k$, $0\leqslant k \leqslant n$ and the number $m$ of nodes reachable from ${[}v_0{]}$ in $k$ steps. 

For $k=m=0$, we define $\tau_t^n({[}v_0{]})=\tau(v)$ and set $\pi_{{[}v_0{]}}$ to be the identity function on $\Nc\cup\Nr$.

For $k,m\geqslant 1$, let $\Gamma$ be the union of signatures of label ontologies for all those nodes $u$ for which $\tau_t$ is already defined. Let $w={[}v_0,\ldots v_k{]}$ be a $m$-th node reachable from ${[}v_0{]}$ in $k$ steps and $u={[}v_0,\ldots v_{k-1}{]}$ be the parent of $w$ wrt an edge $e$. For some signature $\Si$, it holds $\langle v_{k-1}\Si v_k\rangle\in\N$; denote $\Delta=\sig(\O_{v_k})\setminus\Si$. Then $\pi_w$ is defined as a permutation on $\Nc\cup\Nr$ mapping $\Nc$ to $\Nc$ and $\Nr$ to $\Nr$ and having the following properties:

\begin{itemize} 
\item[a.] $\pi_w|_\Si=\pi_u|_\Si$; 
\item[b.] for all $s\in\Delta$, it holds $\pi_w(s)\not\in\Gamma$ and $\pi_w(\pi_w(s))=s$ (denote the image of $\Delta$ wrt $\pi_w$ as $\Delta^*$);
\item[c.] $\pi_w$ is the identity function on $(\Nc\cup\Nr)\setminus(\sig(\O_{v_k})\cup\Delta^*)$.
\end{itemize}

Now, we define the label $\tau^n_t(w)$ as  a ``copy'' of ontology $\tau_{w})$ wrt the renaming given by $\pi_w$ and set $\tau^n_t(e)=\{\pi_u(s) \mid s\in\Si\}$. 

Since the alphabet $\Nc\cup\Nr$ is recursively enumerable, it follows that there is an algorithmic procedure which for a given tree $(V_t^n,E_t^n)$ enumerates the graph of the labelling function $\tau_t^n$. Note that $\tau_t^n$ satisfies all the conditions in Definition \ref{De_TreeUnfolding} of tree unfolding: all the conditions except 1.3 are immediate to verify, while 1.3 follows by induction from points a,b in the definition of $\tau_t^n$. \end{proof}

\begin{corollary}[Upper Bound for Acyclic Networks]\label{Cor_UpperBoundAcyclic}
The entailment in acyclic $\LL$-ontology networks is decidable with an exponential overhead wrt the complexity of entailment in the logic $\LL$. It is in $\EXPTIME$ for $\EL$, $\DEXPTIME$
for $\ALC$ and $\SHIQ$, $\NDEXPTIME$ for $\SHOIF$ and $\SHOIQ$, $\TEXPTIME$
for $\SRIQ$, and $\NTEXPTIME$ for $\SROIQ$.
\end{corollary}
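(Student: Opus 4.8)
The plan is to derive this as a consequence of the reduction to classical entailment (Theorem~\ref{theorem:reduction:classical}) together with a size estimate for the renamed import closure $\ricls{\O}$. Write $n$ for the input size, i.e.\ the combined size of $\N$ and $\alpha$. We may assume w.l.o.g.\ that $\sig(\alpha)\subseteq\sig(\N)$: any symbol of $\alpha$ outside $\sig(\N)$ can be brought into $\sig(\N)$ by adding a trivial tautology over it to $\O$, which changes neither whether $\O\models_\N\alpha$ holds nor the property of $\ricls{\O}$ being an $\LL$-ontology. Then by Theorem~\ref{theorem:reduction:classical} we have $\O\models_\N\alpha$ iff $\ricls{\O}\models\alpha$, and since $\ricls{\O}$ is an $\LL$-ontology by hypothesis, the decision procedure for entailment in $\LL$ applies to the instance $\ricls{\O}\models\alpha$.

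The first substantive step is to bound $|\ricls{\O}|$. Because $\N$ is acyclic, no import path originating in $\O$ can visit an ontology twice, so every such path has length at most $|\N|$ and $\paths(\N,\O)$ is finite; since each path is a sequence of at most $|\N|$ import relations drawn from the (at most $|\N|$-many) import relations of $\N$, the number of such paths is at most singly exponential in $n$. As $\ricls{\O}=\bigcup_{p\in\paths(\N,\O)}\theta_p(\last(p))$ and each renamed copy $\theta_p(\last(p))$ has the same size as $\last(p)\leq|\N|$, we get $|\ricls{\O}|\le 2^{O(n)}$; moreover $\ricls{\O}$ can be produced in deterministic time $2^{O(n)}$ by enumerating the import paths and applying the (polynomial-time computable) renamings $\theta_p$.

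The second step is to run the $\LL$-entailment procedure on $\ricls{\O}\models\alpha$, which on an input of size $2^{O(n)}$ runs in $\complclass{[co][N]TIME}(f(2^{O(n)}))$, the deterministic $2^{O(n)}$-time construction of $\ricls{\O}$ being absorbed. For each of the bound functions $f$ relevant here (a polynomial, or a $k$-fold iterated exponential) the class $\complclass{[co][N]TIME}(f(2^{O(n)}))$ coincides with $\complclass{[co][N]TIME}(f(2^{n}))$, since these classes are insensitive to rescaling $n\mapsto O(n)$ inside the exponent; this gives the stated bound. For the tree-shaped case, there is at most one import path between each pair of ontologies, so $|\paths(\N,\O)|\le|\N|$, and $\ricls{\O}$ has size $O(n)$ (up to renaming it is contained in the union of the ontologies reachable from $\O$) and is computable in polynomial time; running the $\LL$-procedure on an $O(n)$-size input keeps the problem in the same complexity class as entailment in $\LL$ (which is at least \textsf{PTIME} for all the logics considered, so the polynomial overhead is harmless).

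The only non-routine points are (i) the size and running-time estimate for $\ricls{\O}$ in the acyclic case --- i.e.\ verifying that acyclicity genuinely caps import paths at polynomial length and singly-exponential number, and that this still holds for a multigraph of import relations --- and (ii) the bookkeeping that $\complclass{[co][N]TIME}(f(2^{O(n)}))=\complclass{[co][N]TIME}(f(2^{n}))$ for the relevant $f$. It is also worth stressing that the hypothesis ``$\ricls{\O}$ is an $\LL$-ontology'' is exactly what makes the reduced instance an admissible input for the $\LL$ reasoner: for logics such as $\R$, where unions of ontologies may destroy regularity of role inclusions, this can fail, and the argument would not go through, which is why the hypothesis is imposed.
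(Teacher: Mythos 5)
Your proof is correct and follows essentially the same route as the paper: reduce $\O\models_\N\alpha$ to $\ricls{\O}\models\alpha$ via Theorem~\ref{theorem:reduction:classical}, use acyclicity to bound the size of $\ricls{\O}$ by a single exponential (linear in the tree-shaped case), and run the $\LL$ reasoner on the resulting ontology. The only nitpick is that the number of import paths in an acyclic multigraph is bounded by $|\N|^{O(|\N|)}=2^{O(n\log n)}$ rather than $2^{O(n)}$, but this is harmless since every complexity class in the statement is closed under polynomial rescaling inside the exponent.
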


\begin{proof}  
Given Lemma \ref{Lem_Reduction2PointedNetworks}, let $\N$ be a finite $v$-pointed acyclic ontology network. It follows from Theorem \ref{Teo_Reduction2Union} and the construction in the proof of Proposition \ref{Prop_ELUndec_UpperBound} that there exists a procedure which works in time exponential in the size of $\N$ and outputs an ontology $\O$ (of size exponential wrt $\N$) such that $(\N,v)\models C\dleq D$ iff $\O\models C\dleq D$, for any concept inclusion $C\dleq D$ in signature $\sig(\O_v)$. As entailment is tractable in the logic $\EL$, we obtain the required statement. 
\end{proof} 




\begin{corollary}[Tractability in Tree-Shaped Networks]\label{Cor_TractabilityTreeShaped}
Given a $v$-pointed tree-shaped ontology network $\N$, it can be decided in polytime whether $(\N,v)\models C\dleq D$, for a concept inclusion $C\dleq D$ in signature $\sig(\O_v)$.
\end{corollary}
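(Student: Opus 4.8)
The plan is to reduce the entailment problem in the tree-shaped network $\N$ to a single classical entailment check over an ontology of linear size, using Theorem~\ref{theorem:reduction:classical} together with its renamed import closure $\ricls{\O}$.

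First I would observe that, since $\N$ is $v$-pointed and tree-shaped, there is exactly one import path in $\paths(\N,\O_v)$ ending in each node of $\N$; in particular $\paths(\N,\O_v)$ is finite and its cardinality does not exceed the number of nodes of $\N$ (unlike for general acyclic networks, where the diamond-shaped configurations make $\paths(\N,\O_v)$ possibly exponential). Consequently
\begin{equation*}
\ricls{\O_v}\;=\;\bigcup_{p\in\paths(\N,\O_v)}\theta_p(\last(p))
\end{equation*}
is the union of at most one renamed copy $\theta_p(\O)$ per ontology $\O$ occurring in $\N$. Since every renaming $\theta_p$ acts as a symbol-for-symbol substitution and hence preserves the length of axioms, the size of $\ricls{\O_v}$ is essentially the same as the size of $\N$, and $\ricls{\O_v}$ can be produced from $\N$ in polynomial (in fact linear) time: one traverses the tree from the root, maintaining along the current path $p$ the renaming $\theta_p$ according to its inductive definition (symbols occurring in the last import signature keep their name, all other symbols become fresh copies indexed by $p$), and emits the renamed axioms at each node.

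Then, since $\sig(C\dleq D)\subseteq\sig(\O_v)\subseteq\sig(\N)$, Theorem~\ref{theorem:reduction:classical} gives $\O_v\models_\N C\dleq D$ (i.e.\ $(\N,v)\models C\dleq D$) iff $\ricls{\O_v}\models C\dleq D$. The right-hand side is an ordinary entailment test in the underlying DL on an input of size linear in $\N$; hence for any underlying logic whose entailment problem is decidable in polynomial time (such as $\EL$ or $\mathcal{H}$) the whole procedure runs in polynomial time, and in general deciding entailment in tree-shaped networks incurs no complexity overhead over deciding entailment in the underlying logic. I expect the only point requiring care to be the size argument of the second step --- namely, that tree-shapedness rules out multiple import paths to a common ontology, so that $\paths(\N,\O_v)$ stays linear, and that the bookkeeping of the renamings $\theta_p$ along the tree does not enlarge the representation; granting Theorem~\ref{theorem:reduction:classical}, the remainder is routine.
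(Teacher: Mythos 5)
Your proposal is correct and follows essentially the same route as the paper: both reduce $(\N,v)\models C\dleq D$ to the classical entailment $\ricls{\O_v}\models C\dleq D$ via Theorem~\ref{theorem:reduction:classical} and then observe that tree-shapedness forces a unique import path to each node, so the renamed closure has size linear in $\N$ and can be built in polynomial time, after which tractability of entailment in the underlying logic (e.g.\ $\EL$) finishes the argument. Your explicit remarks on why $\paths(\N,\O_v)$ stays linear and why the renamings preserve axiom length are exactly the points the paper relies on implicitly.
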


\begin{proof} 
Since $\N$ is tree-shaped, it follows from the construction in the proof of Proposition \ref{Prop_ELUndec_UpperBound} that there is an algorithmic procedure which works in time polynomial in the size of $\N$ and outputs a renamed unfolding $\N_t=(V_t,E_t,\tau_t)$ of $\N$. By Theorem \ref{Teo_Reduction2Union}, $(\N,v)\models C\dleq D$ is equivalent to $\bigcup_{u\in V_t}\O_u\models C\dleq D$, which yields the required statement, since entailment is tractable in $\EL$. 
\end{proof}




We show that the complexity of reasoning in $\LL$-ontology networks drops, when $\LL$ is either of the role-free DLs, $\P$ or $\H$. Since the DL $\H$ is a fragment of $\P$, we formulate our claims only for the latter logic.  


\begin{lemma}[Upper Bound for General $\P$-Networks]
\label{lemma:P:cyclic:membership}
The entailment problem in $\P$-ontology networks is in $\EXPTIME$. 
\end{lemma}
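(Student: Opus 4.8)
The key point is that $\P$ is just propositional logic, so by Lemma~\ref{lemma:singleton:model:agreement} it suffices to consider only \emph{singleton} model agreements, and (by rescaling domains) one may assume they all live over a single fixed element $d$. Let $\Si$ be the finite set of concept names occurring in $\N$ or in the tested axiom $\alpha$; over the domain $\set{d}$ a singleton interpretation is completely described by the subset $s\subseteq\Si$ of concept names that hold at $d$, which I write as $\I(s)$. A singleton model agreement is then encoded by a map $m$ sending each ontology $\O$ in $\N$ to a set $m(\O)\subseteq 2^{\Si}$ such that (i) $\I(s)\models\O$ for every $s\in m(\O)$, and (ii) for every import relation $\tuple{\O_1,\Theta,\O_2}\in\N$ and every $s_1\in m(\O_1)$ there is $s_2\in m(\O_2)$ with $s_1\cap\Theta=s_2\cap\Theta$ --- exactly the content of Definition~\ref{def:ontology:network} specialised to singletons over $\set{d}$.

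The plan is to compute the \emph{greatest} such map $m^{\ast}$ by a standard elimination procedure and then read off the answer. Initialise $m(\O)=2^{\Si}$ for every $\O$ in $\N$; this satisfies no constraint yet but contains every legitimate value. Then iterate: remove from $m(\O)$ any $s$ with $\I(s)\not\models\O$ (a one-element model check, polynomial), and remove from $m(\O_1)$ any $s_1$ for which condition (ii) currently fails for some $\tuple{\O_1,\Theta,\O_2}\in\N$; repeat until nothing changes. Since the total number of pairs $(\O,s)$ is bounded by $|\N|\cdot 2^{|\Si|}$, at most exponentially many removals occur, each round costs exponential time, and the whole computation of $m^{\ast}$ runs in time exponential in the size of $\N$ and $\alpha$. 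Finally, declare $\O\models_\N\alpha$ iff $\I(s)\models\alpha$ for every $s\in m^{\ast}(\O)$; this last scan is again exponential.

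For correctness I would argue two directions. Soundness: $\mu(\O)=\set{\I(s)\mid s\in m^{\ast}(\O)}$ is a genuine model agreement because a fixpoint of the elimination rules is precisely a map satisfying (i) and (ii); hence if the algorithm finds some $s\in m^{\ast}(\O)$ with $\I(s)\not\models\alpha$, then indeed $\O\not\models_\N\alpha$. Completeness: given any singleton model agreement $\mu$ over $\set{d}$, set $m'(\O)=\set{s(\I)\mid\I\in\mu(\O)}$ with $s(\I)=\set{A\mid A^{\I}\ne\varnothing}$; then $m'$ satisfies (i) and (ii), and by induction on the elimination rounds no element of $m'(\O)$ is ever removed, so $m'(\O)\subseteq m^{\ast}(\O)$; combined with Lemma~\ref{lemma:singleton:model:agreement}, any witness to $\O\not\models_\N\alpha$ survives into $m^{\ast}(\O)$ and is detected. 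I expect the completeness direction to be the only real obstacle: one must verify that the order of removals is irrelevant, i.e. that $m'(\O)\subseteq m(\O)$ is an invariant under both defect-elimination steps simultaneously, which is the standard monotonicity argument for greatest-fixpoint iterations but has to be spelled out against the two defect types at once. (Alternatively, this lemma is simply the decidability half of Theorem~\ref{Teo_UpperBound_PGeneral}, whose proof carries out exactly this construction.)
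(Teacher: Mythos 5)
Your proposal is correct and follows essentially the same route as the paper: the paper's proof of Theorem~\ref{Teo_UpperBound_PGeneral} performs exactly this greatest-fixpoint elimination over singleton interpretations $\I(s)$, $s\subseteq\Si$, with the same two defect-removal rules and the same soundness/completeness argument (including the invariant $m'(\O)\subseteq m(\O)$ established by induction over the elimination steps). Nothing is missing.
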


\begin{proof} 
Let $\N=(V,E,\tau)$ be a $\P$-ontology network. By Lemma \ref{Lem_Reduction2PointedNetworks}, we may assume that $\N$ is $v$-pointed, for some node $v$. For a concept inclusion $C\dleq D$, we have $(\N,v)\not\models C\dleq D$ iff there is a model $\I\in\nu(v)$ such that $\I\not\models C\dleq D$. We show that there is an $\EXPTIME$-procedure to find such a countermodel for $C\dleq D$. Let $\I\in\nu(v)$ be a model having a domain element $a$ such that $a\in C^\I$ and $a\not\in D^\I$. For every model $\J\in\nu(u)$, $u\in V$, let $\J_a$ be the interpretation having domain $\{a\}$ such that for each $A\in\Nc$, it holds $A^{\J_a}=A^{\J}\cap\{a\}$. Clearly, we have  $\J_a\models\O_u$, since $\O_u$ is a $\P$-ontology. Consider the map $f$ defined as $f(u)=\{\J_a \mid \J\in\nu(v)\}$, for all $u\in V$. One can readily verify that this is an interpretation function for $\N$ and hence, we conclude that $(\N,v)\not\models C\dleq D$ iff there is a one-element countermodel $\I\in\nu(v)$ for $C\dleq D$. The number of one-element models for each node $u\in V$ is exponential in the size of $\O_u$ and for a given one-element model $\I\models\O_u$, the condition $\I\in\nu(u)$ can be verified in time exponential in the size of the network $\N$, which proves our claim. 
\end{proof}

\begin{lemma}[Upper Bound for Acyclic $\P$-Networks]
\label{lemma:P:acyclic:membership}
The entailment problem in acyclic $\P$-ontology networks is in $\PSPACE$. 
\end{lemma}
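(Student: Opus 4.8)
The plan is to reduce, via Lemma~\ref{lemma:singleton:model:agreement}, to singleton model agreements and then exploit acyclicity by a depth-first search whose call stack stays polynomially bounded; this is exactly the strategy behind Theorem~\ref{Teo_UpperBound_PAcyclic}. Let $\Si$ collect all signature symbols occurring in $\N$ and in $\alpha$; since $\N$ and $\alpha$ are formulated in $\P$, $\Si$ consists of concept names only, so up to the interpretation of $\Si$ every singleton interpretation is of the form $\I(s)=(\set{d},\cdot^{\I(s)})$ with $A^{\I(s)}=\set{d}$ iff $A\in s$, for some $s\subseteq\Si$. By Lemma~\ref{lemma:singleton:model:agreement}, $\O\not\models_\N\alpha$ holds iff there is $s\subseteq\Si$ with $\I(s)\not\models\alpha$ for which some model agreement $\mu$ of $\N$ satisfies $\I(s)\in\mu(\O)$. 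Thus it suffices to decide, for given $(\O,s)$, the predicate $\mathrm{Agr}(\O,s)$ stating that such a $\mu$ exists, and then to loop over the at most exponentially many candidate sets $s$ using only a polynomial-size counter.

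First I would implement $\mathrm{Agr}$ by a recursive procedure $P(\O,s)$: if $\I(s)\not\models\O$, return \emph{false}; otherwise, for every import relation $\tuple{\O,\Sigma',\O'}\in\N$, iterate over all $s'\subseteq\Si$ with $s\cap\Sigma'=s'\cap\Sigma'$, calling $P(\O',s')$; return \emph{true} iff for each such import relation at least one of these recursive calls returned \emph{true}. Since $\N$ is acyclic, no ontology can recur along a single branch of the recursion (that would close a cycle), so the recursion depth is at most the number of ontologies in $\N$; using fixed orders on import relations and on subsets of $\Si$, each stack frame needs to store only $\O$, $s$, and the two iteration indices, which is polynomial. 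Hence $P$, and with it the whole decision procedure, runs in polynomial space in $\sizeof{\N}+\sizeof{\alpha}$.

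Next I would verify correctness of $P$ in both directions. For soundness, gather into a mapping $m$ every pair $(\O,s)$ for which some (possibly nested) invocation $P(\O,s)$ returned \emph{true}, and put $\mu(\O)=\set{\I(s)\mid s\in m(\O)}$; the first line of $P$ ensures $\I(s)\models\O$ for all $\I(s)\in\mu(\O)$, and the recursive step ensures that for every $\tuple{\O,\Sigma',\O'}\in\N$ and $\I(s)\in\mu(\O)$ there is $s'\in m(\O')$ with $s\cap\Sigma'=s'\cap\Sigma'$, i.e. $\I(s)=_{\Sigma'}\I(s')\in\mu(\O')$; so $\mu$ is a model agreement witnessing $\mathrm{Agr}(\O,s)$. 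For completeness, assume $\mathrm{Agr}(\O,s)$; by Lemma~\ref{lemma:singleton:model:agreement} there is a singleton model agreement $\mu$ with $\I(s)\in\mu(\O)$, and I would argue by contradiction, picking an invocation $P(\O,s)$ that returns \emph{false} while $\I(s)\in\mu(\O)$ and that is innermost with this property; since $\I(s)\models\O$, the failure must stem from some $\tuple{\O,\Sigma',\O'}\in\N$ all of whose recursive calls failed, yet $\mu$ supplies $\I'\in\mu(\O')$ with $\I(s)=_{\Sigma'}\I'$, and writing $\I'=\I(s')$ gives a strictly inner failing call $P(\O',s')$ with $\I(s')\in\mu(\O')$, contradicting minimality. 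Finally, to decide $\O\models_\N\alpha$, I enumerate all $s\subseteq\Si$, return $\O\not\models_\N\alpha$ iff some $s$ has $\I(s)\not\models\alpha$ and $P(\O,s)$ \emph{true}, and argue its correctness from the two directions above.

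The main obstacle is bookkeeping rather than conceptual: making precise that the recursion terminates along every branch (this is exactly where acyclicity enters, via the well-founded reachability order) and that the ``innermost failing call'' is well-defined, and checking that no more than $\O$, $s$, and constant-size loop indices need survive on the call stack so that the $\PSPACE$ bound really holds. Once this is in place, the bound is a genuine improvement over the $\EXPTIME$ bound of Theorem~\ref{Teo_UpperBound_PGeneral}, obtained because in the acyclic case the fixpoint over exponentially many sets can be replaced by a bounded-depth search.
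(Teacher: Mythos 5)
Your proposal is correct and follows essentially the same route as the paper's proof of Theorem~\ref{Teo_UpperBound_PAcyclic}: reduction to singleton model agreements via Lemma~\ref{lemma:singleton:model:agreement}, the same recursive procedure $P(\O,s)$ with the same polynomial-space accounting, the same soundness argument via the induced mapping $m$, and the same minimal-counterexample argument for completeness. No gaps.
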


\begin{proof}
Let $\N$ be an acyclic $\P$-ontology network. By Lemma \ref{Lem_Reduction2PointedNetworks}, we may assume that $\N$ is $v$-pointed for some node $v$. It follows from the argument in the proof of Lemma \ref{lemma:P:cyclic:membership} that $(\N,v)\not\models C\dleq D$ iff there is a one-element counter model $\I\in\nu(v)$ for $C\dleq D$. Given a node $u$ and and a one-element model $\I$ of $\O_u$, it can be decided in $\NPSPACE$ whether $\I$ satisfies all importing relations defined for $\O_u$. Since the length of every path in $\N$ is bounded by the size of $\N$ and guessing can be done independently in every path, it follows that the condition $\I\in\nu(u)$ can be verified in $\NPSPACE$. Then by Savitch's theorem we obtain the required statement. 
\end{proof}

}


\section{Conclusions}\label{Sect_Conclusions}

We have introduced a new mechanism for ontology integration which is based on semantic import relations between ontologies and is a generalization of the standard OWL importing. In order to import an external ontology $\O$ into a local one, one has to specify an import relation, which defines a set of symbols, whose semantics should be borrowed from $\O$. The significant feature of the proposed mechanism, which comes natural in complex ontology integration scenarios, is that every ontology has its own view on ontologies it refines and the views on the same ontology are independent unless coordinated by import relations. We have shown that this feature can lead to an exponential increase of the time complexity of reasoning over ontologies combined with acyclic import relations. Intuitively, this is because one has to consider multiple views of the same ontology, each of which gives a different ontology. When cyclic importing is allowed, the complexity jumps to undecidability, even if every ontology in a combination is given in the DL $\EL$. Similarly, this is because one has to consider infinitely many views on the same ontology. These complexity results are shown for situations when the imported symbols include roles. It is natural to ask whether the complexity drops when the imported symbols are concept names. The second parameter which may influence the complexity of reasoning is the semantics which is `imported'. In the proposed mechanism, importing the semantics of symbols is implemented via agreement of models of ontologies. One can consider refinements of this mechanism, e.g., by carefully selecting the classes of models of ontologies which must be agreed. The third way to decrease the complexity is to restrict the language in which ontologies are formulated. We conjecture that reasoning with cyclic imports is decidable for ontologies formulated in the family of DL-Lite. 


\bibliography{references}

\begin{thebibliography}{}

\bibitem[\protect\citeauthoryear{Bao \bgroup \em et al.\egroup
  }{2009}]{PDL-Chapter}
Jie Bao, George Voutsadakis, Giora Slutzki, and Vasant Honavar.
\newblock Package-based description logics.
\newblock In {\em Modular Ontologies}, pages 349--371. 2009.

\bibitem[\protect\citeauthoryear{Borgida and Serafini}{2003}]{DDL-Journal}
Alexander Borgida and Luciano Serafini.
\newblock Distributed description logics: Assimilating information from peer
  sources.
\newblock {\em J. Data Semantics}, pages 153--184, 2003.

\bibitem[\protect\citeauthoryear{{Cuenca Grau} \bgroup \em et al.\egroup
  }{2008}]{GHM+:08:OWL}
Bernardo {Cuenca Grau}, Ian Horrocks, Boris Motik, Bijan Parsia, Peter~F.
  Patel-Schneider, and Ulrike Sattler.
\newblock {OWL} 2: The next step for {OWL}.
\newblock {\em J. Web Sem.}, 6(4):309--322, 2008.

\bibitem[\protect\citeauthoryear{Euzenat \bgroup \em et al.\egroup
  }{2007}]{AlignmentBasedModules}
J\'{e}r\^{o}me Euzenat, Antoine Zimmermann, and Frederico Luiz~Goncalves
  de~Freitas.
\newblock Alignment-based modules for encapsulating ontologies.
\newblock In {\em Proceedings of the 2nd International Workshop on Modular
  Ontologies, WoMO 2007, Whistler, Canada, October 28, 2007}, 2007.

\bibitem[\protect\citeauthoryear{Grau and Motik}{2012}]{ImportByQuery}
Bernardo~Cuenca Grau and Boris Motik.
\newblock Reasoning over ontologies with hidden content: The import-by-query
  approach.
\newblock {\em J. Artif. Intell. Res. (JAIR)}, pages 197--255, 2012.

\bibitem[\protect\citeauthoryear{Grau \bgroup \em et al.\egroup
  }{2009}]{EConnections-Chapter}
Bernardo~Cuenca Grau, Bijan Parsia, and Evren Sirin.
\newblock Ontology integration using epsilon-connections.
\newblock In {\em Modular Ontologies}, pages 293--320. 2009.

\bibitem[\protect\citeauthoryear{Homola and
  Serafini}{2010}]{TowardsFormalComparison}
Martin Homola and Luciano Serafini.
\newblock Towards formal comparison of ontology linking, mapping and importing.
\newblock In {\em Description Logics}, 2010.

\bibitem[\protect\citeauthoryear{Kazakov}{2008}]{Kazakov:08:RIQ:SROIQ}
Yevgeny Kazakov.
\newblock {$\mathcal{RIQ}$} and {$\mathcal{SROIQ}$} are harder than
  {$\mathcal{SHOIQ}$}.
\newblock In {\em Proc.\ 11th Int.\ Conf.\ on Principles of Knowledge
  Representation and Reasoning (KR'08)}, pages 274--284. AAAI Press, 2008.

\bibitem[\protect\citeauthoryear{Pan \bgroup \em et al.\egroup
  }{2006}]{Pan-SemanticImport}
Jeff~Z. Pan, Luciano Serafini, and Yuting Zhao.
\newblock Semantic import: An approach for partial ontology reuse.
\newblock In Peter Haase, Vasant Honavar, Oliver Kutz, York Sure, and Andrei
  Tamilin, editors, {\em WoMO}, volume 232 of {\em CEUR Workshop Proceedings}.
  CEUR-WS.org, 2006.

\bibitem[\protect\citeauthoryear{Shvaiko and
  Euzenat}{2013}]{OntologyMatchingSurvey}
Pavel Shvaiko and Jerome Euzenat.
\newblock Ontology matching: State of the art and future challenges.
\newblock {\em IEEE Trans. on Knowl. and Data Eng.}, 25(1):158--176, January
  2013.

\bibitem[\protect\citeauthoryear{Smith \bgroup \em et al.\egroup
  }{2007}]{SAR+:07:OBO}
Barry Smith, Michael Ashburner, Cornelius Rosse, Jonathan Bard, William Bug,
  Werner Ceusters, Louis~J Goldberg, Karen Eilbeck, Amelia Ireland,
  Christopher~J Mungall, Neocles Leontis, Philippe Rocca-Serra, Alan
  Ruttenberg, Susanna-Assunta Sansone, Richard~H Scheuermann, Nigam Shah,
  Patricia~L Whetzel, and Suzanna Lewis.
\newblock The {OBO} foundry: coordinated evolution of ontologies to support
  biomedical data integration.
\newblock {\em Nat Biotech}, 25(11):1251--1255, 2007.

\end{thebibliography}
\bibliographystyle{named}


\newpage

\begin{center}
{\LARGE \textbf{Appendix}}
\end{center}

\bigskip\bigskip

\noindent\textbf{{\Large Proofs for Section \ref{Sect_Hardness}}}

\medskip

\noindent\textbf{Theorem \ref{Teo_Hardness_EL-acyclic}.} \textit{Entailment in acyclic $\EL$-ontology networks is $\EXPTIME$-hard.}

\medskip

Let $M=\langle Q,{\mathcal{A}},\delta \rangle$ be a TM and $n=\IEXP{m}$ an exponential, for $m\geqslant 0$. Consider an ontology $\O$ defined for $M$ and $n$ by axioms (\ref{Eq_ELExptimeRchain})-(\ref{Eq_ELExptime_PropagateH}) below:

\begin{align} \tag{\ref{Eq_ELExptimeRchain}}
A \dleq \ex{r^{n\cdot (2n+3)}}.(\qo\dcap\ex{(r,\b)^{2n+2}} )
\end{align}
where $A\not\in Q\cup\mathcal{A}$.

\begin{align} \tag{\ref{Eq_ELExptimeTransition}}
\ex{r^{2n}}(X \dcap \ex{r}.(Y \dcap \ex{r}.(U \dcap \ex{r}.Z))) \dleq W, 
\end{align}
for all $X,Y,U,Z,W\in{Q\cup\mathcal{A}}$ such that $XYUZ\overset{\delta'}{\mapsto} W$.

\begin{align} \tag{\ref{Eq_ELExptime_PropagateH}}
\ex{r}.\qacc \dleq H, \ \ \ex{r}.H \dleq H  
\end{align}

\medskip

\begin{lemma}
$M$ accepts the empty word in $n$ steps iff $\O\models A\dleq H$. 
\end{lemma}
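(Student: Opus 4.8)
The plan is to establish the two directions separately, using a precise notion of what it means for a segment of an $r$-chain to "represent a fragment of a configuration". Fix the enumeration of segments as in the theorem statement: the $r$-chain produced by axiom~(\ref{Eq_ELExptimeRchain}) starts at an element $a\in A^\I$, reaches (after $n\cdot(2n+3)$ steps of $r$) an element lying in $\qo^\I$, and continues with a chain witnessing $\exists(r,\b)^{2n+2}$; I read this backwards so that segment $s_0$ (the last $2n+3$ elements) carries the initial configuration $\c_0=\b\cdots\b\,\qo\,\b\cdots\b$ of width $2n+3$, and segments $s_0,\dots,s_n$ are the $n+1$ consecutive blocks. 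I say an element $x$ in the chain at position $i$ inside segment $s_j$ \emph{carries} symbol $c$ if $x\in c^\I$. The key bookkeeping lemma, proved by induction on $j$, is: in any model $\I\models\O$, if segment $s_0$ carries $\c_0$ then for every $j\le n$, every element $x$ at position $i$ of segment $s_j$ with $3\le i\le 2n+3$ carries the symbol $\c_j[i]$, where $\c_{j}=\delta(\c_{j-1})$ and $\c_j$ has width $2n+3$ (this uses that a successor configuration of a width-$(2n+3)$ configuration, after $n\le$ half-width steps, cannot reach the boundary, so the first two cells of each block need not be tracked). The induction step is exactly axiom~(\ref{Eq_ELExptimeTransition}): the four elements at positions $i-2,i-1,i,i+1$ of $s_{j-1}$ are at $r$-distance $2n,2n+1,2n+2,2n+3$ — i.e. $2n$ followed by three $r$-steps — from the element at position $i$ of $s_j$ (since consecutive segments are offset by exactly $2n+3$ in the chain), so if they carry $X,Y,U,Z$ with $XYUZ\overset{\delta'}{\mapsto}W$, the target carries $W=\c_j[i]$.

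For the "only if" direction, assume $M$ accepts $\emptyset$ in $n$ steps via $\c_0,\dots,\c_n$ with $\qacc$ occurring in $\c_n$. Take any $\I\models\O$ and any $a\in A^\I$. By~(\ref{Eq_ELExptimeRchain}) there is an $r$-chain from $a$ with the stated structure, so $s_0$ carries $\c_0$; by the bookkeeping lemma every $s_j$ carries $\c_j$; in particular some element of segment $s_n$ lies in $\qacc^\I$. Then, walking back along the chain from that element to $a$ using $n\cdot(2n+3)$ applications of $\exists r.\qacc\dleq H$ followed by $\exists r.H\dleq H$ (axiom~(\ref{Eq_ELExptime_PropagateH})), we get $a\in H^\I$. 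Hence $\O\models A\dleq H$.

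For the "if" direction (contrapositive): suppose $M$ does \emph{not} accept $\emptyset$ in $n$ steps. Let $\c_0,\dots,\c_n$ be the unique run of $M$ from $\c_0$ (if $M$ halts in rejection before step $n$, pad by repeating a halting non-$\qacc$ configuration, or equivalently stop the chain — it is cleaner to assume $\delta$ is total and no configuration contains $\qacc$). Build $\I$ with $\Delta^\I$ the set of $n\cdot(2n+3)+(2n+3)+1$ chain elements $a=x_0,x_1,\dots$ plus enough elements to witness the inner $\exists(r,\b)^{2n+2}$ at the far end; set $A^\I=\{a\}$, interpret each $c\in Q\cup\mathcal A$ so that position $i$ of segment $s_j$ is in $c^\I$ iff $c=\c_j[i]$ (and on the tail portion put $\b$), interpret $r$ as the chain successor relation (adding the loops/witnesses needed so that every required $\exists r.(\cdots)$ and every antecedent of~(\ref{Eq_ELExptimeTransition}) that is actually satisfied has its $W$ already present — here one checks that the consequences forced by~(\ref{Eq_ELExptimeTransition}) are consistent with the chosen symbol assignment precisely because $\c_{j}=\delta(\c_{j-1})$), and set $\qacc^\I=H^\I=\varnothing$. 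Then $\I\models\O$: axioms~(\ref{Eq_ELExptimeRchain}) and~(\ref{Eq_ELExptimeTransition}) hold by construction, and~(\ref{Eq_ELExptime_PropagateH}) holds vacuously since no element lies in $\qacc^\I$ and $H^\I=\varnothing$. But $a\notin H^\I$, so $\I\not\models A\dleq H$, i.e. $\O\not\models A\dleq H$.

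The main obstacle is the "if" direction: one must produce a single finite model of the whole ontology, and $\EL$'s lack of negation makes it easy to \emph{over}-derive — in particular the transition axioms~(\ref{Eq_ELExptimeTransition}) fire at \emph{every} offset, not just between designated consecutive segments, so one has to argue that all such firings are already consistent with the canonical symbol assignment (they are, because the chain literally encodes the run, so any $\exists r^{2n}$-reachable $4$-tuple matching a $\delta'$-pattern sits at the right place and forces the correct $W$) and that adding the $\exists$-witnesses (loops at the chain end, the blank tail) introduces no new $\qacc$ membership. Handling the chain boundary — why only positions $3,\dots,2n+3$ of each segment need to be tracked and why the first two cells of $\c_j$ can be set to $\b$ without violating any transition axiom — is the fiddly part of the bookkeeping lemma and should be stated carefully up front.
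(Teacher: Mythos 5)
There is a genuine gap, and it sits in the central ``bookkeeping lemma.'' You claim that consecutive segments are offset by $2n+3$ in the chain and, simultaneously, that the elements at positions $i-2,i-1,i,i+1$ of $s_{j-1}$ lie at $r$-distance $2n,2n+1,2n+2,2n+3$ from the element at position $i$ of $s_j$. These two statements are inconsistent: with an offset of $2n+3$, position $i-2$ of $s_{j-1}$ is at distance $(2n+3)-2=2n+1$, not $2n$. What axiom~(\ref{Eq_ELExptimeTransition}) actually reads from position $i$ of $s_j$ is the $4$-tuple at positions $i-3,i-2,i-1,i$ of $s_{j-1}$, and since $\delta'$ maps $c_{p-2}c_{p-1}c_{p}c_{p+1}\overset{\delta'}{\mapsto}c'_{p}$, the symbol forced onto $s_j[i]$ is the successor symbol at the position of the \emph{second} element of that tuple. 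Consequently the alignment between segment positions and tape cells shifts by one cell at every step; your fixed invariant ``$s_j[i]$ carries $\c_j[i]$ for $3\le i\le 2n+3$'' is false, and the induction does not close. The drift accumulates to $n$ cells over $n$ steps, which a width-$(2n+3)$ configuration with only two untracked boundary cells cannot absorb; note also that axiom~(\ref{Eq_ELExptimeRchain}) places $\qo$ at the \emph{first} position of $s_0$ with no blanks before it inside the segment, so your reading of $\c_0$ as $\b\cdots\b\,\qo\,\b\cdots\b$ of width $2n+3$ already disagrees with what the chain encodes. The paper's proof resolves exactly this by taking configurations of width $4n+3$ with the head initially at cell $2n+1$ (so $\c_0=\b^{2n}\qo\b^{2n+2}$) and proving the \emph{shifted} invariant $s_i[j]\in(\c_i[2n+j-i])^\I$: each segment is a window of width $2n+3$ that slides left by one cell per step, and since the head moves at most one cell per step it always stays inside the window.

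A secondary issue, entangled with the first: ``the first two cells need not be tracked'' is not available to you in the model-construction (``if'') direction. Axiom~(\ref{Eq_ELExptimeTransition}) is a GCI, so every element whose $2n$-successor starts a matching $XYUZ$ pattern is \emph{forced} into $W^\I$ --- including the first three positions of each segment, whose defining $4$-tuples wrap around the segment boundary. To conclude $\qacc^\I=\varnothing$ you must compute what is forced there; the paper does this explicitly (the three boundary cases), using that the cells revealed at the left edge are blank and that two particular cells cannot both be state symbols when $n>1$. Your ``only if'' propagation of $H$ and the overall shape of the countermodel are otherwise the same as the paper's, but both directions need the corrected, shifting invariant before they go through.
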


\begin{proof}
For the purpose of this proof,  we let \emph{configuration} of $M$ be a word of length $4n+3$ in the alphabet $Q\cup \mathcal{A}$. Then, given a configuration $\c$, the notion of successor configuration is naturally induced by $\delta'$. 
Let us call the word of the form

\begin{equation*}
\c_{0} = \word{\b\ldots\b}{2n}{\qo}{\b\ldots\b}{2n+2}
\end{equation*} 
\emph{initial configuration} of $M$.

Then $M$ accepts the empty word in $n$ steps iff there is a sequence $\c_0,\ldots ,\c_{n}$ of configurations in the above sense such that for all $0\leqslant i < n$, $\c_{i+1}$ is a successor of $\c_{i}$ and $\qacc$ is the state symbol in $\c_{n}$.  

Let $\I=(\Delta,\cdot^\I)$ be a model of ontology $\O$ with a domain element $a\in A^\I$. Then by axiom (\ref{Eq_ELExptimeRchain}), there is a $r$-chain outgoing from $a$ which contains $n+1$ consequent segments $s_0,\ldots ,s_{n}$ of length $2n+3$. We consider each segment as a linearly ordered set of elements from $\Delta$ and for $1\leqslant j\leqslant 2n+3$, we denote by $s_i[j]$ the $j$-th element of $s_i$. Given a word $w$ of length $2n+3$, we say that segment $s_i$ \emph{represents} $w$ if $s_i[j]\in w[j]^\I$, for all $1\leqslant j \leqslant 2n+3$.  We assume the following enumeration of segments in the $r$-chain:

\begin{equation*}
\word{\ldots}{s_{n}}{\ldots}{\ldots}{s_1}\underbrace{\qo\b\ldots\b}_{s_0}
\end{equation*}

\noindent i.e. we let $s_0$ represent a fragment of the initial configuration. \smallskip

We show that for all $0\leqslant i \leqslant n$ and $1\leqslant j \leqslant 2n+3$, it holds $s_i[j]\in(\c_i[2n+j-i])^\I$. Then clearly, for every $0\leqslant i \leqslant n$, there exists $j$ such that $s_i[j]\in \q^\I$, where $\q$ is the state symbol from $\c_i$. In particular, $s_n[j]\in \qacc^\I$, for some $j$, and hence $a\in H^\I$, due to axiom (\ref{Eq_ELExptime_PropagateH}). We use induction on $i$. The case $i=0$ is obvious, so let us assume that the claim holds for $0\leqslant i < n$. Note that configuration $\c_i$ has the form

\begin{align*}
\word{\b\ldots\b}{\geqslant 2n-i}{\ldots\q\ldots}{\b\ldots\b}{\geqslant 2n+2-i}
\end{align*}

thus, by the induction hypothesis, $s_i$ represents a fragment $w_i$ of $\c_i$ having the form

\begin{align*}
\ldots \q \ldots\underbrace{\b\ldots\b}_{\geqslant 2n+2-2i}
\end{align*}
where $w_i[j]=\c_i[2n+j-i]$, for all $1\leqslant j \leqslant 2n+3$. 
Note that since $i<n$, we have $2n+2-2i\geqslant 4$. 

Then by axiom (\ref{Eq_ELExptimeTransition}), $s_{i+1}$ must represent a word $w_{i+1}$ of the form 

\begin{align*}
\word{\ldots}{3}{}{\ldots \ldots \underbrace{\b\ldots\ldots\ldots\b}_{\geqslant 2n+2-2(i+1)\geqslant 2}}{2n}
\end{align*}

where for $4\leqslant j \leqslant 2n+3$ and $k=j-3$, it holds 

\begin{align*}
w_i[k]\ w_i[k+1] \ w_i[k+2] \ w_i[k+3]\overset{\delta'}{\mapsto}w_{i+1}[j].
\end{align*}

Since we have $w_i[j]=\c_i[2n+j-i]$, for $1\leqslant j \leqslant 2n+3$, it follows by definition of $\delta'$ that $w_{i+1}[j]=\c_{i+1}[2n+j-(i+1)]$, for $4\leqslant j \leqslant 2n+3$. It remains to show how the first three symbols in $w_{i+1}$ are defined. 

By axiom (\ref{Eq_ELExptimeTransition}), it holds

\begin{align*}
w_{i+1}[2n+3] \ w_{i}[1] \  w_{i}[2] \ w_{i}[3] \overset{\delta'}{\mapsto} w_{i+1}[3] \\
w_{i+1}[2n+2] \ w_{i+1}[2n+3] \ w_{i}[1] \ w_{i}[2] \overset{\delta'}{\mapsto} w_{i+1}[2] \\
w_{i+1}[2n+1] \ w_{i+1}[2n+2] \ w_{i+1}[2n+3] \ w_{i}[1] \overset{\delta'}{\mapsto} w_{i+1}[1]
\end{align*} \smallskip

and we have $w_{i+1}[2n+2]=w_{i+1}[2n+3]=\b$. By the induction hypothesis, $w_i[k]=\c_i[2n+k-i]$, for $k=1,2,3$, hence, $\c_i[2n-i]=\c_i[2n-i-1]=\b$ and therefore, $w_{i+1}[j] = \c_i[2n+j-(i+1)]$, for $j=2,3$. Note that at most one of $w_{i+1}[2n+1]$, $w_{i}[1]$ is a state symbol, because otherwise $n=1$, which is not the case, since we have $n=\IEXP{m}$ and $m\geqslant 1$. Hence, we conclude that $w_{i+1}[1] = \c_i[2n+1-(i+1)]$.

\medskip

For the `if' direction, suppose $M$ does not accept the empty word in $n$ steps. Consider an interpretation $\I=(\Delta, \cdot^\I)$ having domain $\Delta=\{x_1,\ldots x_k\}$, for $k=(n+1)\cdot(2n+3)$, such that:

\begin{itemize}
\item $r^\I=\{\langle x_i,x_{i+1} \rangle\}_{1\leqslant i < k}$;
\item $A^\I=\{x_1\}$ and $H^\I=\qacc^\I=\varnothing$;
\item $\qo^\I=\{x_{p}\}$ and $x_l\in\b^\I$, for $p=n\cdot(2n+3)+1 \ $ and $p+1\leqslant l \leqslant k$;
\item for any $1\leqslant i \leqslant n\cdot(2n+3)$ and $W\in Q\cup\mathcal{A}$, it holds $x_i\in W^\I$ iff there exist $V_0,\ldots ,V_3\in Q\cup\mathcal{A}$ such that $x_{i+2n+j}\in V_j$, for $0\leqslant j \leqslant 3$, and $V_0 \ V_1 \ V_2 \ V_3\overset{\delta'}{\mapsto} W$. 
\end{itemize}
By using arguments from the proof of the `only if' direction, one can verify that $\I$ is well defined and $\I$ is a model of ontology $\O$ such that $\I\not\models A\dleq H$.
\end{proof}

To complete the proof of the theorem let us show that ontology $\O$ is expressible by an acyclic $\EL$-ontology network of size polynomial in $m$. Note that $\O$ contains axioms (\ref{Eq_ELExptimeRchain}), (\ref{Eq_ELExptimeTransition}) with concepts of size exponential in $m$. Consider axiom (\ref{Eq_ELExptimeRchain}) and a concept inclusion $\varphi$ of the form

\begin{equation*}
A\dleq \ex{r^{n\cdot(2n+3)}}.B
\end{equation*}
where $B$ is a concept name. Observe that it is equivalent to

\begin{equation*}
A\dleq \underbrace{\ex{r^p}.\ex{r^p}}_{2~\text{times}}.\underbrace{\ex{r^n}.\ex{r^n}.\ex{r^n}}_{3~\text{times}}.B
\end{equation*}
where $p=\IEXP{2m}$. Consider axiom $\psi$ of the form $A\dleq B$. By iteratively applying Lemma \ref{Lem_Expressibility_ExistExpSubstitution} we obtain that $\psi[B\mapsto \ex{r^p}.\ex{r^p}.B]$ is expressible by an acyclic $\EL$-ontology network of size polynomial in $m$. By repeating this argument we obtain that the same holds for $\varphi$. Further, by Lemma \ref{Lem_Expressibility_SimpleSubstitution} the axiom $\theta=\varphi[B\mapsto \qo \dcap B]$ is   expressible by an acyclic $\EL$-ontology network of size polynomial in $m$. Again, by iteratively applying Lemma \ref{Lem_Expressibility_ExistExpSubstitution} together with Lemma \ref{Lem_Expressibility_SimpleSubstitution} we conclude that $\theta[B\mapsto \exists (r,\b)^{2n+2}]$ is expressible by an acyclic $\EL$-ontology network of size polynomial in $m$ and hence, so is axiom (\ref{Eq_ELExptimeRchain}). The expressibility of axioms of the form (\ref{Eq_ELExptimeTransition}) is shown identically. The remaining axioms of ontology $\O$ are $\EL$-axioms whose size does not depend on $m$. By applying Lemma \ref{Lem_IteratedExpressibility} we obtain that there exists an acyclic $\EL$-ontology network  $\N$ of size polynomial in $m$ and an ontology $\O_\N$ such that $\O$ is $(\N,\O_\N)$-expressible and thus, it holds $\O_\N\models_\N A\dleq H$ iff $M$ accepts the empty word in $\IEXP{m}$ steps. Theorem \ref{Teo_Hardness_EL-acyclic} is proved.



\bigskip

\noindent\textbf{Theorem \ref{Teo_Hardness_EL-cyclic}.} \textit{Entailment in cyclic $\EL$-ontology networks is $\RE$-hard.}

\medskip

For a TM $M=\langle Q,{\mathcal{A}},\delta \rangle$, we define an infinite ontology $\O$, which contains variants of axioms (\ref{Eq_ELExptimeRchain}-\ref{Eq_ELExptimeTransition}) from Theorem \ref{Teo_Hardness_EL-acyclic} and additional axioms for a correct implementation of transitions of $M$. 
Ontology $\O$ consists of the following families of axioms: \vspace{-0.2cm}

\begin{equation}\tag{\ref{Eq_ELUndecStartRchain}}
A\dleq\exists r^k.(\exists v^l.L\dcap\varepsilon\dcap\ex{r}.(\qo\dcap\exists (r,\b)^{2l+2}))
\end{equation}

for $A,L,\varepsilon\not\in Q\cup\mathcal{A}$ and all $k,l\geqslant 0$; \smallskip

\begin{equation} \tag{\ref{Eq_ELUndecPropagateLengthMarker}}
\ex{r}.\exists v^k.L \dleq \exists v^k.L, \ \ k\geqslant 0
\end{equation}

\begin{equation} \tag{\ref{Eq_ELUndecPropagateFragmentMarker}}
\exists v^k.L \dcap \exists r^{2k+4}.\varepsilon \dleq \varepsilon, \ \ k\geqslant 0
\end{equation}

\begin{align} \tag{\ref{Eq_ELUndecTransition}}
\exists v^k.L \ & \dcap \\ & \dcap \exists r^{2k+1}. (X \dcap \ex{r}.(Y \dcap \ex{r}.(U \dcap \ex{r}.Z))) \dleq W \nonumber
\end{align} 
for $k\geqslant 0$ and all $X,Y,U,Z,W\in Q\cup{\mathcal{A}}$ such that $XYUZ\overset{\delta'}{\mapsto} W$;  

\begin{align} \tag{\ref{Eq_ELUndecEpsilonTransition1}}
\exists v^k. & L \ \dcap  \\ &  \dcap \exists r^{2k}. (\b \dcap \ex{r}.(\varepsilon \dcap \ex{r}.(Y \dcap \ex{r}.(U \dcap \ex{r}.Z)))) \dleq W \nonumber
\end{align} 
for $k\geqslant 0$ and all $Y,U,Z,W\in Q\cup{\mathcal{A}}$ such that $\b YUZ\overset{\delta'}{\mapsto} W$; \vspace{-0.2cm}

\begin{align} \tag{\ref{Eq_ELUndecEpsilonTransition2}}
\exists v^k. & L \ \dcap \\ & \dcap \exists r^{2k}. (\b \dcap \ex{r}.(\b \dcap \ex{r}.(\varepsilon \dcap \ex{r}.(U \dcap \ex{r}.Z)))) \dleq W \nonumber
\end{align} 
for $k\geqslant 0$ and all $U,Z,W\in Q\cup\mathcal{A}$ such that $\b\b UZ\overset{\delta'}{\mapsto} W$; \vspace{-0.2cm}

\begin{align} \tag{\ref{Eq_ELUndecEpsilonTransition3}}
\exists v^k. & L \ \dcap \\ & \dcap \exists r^{2k}. (X \dcap \ex{r}.(\b \dcap \ex{r}.(\b \dcap \ex{r}.(\varepsilon \dcap \ex{r}.Z)))) \dleq W \nonumber
\end{align} 
for $k\geqslant 0$ and all $X,Z,W\in Q\cup\mathcal{A}$ such that $X\b\b Z\overset{\delta'}{\mapsto} W$;

\begin{equation} \tag{\ref{Eq_ELUndecid_PropagateH}}
\qacc \dleq H, \ \ \ex{r}.H \dleq H 
\end{equation}

\medskip

\begin{lemma}
It holds $\O\models A\dleq H$ iff $M$ halts.
\end{lemma}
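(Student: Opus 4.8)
The plan is to follow the pattern of Theorem~\ref{Teo_Hardness_EL-acyclic}; the new feature is that the `prefix' that stores consecutive configuration fragments may now have arbitrary length, so the length markers $\exists v^{k}.L$ and the separator $\varepsilon$ are used to keep the transition axioms synchronised along each $r$-chain. I establish the two implications separately.

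For the direction ``if $M$ halts then $\O\models A\dleq H$'', suppose $M$ reaches $\qacc$ from the empty word in $n$ steps; let $\c_0,\ldots,\c_n$ be the corresponding run, with $\qacc$ occurring in $\c_n$, and assume w.l.o.g.\ $n>1$. I would take an arbitrary $\I\models\O$ and $a\in A^\I$, and instantiate axiom~(\ref{Eq_ELUndecStartRchain}) with $l=n$ and $k=n\cdot(2n+4)$, so that $\I$ contains an $r$-chain $a=y_0,y_1,\ldots$ in which $y_{n\cdot(2n+4)}\in(\exists v^n.L\dcap\varepsilon)^\I$ and is followed by a segment $s_0$ of length $2n+3$ representing the $\c_0$-fragment $\qo\b\cdots\b$. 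Axioms~(\ref{Eq_ELUndecPropagateLengthMarker}) with $k=n$ put $\exists v^n.L$ on every $y_i$ with $0\le i\le n\cdot(2n+4)$, and then axioms~(\ref{Eq_ELUndecPropagateFragmentMarker}) with $k=n$ give, by backward induction in steps of $2n+4$, that $y_{i\cdot(2n+4)}\in\varepsilon^\I$ for $0\le i\le n$, so the chain carries $n+1$ segments $s_0,\ldots,s_n$ of length $2n+3$. An induction on $i$ — carried out exactly as in the detailed proof of Theorem~\ref{Teo_Hardness_EL-acyclic}, with the transition axioms~(\ref{Eq_ELUndecTransition}) handling the interior $4$-symbol windows of a segment and the boundary variants~(\ref{Eq_ELUndecEpsilonTransition1})--(\ref{Eq_ELUndecEpsilonTransition3}) handling windows that meet an $\varepsilon$, all of them firing because the guard $\exists v^n.L$ matches the stride $2n+4$ — then shows that $s_i$ represents a fragment of $\c_i$ and, in particular, contains the state symbol of $\c_i$. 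Hence some $b\in s_n$ satisfies $b\in\qacc^\I$; by the first axiom of~(\ref{Eq_ELUndecid_PropagateH}) we get $b\in H^\I$, and the second axiom propagates $H$ leftwards along the chain to $a$, so $a\in H^\I$. Since $\I$ and $a$ were arbitrary, $\O\models A\dleq H$.

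For the converse I would argue by contraposition: assuming $M$ does not halt on the empty word, I construct a model $\I\models\O$ with $a\in A^\I$ and $a\notin H^\I$. Let $\c_0,\c_1,\ldots$ be the infinite run of $M$ on the empty word; by assumption $\qacc$ occurs in no $\c_i$. The domain of $\I$ consists of $a$ together with pairwise disjoint fresh $r$-chains $R_{k,l}$, one for each $k,l\ge 0$, where $R_{k,l}$ realises the body of the $(k,l)$-instance of axiom~(\ref{Eq_ELUndecStartRchain}): an $r$-prefix of length $k$ ending at a point carrying $\exists v^l.L$ and $\varepsilon$, followed by a segment of length $2l+3$ representing the $\c_0$-fragment $\qo\b\cdots\b$; there is an $r$-edge from $a$ to the start of $R_{k,l}$, the markers $\exists v^l.L$ and $\varepsilon$ are placed along $R_{k,l}$ at the appropriate positions (with short $v$-chains attached to witness $\exists v^l.L$), and the segments of $R_{k,l}$ are filled so as to represent the length-$(2l+3)$ fragments of $\c_0,\c_1,\ldots$ according to $\delta'$. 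I set $A^\I=\{a\}$ and $\qacc^\I=H^\I=\varnothing$. It then remains to verify $\I\models\O$ family by family: (\ref{Eq_ELUndecStartRchain}) holds because $a$ is the only element of $A^\I$ and all required chains are present; (\ref{Eq_ELUndecPropagateLengthMarker})--(\ref{Eq_ELUndecPropagateFragmentMarker}) hold by the placement of $L$ and $\varepsilon$; (\ref{Eq_ELUndecTransition})--(\ref{Eq_ELUndecEpsilonTransition3}) hold because the guard $\exists v^k.L$ lets a transition axiom fire only along a chain $R_{k,\cdot}$ of matching $v$-depth, where the stride $2k+4$ equals the segment-plus-separator length and the segments were filled consistently with $\delta'$; and (\ref{Eq_ELUndecid_PropagateH}) holds trivially since $\qacc^\I=H^\I=\varnothing$. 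As $a\in A^\I\setminus H^\I$, we have $\I\not\models A\dleq H$, hence $\O\not\models A\dleq H$.

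The step I expect to be the main obstacle is the inductive claim in the first direction: one must check that a segment of the fixed length $2n+3$ suffices to compute $n$ transitions correctly, i.e.\ that the shifting $4$-symbol window stays inside a segment throughout the $n$ steps and that the finitely many windows reaching an $\varepsilon$-separator are covered precisely by~(\ref{Eq_ELUndecEpsilonTransition1})--(\ref{Eq_ELUndecEpsilonTransition3}); this is where the blank padding of $s_0$ and the hypothesis $n>1$ are needed, and it is essentially the bookkeeping already done for Theorem~\ref{Teo_Hardness_EL-acyclic}. A secondary difficulty, in the second direction, is checking that the transition axioms never `leak' between chains with different parameters, so that no spurious $\qacc$ is ever produced on any $R_{k,l}$ — which again rests on the $\exists v^k.L$ guards.
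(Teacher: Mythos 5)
Your proof follows the paper's argument essentially verbatim: the same instantiation $k=n\cdot(2n+4)$, $l=n$ of axiom~(\ref{Eq_ELUndecStartRchain}) followed by the segment-by-segment induction borrowed from Theorem~\ref{Teo_Hardness_EL-acyclic} for the halting direction, and the same countermodel built from infinitely many disjoint $r$-chains $R_{k,l}$ with attached $v$-chains witnessing $\exists v^l.L$ for the non-halting direction. The only blemish is a harmless index transposition (``$R_{k,\cdot}$'' should be ``$R_{\cdot,k}$'', since the guard $\exists v^k.L$ selects the chains whose configuration-fragment length is $2k+3$); otherwise the proposal is correct.
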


\medskip

\begin{proof}
Suppose $M$ halts in $n$ steps; w.l.o.g. we assume that $n>1$. Let $\I$ be a model of $\O$ with a domain element $a\in A^\I$. Then by axioms (\ref{Eq_ELUndecStartRchain})-(\ref{Eq_ELUndecTransition}), $\I$ is a model of the concept inclusions: \vspace{-0.2cm}

\begin{align*}
A \dleq \ex{r^{n\cdot (2n+4)}}.(\ex{v^n}.L \dcap \varepsilon \dcap \ex{r}.(\qo\dcap\ex{(r,\b)^{2n+2}}) )
\end{align*}

\begin{align}\label{Eq_ELUndecConcretePropagateLengthMarker}
\ex{r}.\exists v^n.L \dleq \exists v^n.L
\end{align}

\begin{equation}\label{Eq_ELUndecConcretePropagateFragmentMarker}
\exists v^n.L \dcap \exists r^{2n+4}.\varepsilon \dleq \varepsilon
\end{equation}

\begin{align}\label{Eq_ELUndecConcreteTransition}
\exists v^n.L \ \dcap \\ & \dcap \exists r^{2n+1}. (X \dcap \ex{r}.(Y \dcap \ex{r}.(U \dcap \ex{r}.Z))) \dleq W \nonumber
\end{align}
for all $X,Y,U,Z,W\in Q\cup{\mathcal{A}}$ such that $XYUZ\overset{\delta'}{\mapsto} W$;

\begin{align} \label{Eq_ELUndecConcreteEpsilonTransition1}
\exists v^n.L \ & \dcap  \\ &\dcap \exists r^{2n}. (\b \dcap \ex{r}.(\varepsilon \dcap \ex{r}.(Y \dcap \ex{r}.(U \dcap \ex{r}.Z)))) \dleq W \nonumber
\end{align} 
for all $Y,U,Z,W\in Q\cup{\mathcal{A}}$ such that $\b YUZ\overset{\delta'}{\mapsto} W$; \vspace{-0.2cm}

\begin{align} \label{Eq_ELUndecConcreteEpsilonTransition2}
\exists v^n.L \ & \dcap \\ &\dcap \exists r^{2n}. (\b \dcap \ex{r}.(\b \dcap \ex{r}.(\varepsilon \dcap \ex{r}.(U \dcap \ex{r}.Z)))) \dleq W \nonumber
\end{align} 
for all $U,Z,W\in Q\cup\mathcal{A}$ such that $\b\b UZ\overset{\delta'}{\mapsto} W$; \vspace{-0.2cm}

\begin{align} \label{Eq_ELUndecConcreteEpsilonTransition3}
\exists v^n.L \ & \dcap \\ &\dcap \exists r^{2n}. (X \dcap \ex{r}.(\b \dcap \ex{r}.(\b \dcap \ex{r}.(\varepsilon \dcap \ex{r}.Z)))) \dleq W \nonumber
\end{align} 
for all $X,Z,W\in Q\cup\mathcal{A}$ such that $X\b\b Z\overset{\delta'}{\mapsto} W$.

\smallskip

Hence, $\I$ gives a $r$-chain outgoing from $a$, which contains $n+1$ consequent segments $s_0,\ldots s_n$ of length $2n+3$ separated by elements from the interpretation of $\varepsilon$. We use conventions and notations from the `only-if' part of the proof of Theorem \ref{Teo_Hardness_EL-acyclic} and assume the following enumeration of segments in the $r$-chain:

\begin{equation*}
\varepsilon\word{\ldots}{s_{n}}{\varepsilon\ldots\varepsilon}{\ldots}{s_1}\varepsilon\underbrace{\qo\b\ldots\b}_{s_0}
\end{equation*}
i.e. we let $s_0$ represent a fragment of the initial configuration $\c_0$ of $M$. 

If $M$ halts in $n$ steps then there is a sequence $\c_0,\ldots ,\c_{n}$ of configurations such that for all $0\leqslant i < n$, $\c_{i+1}$ is a successor of $\c_{i}$ and $\qacc$ is the state symbol in $\c_{n}$. We show that for all $0\leqslant i \leqslant n$ and $1\leqslant j \leqslant 2n+3$, it holds $s_i[j]\in(\c_i[2n+j-i])^\I$. Then clearly, for every $0\leqslant i \leqslant n$, there exists $j$ such that $s_i[j]\in \q^\I$, where $\q$ is the state symbol from $\c_i$. In particular, $s_n[j]\in \qacc^\I$, for some $j$, and hence $a\in H^\I$, due to axiom (\ref{Eq_ELUndecid_PropagateH}). We use induction on $i$. The case $i=0$ is obvious, so let us assume that the claim holds for $0\leqslant i < n$. By repeating the arguments from the proof of Theorem \ref{Teo_Hardness_EL-acyclic} one can verify that due to axioms (\ref{Eq_ELUndecConcretePropagateLengthMarker})-(\ref{Eq_ELUndecConcreteEpsilonTransition3}), $s_{i+1}$ must represent a word $w_{i+1}$ of the form \vspace{-0.2cm}

\begin{align*}
\word{\ldots}{3}{\ldots\ldots}{\b\ldots\ldots\ldots\b}{\geqslant 2n+2-2(i+1)\geqslant 2}
\end{align*}

where $w_{i+1}[j]=\c_{i+1}[2n+j-(i+1)]$, for all $4\leqslant j \leqslant 2n+3$, and $w_{i+1}[2n+2]=w_{i+1}[2n+3]=\b$. Note that $\I$ is a model of axioms (\ref{Eq_ELUndecEpsilonTransition1})-(\ref{Eq_ELUndecEpsilonTransition3}) for $y=n$ and thus it holds:

\begin{align*}
w_{i+1}[2n+3] \ w_{i}[1] \  w_{i}[2] \ w_{i}[3] \overset{\delta'}{\mapsto} w_{i+1}[3] \\
w_{i+1}[2n+2] \ w_{i+1}[2n+3] \ w_{i}[1] \ w_{i}[2] \overset{\delta'}{\mapsto} w_{i+1}[2] \\
w_{i+1}[2n+1] \ w_{i+1}[2n+2] \ w_{i+1}[2n+3] \ w_{i}[1] \overset{\delta'}{\mapsto} w_{i+1}[1]
\end{align*} \smallskip

By the induction hypothesis, we have $w_i[k]=\c_i[2n+k-i]$, for $k=1,2,3$, so $\c_i[2n-i]=\c_i[2n-i-1]=\b$ and therefore, $w_{i+1}[j] = \c_i[2n+j-(i+1)]$, for $j=2,3$. Note that at most one of $w_{i+1}[2n+1]$, $w_{i}[1]$ is a state symbol, because otherwise $n=1$, which is not the case, since we have assumed $n > 1$. Hence, we conclude that $w_{i+1}[1] = \c_i[2n+1-(i+1)]$.

\medskip
 
For the `if' direction, suppose that $M$ does not halt. Consider interpretation $\I=(\Delta,\cdot^\I)$ having an infinite domain $\Delta$, which is a union of sets $R_{m,n}=\{x_0^{m,n},\ldots , x_{m+2n+3}^{m,n}\}$ and $V_t=\{y_1^t,\ldots ,y_t^t\}$, for all $m,n\geqslant 0$,  $t\geqslant 1$. A set $R_{m,n}$ will be used to define a $r$-chain with a prefix of length $m+1$ representing fragments of consequent configurations of $M$ and a postfix of length $2n+3$ representing a fragment of the initial configuration. The elements of $V_t$ will be used to define a $v$-chain of length $t$ indicating the length of a configuration fragment. More precisely, we define $\I$ as an interpretation satisfying the following properties:

\begin{itemize}
\item there is an element $a\in\Delta$ such that $\{a\}=A^\I$ and $a=x_0^{m,n}$, for all $m,n\geqslant 0$;
\item the sets $R_{m,n}\setminus\{a\}$ and $V_t$, for $m,n\geqslant 0$,  $t\geqslant 1$, are pairwise disjoint;
\item $r^\I=\bigcup\{\langle x_i^{m,n},x_{i+1}^{m,n} \rangle \mid 0\leqslant i < m+2n+3, \ m,n\geqslant 0\}$;
\item $v^\I=\bigcup\{\langle y_i^{t},y_{i+1}^{t} \rangle \mid 1\leqslant i < t, \ t\geqslant 1\}\cup\{\langle x_i^{m,n}, y_1^n\rangle \mid 0\leqslant i \leqslant m,  \ m\geqslant 0, n\geqslant 1\}$;
\item $L^\I=\{a\}\cup\{x^{m,0}_i\}_{0\leqslant i \leqslant m}\cup\{y^t_t\}_{t\geqslant 1}$.
\end{itemize}

Then one can readily verify that $\I$ is a model of axioms (\ref{Eq_ELUndecPropagateLengthMarker}).  Now let us define interpretation of $\varepsilon$ and the alphabet symbols from $Q\cup\mathcal{A}$ as follows. Let $\varepsilon,\qo$, and $\b$ be interpreted in $\I$ as:

\begin{itemize}
\item $\varepsilon^\I=\{x_{m}^{m,n} \mid m,n\geqslant 0\}\cup\{x_i^{m,n} \mid i=m-k(2n+4), \ m,n\geqslant 0, \ k\geqslant 1\}$;
\item $\qo=\{x_i^{m,n} \mid i=m+1, \ m,n\geqslant 0\}$;
\item $x_i^{m,n}\in\b^\I$, for $m+2\leqslant i \leqslant m+2n+3$ and $m,n\geqslant 0$.
\end{itemize}

Then clearly, $\I$ is a model of axioms (\ref{Eq_ELUndecStartRchain})  and (\ref{Eq_ELUndecPropagateFragmentMarker}). 

Now, for $0\leqslant i \leqslant m$, $\ m,n\geqslant 0$, and $W\in Q\cup\mathcal{A}$, set $x_i^{m,n}\in W^\I$ iff there exist $W_0,\ldots ,W_4\in Q\cup\mathcal{A}\cup\{\varepsilon\}$ such that $x_{i+2n+j}^{m,n}\in W_j$, for $0\leqslant j \leqslant 4$, and either of the following holds:

\begin{itemize}
\item $W_1 \ W_2 \ W_3 \ W_4\overset{\delta'}{\mapsto} W$;
\item $W_0=\b$, $W_1=\varepsilon$, and $W_0 \ W_2 \ W_3 \ W_4\overset{\delta'}{\mapsto} W$;
\item $W_0=W_1=\b$, $W_2=\varepsilon$, and $W_0 \ W_1 \ W_3 \ W_4\overset{\delta'}{\mapsto} W$;
\item $W_1=W_2=\b$, $W_3=\varepsilon$, and $W_0 \ W_1 \ W_2 \ W_4\overset{\delta'}{\mapsto} W$.
\end{itemize}

It is not hard to verify that $\I$ defined in this way is a model of axioms (\ref{Eq_ELUndecTransition})-(\ref{Eq_ELUndecEpsilonTransition3}).

Finally, let $H^\I=\qacc^\I=\varnothing$. Then by using arguments from the proof of the `only if' direction, one can show that $\I$ is well defined and hence, $\I$ is a model of ontology $\O$ such that $\I\not\models A\dleq H$. 
\end{proof}

To complete the proof of Theorem \ref{Teo_Hardness_EL-cyclic} we now show that ontology $\O$ is expressible by a cyclic $\EL$-ontology network. Let us demonstrate that so is the family of axioms (\ref{Eq_ELUndecStartRchain}). Let $\varphi=A\dleq B$ be a concept inclusion and $B, B1, B_2$ concept names. By Lemma \ref{Lem_Expressibility_infty}, ontology $\O_1=\{\varphi[B\mapsto \exists r^k.B] \mid k\geqslant 0\}$ is expressible by a cyclic $\EL$-ontology network. Then by Lemma \ref{Lem_Expressibility_SimpleSubstitution}, ontology $\O_2=\O_1[B\mapsto B_1 \dcap \varepsilon \dcap \ex{r}.(\qo \dcap B_2)]$ is expressible by a cyclic $\EL$-ontology network. By applying Lemma \ref{Lem_Expressibility_infty} again, we conclude that so is ontology $\O_3=\bigcup_{l\geqslant 0}\O_2[B_1\mapsto \exists v^l.B_1, \ B_2\mapsto \exists (r,\b)^{2l}.B_2]$, i.e., the ontology given by axioms
\begin{equation*}
A\dleq\exists r^k.(\exists v^l.B_1\dcap\varepsilon\dcap\ex{r}.(\qo\dcap\exists (r,\b)^{2l}.B_2))
\end{equation*}
for $k,l\geqslant 0$. Further, by Lemma \ref{Lem_Expressibility_SimpleSubstitution}, we obtain that $\O_2[B_1\mapsto L, \ B_2\mapsto \exists (r,\b)^2]$ is expressible by a cyclic $\EL$-ontology network and hence, so is the family of axioms (\ref{Eq_ELUndecStartRchain}). A similar argument shows the expressibility of ontologies given by axioms (\ref{Eq_ELUndecPropagateLengthMarker})-(\ref{Eq_ELUndecEpsilonTransition3}). The remaining subset of axioms (\ref{Eq_ELUndecid_PropagateH}) of $\O$ is finite. By Lemma \ref{Lem_IteratedExpressibility}, there exists a cyclic $\EL$-ontology network $\N$ and an ontology $\O_\N$ such that $\O$ is $(\N,\O_\N)$-expressible and thus, it holds $\O_\N\models_\N A\dleq H$ iff $M$ halts. Theorem \ref{Teo_Hardness_EL-cyclic} is proved.

\bigskip


\commentout {

\noindent\textbf{Lemma \ref{Lem_Exptime_LowerBound}} (Hardness for Acyclic $\EL$-Networks). \textit{The word problem for TMs making exponentially many steps is reducible to entailment in acyclic $\EL$-ontology networks.}
\medskip

\begin{proof}
Given a Turing Machine $M=\langle Q,\Gamma,\delta\rangle$ and a natural number $n$, we define a $v$-pointed ontology network $\N$ and a concept inclusion $C\dleq D$ such that $(\N,v)\models C\dleq D$ iff the initial configuration $\b^+q_0b^+$ of $M$ is $k$-accepting, for some $1<k\leqslant \IEXP{n}$. Let $q_0,q_{acc}$ be the initial/accepting states of $M$ and $\delta'$ be the yield function corresponding to $\delta$.

The definition of the network $\N$ resembles the construction from Proposition \ref{Lem_ELUndec_LowerBound} and the label ontologies of $\N$ will be ``copies'' of subsets of axioms given in that construction. For this reason, the axioms of the label ontologies will be given without additional explanation. 

Consider the network $\N$ depicted on Figure 4. The subnetwork induced by the label ontologies $\O_{ts}$ and $\O_{tape}^i$, $i=2,\ldots ,n+2$, will be used to define models which $(x,2^{n+2})$-represent the initial configuration configuration of $M$, for some domain element $x$. The network induced by $\O_{ts}$ and $\O_{step}^i$, $i=2,\ldots ,n$, will be used to model $2^n$ transitions of the machine $M$. Observe that the tree unravelling of this subnetwork contains exactly $2^n$ nodes, each of them will represent a single transition of $M$. 








First, we introduce a number of auxiliary ontologies which will be subsets of the label ontologies of the network $\N$. For $i,j\in\{1,\ldots ,n\}$ and $k,m\in\{1,2,3\}$, let us define  $\O_{\mathcal{A}^{i}_k\rightarrow\mathcal{A}^j_m}$ as the ontology which consists of the axioms implementing a transition between configurations described in signatures $\mathcal{A}^{i}_k\cup\{r\}$ and $\mathcal{A}^{j}_m\cup\{r\}$, respectively:

\begin{equation}\label{Eq_EL-Exptime_Transition}
X^i_k \dcap \ex{r}.(Y^^i_k \dcap \ex{r}.(U^i_k \dcap \ex{r}.Z^i_k)) \dleq W^j_m
\end{equation}
for all $X^i_k,Y^i_k,U^i_k,Z^i_k\in{\mathcal{A}^i_k}$ and $W^j_m\in{\mathcal{A}^j_m}$ such that $XYUZ\overset{\delta'}{\mapsto} W$,

\smallskip

and of the axioms defining markers for accepting and defect situations:

\begin{equation}\label{Eq_EL-Exptime_PropagateH}
(q_{acc})^i_k \dleq H, \ \ \ex{r}.H \dleq H
\end{equation}

\begin{equation}\label{Eq_EL-Exptime_DefectMarker}
X \dcap Y \dleq	 D, \ \ \ex{r}.D \dleq D
\end{equation}
for all concepts $X,Y\in \mathcal{A}^i_k$, with $X\neq Y$.

\medskip

For $i,j\in\{1,\ldots ,n\}$ and $k,m\in\{2,3\}$, let us define  $\O^{copy}_{\mathcal{A}^{i}_k\mid\mathcal{A}^j_m}$ as the ontology consisting of the equivalences $X^i_k\equiv X^j_m$, where $X^i_k\in\mathcal{A}^{i}_k$ is a concept name and  $X^j_m$ is the counterpart concept of $X^i_k$ in $\mathcal{A}^{j}_m$. These axioms will be used to ``copy''  descriptions of configurations of $M$ between signatures $\mathcal{A}^{i}_k$ and $\mathcal{A}^{j}_m$. 

\medskip

Now the ontology $\O_{ts}$ (tape/step ontology) is defined as the union of 

$$\O_{\mathcal{A}^{1}_1\rightarrow\mathcal{A}^1_2}, \ \ \O^{copy}_{\mathcal{A}^1_2\mid\mathcal{A}^1_3}, \ \ \O_{\mathcal{A}^1_3\rightarrow\mathcal{A}^{1}_4}$$

with the following set of axioms:

\begin{equation}
A^{1}_1\dleq\ex{r}.A^1_2, \ \ A^1_2\dleq A^1_3, \ \ A^1_3\dleq\ex{r}.A^{1}_4
\end{equation}

\begin{equation}
A^{2}_3 \dleq A^{2}_2
\end{equation}

\begin{equation}\label{Eq_EL-Exptime_TapeMain}
A^1_4 \dleq (q_0)^1_1
\end{equation}

\begin{equation}\label{Eq_ELUndecid_InitRight}
(q_0)^1_1 \dleq \ex{r}.F, \ \ F \dleq \ex{r}.F, \ \ F \dleq \b^1_1
\end{equation}

\begin{equation}\label{Eq_EL-Exptime_InitLeft}
\ex{r}.(q_0)^1_1 \dleq C, \ \ \ex{r}.C \dleq C, \ \ C \dleq \b^1_1
\end{equation}

\begin{equation}\label{Eq_EL-Exptime_GoodMarker}
A^1_1 \dcap D \dleq	 G, \ \ A^1_1 \dcap H \dleq G.
\end{equation}

\medskip

For $i=2,\ldots ,n+2$, we define $\O_{tape}^i$ as the set of axioms:

\begin{equation}\label{Eq_EL-Exptime_TapeIncrease}
A^{i-1}_2\dleq\ex{r}.A^i_2, \ \ A^i_2\dleq A^i_3, \ \ A^i_3\dleq\ex{r}.A^{i-1}_3,
\end{equation}

\begin{equation}\label{Eq_EL-Exptime_TapeAuxCopy}
A^{i+1}_3 \dleq A^{i+1}_2.
\end{equation}

\medskip

For $i=2,\ldots ,n$, $\O_{step}^i$ is defined as the union of

$$\O_{\mathcal{A}^{i-1}_2\rightarrow\mathcal{A}^i_2}, \ \ \O^{copy}_{\mathcal{A}^i_2\mid \mathcal{A}^i_3}, \ \ \O_{\mathcal{A}^i_3\rightarrow\mathcal{A}^{i-1}_3}$$

and 

$$\O^{copy}_{\mathcal{A}^{i+1}_3\mid \mathcal{A}^{i+1}_2}.$$

Note the similarity of axioms (\ref{Eq_EL-Exptime_TapeIncrease}), (\ref{Eq_EL-Exptime_TapeAuxCopy}) and the notations of ontologies in $\O_{step}^i$. 

Let $v$ be the node in the network $\N$ labeled by the ontology $\O_{ts}$. 

\Todo{Todo}

\end{proof}

\noindent\textbf{Lemma \ref{Lem_Undecidability}} (Undecidability for General $\EL$-Networks). \textit{The halting problem for TMs is reducible to entailment in $\EL$-ontology networks.}
\medskip

\Todo{Add Taut() to the label ontologies}

\begin{proof}
Let $M=\langle Q,\Gamma,\delta \rangle$ be a TM and $\delta'$ be the yield function corresponding to $\delta$. Consider the ontology network $\N$ constructed for $M$ and depicted on Figure 1. The ontology $\O_{ts}$ (``tape/step-ontology'') will serve two purposes: in combination with $\O_{tape}$ it will model the tape and the initial configuration of $M$, while in combination with $\O_{copy}$ it will model transitions of $M$. In the following, we provide a precise definition of these ontologies.


Consider the concept signature ${\mathcal{A}}\cup\{A,C,D,E,F,G,H\}$ consisting of the set ${\mathcal{A}}$ of concept names, corresponding to the alphabet and state symbols of $M$, and the auxiliary concepts given in brackets. Denote by $\Si$ the union of this signature with a role $\{r\}$. The concept names from ${\mathcal{A}}$ together with role $r$ will be used to represent configurations of $M$, $D$ will be used as a marker for defect situations (i.e. when interpretations of some concepts from ${\mathcal{A}}$ have a non-empty intersection), $H$ will be a halting marker, and $C,E,F,G$ auxiliary concepts. Let ${\mathcal{A}}^1$ be a ``copy'' of the concept signature ${\mathcal{A}}$ consisting of the concept names for alphabet and state symbols of $M$, all having the superscript $^1$. We define the signatures labelling edges of network $\N$ as:

\begin{center}
$\ \ \gamma_1= \{D,H,E,r\}\cup {\mathcal{A}}, \ \ \ \   \gamma_2 = \{D,H,A,r\}\cup {\mathcal{A}}$

\smallskip

$\sigma_1=\{D,H,r\}\cup {\mathcal{A}}^1, \ \ \ \ \ \sigma_2=\{D,H,r\}\cup {\mathcal{A}} $
\end{center}

The ontologies labelling the nodes in the network $\N$ are defined as sets of axioms over the signature  $\Si\cup {\mathcal{A}}^1$ as follows.

\medskip

$\O_{ts}$ consists of the axioms:

\begin{equation}\label{Eq_ELUndecid_TapeMain}
A \dleq \ex{r}.E, \ \ E \dleq q_0
\end{equation}
(informally, the intended purpose of these axioms is to place the marker for the initial state to the ``right'' of $A$);

\begin{equation}\label{Eq_ELUndecid_InitRight}
q_0 \dleq \ex{r}.F, \ \ F \dleq \ex{r}.F, \ \ F \dleq \b
\end{equation}
(these axioms serve for representing an infinite string of blank symbols located to the ``right'' of $q_0$);

\begin{equation}\label{Eq_ELUndecid_InitLeft}
\ex{r}.q_0 \dleq C, \ \ \ex{r}.C \dleq C, \ \ C \dleq \b
\end{equation}
(used for representing an infinite string of blank symbols to the ``left'' of $q_0$);

\begin{equation}\label{Eq_ELUndecid_Transition}
X \dcap \ex{r}.(Y \dcap \ex{r}.(U \dcap \ex{r}.Z)) \dleq W^1,
\end{equation}
for all $X,Y,U,Z,W\in{\mathcal{A}}$ such that $XYUZ\overset{\delta'}{\mapsto} W$ (these axioms are used for implementing transitions of $M$); 

\begin{equation}\label{Eq_ELUndecid_PropagateH}
q^1_{h} \dleq H, \ \ \ex{r}.H \dleq H 
\end{equation}
(used to initialise the halting marker and propagate it to the ``left'');

\begin{equation}\label{Eq_ELUndecid_DefectMarker}
X \dcap Y \dleq	 D, \ \ \ex{r}.D \dleq D
\end{equation}
for all concepts $X,Y\in \mathcal{A}$, with $X\neq Y$ (to initialize the defect marker and propagate it to the ``left'');

\begin{equation}\label{Eq_ELUndecid_GoodMarker}
A \dcap D \dleq	 G, \ \ A \dcap H \dleq G
\end{equation}
(the marker $G$ is initialised, whenever an element from interpretation of $A$ belongs to interpretation of either $D$ or $H$).

\smallskip

The ontology $\O_{tape}$ consists of the single axiom 

\begin{equation}\label{Eq_ELUndecid_TapeIncrease}
E \dleq A
\end{equation}
which, together with axioms (\ref{Eq_ELUndecid_TapeMain}), will be used to model initial configurations of $M$ with different positions of $q_0$.

\medskip

Finally, the ontology $\O_{copy}$ is the set of equivalences $X\equiv X^1$, where $X\in {\mathcal{A}}$ and $X^1$ is the counterpart concept of $X$ from ${\mathcal{A}}^1$. 

\bigskip Let $v_{ts}, v_{tape}$, and $v_{step}$ be the vertices of $\N$ labeled by ontologies $\O_{ts}$, $\O_{tape}$, and $\O_{step}$, respectively. We claim that $M$ halts iff $(\N,v_{ts})\models A\dleq G$. \medskip

To prove the claim, let us introduce a number of auxiliary notions and statements that will help to clarify the key properties of the network $\N$. 



The most important feature of $\N$ is that certain models given by $g(v_{ts})$ will represent configurations of the Turing Machine $M$. Let $\c$ be a configuration of $M$. We say that a model $\I=(\Delta,\cdot^\I)$ of $\O_{ts}$ \textit{represents} $\c$ if $\Delta$ is infinite and there is an enumeration of elements of $\Delta$ by integers such that $r^\I=\{\langle x_i,x_{i+1} \rangle \mid i \text{ is an integer} \}$, $D^\I=\varnothing$, and for all $x_i\in\Delta$, we have $x_i^{\mathcal{A}}=\{a\}$ iff the $i$-th symbol of $\c$ is $a$. 

If a model $\I$ represents a configuration $\c$ then, by axioms (\ref{Eq_ELUndecid_Transition}) (since the transition function of $M$ is total) the interpretation of some $\mathcal{A}^1$-concepts in $\I$ is not empty and corresponds to a successor configuration of $\c$. We call an interpretation $\I'$  \textit{successor} of $\I$ if it represents a successor configuration $\c'$ of $\c$ such that for each $\mathcal{A}$-concept $X$, the interpretation of $X$ in $\I'$ is the same as interpretation of $X^1$ in $\I$. 

The ontology $\O_{ts}$ together with $\O_{copy}$ will be used to ``generate'' models which represent subsequent configurations of $M$. We will also consider models that ``approximately'' represent configurations. For a given natural number $m$, they will represent a substring of configuration $\c$ which starts $m$ positions to the left of the state symbol in $\c$. To illustrate the purpose of these models, fix a number $m$ and consider the infinite substring $\tt{s}$ of $\c_{init}$, which starts $4m$ positions to the left of $q_{0}$. Note that if $\c_{init}$ is $m$-accepting then it is possible to obtain a string with $q_{h}$ from $\tt{s}$ by applying the yield function $\delta'$ $m$-times. This observation will be used in the proof of the proposition and leads us to the following notion.

Let $m\geqslant 1$ be an arbitrary natural number and $\c=vqw$ be a configuration of $M$. Let $\I$ be a model of $\O_{ts}$ and $x$ be an element in its domain. We say that the model $\I$ $(x,m)$-\textit{represents} $\c$ if there is an infinite sequence of elements $x_{-m},\ldots , x_{-1},x_0,x_{1},\ldots$ (not necessarily distinct ones) in $\I$, with $x_{-m}=x$, such that for all integers $i\geqslant -m$:
\begin{itemize}
\item $\langle x_i, x_{i+1}\rangle\in r^\I$ and  $x_{i}\not\in D^\I$;
\item $x_i^{\mathcal{A}}=\{a\}$ iff the $i$-th symbol in $\c$ is $a$. 
\end{itemize} 

Note that it follows from axioms (\ref{Eq_ELUndecid_TapeMain}, \ref{Eq_ELUndecid_InitRight}, \ref{Eq_ELUndecid_InitLeft}, \ref{Eq_ELUndecid_DefectMarker}) that any model $\I$ of $\O_{ts}$, with $D^\I=\varnothing$, $(x,1)$-represents $\c_{init}$ for every $x\in A^\I$.

\medskip

For proving properties of network $\N$ by induction we will use Definition \ref{De_nAgreedModel} and  some refinements thereof. For a natural number $m$, we call a model $\I$ of $\O_{ts}$ $(\gamma,m)$-agreed if there exists a sequence of interpretations $\I_0,\I_1,\ldots ,\I_m$, with $\I_0=\I$, such that: 
\begin{itemize}
\item for odd indices $0 < j \leqslant m$,  $\I_j$ is a model of $\O_{tape}$, $\I_{j-1}=_{\gamma_1}\I_j$, and $\I_{j}=_{\gamma_2}\I_{j+1}$, if $j<m$;
\item for even indices $0 < j \leqslant m$,  $\I_j$ is a model of $\O_{ts}$, $\I_{j-1}=_{\gamma_2}\I_j$, and $\I_{j}=_{\gamma_1}\I_{j+1}$, if $j<m$.
\end{itemize}
The model $\I_m$ is called $(\gamma,m)$-reachable from $\I$.

\medskip

A model $\I$ of $\O_{ts}$ is called $(\sigma,m)$-agreed if there exists a model $\J$ of $\O_{copy}$ such that $\I=_{\sigma_1}\J$ and $\J$ is $(m-1)$-agreed. 

It is not hard to verify that $\I\in g(v_{ts})$ iff for all $m,n$, the interpretation $\I$ is $(\sigma,m)$-agreed and there exists a model $(\gamma,n)$-reachable from $\I$ which is $(\sigma,m)$-agreed. 

\medskip

First, we formulate an easy property of network $\N$ which demonstrates the interplay between $\O_{ts}$ and $\O_{tape}$.

\begin{lemma}\label{EL_Undec_Lem_ReachableModel}
Let $\I$ be a model of $\O_{ts}$ which $(x,1)$-represents $\c_{init}$ for some element $x$ such that $x\in A^\I$. Then for any $m$, if a model $\J$ is $(\gamma,2m-2)$-reachable from $\I$ then $\J\models\O_{ts}$ and $\J$ $(x,m)$-represents $\c_{init}$.

\end{lemma}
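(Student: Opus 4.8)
The plan is to argue by induction on $m\geqslant 1$, using one full traversal of the cycle $v_{ts}\to v_{tape}\to v_{ts}$ (that is, two successive agreement steps, first along $\gamma_1$ and then along $\gamma_2$) to advance the represented window by one cell to the left of the head. The base case $m=1$ is immediate: the only model that is $(\gamma,0)$-reachable from $\I$ is $\I$ itself, which is a model of $\O_{ts}$ and $(x,1)$-represents $\c_{init}$ by hypothesis.

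For the inductive step I assume the claim for $m$ and take a model $\J$ that is $(\gamma,2m)$-reachable from $\I$, witnessed by a sequence $\I=\I_0,\I_1,\dots,\I_{2m}=\J$ with $\I_j\models\O_{ts}$ for even $j$, $\I_j\models\O_{tape}$ for odd $j$, $\I_{j-1}=_{\gamma_1}\I_j$ for odd $j$, and $\I_{j-1}=_{\gamma_2}\I_j$ for even $j\geqslant 2$. The prefix $\I_0,\dots,\I_{2m-2}$ shows that $\I_{2m-2}$ is $(\gamma,2m-2)$-reachable from $\I$, so by the induction hypothesis $\I_{2m-2}\models\O_{ts}$ and $\I_{2m-2}$ $(x,m)$-represents $\c_{init}$, say through a sequence $(x_i)_{i\geqslant -m}$ with $x_{-m}=x$. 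Since $2m$ is even, $\J=\I_{2m}\models\O_{ts}$ is immediate, so it remains to exhibit in $\J$ a sequence witnessing that $\J$ $(x,m+1)$-represents $\c_{init}$, and for that one only has to analyse the last two steps $\I_{2m-2}=_{\gamma_1}\I_{2m-1}=_{\gamma_2}\J$ with $\I_{2m-1}\models\O_{tape}$.

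That analysis falls into two parts. \emph{Preservation of the existing window}: since $\{D,r\}\cup\mathcal{A}\subseteq\gamma_1\cap\gamma_2$, the $r$-edges $\langle x_i,x_{i+1}\rangle$, the fact that $x_i\notin D$, and the $\mathcal{A}$-labels of all the $x_i$ (for $i\geqslant -m$) are inherited first by $\I_{2m-1}$ and then by $\J$, so in $\J$ the sequence $(x_i)_{i\geqslant -m}$ still carries precisely the labels of $\c_{init}$ at positions $-m,-m+1,\dots$. \emph{Creation of the new left-most cell}: one must produce in $\J$ an element $x^{*}$ with $\langle x^{*},x\rangle\in r^{\J}$, $x^{*}\notin D^{\J}$, and $x^{*}$ carrying the blank label that $\c_{init}$ has at position $-(m+1)$. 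Here the asymmetry $A\notin\gamma_1$ versus $E\in\gamma_1$ is used: crossing from $\I_{2m-2}$ to $\I_{2m-1}$ the axiom $E\sqsubseteq A$ of $\O_{tape}$ re-populates $A$, and this membership survives into $\J$ because $A\in\gamma_2$; then in $\J\models\O_{ts}$ the axioms $A\sqsubseteq\exists r.E$ and $E\sqsubseteq q_0$ re-place a head one cell further from $x$, while the ``left blank fill'' axioms $\exists r.q_0\sqsubseteq C$, $\exists r.C\sqsubseteq C$, $C\sqsubseteq\b$, together with $\exists r.D\sqsubseteq D$ and the defect axioms $X\sqcap Y\sqsubseteq D$ for distinct $X,Y\in\mathcal{A}$, force the freshly inserted cell to be blank and defect-free. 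Prepending $x^{*}$ to $(x_i)_{i\geqslant -m}$ gives the desired sequence, which completes the induction.

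The step I expect to be the main obstacle is the creation part. Because $A$ is not preserved across the $\gamma_1$-edge, one cannot simply say ``$x$ stays in $A$'': the net effect of a cycle traversal on the interpretations of $A$ and $E$ and on the position of the head has to be pinned down exactly, and in particular one must rule out that the traversal inserts a spurious second state symbol or a defect anywhere in the window. I would therefore most likely strengthen the induction hypothesis with bookkeeping invariants — for instance, that the left-most element $x_{-m}$ lies in $A^{\I_{2m-2}}$, that the head is the unique $E$-element reachable along the window, and that $D^{\I_{2m-2}}$ meets the window nowhere — so that the ``exactly one new blank cell per traversal'' accounting goes through cleanly at every step.
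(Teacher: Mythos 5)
Your induction scheme, the decomposition into the last two agreement steps, and the identification of the $E\dleq A$ / $A\dleq\ex{r}.E$ handshake (with $E\in\gamma_1$ and $A\in\gamma_2$) as the engine of the argument all match the paper's proof. The gap is in your ``creation'' step: you have the geometry of the growing window backwards. By definition, $\J$ $(x,m{+}1)$-represents $\c_{init}$ via a sequence $x_{-(m+1)},x_{-m},\dots$ whose \emph{first} element $x_{-(m+1)}$ is required to be $x$ itself, so no element to the left of $x$ is ever involved. What changes from $m$ to $m{+}1$ is that the head (index $0$, labelled $q_0$) sits one $r$-step \emph{further to the right of} $x$. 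Concretely, the paper strengthens the induction hypothesis to say that the $(r,m)$-successor $z$ of $x$ carries the marker $E$; crossing to the $\O_{tape}$-model turns $z\in E$ into $z\in A$, this survives into $\J$ via $\gamma_2$, and $A\dleq\ex{r}.E$ together with $E\dleq q_0$ then produces a \emph{new} head $y$ at distance $m{+}1$ from $x$. Your element $x^{*}$ with $\langle x^{*},x\rangle\in r^{\J}$ plays no role in this argument, and even if it existed, prepending it to $(x_i)_{i\geqslant -m}$ yields a sequence beginning with $x^{*}$ rather than with $x$, so it does not witness $(x,m{+}1)$-representation of $\c_{init}$.

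Two consequences of this mix-up. First, your ``preservation'' step cannot be used as stated: since $\mathcal{A}\subseteq\gamma_1\cap\gamma_2$, the old witnessing sequence keeps all its $\mathcal{A}$-labels in $\J$, so in particular the old head is still labelled $q_0$ there; but in the new window that element sits at index $-1$ and must read $\b$. The new witnessing sequence therefore has to be rebuilt around the new head $y$, with the cells strictly between $x$ and $y$ blanked via $\ex{r}.q_0\dleq C$, $\ex{r}.C\dleq C$, $C\dleq\b$ and the tail supplied by the $F$-chain from $y$ --- exactly the relabelling that a ``keep the old window, add one fresh cell'' decomposition cannot express. Second, the bookkeeping invariant you propose ($x_{-m}\in A^{\I_{2m-2}}$) is the wrong one and, as you yourself observe, is not maintained: $A\notin\gamma_1$, so membership of $x$ in $A$ is lost at every $\gamma_1$-step and is only re-imposed by $\O_{tape}$ at the $E$-marked element. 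The invariant that actually drives the induction is the one the paper records explicitly: $x\notin D^{\J}$ and the $(r,m)$-successor of $x$ lies in $E^{\J}$.
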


\begin{proof} 
We show by induction on $m$ that any model $\J$ satisfying the above conditions $(x,m)$-represents $\c_{init}$, $x\not\in D^\I$, and there is a $(r,m)$-successor $y$ of $x$ in $\J$ such that $y\in E^\J$.  

The induction base $m=1$ is trivial: as $\I\models\O_{ts}$, by the axioms in (\ref{Eq_ELUndecid_TapeMain}), there is an element $y$ such that $\langle x,y \rangle\in r^\I$ and $y\in E^\I$, so $\I$ is the required model. 

In the induction step, take a model $\J'$ of $\O_{ts}$ which is $(\gamma,2(m-1)-2)$-reachable from $\I$ and such that $x\not\in D^{\J'}$ and there is a $(r,m-1)$-successor $z$ of $x$ in $\J'$, with $z\in E^{\J'}$. Then, by definition of the network $\N$ and $\O_{tape}$, if a model $\J''$ is $(\gamma,1)$-reachable from $\J'$, then $\J''\models\O_{tape}$ and it holds that $x\not\in D^{\J''}$, $z\in A^{\J''}$. Similarly, if a model $\J$ is $(\gamma,1)$-reachable from $\J''$ then $\J\models\O_{ts}$,  $x\not\in D^{\J}$, and by the axioms in (\ref{Eq_ELUndecid_TapeMain}), there must be an element $y$ such that $\langle z,y \rangle\in r^\J$ (i.e. $y$ is $(r,m)$-successor of $x$), $y\in E^\J$, and $y\in (q_0)^{\J}$. Then it follows from axioms (\ref{Eq_ELUndecid_InitRight}, \ref{Eq_ELUndecid_InitLeft}, \ref{Eq_ELUndecid_DefectMarker}) that the model $\J$ $(x,m)$-represents $\c_{init}$ and $y$ is the required element. The model $\J$ is $(\gamma,2)$-reachable from $\J'$, hence $\J$ is $(\gamma,2m-2)$-reachable from $\I$ and the lemma is proved.

\end{proof}


Now we formulate the key statement in the proof of Proposition \ref{Lem_ELUndec_LowerBound}. 




\begin{lemma}\label{EL_Undec_Lem_PropertyOfRepresentingModel}
A configuration $\c$ of $M$ is $m$-accepting for a natural number $m$ iff in any model $\I\in g(v_{ts})$, which $(x,4m)$-represents $\c$ for some element $x$, it holds that $x\in H^\I$.
\end{lemma}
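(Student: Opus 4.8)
# Proof Proposal for Lemma \ref{EL_Undec_Lem_PropertyOfRepresentingModel}

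The plan is to prove both directions by induction on $m$, exploiting the cyclic structure of the network $\N$ through the characterization of $g(v_{ts})$ in terms of $(\sigma,m)$-agreedness and $(\gamma,n)$-reachability, together with the fact that the transition axioms \eqref{Eq_ELUndecid_Transition} in $\O_{ts}$ together with the copying equivalences in $\O_{copy}$ pass a configuration description through one transition step each time one traverses the $\sigma$-cycle.

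For the \emph{only-if} direction, I would argue by induction on $m$ that if $\c$ is $m$-accepting and $\I\in g(v_{ts})$ $(x,4m)$-represents $\c$, then $x\in H^\I$. The base case $m=0$: a $0$-accepting configuration already contains $q_h$, so by axioms \eqref{Eq_ELUndecid_PropagateH} and the fact that $\I$ represents $\c$ (so $x$ reaches the element carrying $q_h$ along $r$), we get $x\in H^\I$. For the step, suppose $\c$ is $m$-accepting with successor $\c'$ that is $(m-1)$-accepting. Since $\I\in g(v_{ts})$, $\I$ is $(\sigma,1)$-agreed: there is a model $\J\models\O_{copy}$ with $\I=_{\sigma_1}\J$ and $\J$ is $0$-agreed, i.e. $\J\models\O_{copy}$; then by the equivalences $X\equiv X^1$ and the transition axioms \eqref{Eq_ELUndecid_Transition}, the $\mathcal{A}$-part of $\J$ that $(x,4m-?)$-represents the configuration is advanced one step. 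Chaining this together with the fact that $\I$ is $(\sigma,m')$-agreed for all $m'$ — which lets one peel off transitions while staying inside models in $g(v_{ts})$ of the relevant reducts — one obtains a model in $g(v_{ts})$ that $(x,4(m-1))$-represents $\c'$ (the drop from $4m$ to $4(m-1)$ accounting for the four $r$-steps consumed by one application of $\delta'$, cf. the offset bookkeeping in the proof of Theorem~\ref{Teo_Hardness_EL-acyclic}). Applying the induction hypothesis gives $x\in H^\I$ after propagating $H$ back via \eqref{Eq_ELUndecid_PropagateH}. The defect marker $D$ and axioms \eqref{Eq_ELUndecid_DefectMarker} are used to guarantee that the $\mathcal{A}$-concepts genuinely encode a well-formed tape (singleton per position) so that the transition reading is unambiguous.

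For the \emph{if} direction I would argue the contrapositive: if $\c$ is not $m$-accepting, construct a model $\I\in g(v_{ts})$ that $(x,4m)$-represents $\c$ with $x\notin H^\I$. The construction mirrors the countermodel in Theorem~\ref{Teo_Hardness_EL-cyclic}: build an infinite $r$-structure whose consecutive blocks of length tied to the tape encode the (non-accepting) run tree of $M$ starting from $\c$, set $H^\I=(q_h)^\I=D^\I=\varnothing$ along the relevant part, and verify directly that all axioms \eqref{Eq_ELUndecid_TapeMain}–\eqref{Eq_ELUndecid_GoodMarker} of $\O_{ts}$ hold and that one can choose the model agreement $g$ witnessing $\I\in g(v_{ts})$ — assigning to $v_{tape}$ and $v_{step}$ the appropriate families of expansions, each of which again represents a configuration and whose $\sigma$-/$\gamma$-reducts agree. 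Since $\c$ is not $m$-accepting, no element in the first $4m$ positions of $x$ ever acquires $q_h$ through the transitions, so $x\notin H^\I$, and then by \eqref{Eq_ELUndecid_GoodMarker} also $x\notin G^\I$.

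The main obstacle is the bookkeeping in the \emph{only-if} direction: carefully tracking, as one traverses the $\sigma$-cycle through $\O_{copy}$, exactly how far the represented substring shifts and why four $r$-steps ($4m \to 4(m-1)$) are consumed per transition, while simultaneously ensuring the intermediate interpretations remain in $g(v_{ts})$ for the \emph{reducts} needed — this is where Lemma~\ref{EL_Undec_Lem_ReachableModel} (and the interplay of $(\sigma,m)$-agreedness with $(\gamma,n)$-reachability) must be invoked with precision. A secondary subtlety is handling the boundary of the infinite blank-tape encoding via \eqref{Eq_ELUndecid_InitRight}–\eqref{Eq_ELUndecid_InitLeft} so that the "approximate" $(x,m)$-representation genuinely coincides with the true configuration on all positions $\ge -m$.
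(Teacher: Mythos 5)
Your proposal follows essentially the same route as the paper's own argument: both directions by induction on $m$, the \emph{only-if} step peeling off one transition by passing through $\O_{copy}$ via the $\sigma$-cycle (so that some model in $g(v_{ts})$ agreeing with $\I$ on $H$ $(y,4(m-1))$-represents the successor configuration $\c'$, after which \eqref{Eq_ELUndecid_PropagateH} carries $H$ back to $x$), and the \emph{if} direction by contraposition, constructing a $(2m-2)$-agreed countermodel representing $\c$ with $H^\I=\varnothing$. The only cosmetic differences are your base case at $m=0$ versus the paper's at $m=1$, and that the paper makes the offset bookkeeping explicit by re-basing at an $r$-successor $y$ of $x$ with the drop split as $i\in\{1,3\}$ plus $4-i$; your sketch identifies exactly this as the point needing care.
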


\begin{proof} 
$(\Rightarrow):$  We prove the ``only-if'' direction of the lemma by induction on $m$. Let $\I\in g(v_{ts})$ be a model which $(x,4m)$-represents configuration $\c$.

Induction base, $m=1$. Let $\c=uaqbw$, where $q\in Q$, $a,b\in\Gamma$ and $u,w\in\Gamma^+$. Since $\c$ is $1$-accepting, it has a successor configuration $\c'$ of the form $uq_{h}a'bw$ if $\delta(q,a)=\langle q',a',-1\rangle$ or of the form $ua'bq_{h}w$ if $\delta(q,a)=\langle q',a',1\rangle$. Then, by transition axioms (\ref{Eq_ELUndecid_Transition}) of $\O_{ts}$, there is an element $y\in (q^1_{h})^\I$ which is $(r,k)$-successor of $x$, for $k$ equal to $(4+|u|-1-2)$ or $(4+|u|+1-2)$ (i.e. for $k=1+|u|$ or $k=3+|u|$), respectively, depending on the form of $\c'$. Then it follows from axioms (\ref{Eq_ELUndecid_PropagateH}) that $x\in H^\I$ . 

Induction step: we assume that the claim is proved for all $n < m$ and show that it holds for $m$. Since $\c$ is $m$-accepting, there is a successor configuration $\c'$ which is $(m-1)$-accepting. We have $\I\in g(v_{ts})$, hence, by axioms (\ref{Eq_ELUndecid_Transition}) and by the definition of ontology $\O_{copy}$, there must exist a model $\J\in g(v_{ts})$ such that $\J=_H \I $ and $\J$ $(x,4m-i)$-represents $\c'$, where $i=3$ or $i=1$, depending on the form of $\c'$. Then there exists an element $y$, which is $r$-reachable from $x$, such that $\J$ $(y,4(m-1))$-represents $c'$. By the induction hypothesis, we have $y\in H^\J$ and hence, $x\in H^\I$, by the second axiom in (\ref{Eq_ELUndecid_PropagateH}).

$(\Leftarrow):$ is proved by contraposition and a similar induction on $m$. We show that if $\c$ is not $m$-accepting then there exists a $2m-2$-agreed model $\I$ of $\O_{ts}$ representing $\c$ such that $H^\I=\varnothing$ and moreover, $A^\I\neq\varnothing$ in case $\c$ is initial and $A^\I = \varnothing$, otherwise. 

Induction base $m=1$. If $\c\neq\c_{init}$ then consider an interpretation $\I$ representing $\c$ such that $A,H,E,q_0,F,C$ are interpreted as the empty set in $\I$ and the interpretation of ${\mathcal{A}}^1$-symbols in $\I$ is completely defined by axioms (\ref{Eq_ELUndecid_Transition}), i.e. for any ${\mathcal{A}}^1$-symbol $W$ and an integer $i$, we have $x_i^\I\in W^\I$ iff $x_{i}^{\mathcal{A}}x_{i+1}^{\mathcal{A}}x_{i+2}^{\mathcal{A}}x_{i+3}^{\mathcal{A}}\overset{\delta'}{\mapsto} W$. In other words, the interpretation of ${\mathcal{A}}^1$-symbols in $\I$ corresponds to a successor configuration $\c'$ of $\c$. Since $\c$ is not $1$-accepting, the state symbol in $\c'$ is different from $q_{h}$, thus $(q_h^1)^\I=\varnothing$, the axioms (\ref{Eq_ELUndecid_Transition}, \ref{Eq_ELUndecid_PropagateH}) are satisfied in $\I$, and it is easy to verify that $\I$ is a model of $\O_{ts}$. 

If $\c$ is initial then consider an interpretation $\I$ representing $\c_{init}$ such that $H^\I=G^\I=\varnothing$, $A^\I=\{x\}$, for an element $x$, which has a $(r,1)$-successor $y$ with $y=q_0^\I$, and the interpretation of ${\mathcal{A}}^1$-symbols in $\I$ is completely defined by the transition axioms (\ref{Eq_ELUndecid_Transition}) as formulated above. Set $E^\I=\{y\}$, $F^\I=\{z \mid z$ is $r$-reachable from $y\}$, and $C^\I=\{z \mid y$ is $r$-reachable from $z\}$. Since $\c_{init}$ is not $1$-accepting, the state in a successor configuration of $\c_{init}$ is different from $q_{h}$, the axioms (\ref{Eq_ELUndecid_Transition}, \ref{Eq_ELUndecid_PropagateH}) are satisfied, and one can readily verify that $\I$ is a model of $\O_{ts}$. 

Induction step: we assume that the claim is proved for all $n<m$ and show that it holds for $m$ as well. Let configuration $\c$ have the form $uaqbw$, where $q\in Q$, $a,b\in\Gamma$, $u,w\in\Gamma^+$. Since $\c$ is not $m$-accepting, it has a successor configuration $\c'$ which is not $(m-1)$-accepting and has the form $uq'a'bw$ if $\delta(q,a)=\langle q',a',-1\rangle$, or the form $ua'bq'w$ if $\delta(q,a)=\langle q',a',1\rangle$. Let $\I'$ be a $(2(m-1)-2)$-agreed model of $\O_{ts}$ representing $\c'$ and satisfying the conditions of the claim. Consider an interpretation $\I=(\Delta,\cdot^\I)$ representing $\c$ such that $\I'$ is a successor of $\I$, $D^\I=H^\I=G^\I=\varnothing$, the interpretation of $A,E,F,C$ is empty in $\I$ if $\c$ is not initial, and is defined otherwise as follows: $A^\I=\{x\}$ and $E^\I=\{y\}$, for elements $x,y$ such that $\langle x,y \rangle\in r^\I$ and $\{y\}=q_0^\I$, while $F^\I=\{z \mid z$ is $r$-reachable from $y\}$, and $C^\I=\{z \mid y$ is $r$-reachable from $z\}$. It is straightforward to verify that $\I$ defined in this way is a model of $\O_{ts}$ and there is a model $\J\models\O_{copy}$ such that $\I=_{\sigma_1}\J$ and $\J=_{\sigma_2}\I'$. Therefore, $\J$ is $2m-3$-agreed, hence $\I$ is $(\sigma,2m-2)$-agreed and it suffices to verify that $\I$ is $(\gamma,2m-2)$-agreed. The claim is proved if we show that for any $k$, there exists a model $\I_{k}$ of $\O_{ts}$, which is $(\gamma,2k)$-reachable from $\I$ and is $(\sigma,2m-2)$-agreed. We show by induction on $k$ that there exists a model $\I_{k}\models\O_{ts}$ such that $\I_k$ is $(\gamma,2k)$-reachable from $\I$ and $\I_k$ is isomorphic to $\I$ (then clearly, $\I_{k}$ will be $(\sigma,2m-2)$-agreed, because so is $\I$). 

For $k=0$, there is nothing to prove, since we can take $\I_{k}=\I$. Now let $k\geqslant 2$ be arbitrary. Consider $\I_{k-2}$, a model $(\gamma,k-2)$-reachable from $\I$ and isomorphic to $\I$. If $\c$ is not initial then define models $\I_{k-1}$ and $\I_{k}$ equal to $\I$. We have $\I_k\models\O_{ts}$ and $\I_{k-1}\models\O_{tape}$, since $A^\I=E^\I=\varnothing$, which means that $\I_{k-2}$ is $2$-agreed and yields that $\I_{k}$ is $(\gamma,k)$-reachable from $\I$.  If $\c=\c_{init}$ then let us define interpretations $\I_{k}$ and $\I_{k-1}$ as follows. The domains of $\I_{k-2}$, $\I_{k-1}$, and $\I_{k}$ coincide, $r^{\I_{k-2}}=r^{\I_{k-1}}=r^{\I_{k}}$, for any concept name $X$ and an element $x_{i}\in\Delta$, we have $x_{i+1}\in X^{\I_{k}}$ iff $x_i\in X^{\I_{k-2}}$ (recall that $\I_{k-2}$ is isomorphic to the model $\I$ representing $\c_{init}$ and hence, there is enumeration of elements of $\I_{k-2}$ and $\I_{k}$ by integers), while interpretation of concept names in $\I_{k-1}$ is defined as follows. For concepts $A$ and $E$, we define $A^{\I_{k-1}}=E^{\I_{k-1}}=E^{\I_{k-2}}$ and set $X^{\I_{k-1}}=\varnothing$ for any other concept name. One can readily verify that $\I_{k-2}$ and $\I_{k}$ are isomorphic, $\I_{k-1}\models\O_{tape}$, $\I_{k}\models\O_{ts}$, $\I_{k-2}=_{\gamma_1}\I_{k-1}$, and $\I_{k-1}=_{\gamma_2}\I_{k}$. This means that $\I_{k-2}$ is $2$-agreed and hence, $\I_{k}$ is $(\gamma,k)$-reachable from $\I$.



Therefore, we have proved that the model $\I$ is $(2m-2)$-agreed for any $m$, and thus $\I$ is as required: $\I\in g(v_{ts})$, it $(x,4m)$-represents configuration $\c$ for some $x\in\Delta$ (since $\I$ represents $\c$), and $H^\I=\varnothing$. 
\end{proof}

Now we are ready to complete the proof of Proposition \ref{Lem_ELUndec_LowerBound} by showing that $M$ halts iff $(\D,v_{ts})\models A\dleq G$. 

\medskip

For the ``only if'' direction, assume that $M$ halts (the initial configuration $\c_{init}$ is $m$-accepting for some number $m$), but $(\D,v_{ts})\not\models A\dleq G$, i.e. there is a model $\I\in g(v_{ts})$ and an element $x$ such that $x\in A^\I$, but $x\not\in G^\I$. Then, by axioms (\ref{Eq_ELUndecid_GoodMarker}), we must have $x\not\in H^\I$ and $x\not\in D^\I$, and hence, by axioms (\ref{Eq_ELUndecid_TapeMain}, \ref{Eq_ELUndecid_InitRight}, \ref{Eq_ELUndecid_InitLeft}), the model $\I$ $(x,1)$-represents $\c_{init}$. Take a model $\J\in g(v_{ts})$ which is $(\gamma,8m-2)$-reachable from $\I$; such a model exists, since $\I\in g(v_{ts})$. By Lemma \ref{EL_Undec_Lem_ReachableModel}, $\J$ $(x,4m)$-represents $\c_{init}$ and it holds that $x\not\in H^\J$, since $x\not\in H^\I$ and $H\in\gamma_i$, for $i=1,2$. As $\J\in g(v_{ts})$, by Lemma \ref{EL_Undec_Lem_PropertyOfRepresentingModel}, we arrive at contradiction.

For the ``if'' direction, assume by contraposition that $\c_{init}$ is not $m$-accepting for any $m$, and consider the model $\I$ representing $\c_{init}$ defined in the $(\Leftarrow)$-part of the proof of Lemma \ref{EL_Undec_Lem_PropertyOfRepresentingModel}. We have $\I\in g(v_{ts})$, $G^\I=\varnothing$, but $A^\I\neq\varnothing$, which means $\I\not\models A\dleq G$, a contradiction. 
\end{proof}


}


\noindent\textbf{Theorem \ref{Teo_Hardness_ALC}.} \textit{Entailment in $\ \ALC$-ontology networks is $\DEXPTIME$-hard.}

\medskip

We prove the theorem by showing a reduction from the word problem for alternating Turing machines working with words of length $\IEXP{n}$. For $n\geqslant 0$, let $M=\langle Q,\mathcal{A},\delta_1, \delta_2 \rangle$ be such ATM and let $\O$ be an ontology consisting of the following axioms, which implement a computation of $M$.  

The first two axioms are used to initialize a $r$-chain (with the end marker $E$) used for `storing' configurations of the ATM:

\begin{equation}\label{Eq_ALCEXP_InitChain}
Z\dleq \exists (r,C)^{\IEXP{n}}.\ex{r}.E 
\end{equation}\vspace{-0.7cm}

\begin{equation}\label{Eq_ALCEXP_InitNegE}
C \dleq \neg E
\end{equation}


The next two axioms define a $r$-chain `storing' the initial configuration of the form $\b\qo\b\ldots\b$:

\begin{equation}\label{Eq_ALCEXP_InitialConfig1}
A\dleq Z\dcap \all{r}.\b \dcap \all{r}.\all{r}. \qo \dcap \all{r}.\all{r}.\all{r}.B
\end{equation}\vspace{-0.7cm}

\begin{equation}\label{Eq_ALCEXP_InitialConfig2}
B\dleq \b \dcap \all{r}.(E\dcup B)
\end{equation}


Axioms (\ref{Eq_ALCEXP_InitStateMarkers})-(\ref{Eq_ALCEXP_PropagateStateMarkers}) initialize markers $S_\exists, S_\forall$ for configuration types and propagate them to the end of a $r$-chain:

\begin{equation}\label{Eq_ALCEXP_InitStateMarkers}
q_\exists\dleq S_\exists,  \ \ q_\forall\dleq S_\forall
\end{equation}
for all $q_\exists\in Q_\exists$ and $q_\forall\in Q_\forall$;

\begin{equation}\label{Eq_ALCEXP_PropagateStateMarkers}
\neg E \dcap S_\exists\dleq \all{r}.S_\exists, \ \ \neg E \dcap S_\forall\dleq \all{r}.S_\forall
\end{equation}


\noindent Axioms (\ref{Eq_ALCEXP_ExistentialState})-(\ref{Eq_ALCEXP_PropagateCi}) initialize labels $C_1,C_2$ to distinguish between successor configurations and enforce that every $r$-chain, which represents a $\exists$-configuration ($\forall$-configuration, respectively), has a subsequent $r$-chain (two subsequent $r$-chains, respectively) representing successor configuration(s): 

\begin{equation}\label{Eq_ALCEXP_ExistentialState}
E\dcap S_\exists \dleq \ex{r}.(Z\dcap C_1) \sqcup \ex{r}.(Z\dcap C_2)
\end{equation}\vspace{-0.7cm}

\begin{equation}\label{Eq_ALCEXP_UniversalState}
E\dcap S_\forall \dleq \ex{r}.(Z\dcap C_1) \sqcap \ex{r}.(Z\dcap C_2)
\end{equation}\vspace{-0.7cm}

\begin{equation}\label{Eq_ALCEXP_PropagateCi}
 C_\alpha \dleq \all{r}.(E \dcup C_\alpha), \ \ \ \alpha=1,2
\end{equation}


\medskip

Axioms (\ref{Eq_ALCEXP_PrepareTransition1})-(\ref{Eq_ALCEXP_PrepareTransition4}) initialize markers $S_{XYUV}$ ($X,Y,U,V\in{Q\cup\mathcal{A}}$), which encode 4-tuples of symbols from configurations, while respecting the end points of $r$-chains:

\begin{equation}\label{Eq_ALCEXP_PrepareTransition1}
X \dcap \ex{r}.(Y \dcap \ex{r}.(U \dcap \ex{r}.V)) \dleq \all{r}.\all{r}.S_{XYUV}
\end{equation}\vspace{-0.7cm}

\begin{equation}\label{Eq_ALCEXP_PrepareTransition2}
Z \dcap \ex{r}.(U \dcap \ex{r}.V) \dleq \all{r}.S_{\b\b UV}
\end{equation}\vspace{-0.9cm}

\begin{equation}\label{Eq_ALCEXP_PrepareTransition3}
Z \dcap \ex{r}.(Y \dcap \ex{r}.(U \dcap \ex{r}.V)) \dleq \all{r}.\all{r}.S_{\b YUV}
\end{equation}\vspace{-0.7cm}

\begin{equation}\label{Eq_ALCEXP_PrepareTransition4}
X \dcap \ex{r}.(Y \dcap \ex{r}.(U \dcap \ex{r}.E)) \dleq \all{r}.\all{r}.S_{XYU\b}
\end{equation}
for all $X,Y,U,V\in{Q\cup\mathcal{A}}$.


\medskip

Finally, for $f(n)=\IEXP{n}+2$ and $\alpha=1,2$, axioms (\ref{Eq_ALCEXP_Transition}) implement transitions of $M$ by initializing label concepts on the corresponding successor $r$-chains, and axiom (\ref{Eq_ALCEXP_ForbidRejectingState}) forbids the rejecting state:

\begin{equation}\label{Eq_ALCEXP_Transition}
S_{XYUV} \dleq \all{r^{f(n)}}.(\neg C_\alpha \sqcup W)
\end{equation}
for all $X,Y,U,V,W\in{Q\cup\mathcal{A}}$ such that $XYUV\overset{\delta_\alpha}{\mapsto} W$;

\begin{equation}\label{Eq_ALCEXP_ForbidRejectingState}
q_r \dleq \bot
\end{equation}

\medskip

\begin{lemma}
It holds $\O\not\models A\dleq\bot$ iff $M$ accepts the empty word. 
\end{lemma}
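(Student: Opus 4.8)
The plan is to establish the two directions of the biconditional by relating models of $\O$ to accepting run trees of $M$. The construction is essentially a folklore encoding of $\IEXP{n}$-space bounded ATM computations into $\ALC$ (here with the nested concepts $\exists(r,C)^{\IEXP{n}}.D$ and $\forall r^{\IEXP{n}}.D$ playing the role of macros), so I will argue both directions by tracking the intended meaning of each axiom.

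For the ``only if'' direction (if $\O\not\models A\dleq\bot$ then $M$ accepts the empty word), I would take a model $\I$ of $\O$ with some $a\in A^\I$. First I would show that axioms (\ref{Eq_ALCEXP_InitChain})--(\ref{Eq_ALCEXP_InitialConfig2}) force the existence of an $r$-chain of length $\IEXP{n}+2$ starting at $a$, whose first $\IEXP{n}+2$ elements each lie in $C^\I$ (hence not in $E^\I$, by (\ref{Eq_ALCEXP_InitNegE})) and whose last element lies in $E^\I$, and that this chain represents the initial configuration $\b\qo\b\cdots\b$ via the concept names $\b,\qo,\ldots$. I would then prove by induction on the height of the computation, walking the $E$-marked endpoints of successive $r$-chains: axioms (\ref{Eq_ALCEXP_InitStateMarkers})--(\ref{Eq_ALCEXP_PropagateStateMarkers}) propagate the configuration-type markers $S_\exists,S_\forall$ to the endpoint; axioms (\ref{Eq_ALCEXP_ExistentialState})--(\ref{Eq_ALCEXP_UniversalState}) then generate one (resp.\ two) successor chains, labelled $C_1$ (resp.\ $C_1$ and $C_2$), which by (\ref{Eq_ALCEXP_PropagateCi}) carry their labels along the whole chain; axioms (\ref{Eq_ALCEXP_PrepareTransition1})--(\ref{Eq_ALCEXP_PrepareTransition4}) record every relevant 4-tuple of symbols via the markers $S_{XYUV}$ (respecting the boundary cases at the left end and at the $E$-marked right end); and axiom (\ref{Eq_ALCEXP_Transition}), using the $r^{f(n)}$-jump of length exactly $\IEXP{n}+2$, copies the correct symbol $W$ onto the matching position of the successor chain labelled $C_\alpha$. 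The key bookkeeping point is that the offset $f(n)=\IEXP{n}+2$ is exactly the chain length, so position $i$ of a chain lines up with position $i$ of each of its successor chains. Collecting the successor chains into a tree, I get a run tree of $M$ in which, by (\ref{Eq_ALCEXP_ForbidRejectingState}), the rejecting state $q_r$ never occurs; hence this is an accepting run tree, so $M$ accepts.

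For the ``if'' direction (if $M$ accepts the empty word then $\O\not\models A\dleq\bot$), I would take an accepting run tree $T$ of $M$ — which, since configurations have length $\IEXP{n}+2$, has bounded branching and finite depth — and build a model $\I$ of $\O$ with $A^\I\neq\emptyset$ directly from $T$: the domain consists of the cells of the chains, one chain of length $\IEXP{n}+2$ per node of $T$, with $r$ linking consecutive cells within a chain and linking the last cell of a node's chain to the first cell(s) of its child chain(s); interpret the alphabet/state concept names so that each chain spells out its configuration, interpret $Z$ on the first cell of each chain, $E$ on the last cell, $C$ on the non-$E$ cells, $C_1,C_2$ on child chains according to $\delta_1,\delta_2$, and $S_\exists,S_\forall,S_{XYUV}$ as dictated by the propagation axioms (set each $S$-concept to be exactly the set of cells where the corresponding axiom's precondition fires). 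One then verifies, axiom by axiom, that $\I\models\O$; the only nontrivial checks are (\ref{Eq_ALCEXP_Transition}), which holds because $T$ is a genuine computation so the copied symbol agrees with $\delta_\alpha$, and (\ref{Eq_ALCEXP_ForbidRejectingState}), which holds because $T$ is accepting and hence $q_r$ appears nowhere. Putting the start cell of the root chain into $A^\I$ gives $\I\not\models A\dleq\bot$.

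The main obstacle I anticipate is the careful handling of the chain boundaries — the left end of each configuration (where axioms (\ref{Eq_ALCEXP_PrepareTransition2})--(\ref{Eq_ALCEXP_PrepareTransition3}) substitute blanks for the missing cells to the left) and the $E$-marked right end (axiom (\ref{Eq_ALCEXP_PrepareTransition4}) and the $E\dcup$ disjunctions in (\ref{Eq_ALCEXP_InitialConfig2}), (\ref{Eq_ALCEXP_PropagateCi}), (\ref{Eq_ALCEXP_Transition})) — together with checking that the $\forall r^{f(n)}$-modality reaches precisely the cells of the successor chain and no further, which is where the interaction with the $E$-marker and the ``$\neg C_\alpha\sqcup$'' guard matters. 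Everything else is routine induction on computation height. Finally, since the nested concepts in $\O$ have size exponential in $n$ but, by Lemmas~\ref{Lem_Expressibility_EL+dleqExp} and~\ref{Lem_Expressibility_ForallExpSubstitution}, are expressible by acyclic $\ALC$-ontology networks of size polynomial in $n$, Lemma~\ref{Lem_IteratedExpressibility} yields an acyclic $\ALC$-network $\N$ with an ontology $\O_\N$ that is $(\N,\O_\N)$-expressible for $\O$ and of size polynomial in $n$, so $\O_\N\models_\N A\dleq\bot$ iff $M$ rejects the empty word; as $\AEXPSPACE=\DEXPTIME$, this gives the claimed hardness.
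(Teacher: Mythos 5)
Your proposal matches the paper's own argument essentially step for step: for one direction you extract an accepting run tree from a model with $A^\I\neq\emptyset$ by induction, using the $E$-marked chain endpoints, the $S_\exists/S_\forall$ branching axioms, the $S_{XYUV}$ markers with the $f(n)=\IEXP{n}+2$ offset, and axiom (\ref{Eq_ALCEXP_ForbidRejectingState}); for the other you build a model whose domain consists of one $(\IEXP{n}+2)$-element chain per node of an accepting run tree, exactly as the paper does. The only blemishes are cosmetic (a small off-by-one in describing which chain elements lie in $C^\I$ versus $Z^\I$ and $E^\I$, and the inessential claim that the accepting run tree has finite depth), neither of which affects the argument.
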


\begin{proof}
We assume that every configuration of $M$ is a word of length $\IEXP{n}$ in the alphabet $Q\cup\mathcal{A}$. 

$(\Leftarrow):$ Let $\mathcal{C}$ be the set of configurations in an accepting run tree of $M$, with the root being the initial configuration $\c_0=\b\qo\b\ldots\b$. We define a model of ontology $\O$, in which the interpretation of concept $A$ is not empty. Let $\I=(\Delta, \cdot^\I)$ be an interpretation, with $\Delta=\{x_{\c,i} \mid \c\in\mathcal{C}, \ 0\leqslant i \leqslant \IEXP{n}+1\}$. Let us define $r^\I=\{\tuple{x_{\c,i},x_{\c,i+1}} \mid 0\leqslant i \leqslant \IEXP{n} \} \cup \{\tuple{x_{\c,\IEXP{n}+1},x_{\c',0}} \mid \c' \ \text{is a successor of} \ \c\}$. Further, set $A^\I=\{x_{\c_0,0}\}$, $Z^\I=\{x_{\c,0} \mid \c\in\mathcal{C}\}$, $E^\I=\{x_{\c,\IEXP{n}+1} \mid \c\in\mathcal{C}\}$, $C_k^\I=\{x_{\c,i} \mid 0\leqslant i \leqslant \IEXP{n}, \  \exists \c'\in\mathcal{C} \ \text{s.t.} \  \c \ \text{is a } \ \delta_\alpha\text{-successor of} \ \c'\}$, for $\alpha=1,2$, and for all $X\in Q\cup\mathcal{A}$, set $X^\I=\{x_{\c,i} \mid \c[i]=X\}$. For all $X,Y,U,V\in Q\cup\mathcal{A}$, concepts $S_{XYUV}$ are interpreted in a clear way. Finally, let $C^\I=\{x_{\c,i} \mid 1 \leqslant i \leqslant \IEXP{n}\}$, $B^\I=\{x_{\c_0,i} \mid i\in\{1\}\cup \{3,\ldots, \IEXP{n}\}\}$, $S_\exists^\I=\{x_{\c,i} \mid k \leqslant i \leqslant \IEXP{n}+1, \ \c[k]\in Q_\exists \}$,  $S_\forall^\I=\{x_{\c,i} \mid k \leqslant i \leqslant \IEXP{n}+1, \ \c[k]\in Q_\forall \}$, and $q_r^\I=\varnothing$. Since every configuration from $\mathcal{C}$ is accepting, $q_r$ appears in no $\c\in\mathcal{C}$ and it is straightforward to verify that $\I$ is a model of $\O$. 

$(\Rightarrow):$ Let $\I=(\Delta, \cdot^\I)$ be a model of $\O$ and $x\in\Delta$ an element such that $x\in A^\I$. We say that a segment $x_1, \ldots x_{\IEXP{n}+1}$ of a $r$-chain in $\I$ represents a configuration $\c=uqw$ if $x_i\in \c[i]$, for all $1\leqslant i \leqslant \IEXP{n}$, $x_{\IEXP{n}+1}\in E^\I$, and $x_{\IEXP{n}+1}\in S_\exists^\I$ if $q\in Q_\exists$ and $x_{\IEXP{n}+1}\in S_\forall^\I$ if $q\in Q_\forall$. We show how to use $\I$ to define an accepting run tree of $M$ in which for every configuration $\c$ there is a segment in $\I$, which represents $\c$. We use induction on the height of the accepting run tree. For the induction base, observe that by axioms (\ref{Eq_ALCEXP_InitChain})-(\ref{Eq_ALCEXP_PropagateStateMarkers}), since $x\in A^\I$, there is a $r$-chain outgoing from $x$, which contains a segment representing the initial configuration $\c_0$. We set $\c_0$ to be the root of the tree. In the induction step, consider an arbitrary configuration $\c=uqw$ being a leaf in the tree constructed so far. By the induction assumption, there is a segment $x_1, \ldots x_{\IEXP{n}+1}$ in $\I$ which represents $\c$. If $q\in Q_\exists$ then $x_{\IEXP{n}+1}\in S_\exists^\I$ and by axioms (\ref{Eq_ALCEXP_ExistentialState}), (\ref{Eq_ALCEXP_InitChain}) there exist $y_0, \ldots y_{\IEXP{n}+1}\in\Delta$ such that $\tuple{x_{\IEXP{n}+1}, y_0}\in r^\I$ and  $\tuple{y_i,y_{i+1}}\in r^\I$, for $0\leqslant i \leqslant \IEXP{n}$. Then by axioms (\ref{Eq_ALCEXP_PropagateCi})-(\ref{Eq_ALCEXP_Transition}), there is a successor configuration $\c'$ of $\c$ such that $y_i\in \c'[i]$, for $1 \leqslant i \leqslant \IEXP{n}$. Due to axiom (\ref{Eq_ALCEXP_InitChain}) we have $y_{\IEXP{n}+1}\in E^\I$ and by axioms (\ref{Eq_ALCEXP_InitStateMarkers})-(\ref{Eq_ALCEXP_PropagateStateMarkers}) it holds $y_{\IEXP{n}+1}\in S_\exists^\I$ if $q\in Q_\exists$ and $y_{\IEXP{n}+1}\in S_\forall^\I$ if $q\in Q_\forall$, i.e., the segment $y_1, \ldots y_{\IEXP{n}+1}$ represents $\c'$. Since $\I$ is a model of axiom (\ref{Eq_ALCEXP_ForbidRejectingState}), $\c'$ is an accepting configuration. We extend the tree by adding a child node $\c'$ for the node $\c$. Similarly, in case $q\in Q_\forall$ we have $x_{\IEXP{n}+1}\in S_\forall^\I$ and hence by axioms (\ref{Eq_ALCEXP_UniversalState})-(\ref{Eq_ALCEXP_Transition}), there exist two segments in $\I$, each representing an accepting successor configuration $\c'_i$ of $\c$, for $i=1,2$. Then we extend the tree by adding child nodes $\c'_i$, $i=1,2$, for the node $\c$.
\end{proof}

To conclude the proof of Theorem \ref{Teo_Hardness_ALC} let us show that ontology $\O$ is expressible by an acyclic $\ALC$-ontology network of size polynomial in $n$. Note that $\O$ contains axioms (\ref{Eq_ALCEXP_InitChain}), (\ref{Eq_ALCEXP_Transition}) with concepts of size exponential in $n$. Note that by Lemma \ref{Lem_Expressibility_EL+dleqExp}, axiom (\ref{Eq_ALCEXP_InitChain}) is expressible by an acyclic $\EL$-ontology network of size polynomial in $n$. Now consider an axiom $\varphi$ of the form (\ref{Eq_ALCEXP_Transition}) 

\begin{equation*}
S_{XYUV} \dleq \forall r^{\IEXP{n}}.\forall r.\forall r. (\neg C_\alpha\dcup W)
\end{equation*}

where $\alpha\in\{1,2\}$, and a concept inclusion $\psi$ defined as

\begin{equation*}
S_{XYUV} \dleq D
\end{equation*}
where $D$ is a concept name. By Lemma \ref{Lem_Expressibility_ForallExpSubstitution}, $\psi[D\mapsto \forall r^{\IEXP{n}}.\forall r.\forall r. (\neg C_\alpha\dcup W)]$ is expressible by an acyclic $\ALC$-ontology network of size polynomial in $n$ and hence, so is $\varphi$. 

We conclude that each of axioms (\ref{Eq_ALCEXP_InitChain}), (\ref{Eq_ALCEXP_Transition}) is expressible by an acyclic $\ALC$-ontology network of size polynomial in $n$. The remaining axioms of $\O$ are $\ALC$ axioms, whose size does not depend on $n$. Then by applying Lemma \ref{Lem_IteratedExpressibility} we obtain that there exists an acyclic $\ALC$-ontology network  $\N$ of size polynomial in $n$ and an ontology $\O_\N$ such that $\O$ is $(\N,\O_\N)$-expressible and thus, it holds $\O_\N\models_\N A\dleq \bot$ iff $M$ does not accept the empty word. Theorem \ref{Teo_Hardness_ALC} is proved.

\bigskip

\noindent\textbf{Theorem \ref{Teo_Hardness_ALCHOIF}.} \textit{Entailment in $\ALCHOIF$-ontology networks is $\coNDEXPTIME$-hard.}

\medskip

The theorem is proved by a reduction from the complement of the (bounded) domino tiling problem. A \emph{domino system} is a triple $\D=(T,V,H)$, where $T=\{1,\ldots ,p\}$ is a finite set of tiles and $H,V\subseteq T\times T$ are horizontal and vertical matching relations. A \emph{tiling} of size $m\times m$ for a domino system $\D$ with initial condition $c^0=\tuple{t_1^0,\ldots , t_k^0}$, where $t_i^0\in T$, for $1\leqslant i \leqslant k$, is a mapping $t:\{1,\ldots ,m\}\times\{1,\ldots ,m\}\rightarrow T$ such that $\tuple{t(i-1,j), \ t(i,j)}\in V$, for $1 < i \leqslant m$, $1 \leqslant j \leqslant m$, $\tuple{t(i,j-1), \ t(i,j)}\in H$, for $1 \leqslant i \leqslant m$, $1 < j \leqslant m$, and $t(1,j)=t_j^0$, for $1\leqslant j \leqslant k$. It is well known that it is $\NDEXPTIME$-complete to decide whether a domino system admits a tiling of size $\IIEXP{n}\times\IIEXP{n}$, $n\geqslant 0$, with an initial condition $c^0$.

Let $\D$ be a domino system and $c^0=\tuple{t_1^0,\ldots , t_n^0}$ an initial condition. We define an ontology $\O$ consisting of the axioms, which encode the tiling problem for $\D$ with $c^0$ using a grid of dimension $\IIEXP{n}\times \IIEXP{n}$, $n\geqslant 0$, to be `tiled'. The first axiom of $\O$ defines the initial point of the grid, while the second one initializes a $r$-chain (having the end marker $E$ and containing $\IEXP{n}+1$ many points) used to represent a sequence of $\IEXP{n}$ many bits for a binary counter:

\begin{equation}\label{Eq_ALCOIFEXP_InitGrid}
A\dleq Z \dcap \forall r. (Z_v \dcap Z_h)
\end{equation}

\begin{equation}\label{Eq_ALCOIFEXP_InitZChain}
Z\dleq \exists r^{\IEXP{n}}.E
\end{equation}


The next axioms set the bits given by $X,Y$ on the $r$-chain outgoing from $A$ to `zero':

\begin{equation}\label{Eq_ALCOIFEXP_InitZeroXY}
Z_v \dleq \neg X \dcap \all{r}.Z_v, \ Z_h \dleq \neg Y \dcap \all{r}.Z_h
\end{equation}


The following three axioms define markers $E_v$ and $E_h$, which `hold' at the end of a $r$-chain, iff all the bits (except possibly the last bit) given by $X$, respectively by $Y$, are 1:

\begin{align}
Z \dleq  & \ \forall r.(E_v \dcap E_h)  \label{Eq_ALCOIFEXP_InitEmarker}  \\
E_v \dcap X \dleq \all{r}.E_v, & \ \ \ \neg (E_v \dcap X) \dleq \all{r}.\neg E_v  \label{Eq_ALCOIFEXP_PropagateVerticalEmarkers} \\
E_h \dcap Y \dleq \all{r}.E_h, & \  \ \ \neg (E_h \dcap Y) \dleq \all{r}.\neg E_h \label{Eq_ALCOIFEXP_PropagateHorizontalEmarker}
\end{align}


The next axioms say that every $r$-chain `containing' at least one zero bit has a `vertical' and `horizontal' successor $r$-chains:

\begin{equation}\label{Eq_ALCOIFEXP_InitZVertically}
E\dcap \neg(E_v \dcap X) \dleq \ex{v}.Z
\end{equation}

\begin{equation}\label{Eq_ALCOIFEXP_InitZHorizontally}
E\dcap \neg(E_h \dcap Y) \dleq \ex{h}.Z
\end{equation}



Axioms (\ref{Eq_ALCOIFEXP_XFlipConditions})-(\ref{Eq_ALCOIFEXP_YFlipConditions}) initialize markers $X^f,Y^f$ corresponding to the flipping conditions of a binary counter:

\begin{align}
E \dcup \ex{r}.(X\dcap X^f) \dleq X^f, & \ \  \ex{r}.\neg (X\dcap X^f) \dleq \neg X^f  \label{Eq_ALCOIFEXP_XFlipConditions} \\
E \dcup \ex{r}.(Y\dcap Y^f) \dleq Y^f, & \ \ \ex{r}.\neg (Y\dcap Y^f) \dleq \neg Y^f \label{Eq_ALCOIFEXP_YFlipConditions}
\end{align}

Axioms (\ref{Eq_ALCOIFEXP_RoleHierarchy}) define the role hierarchy needed for a correct incrementation of binary counters across role chains:

\begin{equation}\label{Eq_ALCOIFEXP_RoleHierarchy}
r \dleq v, \ \ r\dleq h
\end{equation}

For $f(n)=\IEXP{n}+1$, axioms (\ref{Eq_ALCOIFEXP_FlipX})-(\ref{Eq_ALCOIFEXP_NoFlipY}) define the values of bits in the vertical/horizontal successor $r$-chains given the values of the flipping markers:

\begin{equation}\label{Eq_ALCOIFEXP_FlipX}
X^f \dleq (X \dcap \forall v^{f(n)}.\neg X) \dcup (\neg X \dcap \forall v^{f(n)}.X)
\end{equation}\vspace{-0.9cm}

\begin{equation}\label{Eq_ALCOIFEXP_NoFlipX}
\neg X^f\!\dleq\!(X \dcap \forall v^{f(n)}. X) \dcup (\neg X \dcap \forall v^{f(n)}.\neg X)
\end{equation}\vspace{-0.9cm}

\begin{equation}\label{Eq_ALCOIFEXP_FlipY}
Y^f \dleq (Y \dcap \forall h^{f(n)}.\neg Y) \dcup (\neg Y \dcap \forall h^{f(n)}.Y)
\end{equation}\vspace{-0.9cm}

\begin{equation}\label{Eq_ALCOIFEXP_NoFlipY}
\neg Y^f \dleq (Y \dcap \forall h^{f(n)}. Y) \dcup (\neg Y \dcap \forall h^{f(n)}.\neg Y)
\end{equation}


The next four axioms enforce the grid structure composed by vertical and horizontal $r$-chains: for $f(n)=\IEXP{n}+1$, axioms (\ref{Eq_ALCOIFEXP_GridPreserveX})-(\ref{Eq_ALCOIFEXP_GridPreserveY}) propagate the bit values given by $X$ horizontally and those given by $Y$ vertically; axiom (\ref{Eq_ALCOIFEXP_GridUniqueEnd}) states that there is a unique common final element of the vertical and horizontal $r$-chains, in which all bits are 1; axiom (\ref{Eq_ALCOIFEXP_GridFuncRoles}) states that the roles $v,h$ are inverse functional:

\begin{align}
\top & \dleq (X \dcap \forall h^{f(n)}.X) \dcup (\neg X \dcap \forall h^{f(n)}.\neg X) \label{Eq_ALCOIFEXP_GridPreserveX} \\
\top & \dleq (Y \dcap \forall v^{f(n)}.Y) \dcup (\neg Y \dcap \forall v^{f(n)}.\neg Y) \label{Eq_ALCOIFEXP_GridPreserveY}
\end{align}\vspace{-0.7cm}

\begin{equation}\label{Eq_ALCOIFEXP_GridUniqueEnd}
E \dcap E_v \dcap X \dcap E_h \dcap Y \dleq \{a\}
\end{equation}\vspace{-0.7cm}

\begin{equation}\label{Eq_ALCOIFEXP_GridFuncRoles}
\Fun(v^-), \ \ \Fun(h^-)
\end{equation}


Finally, axioms (\ref{Eq_ALCOIFEXP_Tiles})-(\ref{Eq_ALCOIFEXP_TilesInitialCond3}), where $f(n)=\IEXP{n}+1$, declare tile types $D_1,\ldots ,D_p$ stating that there is a unique tile for every element labelled by $E$, and define tiling conditions as well as the initial tiling given by tile types $D_{t^0_k}$, $1 \leqslant k < m $:

\begin{align}
E & \dleq D_1 \dcup \ldots \dcup D_p \label{Eq_ALCOIFEXP_Tiles} \\
D_i \dcap D_j & \dleq\bot \hspace{2.2cm} 1\leqslant i < j \leqslant p   \label{Eq_ALCOIFEXP_TilesDisjoint} \\
D_i \dleq \all{v^{f(n)}}.D_i^v & \dcap \all{h^{f(n)}}.D_i^h \hspace{1cm}  1\leqslant i \leqslant p \label{Eq_ALCOIFEXP_PropagateTile} \\
D_i \dcap D_j^v & \dleq \bot \hspace{2.8cm} \langle i, j \rangle \not\in V  \label{Eq_ALCOIFEXP_TilesVMatching} \\
D_i \dcap D_j^h & \dleq \bot \hspace{2.8cm} \langle i, j \rangle \not\in H \label{Eq_ALCOIFEXP_TilesHMatching} \\
E \dcap Z_v \dcap Z_h & \dleq I_1   \label{Eq_ALCOIFEXP_TilesInitialCond1} \\
I_j & \dleq \all{h^{f(n)}}.I_{j+1} \hspace{1cm} 1 \leqslant j < k   \label{Eq_ALCOIFEXP_TilesInitialCond2}\\
I_j & \dleq D_{t_j^0} \hspace{2.2cm} 1 \leqslant j \leqslant k   \label{Eq_ALCOIFEXP_TilesInitialCond3}
\end{align}


Let us prove two auxiliary lemmas. The first one shows that the axioms of $\O$ enforce that every model $\I\models O$, in which $A^\I\neq\emptyset$, contains a structure `implementing' a grid of size $\IIEXP{n}\times\IIEXP{n}$. The second lemma demonstrates the reduction of the tiling problem to entailment from $\O$. Finally we show that $\O$ is expressible by an acyclic $\ALCHOIF$-ontology network of size polynomial in $n$, which proves the Theorem. 

\medskip

First, we introduce some auxiliary notations. For a natural number $m\geqslant 0$, we denote by $i[m]_2$ the value of the $i$-th bit in the binary representation of $m$. Let $(\Delta, \cdot^\I)$ be an interpretation, $r$ a role, and $X$, $Y$ concept names. For $x,y\in\Delta$ and $k\geqslant 1$, the notation $x[r]^k y$ means that there is a sequence of elements $x_1,\ldots x_{k+1}\in\Delta$ such that $x_1=y$, $x_{k+1}=x$, and $\tuple{x_{l+1},x_l}\in r^\I$, for $1\leqslant l \leqslant k$. We say that an element $y\in\Delta$ \emph{represents} a tuple $\tuple{i,j}$, where $1 \leqslant i,j \leqslant m$, for $m=\IIEXP{n}$ and $n\geqslant 0$, if there is $x\in\Delta$ such that $x[r]^k y$, for $k=\IEXP{n}$, i.e., there is a sequence of elements $x_1,\ldots x_{k+1}\in\Delta$ such that $x_1=y$, $x_{k+1}=x$ and $\tuple{x_{l+1},x_l}\in r^\I$, for $1\leqslant l \leqslant k$, and it holds:
\begin{itemize}
\item[] $x_l\in X^\I$ iff $l[i]_2=1$ and $x_l\in Y^\I$ iff $l[j]_2=1$.
\end{itemize}

A subset $X\subseteq\Delta$ represents a tuple $\tuple{i,j}$ if so does every element $x\in X$. 

\medskip

\begin{lemma}\label{Lem_ALCHOIF_GridModel}
For any model $(\Delta,\cdot^\I)$ of ontology $\O$ and any $x\in A^\I$ there exist elements $x_{i,j}\in\Delta$, $1\leqslant i,j \leqslant \IIEXP{n}$ such that 
\begin{itemize}
\item $x_{i,j}\in E^\I$;
\item $x_{1,1}\in (Z_v \dcap Z_h)^\I$;
\end{itemize}
and for any $y\in\Delta$, it holds: 
\begin{itemize}
\item $x[r]^{\IEXP{n}}y$ iff $y=x_{1,1}$;
\item $x_{i,j}[h]^{\IEXP{n}+1}y$ iff $y=x_{i,j+1}$;
\item $x_{i,j}[v]^{\IEXP{n}+1}y$ iff $y=x_{i+1,j}$. 
\end{itemize}
\end{lemma}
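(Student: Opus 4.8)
The plan is to build the grid elements $x_{i,j}$ by a double induction: an outer induction on the vertical index $i$ and, for each fixed $i$, an inner induction on the horizontal index $j$. Throughout, the invariant I would maintain is that $x_{i,j}$ lies in $E^\I$ and represents the pair $\tuple{i-1,j-1}$ (using the 0-based binary encoding built from the bit concepts $X$, $Y$ along the unique incoming $r$-chain of length $\IEXP{n}$), together with the facts that $x_{i,j}\in E_v^\I$ iff $i=\IIEXP{n}$, and $x_{i,j}\in E_h^\I$ iff $j=\IIEXP{n}$, which follow from axioms \eqref{Eq_ALCOIFEXP_InitEmarker}--\eqref{Eq_ALCOIFEXP_PropagateHorizontalEmarker}. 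The base step is the origin: from $x\in A^\I$, axiom \eqref{Eq_ALCOIFEXP_InitGrid} gives an $r$-successor in $(Z\dcap Z_v\dcap Z_h)^\I$, and axiom \eqref{Eq_ALCOIFEXP_InitZChain} gives an $r$-chain of length $\IEXP{n}$ from that successor to an element of $E^\I$; axioms \eqref{Eq_ALCOIFEXP_InitZeroXY} force all the $X$- and $Y$-bits along that chain to $0$, so its endpoint $x_{1,1}$ represents $\tuple{0,0}$ and lies in $(Z_v\dcap Z_h)^\I$.

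For the inductive steps I would argue as follows. Given $x_{i,j}\in E^\I$ representing $\tuple{i-1,j-1}$ with $j<\IIEXP{n}$ (so not all $Y$-bits are $1$, hence $x_{i,j}\notin (E_h\dcap Y)^\I$), axiom \eqref{Eq_ALCOIFEXP_InitZHorizontally} yields an $h$-successor in $Z^\I$, and \eqref{Eq_ALCOIFEXP_InitZChain} extends it to an $h$-chain (using $r\dleq h$ from \eqref{Eq_ALCOIFEXP_RoleHierarchy}) of length $\IEXP{n}+1$ ending in an element $x_{i,j+1}\in E^\I$. The flipping markers $Y^f$ set up by \eqref{Eq_ALCOIFEXP_YFlipConditions}, combined with the bit-update axioms \eqref{Eq_ALCOIFEXP_FlipY}, \eqref{Eq_ALCOIFEXP_NoFlipY} and the preservation axiom \eqref{Eq_ALCOIFEXP_GridPreserveX}, force the $Y$-bits along the new chain to encode $j$ and the $X$-bits to still encode $i-1$, so $x_{i,j+1}$ represents $\tuple{i-1,j}$; the $E_v,E_h$ invariants are re-established from \eqref{Eq_ALCOIFEXP_InitEmarker}--\eqref{Eq_ALCOIFEXP_PropagateHorizontalEmarker}. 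The vertical step is symmetric, using \eqref{Eq_ALCOIFEXP_InitZVertically}, \eqref{Eq_ALCOIFEXP_FlipX}, \eqref{Eq_ALCOIFEXP_NoFlipX}, and \eqref{Eq_ALCOIFEXP_GridPreserveY}. This handles the "only if"/existence parts and the three "only if" directions of the biconditionals, plus $x_{1,1}\in(Z_v\dcap Z_h)^\I$ and $x_{i,j}\in E^\I$.

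The remaining, and I expect hardest, part is the "if" direction of each biconditional, i.e. \emph{uniqueness}: that the grid cell reached by a given chain is the \emph{only} element reachable that way. For $x[r]^{\IEXP{n}}y$, uniqueness of $y$ is not literally true element-by-element; what must be shown is that any such $y$ represents $\tuple{0,0}$ and is functionally determined \emph{after} we use axiom \eqref{Eq_ALCOIFEXP_GridUniqueEnd} to collapse, and more importantly the inverse-functionality of $v$ and $h$ from \eqref{Eq_ALCOIFEXP_GridFuncRoles}. The key idea: axiom \eqref{Eq_ALCOIFEXP_GridUniqueEnd} forces the unique "all-ones" corner to be the single nominal $\{a\}$, and then walking backwards along $v^-$ and $h^-$ — both functional by \eqref{Eq_ALCOIFEXP_GridFuncRoles} — from that corner shows that each pair $\tuple{i,j}$ is represented by exactly one element, since the backward walk is deterministic and the bit counters are synchronized by \eqref{Eq_ALCOIFEXP_GridPreserveX}--\eqref{Eq_ALCOIFEXP_GridPreserveY}. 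Thus I would prove uniqueness by a \emph{backward} induction from the corner $a$, reconstructing all $x_{i,j}$ and showing any element claimed to be "$x_{i,j+1}$" via an $h$-chain of length $\IEXP{n}+1$ from $x_{i,j}$ must coincide with the one produced by the forward construction. The main obstacle is bookkeeping the bit-counter invariants precisely enough that the forward-constructed family and the backward-reconstructed family provably agree; once that identification is made, all three biconditionals follow, and the two displayed membership claims ($x_{i,j}\in E^\I$, $x_{1,1}\in(Z_v\dcap Z_h)^\I$) are immediate from the construction.
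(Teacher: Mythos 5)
Your proposal matches the paper's proof in all essentials: a forward induction over the grid indices that builds cells in $E^\I$ representing the pair $(i,j)$ via the $X,Y$ bit counters (using the initialization, flipping, and preservation axioms exactly as you describe), followed by a backward induction from the all-ones corner, which the nominal axiom \eqref{Eq_ALCOIFEXP_GridUniqueEnd} collapses to $\{a\}$ and from which the inverse functionality of $v$ and $h$ in \eqref{Eq_ALCOIFEXP_GridFuncRoles} propagates uniqueness cell by cell. The one bookkeeping obstacle you flag---reconciling the forward-constructed and backward-reconstructed families---is resolved in the paper by defining each $X_{i,j}$ as the set of \emph{all} elements reachable by the appropriate chains (rather than a chosen representative) and then proving each such set is a singleton, which is a presentational refinement of your plan rather than a different argument.
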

 
\begin{proof}
We use induction on $1\leqslant i,j \leqslant \IIEXP{n}$ and define non-empty sets $X_{i,j}$ satisfying the following properties:

\begin{itemize}
\item[(a)] $X_{1,1}=\{y\in\Delta \mid x[r]^{\IEXP{n}} y \ \text{and} \ y\in (E\dcap Z_v \dcap Z_h)^\I\}$;
\item[(b)] every $X_{i,j}$ represents $\tuple{i,j}$;
\item[(c)] $\forall x_{i,j}\in X_{i,j}$ it holds $x_{i,j}\in E^\I$ and $x_{i,j}\in (\neg (E_v \dcap X))^\I$ iff $i<\IIEXP{n}$ and $x_{i,j}\in (\neg (E_h \dcap Y))^\I$ iff $j<\IIEXP{n}$;
\item[(d)] $\forall x_{i-1,j}\in X_{i-1,j} \ \exists x_{i,j}\in X_{i,j}$ and $\forall x_{i,j}\in X_{i,j} \ \exists x_{i-1,j}\in X_{i-1,j}$ such that $x_{i-1,j}[v]^{\IEXP{n}}x_{i,j}$, when $i\geqslant 2$;
\item[(e)] $\forall x_{i,j-1}\in X_{i,j-1} \ \exists x_{i,j}\in X_{i,j}$ and $\forall x_{i,j}\in X_{i,j} \ \exists x_{i,j-1}\in X_{i,j-1}$ such that $x_{i,j-1}[h]^{\IEXP{n}}x_{i,j}$, when $j\geqslant 2$.
\end{itemize}

After that we show that the axioms of $\O$ enforce that every $X_{i,j}$ is a singleton, which proves the lemma. Initially we let every $X_{i,j}$ be equal to the empty set.  

In the induction base, for $i=j=1$, note that since $x\in A^\I$, by axioms (\ref{Eq_ALCOIFEXP_InitGrid})-(\ref{Eq_ALCOIFEXP_InitZeroXY}), there is a sequence of elements $x_1,\ldots x_{k+1}$, with $k=\IEXP{n}$ and $x_1=x$, such that $\tuple{x_m,x_{m+1}}\in r^\I$, for $1\leqslant m \leqslant k$, and $x_m\not\in X^\I\cup Y^\I$, for $2\leqslant m \leqslant k+1$, and $x_{k+1}\in (E\dcap Z_v \dcap Z_h)^\I$. We put every element $x_{k+1}$ of such sequence into the set $X_{1,1}$. By definition, $X_{1,1}$ satisfies conditions (a),(b) and it follows from axioms (\ref{Eq_ALCOIFEXP_InitEmarker})-(\ref{Eq_ALCOIFEXP_InitZHorizontally}) that $X_{1,1}$ satisfies condition (c) as well. Conditions (d),(e) are trivially satisfied, since $i=j=1$.

In the induction step, for $3 \leqslant i+j \leqslant 2\cdot \IIEXP{n}$, we assume that the statement is proved for all sets $X_{i',j'}$, with $i'+j' < i+j$. 

If $i\geqslant 2$, consider the set $X_{i-1,j}$. Since $i-1<\IIEXP{n}$, by the induction assumption, for every $x\in X_{i-1,j}$, we have $x\in [E\dcap \neg(E_v\dcap X)]^\I$. Hence by axioms (\ref{Eq_ALCOIFEXP_InitZVertically}), (\ref{Eq_ALCOIFEXP_InitZChain}), for every $x\in X_{i-1,j}$ there exists a sequence of elements $x_1,\ldots x_{k+1}$, with $k=\IEXP{n}$, such that $\tuple{x,x_1}\in v^\I$, $\tuple{x_m,x_{m+1}}\in r^\I$, for $1\leqslant m \leqslant k$, and $x_{k+1}\in E^\I$. We put every element $x_{k+1}$ into the set $X_{i,j}$. By axiom (\ref{Eq_ALCOIFEXP_RoleHierarchy}) and the definition of $X_{i,j}$, condition (d) holds for $X_{i,j}$. By the induction assumption, $X_{i-1,j}$ represents $\tuple{i-1,j}$ and hence, by axioms (\ref{Eq_ALCOIFEXP_XFlipConditions}), (\ref{Eq_ALCOIFEXP_RoleHierarchy}), (\ref{Eq_ALCOIFEXP_GridPreserveY}), $X_{i,j}$ represents $\tuple{i,j}$. i.e. $X_{i,j}$ satisfies condition (b). Moreover, it is easy to see that by axioms (\ref{Eq_ALCOIFEXP_InitEmarker}), (\ref{Eq_ALCOIFEXP_PropagateVerticalEmarkers}), for every $x\in X_{i,j}$, it holds $x\in (\neg(E_v\dcap X))^\I$ iff $i<\IIEXP{n}$ and $x\in (\neg(E_h\dcap Y))^\I$ iff $j<\IIEXP{n}$, i.e., $X_{i,j}$ also satisfies (c). 

Similarly, if $j\geqslant 2$, we consider the set $X_{i,j-1}$. Since $j-1<\IIEXP{n}$, by the induction assumption, for every $x\in X_{i,j-1}$, it holds $x\in [E\dcap \neg(E_h\dcap X)]^\I$. Hence, by axioms (\ref{Eq_ALCOIFEXP_InitZHorizontally}), (\ref{Eq_ALCOIFEXP_InitZChain}), for every $x\in X_{i,j-1}$ there exists a sequence of elements $x_1,\ldots x_{k+1}$, with $k=\IEXP{n}$, such that $\tuple{x,x_1}\in h^\I$, $\tuple{x_m,x_{m+1}}\in r^\I$, for $1\leqslant m \leqslant k$, and $x_{k+1}\in E^\I$. We put every element $x_{k+1}$ into the set $X_{i,j}$. By axiom (\ref{Eq_ALCOIFEXP_RoleHierarchy}) and the definition of $X_{i,j}$, condition (e) holds for $X_{i,j}$. By the induction assumption, $X_{i,j-1}$ represents $\tuple{i,j-1}$ and hence, by axioms (\ref{Eq_ALCOIFEXP_YFlipConditions}), (\ref{Eq_ALCOIFEXP_RoleHierarchy}), (\ref{Eq_ALCOIFEXP_GridPreserveX}), $X_{i,j}$ represents $\tuple{i,j}$, i.e. $X_{i,j}$ satisfies condition (b). Moreover, it is easy to see that by axioms (\ref{Eq_ALCOIFEXP_InitEmarker}), (\ref{Eq_ALCOIFEXP_PropagateHorizontalEmarker}), for every $x\in X_{i,j}$, it holds $x\in (\neg(E_v\dcap X))^\I$ iff $i<\IIEXP{n}$ and $x\in (\neg(E_h\dcap Y))^\I$ iff $j<\IIEXP{n}$, i.e., $X_{i,j}$ also satisfies (c). 

Now let us show by induction on $1\leqslant i,j \leqslant \IIEXP{n}$ that every set $X_{i,j}$ is a singleton. In the induction base for $i=j=\IIEXP{n}$, by condition (c), for every $x\in X_{i,j}$ we have $x\in (E\dcap E_v \dcap X \dcap E_h \dcap Y)^\I$.  Hence, it follows from axiom (\ref{Eq_ALCOIFEXP_GridUniqueEnd}) that the set $X_{i,j}$ is a singleton. In the induction step we assume that the statement is shown for all $X_{i',j'}$, with $i'+j' > i+j$. If $i<\IIEXP{n}$ then consider the set $X_{i+1,j}$. Since, $X_{i+1,j}$ satisfies property (d), it follows from the inverse functionality of $v$ that $X_{i,j}$ is a singleton. If $j<\IIEXP{n}$ then we consider the set $X_{i,j+1}$. Since, $X_{i,j+1}$ satisfies property (e), it follows from the inverse functionality of $h$ that $X_{i,j}$ is a singleton. 
\end{proof}

\begin{lemma}\label{Lem_ALHOIF_ReductionTiling2Entailment}
It holds $\O\not\models A\dleq \bot$ iff the domino system $\D$ admits a tiling of size $\IIEXP{n}\times\IIEXP{n}$. 
\end{lemma}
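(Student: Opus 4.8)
\textbf{Proof proposal for Lemma \ref{Lem_ALHOIF_ReductionTiling2Entailment}.}

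The plan is to prove the two directions separately, using Lemma \ref{Lem_ALCHOIF_GridModel} as the central tool that extracts a grid structure from any model of $\O$. For the `if' direction, I would start from a tiling $t:\{1,\ldots ,\IIEXP{n}\}^2\rightarrow T$ of $\D$ with initial condition $c^0$ and build an explicit model $\I$ of $\O$ with $A^\I\neq\emptyset$. The idea is to take as domain a set of elements $x_{i,j,l}$, where $\tuple{i,j}$ ranges over the grid cells and $1\leqslant l \leqslant \IEXP{n}+1$ indexes the points of the $r$-chain that encodes the binary address $\tuple{i,j}$; additionally one needs the initial $r$-chain outgoing from the element $x\in A^\I$. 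One then sets $r^\I$ to walk along each address chain, $v^\I$ and $h^\I$ to connect the last element of the chain for $\tuple{i,j}$ to the first elements of the chains for $\tuple{i+1,j}$ and $\tuple{i,j+1}$ respectively (and also to include $r^\I$, per axiom (\ref{Eq_ALCOIFEXP_RoleHierarchy})), and interprets $X,Y$ along each chain so that they encode the bits of $i$ and $j$. The concept names $Z, E, Z_v, Z_h, E_v, E_h, X^f, Y^f, I_j$ and the tile concepts $D_i, D_i^v, D_i^h$ are interpreted in the obvious way dictated by their intended meaning, with $D_{t(i,j)}$ holding at the $E$-labelled endpoint of cell $\tuple{i,j}$ and the nominal $\{a\}$ interpreted as $x_{\IIEXP{n},\IIEXP{n},\cdot}$'s endpoint. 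Then one checks each axiom family (\ref{Eq_ALCOIFEXP_InitGrid})--(\ref{Eq_ALCOIFEXP_TilesInitialCond3}) holds: the counter-increment axioms hold because $v,h$ correctly implement binary increment across chains, the matching axioms (\ref{Eq_ALCOIFEXP_TilesVMatching})--(\ref{Eq_ALCOIFEXP_TilesHMatching}) and disjointness (\ref{Eq_ALCOIFEXP_TilesDisjoint}) hold because $t$ is a valid tiling, inverse functionality of $v^-,h^-$ holds because each cell has a unique predecessor, and the initial-condition axioms hold because $t(1,j)=t^0_j$. Since $q_r$ and $\bot$ play no role here (there is no $q_r$ in this ontology — the $\bot$ axioms are the disjointness and matching ones), $\I\not\models A\dleq\bot$.

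For the `only if' direction, suppose $\O\not\models A\dleq\bot$, so there is a model $(\Delta,\cdot^\I)$ of $\O$ and some $x\in A^\I$. By Lemma \ref{Lem_ALCHOIF_GridModel}, there are elements $x_{i,j}\in E^\I$ for $1\leqslant i,j\leqslant\IIEXP{n}$ forming a grid: $x_{1,1}$ is reached from $x$ by an $r$-chain of length $\IEXP{n}$ and lies in $(Z_v\dcap Z_h)^\I$, and the $h$- and $v$-successor relations (through chains of length $\IEXP{n}+1$) step $j$ and $i$ respectively. I would then define $t(i,j)$ to be the unique tile type such that $x_{i,j}\in D_{t(i,j)}^\I$ — uniqueness and existence follow from axioms (\ref{Eq_ALCOIFEXP_Tiles}) and (\ref{Eq_ALCOIFEXP_TilesDisjoint}) (noting each $x_{i,j}\in E^\I$). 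One verifies $t$ is a valid tiling: the vertical and horizontal matching conditions follow from axioms (\ref{Eq_ALCOIFEXP_PropagateTile})--(\ref{Eq_ALCOIFEXP_TilesHMatching}) applied to the grid edges given by Lemma \ref{Lem_ALCHOIF_GridModel}; the initial condition $t(1,j)=t^0_j$ for $1\leqslant j\leqslant k$ follows from axioms (\ref{Eq_ALCOIFEXP_TilesInitialCond1})--(\ref{Eq_ALCOIFEXP_TilesInitialCond3}), since $x_{1,1}\in (E\dcap Z_v\dcap Z_h)^\I$ forces $x_{1,1}\in I_1^\I$, and $I_j$ propagates along $h^{\IEXP{n}+1}$ to $x_{1,j}$, forcing $x_{1,j}\in D^\I_{t^0_j}$.

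The main obstacle, and where most of the real work lies, is the `if' direction: one must exhibit a genuine model of the full axiom set and verify that the binary-counter machinery (axioms (\ref{Eq_ALCOIFEXP_InitZeroXY})--(\ref{Eq_ALCOIFEXP_NoFlipY}) together with the role hierarchy (\ref{Eq_ALCOIFEXP_RoleHierarchy})) behaves exactly as intended, so that the $v$- and $h$-edges connect addresses $\tuple{i,j}$ precisely to $\tuple{i+1,j}$ and $\tuple{i,j+1}$ and nowhere else. This is essentially the correctness of the exponential-counter gadget from the proof of Theorem 5 in \cite{Kazakov:08:RIQ:SROIQ}, adapted to the present signature; I would cite that construction and only indicate the modifications (replacing the directly-written exponential-length chains by the concept names $Z,E,X,Y$ which Lemma \ref{Lem_ALCHOIF_GridModel} already handles). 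The remaining step of the theorem's proof — showing $\O$ is expressible by an acyclic $\ALCHOIF$-ontology network of size polynomial in $n$ — proceeds exactly as in Theorem \ref{Teo_Hardness_ALC}: the only axioms with concepts of size exponential in $n$ are (\ref{Eq_ALCOIFEXP_InitZChain}) and those of the form $D\dleq\forall r^{\IEXP{n}}.C$ or $D\dleq\exists r^{\IEXP{n}}.C$ (namely (\ref{Eq_ALCOIFEXP_FlipX})--(\ref{Eq_ALCOIFEXP_NoFlipY}), (\ref{Eq_ALCOIFEXP_GridPreserveX})--(\ref{Eq_ALCOIFEXP_GridPreserveY}), (\ref{Eq_ALCOIFEXP_PropagateTile}), (\ref{Eq_ALCOIFEXP_TilesInitialCond2})), and these are handled by Lemmas \ref{Lem_Expressibility_EL+dleqExp}, \ref{Lem_Expressibility_ForallExpSubstitution}, and \ref{Lem_Expressibility_SimpleSubstitution}; Lemma \ref{Lem_IteratedExpressibility} then assembles the whole network, yielding the $\coNDEXPTIME$ lower bound since deciding whether $\D$ admits a tiling of size $\IIEXP{n}\times\IIEXP{n}$ is $\NDEXPTIME$-hard.
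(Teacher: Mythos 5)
Your proposal is correct and follows essentially the same route as the paper: the model-to-tiling direction extracts the grid via Lemma~\ref{Lem_ALCHOIF_GridModel} and reads off the unique tile concept at each $E$-labelled grid point using axioms (\ref{Eq_ALCOIFEXP_Tiles})--(\ref{Eq_ALCOIFEXP_TilesHMatching}) and (\ref{Eq_ALCOIFEXP_TilesInitialCond1})--(\ref{Eq_ALCOIFEXP_TilesInitialCond3}), while the tiling-to-model direction builds the same explicit interpretation over a domain $\{x_{k,i,j}\}$ with $r,v,h$ and the counter concepts interpreted as intended (the paper likewise leaves the axiom-by-axiom verification as "straightforward"). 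Your side remarks — that there is no $q_r$ axiom in this ontology and that the expressibility argument belongs to the surrounding theorem — are both accurate.
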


\begin{proof}
($\Rightarrow$): Let $(\Delta, \cdot^\I)$ be a model of ontology $\O$ and $x$ an element such that $x\in A^\I$. Then there exist elements $x_{i,j}\in\Delta$, for $1\leqslant i,j \leqslant \IIEXP{n}$, having the properties as in Lemma \ref{Lem_ALCHOIF_GridModel}. By axioms (\ref{Eq_ALCOIFEXP_Tiles})-(\ref{Eq_ALCOIFEXP_TilesDisjoint}), for every $x_{i,j}$ there is a unique $D_k$, $1 \leqslant k \leqslant p$, such that $x_{i,j}\in D_k^\I$. By axioms (\ref{Eq_ALCOIFEXP_TilesInitialCond1})-(\ref{Eq_ALCOIFEXP_TilesInitialCond3}) and the property of $x_{1,1}$ from Lemma \ref{Lem_ALCHOIF_GridModel}, we have $x_{1,j}\in D_{t_j^0}$, for $1 \leqslant j \leqslant k$. Finally, axioms (\ref{Eq_ALCOIFEXP_PropagateTile})-(\ref{Eq_ALCOIFEXP_TilesHMatching}) enforce that for any elements $x_{i,j}$, $x_{i',j'}$, with $1 \leqslant i,j,i',j' \leqslant \IIEXP{n}$, any $D_k$, $D_l$ such that $1 \leqslant k,l \leqslant p$, $x_{i,j}\in D_k$, and $x_{i',j'}\in D_l$, we have $i'=i+1$ iff  $\tuple{k,l}\in V$ and $j'=j+1$ iff $\tuple{k,l}\in H$. Therefore, we conclude that the domino system $\D$ admits a tiling. 

($\Leftarrow$): For $m=\IIEXP{n}$, let $t:\{1,\ldots , m\}\times\{1,\ldots , m\}\rightarrow T$ be a tiling for $\D$. We define a model $\I=(\Delta, \cdot^\I)$ of $\O$, in which the interpretation of $A$ is non-empty. Let $\Delta=\{x_{k,i,j} \mid 1 \leqslant k \leqslant \IEXP{n}+1, \ 1 \leqslant i,j \leqslant \IIEXP{n}\}$ and define interpretation of roles $r,v,h$ as follows:
\begin{itemize}
\item $r^\I=\{\tuple{x_{k-1,i,j}, x_{k,i,j}} \mid 2 \leqslant k\}$; 
\item $v^\I=r^\I\cup\{\tuple{x_{k,i-1,j}, x_{1,i,j}} \mid k = \IEXP{n}+1, \ 2 \leqslant i\}$; 
\item $h^\I=r^\I\cup\{\tuple{x_{k,i,j-1}, x_{1,i,j}} \mid k = \IEXP{n}+1, \ 2 \leqslant j\}$. 
\end{itemize}

Further, we set $A^\I=\{x_{1,1,1}\}$, $Z^\I=\{x_{1,i,j} \mid 1 \leqslant i,j\}$, $E^\I=\{x_{k,i,j} \mid k=\IEXP{n}+1\}$, $Z_v^\I=Z_h^\I=\{x_{k,1,1} \mid 2\leqslant k \leqslant \IEXP{n}+1\}$, $X^\I=\{x_{\IEXP{n}+2-k,i,j} \mid k[i]_2=1\}$, $Y^\I=\{x_{\IEXP{n}+2-k,i,j} \mid k[j]_2=1\}$, and $a^\I=x_{k,i,j}$, for $k=\IEXP{n}+1$ and $i=j=\IIEXP{n}$. Finally, we define $D_l^\I=\{x_{k,i,j} \mid k=\IEXP{n}+1, \ t(i,j)=l\}$, for $1\leqslant l \leqslant p$, and $I_j^\I=\{x_{i,1,j} \mid i=\IEXP{n}+1\}$, for $1\leqslant j \leqslant k$. Other concepts $X^f, Y^f, E_v, E_h$, and $D_i^v$, $D_i^h$, for $1 \leqslant i \leqslant p$, are interpreted in a clear way. It is straightforward to verify that $\I$ is a model of every axiom of $\O$.
\end{proof}

To complete the proof of Theorem \ref{Teo_Hardness_ALCHOIF} let us show that ontology $\O$ is expressible by an acyclic $\ALCHOIF$-ontology network of size polynomial in $n$. Note that $\O$ contains axioms (\ref{Eq_ALCOIFEXP_InitZChain}), (\ref{Eq_ALCOIFEXP_FlipX})-(\ref{Eq_ALCOIFEXP_GridPreserveY}), (\ref{Eq_ALCOIFEXP_PropagateTile}) with concepts of size exponential in $n$. By Lemma \ref{Lem_Expressibility_EL+dleqExp}, axiom (\ref{Eq_ALCOIFEXP_InitZChain}) is expressible by an acyclic $\EL$-ontology network of size polynomial in $n$. Now consider an axiom $\varphi$ of the form (\ref{Eq_ALCOIFEXP_FlipX}): \vspace{-0.2cm}

\begin{equation*}
X^f \dleq (X \dcap \forall v^{\IEXP{n}}.\forall v. \neg X) \dcup (\neg X \dcap \forall v^{\IEXP{n}}.\forall v. X)
 \end{equation*}
 
Let $\psi$ be concept inclusion of the form

\begin{equation*}
X^f \dleq (X \dcap \bar{B}) \dcup (\neg X \dcap B)
 \end{equation*}
where $\bar{B},B$ are concept names. By Lemma \ref{Lem_Expressibility_ForallExpSubstitution}, $\psi[\bar{B}\mapsto \forall v^{\IEXP{n}}.\forall v. \neg X, \ B\mapsto \forall v^{\IEXP{n}}.\forall v. X]$ is expressible by an acyclic $\ALC$-ontology network of size polynomial in $n$ and hence, so is $\varphi$. The expressibility of axioms of the form (\ref{Eq_ALCOIFEXP_FlipY})-(\ref{Eq_ALCOIFEXP_GridPreserveY}), and (\ref{Eq_ALCOIFEXP_PropagateTile}) is proved analogously. 
 
We conclude that each of the axioms (\ref{Eq_ALCOIFEXP_InitZChain}), (\ref{Eq_ALCOIFEXP_FlipX})-(\ref{Eq_ALCOIFEXP_GridPreserveY}), (\ref{Eq_ALCOIFEXP_PropagateTile})-(\ref{Eq_ALCOIFEXP_TilesHMatching}) is expressible by an acyclic $\ALC$-ontology network of size polynomial in $n$. The remaining axioms of $\O$ are $\ALCHOIF$ axioms, whose size does not depend on $n$. Then by applying Lemma \ref{Lem_IteratedExpressibility} we obtain that there exists an acyclic $\ALCHOIF$-ontology network $\N$ of size polynomial in $n$ and an ontology $\O_\N$ such that $\O$ is $(\N,\O_\N)$-expressible and thus, it holds $\O_\N\models_\N A\dleq \bot$ iff the domino system $\mathcal{D}$ does not admit a tiling of size $\IIEXP{n}\times\IIEXP{n}$. Theorem \ref{Teo_Hardness_ALCHOIF} is proved.
 
\bigskip

\noindent\textbf{Theorem \ref{Teo_Hardness_HGeneral}.} \textit{Entailment in $\mathcal{H}$-Networks is $\EXPTIME$-hard.}

\begin{proof} We show that the word problem for ATMs working with words of a polynomial length $n$ reduces to entailment in cyclic $\mathcal{H}$-ontology networks. Then, since $\APSPACE=\EXPTIME$, the claim follows.

Let $M=\langle Q,\mathcal{A},\delta_1,\delta_2 \rangle$ be a ATM. We call the word of the form $\b\qo\b\ldots\b$ 
\emph{initial configuration} $\c_{\init}$ of $M$. 

Consider signature $\sigma$ consisting of concept names $B_{ai}$, for $a\in Q\cup{\mathcal{A}}$ and $1\leqslant i \leqslant n$ (with the informal meaning that the i-th symbol in a configuration of $M$ is $a$). Let $\sigma^1$, and $\sigma^2$ be `copies' of signature $\sigma$ consisting of the above mentioned concept names with the superscripts $^1$ and $^2$, respectively.

For $\alpha=1,2$, let $\O^\alpha$ be an ontology consisting of the following axioms:

\begin{equation}\tag{\ref{Eq_Step1}}
B^{\alpha}_{Xi-2} \dcap B^{\alpha}_{Yi-1} \dcap B^{\alpha}_{Ui} \dcap B^{\alpha}_{Vi+1}\  \dleq \ B_{Wi}
\end{equation}
for $3\leqslant i \leqslant n-1$ and all $X,Y,U,V,W\in Q\cup\mathcal{A}$ such that $XYUV\overset{\delta_{\alpha}}{\mapsto} W$; \vspace{-0.2cm}

\begin{equation}\tag{\ref{Eq_Step2}}
B^{\alpha}_{U1} \dcap B^{\alpha}_{V2}\  \dleq \ B_{W 1}
\end{equation}
for all $U,V,W\in Q\cup\mathcal{A}$ such that $\b\b UV\overset{\delta_{\alpha}}{\mapsto} W$;

\begin{equation}\tag{\ref{Eq_Step3}}
B^{\alpha}_{Y1} \dcap B^{\alpha}_{U2} \dcap B^{\alpha}_{V3}\ \dleq \ B_{W 2}
\end{equation}
for all $Y,U,V,W\in Q\cup\mathcal{A}$ such that $\b YUV\overset{\delta_{\alpha}}{\mapsto} W$;

\begin{equation}\tag{\ref{Eq_Step4}}
B^{\alpha}_{Xn-2} \dcap B^{\alpha}_{Yn-1} \dcap B^{\alpha}_{Un} \  \dleq \ B_{Wn} 
\end{equation}
for all $X,Y,U,W\in Q\cup\mathcal{A}$ such that $XYU\b\overset{\delta_{\alpha}}{\mapsto} W$;

\begin{equation}\tag{\ref{Eq_LocalAcceptMarkers}}
B_{\qrej i} \dleq \bar{H}, \ \ \ \  \bar{H} \ \dleq\ \bar{H}^\alpha
\end{equation}
for $1\leqslant i \leqslant n$.

\medskip

Let $\O$ be an ontology consisting of the following axioms: \vspace{-0.2cm}

\begin{gather}\tag{\ref{Eq_GlobalAcceptMarker}}
\bar{H}^1\dcap B_{q_\forall i} \ \dleq \ \bar{H}, \ \ \bar{H}^2\dcap  B_{q_\forall i}  \dleq \ \bar{H} \\
\bar{H}^1\dcap \bar{H}^2\dcap B_{q_\exists i}  \dleq \  \bar{H} \nonumber
\end{gather}
for $1\leqslant i \leqslant m$, $\q_\exists\in Q_\exists$, and $\q_\forall\in Q_\forall$;

\begin{equation}\tag{\ref{Eq_Init}}
A \dleq \mathbin{\rotatebox[origin=c]{180}{$\bigsqcup$}}_{1\leqslant i \leqslant n+2} B_{\b i}  \dcap B_{q_0 n+3} \dcap \mathbin{\rotatebox[origin=c]{180}{$\bigsqcup$}}_{n+4\leqslant i \leqslant m} B_{\b i}
\end{equation}

\begin{equation}\tag{\ref{Eq_CopySignature}}
B_{a i} \dleq B^\alpha_{a i}
\end{equation}
for $\alpha=1,2$, $1\leqslant i \leqslant m$, and all $a\in Q\cup\mathcal{A}$.

\medskip

Consider ontology network $\N$ consisting of the import relations $\langle \O, \Sigma^\alpha, \O^\alpha \rangle$ and $\langle \O^\alpha, \Sigma, \O \rangle$, where $\Sigma^\alpha=\{\bar{H}^\alpha\}\!\cup\!\sigma^\alpha$, $\Sigma=\{\bar{H}\}\!\cup\!\sigma$, and $\alpha=1,2$.  

\medskip

We claim that $M$ does not accept the empty word iff $\O\models_\N A\dleq \bar{H}$. \medskip

($\Rightarrow$): Let $\c$ be a configuration of $M$, $\I$ an interpretation, and $x$ a domain element. We say that $\I$ \emph{$x$-represents} $\c$, if $x$ belongs to the interpretation of the concept $\mathbin{\rotatebox[origin=c]{180}{$\bigsqcup$}}_{1\leqslant i \leqslant m} B_{\c[i] i}$. We show by induction that for any $k$-rejecting configuration $\c$ and any model $\I\models_\N\O$, if $\I$ $x$-represents $\c$, then $x\in \bar{H}^\I$. Then it follows that $\O\models_\N A\dleq \bar{H}$, whenever $M$ does not accept the empty word, because every model $\I\models_\N\O$ $x$-represents $\c_\init$, for $x\in A^\I$, due to axiom (\ref{Eq_Init}).

Let $q$ be the state symbol in $\c$. 
In the induction base $k=0$, we have $q=\qrej$ and hence, by the first axiom in  (\ref{Eq_LocalAcceptMarkers}), it holds $x\in \bar{H}^\I$. In the induction step $k\geqslant 1$, by the definition of network $\N$, for $\alpha=1,2$, there exists a model $\J_\alpha\models_\N\O^\alpha$, which agrees with $\I$ on $\Sigma^\alpha$. Then by axioms (\ref{Eq_CopySignature}), (\ref{Eq_Step1})-(\ref{Eq_Step4}), for $\alpha=1,2$, there is a model $\I_\alpha\models_\N\O$, which agrees with $\J_\alpha$ on $\Sigma$ and $x$-represents a successor configuration $\c_\alpha$ of $\c$ wrt $\delta_\alpha$. If $q\in Q_\forall$ then, since $\c$ is $k$-rejecting, at least one of $\c_1$, $\c_2$ is $(k-1)$--rejecting and thus, by the induction assumption, we must have $x\in \bar{H}^{\I_\alpha}$, for some $\alpha=1,2$. Then by the second axiom in (\ref{Eq_LocalAcceptMarkers}), it holds $x\in (\bar{H}^\alpha)^{\I_\alpha}$ and hence, $x\in (\bar{H}^\alpha)^{\J_\alpha}$, for some $\alpha=1,2$. Then $x\in \bar{H}^\I$, by the first two axioms in (\ref{Eq_GlobalAcceptMarker}). The case $q\in Q_\exists$ is considered similarly.

($\Leftarrow$): Assume $M$ accepts the empty word. Let $\c$ be a configuration of $M$ and let $\I$ be a singleton interpretation with a domain element $x$. We say that $\I$ \emph{represents} configuration $\c$ if $\I$ $x$-represents $\c$. We show that there exists a singleton model agreement $\mu$ for $\N$ and a model $\I\in\mu(\O)$ such that $\I\not\models A\dleq H$. By induction on $k\geqslant 0$ we define families of singleton interpretations $\{\F^k\}_{k\geqslant 0}$ and $\{\F^{\alpha, k}\}_{k\geqslant 0}$, for $\alpha=1,2$, having the following properties:

\begin{itemize}
\item[(a)] for all $\I\in\F^k$ and $\J_\alpha\in\F^{\alpha, k}$, $k\geqslant 0$ and $\alpha=1,2$, it holds $\I\models\O$ and $\J_\alpha\models\O^\alpha$;
\item[(b)] for any $\I\in\F^k$ and $k\geqslant 0$, there exists $\J_\alpha\in\F^{\alpha, k}$, where $\alpha=1,2$, such that $\I=_{\Sigma^\alpha}\J_\alpha$;
\item[(c)] for any $\J_\alpha\in\F^{\alpha, k}$, $k\geqslant 0$, and $\alpha=1,2$, there exists $\I_\alpha\models\F^{k+1}$ such that $\J_\alpha=_{\Sigma}\I_\alpha$;
\item[(d)] any $\I\in\F^k$, $k\geqslant 0$, represents a configuration $\c$ of $M$ and it holds $(\bar{H}^\alpha)^\I\neq\varnothing$ iff either the state symbol in $\c$ is $\qrej$ or $\c$ has a rejecting successor configuration wrt $\delta_\alpha$;
\end{itemize}

In the induction base, let $\F^0$ consist of a singleton interpretation $\I$, which represents $\c_\init$ and has the properties:

\begin{itemize}
\item[(i)] $A^\I\neq\varnothing$, $\bar{H}^\I=\varnothing$;
\item[(ii)] for all $a\in Q\cup\mathcal{A}$, $1\leqslant i \leqslant n$, and $\alpha=1,2$, the interpretations of $(B_{a i})$ and $B^\alpha_{a i}$ coincide;
\item[(iii)] for all $\alpha=1,2$, the interpretation of $\bar{H}^\alpha$ is not empty iff $\c_\init$ has a rejecting successor configuration wrt $\delta_\alpha$.
\end{itemize}

Clearly, $\I$ is a model of axioms (\ref{Eq_Init}),(\ref{Eq_CopySignature}) and since $\c_\init$ is an accepting configuration, $\I$ is a model of axioms (\ref{Eq_GlobalAcceptMarker}), which means that $\I\models\O$.

In the induction step for $k\geqslant 1$, take an arbitrary model $\I\in\F^{k-1}$. Let $\I$ represent a configuration $\c$ and let $\c_\alpha$ be a successor configuration of $\c$ wrt $\delta_\alpha$, for $\alpha=1,2$ (in the case, when $\c$ does not have a successor configuration wrt $\delta_\alpha$, we set $\c_\alpha=\c$, for $\alpha=1,2$). 

For $\alpha=1,2$, let $\J_\alpha$ be an interpretation which agrees with $\I$ on $\Sigma^\alpha$ and has the following properties: 
\begin{itemize}
\item $\J_\alpha$ represents $\c_\alpha$;
\item $\bar{H}^{\J_\alpha}=(\bar{H}^\alpha)^{\I}$.
\end{itemize}

One can readily verify that $\J_\alpha$ is a model of axioms (\ref{Eq_Step1})-(\ref{Eq_Step4}). By the induction assumption, $\I$ has property (d), hence, $\J_\alpha$ is a model of axioms  (\ref{Eq_LocalAcceptMarkers}) and therefore, $\J_\alpha\models\O^\alpha$. For $\alpha=1,2$, let $\I_\alpha$ be an interpretation, which agrees with $\J_\alpha$ on $\Sigma$ and has the following properties:

\begin{itemize}
\item $A^{\I_\alpha}=\varnothing$;
\item $\I_\alpha$ represents $\c_\alpha$;
\item $\I_\alpha$ satisfies (ii);
\item  $(\bar{H}^\alpha)^{\I_\alpha}\neq\varnothing$ iff either the state symbol in $\c$ is $\qrej$ or $\c_\alpha$ has a rejecting successor configuration wrt $\delta_\alpha$.
\end{itemize}

Clearly, $\I_\alpha$, $\alpha=1,2$, is a model of axioms (\ref{Eq_Init})-(\ref{Eq_CopySignature}). Since $\J_\alpha=_\Sigma\I_\alpha$, we have $\bar{H}^{\I_\alpha}=\bar{H}^{\J_\alpha}$. Then by the definition of interpretation $\bar{H}^{\J_\alpha}$ it follows that $\I_\alpha$ is a model of axioms (\ref{Eq_GlobalAcceptMarker}) and thus, $\I_\alpha\models\O$. 

\medskip

For all $\alpha=1,2$ and $k\geqslant 0$, let $\F^{\alpha, k}$ be the family of interpretations $\J_\alpha$ defined for a model $\I\in\F^{k}$, as described above. Similarly, let $\F^{k+1}$ be the family of interpretations $\I_\alpha$ defined for a model $\J_\alpha\in\F^{\alpha, k}$, for some $\alpha=1,2$, as above. By definition, the families of interpretations $\{\F^k\}_{k\geqslant 0}$ and $\{\F^{\alpha, k}\}_{k\geqslant 0}$, $\alpha=1,2$, have properties (a)-(c). Then a mapping $\mu$ defined as $\mu(\O)=\bigcup_{k\geqslant 0}\F^k$, $\mu(\O^\alpha)=\bigcup_{k\geqslant 0}\F^{\alpha,k}$, for $\alpha=1,2$, is a model agreement for $\N$ and there is a model $\I\in\F^0\subseteq\mu(\O)$ such that $\I\not\models A\dleq \bar{H}$, which means that $\O\not\models_\N A\dleq \bar{H}$.
\end{proof}


\commentout{ 

\noindent\textbf{Corollary \ref{Cor_UpperBoundGeneral}} (Upper Bound for General Networks).
\textit{Given a finite $v$-pointed ontology network $\N=(V,E,\tau)$, the set of concept inclusions $C\dleq D$ in signature $\sig(\O_v)$ such that $(\N,v)\models C\dleq D$ is recursively enumerable.}

\medskip

\Todo{Finiteness is stressed, since we allow networks to be infinite in Definition \ref{De_TreeUnfolding}}

\begin{proof} 
Let $\N_t=(V_t,E_t,\tau_t)$ be a renamed unfolding of $\N$. By Theorem \ref{Teo_Reduction2Union}, $(\N,v)\models C\dleq D$ is equivalent to $\bigcup_{u\in V_t}\O_u\models C\dleq D$ which, by compactness, holds iff there is a finite subset of nodes $W\subseteq V_t$ such that $\bigcup_{w\in W}\O_w\models C\dleq D$. Thus, the proposition is proved if we show that $\N_t$ is recursively enumerable, i.e. the sets $(V_t,E_t)$ and the graph of the labelling function $\tau_t$ (considered as a set of pairs) are enumerable. 

We describe an algorithmic procedure which for a natural number $n\geqslant 0$ gives an ontology network $\N_t^n=(V_t^n,E_t^n,\tau_t^n)$, with the set $V_t^n$ consisting of sequences ${[}v_0\ldots v _n{]}$ of nodes from $\N$, such that $\N_t^n$ satisfies all the conditions of tree unfolding for $\N$. 

It is clear from Definition \ref{De_TreeUnfolding} that there is a recursive procedure to construct the graph $(V_t^n,E_t^n)$ for a given $n$, so we may assume that this graph is given and our goal now is to define the labelling function $\tau_t^n$. It suffices to define a permutation function $\pi_u$ for each node $u\in V_t^n$; we give a definition by simultaneous induction on $k$, $0\leqslant k \leqslant n$ and the number $m$ of nodes reachable from ${[}v_0{]}$ in $k$ steps. 

For $k=m=0$, we define $\tau_t^n({[}v_0{]})=\tau[v]$ and set $\pi_{{[}v_0{]}}$ to be the identity function on $\Nc\cup\Nr$.

For $k,m\geqslant 1$, let $\Gamma$ be the union of signatures of label ontologies for all those nodes $u$ for which $\tau_t$ is already defined. Let $w={[}v_0,\ldots v_k{]}$ be a $m$-th node reachable from ${[}v_0{]}$ in $k$ steps and $u={[}v_0,\ldots v_{k-1}{]}$ be the parent of $w$ wrt an edge $e$. For some signature $\Si$, it holds $\langle v_{k-1}\Si v_k\rangle\in\N$; denote $\Delta=\sig(\O_{v_k})\setminus\Si$. Then $\pi_w$ is defined as a permutation on $\Nc\cup\Nr$ mapping $\Nc$ to $\Nc$ and $\Nr$ to $\Nr$ and having the following properties:

\begin{itemize} 
\item[a.] $\pi_w|_\Si=\pi_u|_\Si$; 
\item[b.] for all $s\in\Delta$, it holds $\pi_w(s)\not\in\Gamma$ and $\pi_w(\pi_w(s))=s$ (denote the image of $\Delta$ wrt $\pi_w$ as $\Delta^*$);
\item[c.] $\pi_w$ is the identity function on $(\Nc\cup\Nr)\setminus(\sig(\O_{v_k})\cup\Delta^*)$.
\end{itemize}

Now, we define the label $\tau^n_t(w)$ as  a ``copy'' of ontology $\tau_{w})$ wrt the renaming given by $\pi_w$ and set $\tau^n_t(e)=\{\pi_u(s) \mid s\in\Si\}$. 

Since the alphabet $\Nc\cup\Nr$ is recursively enumerable, it follows that there is an algorithmic procedure which for a given tree $(V_t^n,E_t^n)$ enumerates the graph of the labelling function $\tau_t^n$. Note that $\tau_t^n$ satisfies all the conditions in Definition \ref{De_TreeUnfolding} of tree unfolding: all the conditions except 1.3 are immediate to verify, while 1.3 follows by induction from points a,b in the definition of $\tau_t^n$. \end{proof}

} 


\end{document}